\newcommand{\wSparseMean}{\bm{w}_{s}^0}
\newcommand{\wMean}{\bm{w}_{0}}
\newcommand{\wTrue}{\bm{w}}
\newcommand{\wHat}{\hat{\wTrue}}
\newcommand{\wHatAdapt}[1]{\wHat^{(#1)}}
\newcommand{\wHatTest}{\wHat^{r}}
\newcommand{\wTwo}{\wHat_{\ell_2}}
\newcommand{\wIdeal}{\wHat_{\text{ideal}}}
\newcommand{\wSingleTask}{\bm{w}_{\ell_2,\text{single}}}
\newcommand{\wTrainTrue[1]}{\wTrue^{(#1)}}
\newcommand{\wTestTrue}{\wTrue^r}
\newcommand{\wTestTrueSparse}{\wTrue^r_s}
\newcommand{\defeq}{\coloneqq}
\newcommand{\Gaussian}{\mathcal{N}}
\newcommand{\LossInner}[1]{\mathcal{L}^{(#1)}_{\text{inner}}}
\newcommand{\LossOuter}[1]{\mathcal{L}^{(#1)}_{\text{outer}}}
\newcommand{\MetaLoss}{\mathcal{L}^{\text{meta}}}
\newcommand{\LossInnerTest}{\mathcal{L}^{r}_{\text{inner}}}
\newcommand{\TestError}{\mathcal{L}_{\text{test}}(\x, \wTestTrue; \wHatTest)}
\newcommand{\x}{\bm{x}}
\newcommand{\y[1]}{\bm{y}^{t(#1)}}
\newcommand{\X[1]}{{\mathbf{X}^{t(#1)}}}
\newcommand{\yValidate[1]}{\bm{y}^{v(#1)}}
\newcommand{\XValidate[1]}{{\mathbf{X}^{v(#1)}}}
\newcommand{\Xtest}{{\mathbf{X}^r}}
\newcommand{\ytest}{\bm{y}^r}
\newcommand{\metaB}{\mathbf{B}}
\newcommand{\noise}{\epsilon}
\newcommand{\noiseTrain[1]}{{\bm{\noise}^{t(#1)}}}
\newcommand{\noiseValidate[1]}{{\bm{\noise}^{v(#1)}}}
\newcommand{\noiseTest}{{\bm{\noise}^r}}
\newcommand{\iMatrix}{\mathbf{I}}
\newcommand{\numTrain}{n_t}
\newcommand{\numValidate}{n_v}
\newcommand{\numTest}{n_r}
\newcommand{\numTasks}{m}
\newcommand{\stepSize}{\alpha_t}
\newcommand{\stepSizeTest}{\alpha_r}
\newcommand{\stepSizeTestOpti}{[\stepSizeTest]_{\text{opti}}}
\newcommand{\stepSizeNormalized}{\alpha_t'}
\newcommand{\normTwo}[1]{\left\|#1\right\|_2}
\newcommand{\abs}[1]{\left\lvert#1\right\rvert}
\DeclareMathOperator*{\argmin}{arg\,min}
\DeclareMathOperator*{\E}{\mathop{\mathbb{E}}}
\DeclareMathOperator{\Tr}{Tr}
\DeclareMathOperator{\diag}{diag}
\newcommand\numberthis{\stepcounter{equation}\tag{\theequation}}
\newcommand{\ftest}[1]{f_{\text{test}}\left(#1\right)}
\newcommand{\bideal}{b_w^{\text{ideal}}}
\newcommand{\onePartB[1]}{\iMatrix_p-\frac{\stepSize}{\numTrain}\X[#1]\X[#1]^T}
\newcommand{\SVDU[1]}{{\mathbf{U}^{t(#1)}}}
\newcommand{\SVDD[1]}{{\mathbf{D}^{t(#1)}}}
\newcommand{\SVDV[1]}{{\mathbf{V}^{t(#1)}}}
\newcommand{\singularValue}[2]{{\lambda_{#1}^{t(#2)}}}
\newcommand{\singularMatrix[1]}{{\Lambda^{t(#1)}}}
\newcommand{\BdiagLowerBound}{\max\left\{0,\ 1-\stepSizeNormalized\right\}^2 \left(\numTrain-\sqrt{2\numTrain\ln \numTrain}\right)+p-\numTrain-\sqrt{2(p-\numTrain)\ln\numTrain}}
\newcommand{\UpperBoundPart}{\left(\max\left\{\abs{1-\frac{\stepSize}{\numTrain}\left(\sqrt{p}-\sqrt{\numTrain}-\ln\sqrt{\numTrain}\right)^2}, \abs{1-\frac{\stepSize}{\numTrain}\left(\sqrt{p}+\sqrt{\numTrain}+\ln\sqrt{\numTrain}\right)^2}\right\}\right)^2}
\newcommand{\BdiagUpperBound}{\max\{\stepSizeNormalized, 1-\stepSizeNormalized\}^2 \left(\numTrain+\sqrt{2\numTrain \ln\numTrain}+\ln\numTrain\right)+p-\numTrain+\sqrt{2(p-\numTrain)\ln\numTrain}+\ln\numTrain}
\newcommand{\approxSum}{\frac{p-\numTasks\numValidate}{p}\normTwo{\wMean}^2 + \frac{b_{\delta}}{p-C_4\numTasks\numValidate}}
\newcommand{\AtargetNone}{\mathcal{A}_{\text{target}}}
\newcommand{\AEvent[1]}{\mathcal{A}_{\text{target},#1}}
\newcommand{\bsingle}{b_{w}^{(p=1)}}
\newcommand{\wSingle}{\wTwo^{(p=1)}}
\newtheorem{theorem}{Theorem}
\newtheorem{proposition}{Proposition}
\newtheorem{lemma}{Lemma}
\newtheorem{assumption}{Assumption}
\title{Theoretical Characterization of the Generalization Performance of\\
Overfitted Meta-Learning}
\newcommand*\samethanks[1][\value{footnote}]{\footnotemark[#1]}
\author{Peizhong Ju\thanks{Department of ECE, 
 The Ohio State University. Email: \texttt{\{ju.171,liang.889\}@osu.edu}} \and Yingbin Liang\samethanks \and Ness B. Shroff\thanks{Department of ECE and CSE, The Ohio State University. Email: \texttt{shroff.11@osu.edu}}}
\begin{document}

% \pagecolor{gray}
% \color{white}

\maketitle

\begin{abstract}
Meta-learning has arisen as a successful method for improving training performance by training over many similar tasks, especially with deep neural networks (DNNs). However, the theoretical understanding of when and why overparameterized models such as DNNs can generalize well in meta-learning is still limited. As an initial step towards addressing this challenge, this paper studies the generalization performance of overfitted meta-learning under a linear regression model with Gaussian features. In contrast to a few recent studies along the same line, our framework allows the number of model parameters to be arbitrarily larger than the number of features in the ground truth signal, and hence naturally captures the overparameterized regime in practical deep meta-learning. We show that the overfitted min $\ell_2$-norm solution of model-agnostic meta-learning (MAML) can be beneficial, which is similar to the recent remarkable findings on ``benign overfitting'' and ``double descent'' phenomenon in the classical (single-task) linear regression. However, due to the uniqueness of meta-learning such as task-specific gradient descent inner training and the diversity/fluctuation of the ground-truth signals among training tasks, we find new and interesting properties that do not exist in single-task linear regression. We first provide a high-probability upper bound (under reasonable tightness) on the generalization error, where certain terms decrease when the number of features increases. Our analysis suggests that benign overfitting is more significant and easier to observe when the noise and the diversity/fluctuation of the ground truth of each training task are large. Under this circumstance, we show that the overfitted min $\ell_2$-norm solution can achieve an even lower generalization error than the underparameterized solution.
\end{abstract}

\section{Introduction}
Meta-learning is designed to learn a task by utilizing the training samples of many similar tasks, i.e., learning to learn \citep{thrun1998learning}. With deep neural networks (DNNs), the success of meta-learning has been shown by many works using experiments, e.g., \citep{antoniou2018train,finn2017model}. However, theoretical results on why DNNs have a good generalization performance in meta-learning are still limited. Although DNNs have so many parameters that can completely fit all training samples from all tasks, it is unclear why such an overfitted solution can still generalize well, which seems to defy the classical knowledge bias-variance-tradeoff \citep{bishop2006pattern, hastie2009elements, stein1956inadmissibility, james1992estimation,lecun1991second,tikhonov1943stability}.

The recent studies on the ``benign overfitting'' and ``double-descent'' phenomena in classical (single-task) linear regression have brought new insights on the generalization performance of overfitted solutions. Specifically, ``benign overfitting'' and ``double descent'' describe the phenomenon that the test error descends again in the overparameterized regime in linear regression setup \citep{belkin2018understand, belkin2019two, bartlett2019benign, hastie2019surprises,muthukumar2019harmless,ju2020overfitting,mei2019generalization}. 
% Many theoretical results focus on quantifying how the test error changes with respect to the number of parameters/features. 
Depending on different settings, the shape and the properties of the descent curve of the test error can differ dramatically. For example, \citet{ju2020overfitting} showed that the min $\ell_1$-norm overfitted solution has a very different descent curve compared with the min $\ell_2$-norm overfitted solution. 
A more detailed review of this line of work can be found in Appendix~\ref{sec.related_work}.
% However, these studies focus on classical (single-task) supervised learning.

Compared to the classical (single-task) linear regression, model-agnostic meta-learning (MAML) \citep{finn2017model,finn2018learning}, which is a popular algorithm for meta-learning, differs in many aspects.  First, the training process of MAML involves task-specific gradient descent inner training and outer training for all tasks. Second, there are some new parameters to consider in meta-training, such as the number of tasks and the diversity/fluctuation of the ground truth of each training task. These distinct parts imply that we cannot directly apply the existing analysis of ``benign overfitting'' and ``double-descent'' on single-task linear regression in meta-learning.
Thus, it is still unclear whether meta-learning also has a similar ``double-descent'' phenomenon, and if yes, how the shape of the descent curve of overfitted meta-learning is affected by system parameters.

A few recent works have studied the generalization performance of meta-learning. In \citet{bernacchia2021meta}, the expected value of the test error for the overfitted min $\ell_2$-norm solution of MAML is provided for the asymptotic regime where the number of features and the number of training samples go to infinity. In \citet{chen2022understanding}, a high probability upper bound on the test error of a similar overfitted min $\ell_2$-norm solution is given in the non-asymptotic regime. 
% However, the bound is not in an explicit form since it contains unknown quantities such as eigenvalues of weight matrices, and the tightness of the bound is unclear. 
However, the eigenvalues of weight matrices that appeared in the bound of \citet{chen2022understanding} are coupled with the system parameters such as the number of tasks, samples, and features, so that their bound is not fully expressed in terms of the scaling orders of those system parameters. This makes it hard to explicitly characterize the shape of the double-descent or analyze the tightness of the bound. 
In \citet{huang2022provable}, the authors focus on the generalization error during the SGD process when the training error is not zero, which is different from our focus on the overfitted solutions that make the training error equal to zero (i.e., the interpolators). (A more comprehensive introduction on related works can be found in Appendix~\ref{sec.related_work}.) All of these works let the number of model features in meta-learning equal to the number of true features, which cannot be used to analyze the shape of the double-descent curve that requires the number of  features used in the learning model to change freely without affecting the ground truth (just like the setup used in many works on single-task linear regression, e.g., \citet{belkin2020two,ju2020overfitting}).

To fill the gap, we study the generalization performance of overfitted meta-learning, especially in quantifying how the test error changes with the number of features. As the initial step towards the DNNs' setup, we consider the overfitted min $\ell_2$-norm solution of MAML using a linear model with Gaussian features. We first quantify the error caused by the one-step gradient adaption for the test task, with which we provide useful insights on 1) practically choosing the step size of the test task and quantifying the gap with the optimal (but not practical) choice, and 2) how overparameterization can affect the noise error and task diversity error in the test task. We then provide an explicit high-probability upper bound (under reasonable tightness) on the error caused by the meta-training for training tasks (we call this part ``model error'') in the non-asymptotic regime where all parameters are finite. With this upper bound and simulation results, we confirm the benign overfitting in meta-learning by comparing the model error of the overfitted solution with the underfitted solution. We further characterize some interesting properties of the descent curve. For example, we show that the descent is easier to observe when the noise and task diversity are large, and sometimes has a descent floor. Compared with the classical (single-task) linear regression where the double-descent phenomenon critically depends on non-zero noise, we show that meta-learning can still have the double-descent phenomenon even under zero noise as long as the task diversity is non-zero.

\section{System Model}

In this section, we introduce the system model of meta-learning along with the related symbols. For the ease of reading, we also summarize our notations in Table~\ref{table.notations} in Appendix~\ref{app.notation}.

%\subsection{Meta Learning with Linear Regression Tasks}
\subsection{Data Generation Model}\label{subsec.data_gen}

We adopt a meta-learning setup with linear tasks first studied in \cite{bernacchia2021meta} as well as a few recent follow-up works \cite{chen2022understanding,huang2022provable}. However, in their formulation, the number of features and true model parameters are the same.
%and the overparameterization regime is captured mainly because model parameters are fewer than data samples. 
Such a setup fails to capture the prominent effect of overparameterized models in practice where the number of model parameters is much larger than the number of actual feature parameters. Thus, in our setup, 
%we allow the number $s$ of true features to be different and much smaller than the $p$ of model parameters. 
we introduce an additional parameter $s$ to denote the number of true features and allow $s$ to be different and much smaller than the number $p$ of model parameters to capture the effect of  overparameterization. In this way, we can fix $s$ and investigate how $p$ affects the generalization performance. We believe this setup is closer to reality because determining the number of features to learn is controllable, while the number of actual features of the ground truth is fixed and not controllable.

We consider $\numTasks$ training tasks. For the $i$-th training task (where $i=1,2,\cdots,\numTasks$), the ground truth is a linear model represented by $y=\x_s^T\wTrainTrueSparse[i]+\noise$, where $s$ denotes the number of features, $\x_s\in \mathds{R}^s$ is the underlying features, $\noise\in \mathds{R}$ denotes the noise, and $y\in \mathds{R}$ denotes the output. When we collect the features of data, since we do not know what are the true features, we usually choose more features than $s$, i.e., choose $p$ features where $p\geq s$ to make sure that these $p$ features include all $s$ true features. For analysis, without loss of generality, we let the first $s$ features of all $p$ features be the true features. Therefore, although the ground truth model only has $s$ features, we can alternatively expressed it with $p$ features as $y=\x^T \wTrainTrue[i] + \noise$ where the first $s$ elements of $\x \in \mathds{R}^p$ is $\x_s$ and $\wTrainTrue[i]\defeq \left[\begin{smallmatrix}
        \wTrainTrueSparse[i]\\
        \bm{0}
    \end{smallmatrix}\right]\in \mathds{R}^p$.
The collected data are split into two parts with $\numTrain$ training data and $\numValidate$ validation data. With those notations, we write the data generation model into matrix equations as
\begin{align}
\textstyle
    &\y[i]=\X[i]^T\wTrainTrue[i]+\noiseTrain[i],\quad \yValidate[i]=\XValidate[i]^T\wTrainTrue[i]+\noiseValidate[i],\label{eq.y_train}
\end{align}
where each column of $\X[i]\in \mathds{R}^{p\times \numTrain}$ corresponds to the input of each training sample, each column of $\XValidate[i]\in \mathds{R}^{p\times \numValidate}$ corresponds to the input of each validation sample, $\y[i]\in \mathds{R}^{\numTrain}$ denotes the output of all training samples, $\yValidate[i]\in \mathds{R}^{\numValidate}$ denotes the output of all validation samples, $\noiseTrain[i]\in \mathds{R}^{\numTrain}$ denotes the noise in training samples, and $\noiseValidate[i]$ denotes the noise invalidation samples.

Similarly to the training tasks, we denote the ground truth of the test task by $\wTestTrueSparse\in \mathds{R}^s$, and thus  $y=\x_s^T \wTestTrueSparse$. Let $\wTestTrue=\left[\begin{smallmatrix}
        \wTestTrueSparse\\
        \bm{0}
    \end{smallmatrix}\right]\in \mathds{R}^p$.
Let $\numTest$ denote the number of training samples for the test task, and let each column of $\Xtest\in \mathds{R}^{p\times \numTest}$ denote the input of each training sample.
Similar to Eq.~\eqref{eq.y_train}, we then have
\begin{align}
    &\ytest=(\Xtest)^T\wTestTrue+\noiseTest,\label{eq.y_test}
\end{align}
where $\noiseTest\in \mathds{R}^{\numTest}$ denotes the noise and each element of $\ytest\in \mathds{R}^{\numTest}$ corresponds to the output of each training sample.

% Notice that $\E\normTwo{\noiseTest}^2=\sigma^2$ since the noise for training and test tasks follows the same Gaussian distribution as described by Assumption~\ref{as.input}. However, to better understand the role of the noise of the test task in the generalization performance, we use the notation $\sigma_r^2\defeq \normTwo{\wTwo-\wMean}^2$ to highlight that this term corresponds to the noise in the test task, not the noise in the training task.

In order to simplify the theoretical analysis, we adopt the following two assumptions. Assumption~\ref{as.input} is commonly taken in the theoretical study of the generalization performance, e.g., \citet{ju2020overfitting,bernacchia2021meta}. Assumption~\ref{as.truth_distribution} is less restrictive (no requirement to be any specific distribution) and a similar one is also used in \citet{bernacchia2021meta}.

\begin{assumption}[Gaussian features and noise]\label{as.input}
    We adopt \emph{i.i.d.} Gaussian features $\x \sim \Gaussian\left(0, \iMatrix_p\right)$ and assume \emph{i.i.d.} Gaussian noise. We use $\sigma$ and $\sigma_r$ to denote the standard deviation of the noise for training tasks and the test task, respectively. In other words, $\noiseTrain[i]\sim \Gaussian(\bm{0}, \sigma^2 \iMatrix_{\numTrain})$, $\noiseValidate[i]\sim \Gaussian(\bm{0}, \sigma^2 \iMatrix_{\numValidate})$ for all $i=1,\cdots,\numTasks$, and $\noiseTest\sim \Gaussian(\bm{0}, \sigma_r^2\iMatrix_{\numTest})$.
\end{assumption}

\begin{assumption}[Diversity/fluctuation of unbiased ground truth]\label{as.truth_distribution}
    The ground truth $\wTestTrueSparse$ and $\wTrainTrueSparse[i]$ for all $i=1,2,\cdots,\numTasks$ share the same mean $\wSparseMean$, i.e., $\E[\wTrainTrueSparse[i]]=\wSparseMean=\E[\wTestTrueSparse]$. For the $i$-th training task, elements of the true model parameter $\wTrainTrueSparse[i]\in \mathds{R}^s$ are independent, i.e.,
\begin{align*}
    \E[(\wTrainTrueSparse[i]-\wSparseMean)^T(\wTrainTrueSparse[i]-\wSparseMean)]=\mathbf{\Lambda}_{(i)}\defeq \diag\left((\diversity[i]{1})^2, (\diversity[i]{2})^2, \cdots, (\diversity[i]{s})^2\right).
\end{align*}
Let $\diversitySum[i]\defeq \sqrt{\Tr (\mathbf{\Lambda}_{(i)})}$, $\nu\defeq \sqrt{\sum_{i=1}^{\numTasks} \diversitySum[i]^2/\numTasks}$, $\nu_r \defeq \sqrt{\E\normTwo{\wTestTrueSparse-\wMean}^2}$, and $\wMean\defeq \left[\begin{smallmatrix}
    \wSparseMean\\
    \bm{0}
\end{smallmatrix}\right]\in \mathds{R}^p$.
% (E.g., $\wSparse\sim \Gaussian\left(\wSparseMean, \frac{\nu^2}{s}\iMatrix_s\right)$.) 
% Here $\nu_i\geq 0$ quantifies the deviation (i.e., diversity) of the true parameters for each task on the $i$-th feature. 
\end{assumption}

% \TODO{Compare with other works and explain the key differences.}

% The learning model adopts $p\geq s$ Gaussian features $\x\in \mathds{R}^p$. The ground-truth can be alternatively represented by
% \begin{align*}
%     y=\x^T\wTrue + \noise.
% \end{align*}
% where
% \begin{align}\label{eq.x_gaussian}
%     \wTrue = \left[\begin{smallmatrix}
%         \wSparse\\
%         \bm{0}
%     \end{smallmatrix}\right]\in \mathds{R}^p,\ \x\sim \Gaussian(0, \iMatrix_p)\in \mathds{R}^p.
% \end{align}
% Define
% \begin{align*}
%     \wMean\defeq \left[\begin{smallmatrix}
%         \wSparseMean\\
%         \bm{0}
%     \end{smallmatrix}\right]\in \mathds{R}^p.
% \end{align*}

\subsection{MAML process}
%The objective is to minimize the loss of a target task by training the model, with the data of $\numTasks$ training tasks, and the data for a target task. 

We consider the MAML algorithm \citep{finn2017model,finn2018learning}, the objective of which is to train a good initial model parameter among many tasks, which can adapt quickly to reach a desirable model parameter for a target task. MAML generally consists of inner-loop training for each individual task and outer-loop training across multiple tasks. To differentiate the ground truth parameters $\wTrainTrue[i]$, we use $\hat{(\cdot)}$ (e.g., $\wHatAdapt{i}$) to indicate that the parameter is the training result.

In the inner-loop training, the model parameter for every individual task is updated from a common meta parameter $\wHat$. Specifically, for the $i$-th training task ($i=1,\ldots,m$), its model parameter $\wHatAdapt{i}$ is updated via a one-step gradient descent of the loss function based on its training data $\X[i]$:
%is used to calculate $\wHatAdapt{i}$ which adapts to the specific task from a common $\wHat$ by a single gradient step:
\begin{align}\label{eq.temp_092201}
\textstyle
    \wHatAdapt{i}\defeq \wHat - \frac{\stepSize}{\numTrain}\frac{\partial \LossInner{i}}{\partial \wHat},\quad \LossInner{i}\defeq \frac{1}{2}\normTwo{\y[i]-\X[i]^T \wHat}^2.
\end{align}
where $\stepSize\geq 0$ denotes the step size.

In the outer-loop training, the meta loss $\MetaLoss$ is calculated based on the validation samples of all training tasks as follows:
\begin{align}\label{eq.temp_092202}
\textstyle
    \MetaLoss\defeq \frac{1}{\numTasks \numValidate}\sum_{i=1}^\numTasks \LossOuter{i}, \quad \text{where}\quad \LossOuter{i}\defeq \frac{1}{2}\normTwo{\yValidate[i]-\XValidate[i]^T \wHatAdapt{i}}^2.
\end{align}
The common (i.e., meta) parameter $\wHat$ is then trained to minimize the meta loss $\MetaLoss$. 

At the test stage, 
%we adapt the meta parameter $\wHat$ via a single gradient descent step using test samples to obtain a desirable model parameter $\wHatTest$.
%adaption from the calculated $\wHat$ with the training samples of the test task. 
%The result $\wHatTest$ is considered as the final learning result for the test task. 
we use the test loss $\LossInnerTest\defeq \frac{1}{2}\normTwo{\ytest-\Xtest^T \wHat}^2$ and adapt the meta parameter $\wHat$ via a single gradient descent step to obtain a desirable model parameter $\wHatTest$, i.e., 
$\wHatTest\defeq \wHat - \frac{\stepSizeTest}{\numTest}\frac{\partial\LossInnerTest}{\partial \wHat}$ where $\stepSizeTest\geq 0$ denotes the step size.
%using test samples to obtain a desirable model parameter $\wHatTest$.
%$\wHatTest\defeq \wHat - \frac{\stepSizeTest}{\numTest}\frac{\partial\LossInnerTest}{\partial \wHat}$ where $\stepSizeTest\geq 0$ denotes the gradient step size and $\LossInnerTest\defeq \frac{1}{2}\normTwo{\ytest-\Xtest^T \wHat}^2$. 
The squared test error for any input $\x$ is given by
\begin{align}\label{eq.def_test_err}
\textstyle
    \TestError\defeq \normTwo{\x^T\wTestTrue - \x^T \wHatTest}^2.
\end{align}

\subsection{Solutions of minimizing meta loss}

The meta loss in Eq.\ \eqref{eq.temp_092202} depends on the meta parameter $\wHat$ via the inner-loop loss in Eq.\ \eqref{eq.temp_092201}. It can be shown (see the details in Appendix~\ref{app.meta_loss}) that $\MetaLoss$ can be expressed as:
%After calculating the gradient in the expression of meta loss and doing some algebra transformation (details in Appendix~\ref{app.meta_loss}), we can represent $\MetaLoss$ as
\begin{align}
\textstyle
    \MetaLoss = \frac{1}{2\numTasks \numValidate}\normTwo{\gamma - \metaB\wHat}^2,\label{eq.def_meta_loss}
\end{align}
where $\gamma\in \mathds{R}^{(\numTasks\numValidate)\times 1}$ and $\metaB\in \mathds{R}^{(\numTasks\numValidate)\times p}$ are respectively stacks of $\numTasks$ vectors and matrices given by
\begin{align}\label{eq.def_gamma}
\textstyle
    \gamma\defeq \left[\begin{smallmatrix}
        \yValidate[1]-\frac{\stepSize}{\numTrain}\XValidate[1]^T\X[1]\y[1]\\
        \yValidate[2]-\frac{\stepSize}{\numTrain}\XValidate[2]^T\X[2]\y[2]\\
        \vdots\\
        \yValidate[\numTasks]-\frac{\stepSize}{\numTrain}\XValidate[\numTasks]^T\X[\numTasks]\y[\numTasks]
    \end{smallmatrix}\right], \quad \metaB\defeq \left[\begin{smallmatrix}
        \XValidate[1]^T\left(\iMatrix_p - \frac{\stepSize}{\numTrain}\X[1]\X[1]^T\right)\\
        \XValidate[2]^T\left(\iMatrix_p - \frac{\stepSize}{\numTrain}\X[2]\X[2]^T\right)\\
        \vdots\\
        \XValidate[\numTasks]^T\left(\iMatrix_p - \frac{\stepSize}{\numTrain}\X[\numTasks]\X[\numTasks]^T\right)
    \end{smallmatrix}\right].
\end{align}
%and $\metaB\in \mathds{R}^{(\numTasks\numValidate)\times p}$ is a stack of $\numTasks$ matrices
%\begin{align}\label{eq.def_metaB}
%    \metaB\defeq \begin{bmatrix}
%        \XValidate[1]^T\left(\iMatrix_p - \frac{\stepSize}{\numTrain}\X[1]\X[1]^T\right)\\
 %       \XValidate[2]^T\left(\iMatrix_p - \frac{\stepSize}{\numTrain}\X[2]\X[2]^T\right)\\
%        \vdots\\
%        \XValidate[\numTasks]^T\left(\iMatrix_p - \frac{\stepSize}{\numTrain}\X[\numTasks]\X[\numTasks]^T\right)
%    \end{bmatrix}.
%\end{align}
By observing Eq.~\eqref{eq.def_meta_loss} and the structure of $\metaB$, we know that $\min_{\wHat}\MetaLoss$ has a unique solution almost surely when the learning model is {\em underparameterized}, i.e., $p\leq \numTasks\numValidate$.
However, in the real-world application of meta learning, an {\em overparameterized} model is of more interest due to the success of the DNNs. Therefore, in the rest of this paper, we mainly focus on the {\em overparameterized} situation, i.e., when $p> \numTasks \numValidate$ (so the meta training loss can decrease to zero). In this case, there exist (almost surely) numerous $\wHat$ that make the meta loss become zero, i.e., interpolators of training samples.

Among all overfitted solutions, we are particularly interested in the min $\ell_2$-norm solution, since it corresponds to the solution of gradient descent that starts at zero in a linear model. Specially, the min $\ell_2$-norm overfitted solution $\wTwo$ is defined as
\begin{align}\label{eq.def_wTwo}
\textstyle
    \wTwo\defeq \argmin_{\wHat}\normTwo{\wHat}\quad \text{ subject to}\quad \metaB \wHat=\gamma.
\end{align}
In this paper, we focus on quantifying the generalization performance of this min $\ell_2$-norm solution with the metric in Eq.~\eqref{eq.def_test_err}.

\section{Main Results}
To analyze the generalization performance, we first decouple the overall test error into two parts: i) the error caused by the one-step gradient adaption for the test task, and ii) the error caused by the meta training for the training tasks. The following lemma quantifies such decomposition.

\begin{lemma}[]\label{le.with_test_task}
With Assumptions~\ref{as.input} and \ref{as.truth_distribution}, for any learning result $\wHat$ (i.e., regardless how we train $\wHat$), the expected squared test error is
\begin{align*}
    \E_{\x,\Xtest,\noiseTest,\wTestTrue}\TestError=\ftest{\normTwo{\wHat-\wMean}^2},
\end{align*}
where $
    \ftest{\zeta}\defeq \left((1-\stepSizeTest)^2+\frac{p+1}{\numTest}\stepSizeTest^2\right)\left(\zeta+\nu_r^2\right)+\frac{\stepSizeTest^2p}{\numTest}\sigma_r^2.$
\end{lemma}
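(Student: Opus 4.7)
The plan is to derive a closed form for the adapted parameter $\wHatTest$, subtract $\wTestTrue$, and then take expectations in a careful order so that the cross terms vanish and each remaining piece reduces to a scalar functional of $\|\wHat-\wMean\|^2$.

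First I would expand the adaptation step. Since $\LossInnerTest=\tfrac12\|\ytest-\Xtest^T\wHat\|^2$, its gradient in $\wHat$ is $-\Xtest(\ytest-\Xtest^T\wHat)$, so using $\ytest=\Xtest^T\wTestTrue+\noiseTest$,
\[
\wTestTrue-\wHatTest \;=\;\Bigl(\iMatrix_p-\tfrac{\stepSizeTest}{\numTest}\Xtest\Xtest^T\Bigr)(\wTestTrue-\wHat)\;-\;\tfrac{\stepSizeTest}{\numTest}\Xtest\noiseTest.
\]
Writing $A\defeq \iMatrix_p-\tfrac{\stepSizeTest}{\numTest}\Xtest\Xtest^T$ and $v\defeq \wTestTrue-\wHat$, the test error becomes $\bigl(\x^T A v-\tfrac{\stepSizeTest}{\numTest}\x^T\Xtest\noiseTest\bigr)^2$. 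Because $\noiseTest$ is zero-mean and independent of $(\x,\Xtest,\wTestTrue)$, the cross term vanishes on expectation, leaving the sum of a ``signal'' term $\E[(\x^T A v)^2]$ and a ``noise'' term $\tfrac{\stepSizeTest^2}{\numTest^2}\E[(\x^T\Xtest\noiseTest)^2]$.

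Next I would evaluate each piece by peeling off independent randomness one layer at a time. The noise term is immediate: conditioning on $\Xtest$ gives $\sigma_r^2\E[\x^T\Xtest\Xtest^T\x]=\sigma_r^2\Tr\E[\Xtest\Xtest^T]=\sigma_r^2\,\numTest\,p$, yielding exactly $\tfrac{\stepSizeTest^2 p}{\numTest}\sigma_r^2$. For the signal term, integrate $\x$ first using $\E[\x\x^T]=\iMatrix_p$ to get $\E[v^TA^TAv]$. Because $v$ is independent of $\Xtest$, the remaining calculation reduces to computing $\E[A^TA]$, which expands as
\[
\E[A^TA]=\iMatrix_p-\tfrac{2\stepSizeTest}{\numTest}\E[\Xtest\Xtest^T]+\tfrac{\stepSizeTest^2}{\numTest^2}\E\bigl[(\Xtest\Xtest^T)^2\bigr].
\]
The first Wishart moment is just $\numTest\iMatrix_p$. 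For the second moment I would split $\Xtest\Xtest^T=\sum_k \x_k\x_k^T$ into equal-index and distinct-index pairs; the distinct pairs contribute $\numTest(\numTest-1)\iMatrix_p$, and the equal-index terms use the Gaussian identity $\E[\|\x\|^2\,\x\x^T]=(p+2)\iMatrix_p$, giving $\E[(\Xtest\Xtest^T)^2]=\numTest(\numTest+p+1)\iMatrix_p$. Plugging in and simplifying yields $\E[A^TA]=\bigl[(1-\stepSizeTest)^2+\tfrac{\stepSizeTest^2(p+1)}{\numTest}\bigr]\iMatrix_p$.

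Finally I would integrate out $\wTestTrue$. Writing $v=(\wTestTrue-\wMean)+(\wMean-\wHat)$ and using $\E[\wTestTrue-\wMean]=\bm 0$ from Assumption~\ref{as.truth_distribution} gives $\E\|v\|^2=\nu_r^2+\|\wHat-\wMean\|^2$. Combining with the noise contribution produces exactly $\ftest{\|\wHat-\wMean\|^2}$. The only mildly delicate step is the fourth-moment computation $\E[(\Xtest\Xtest^T)^2]$; everything else is bookkeeping once one is careful to exploit the independence structure ($\noiseTest\perp$ everything, $\x\perp(\Xtest,\wTestTrue)$, and $\wTestTrue\perp\Xtest$) to avoid any coupling with $\wHat$.
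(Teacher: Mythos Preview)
Your proposal is correct and follows essentially the same route as the paper: derive $\wTestTrue-\wHatTest=A(\wTestTrue-\wHat)-\tfrac{\stepSizeTest}{\numTest}\Xtest\noiseTest$, use independence of $\noiseTest$ to decouple signal and noise, reduce the signal part to $\E[A^TA]$ via $\E[\x\x^T]=\iMatrix_p$, compute $\E[(\Xtest\Xtest^T)^2]=\numTest(\numTest+p+1)\iMatrix_p$, and finish with $\E\|\wHat-\wTestTrue\|^2=\|\wHat-\wMean\|^2+\nu_r^2$. The only cosmetic difference is that the paper packages the fourth-moment identity as a separate lemma (their Lemma~\ref{le.XXXX}) rather than deriving it inline via the column decomposition you use.
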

Notice that in the meta-learning phase, the ideal situation is that the learned meta parameter $\wHat$ perfectly matches the mean $\wMean$ of true parameters. Thus, the term $\normTwo{\wHat-\wMean}^2$ characterizes how well the meta-training goes. The rest of the terms in the Lemma~\ref{le.with_test_task} then characterize the effect of the one-step training for the test task.
The proof of Lemma~\ref{le.with_test_task} is in Appendix~\ref{proof.with_test_task}.
Note that the expression of $\ftest{\zeta}$ coincides with Eq.~(65) of \citet{bernacchia2021meta}. However, \citet{bernacchia2021meta} uses a different setup and does not analyze its implication as what we will do in the rest of this section. 
% Besides, we further provide a new corollary as follows that partially answers the question of how $\stepSizeTest$ affects the test error.

\subsection{understanding the test error}

% Proposition~\ref{prop.with_test_task} shows that the generalization error is contributed by two factors:  the error caused by the meta-learning phase (i.e., learning from all $\numTasks$ training tasks.), and the effect of the one-step training for the test task.

\begin{proposition}\label{propo.test_task}
We have
\begin{align*}
\textstyle
    \frac{p+1}{\numTest+p+1}(\normTwo{\wHat-\wMean}^2+\nu_r^2)\leq \min_{\stepSizeTest}\E_{\x,\Xtest,\noiseTest,\wTestTrue}[\TestError]\leq \normTwo{\wHat-\wMean}^2+\nu_r^2.
\end{align*}
Further, by letting $\stepSizeTest = \frac{\numTest}{\numTest+p+1}$ (which is optimal when $\sigma_r=0$), we have
\begin{align*}
\textstyle
    \E_{\x,\Xtest,\noiseTest,\wTestTrue}[\TestError]=\frac{p+1}{\numTest+p+1}(\normTwo{\wHat-\wMean}^2+\nu_r^2)+\frac{\numTest p}{(\numTest+p+1)^2}\sigma_r^2.
\end{align*}
\end{proposition}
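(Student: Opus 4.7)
The plan is to invoke Lemma~\ref{le.with_test_task} and then reduce the proposition to a one-dimensional quadratic optimization in $\stepSizeTest$. Writing $A\defeq \normTwo{\wHat-\wMean}^2+\nu_r^2$ for brevity, Lemma~\ref{le.with_test_task} yields
\begin{align*}
g(\stepSizeTest) \defeq \E_{\x,\Xtest,\noiseTest,\wTestTrue}[\TestError] = A\left((1-\stepSizeTest)^2 + \frac{p+1}{\numTest}\stepSizeTest^2\right) + \frac{p\sigma_r^2}{\numTest}\stepSizeTest^2,
\end{align*}
a quadratic polynomial in $\stepSizeTest$ whose $A$-factor and $\sigma_r^2$-term are individually non-negative. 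This clean structure is the key observation that makes all three claims mechanical.

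For the upper bound, I would simply evaluate $g$ at $\stepSizeTest=0$ (i.e., no adaptation), getting $g(0)=A$, which gives $\min_{\stepSizeTest} g(\stepSizeTest) \le \normTwo{\wHat-\wMean}^2 + \nu_r^2$. For the lower bound, let $h(\stepSizeTest) \defeq (1-\stepSizeTest)^2 + \tfrac{p+1}{\numTest}\stepSizeTest^2$. Setting $h'(\stepSizeTest)=0$ shows $h$ is minimized at $\stepSizeTest^{\star}\defeq \numTest/(\numTest+p+1)$ with value $h(\stepSizeTest^{\star}) = (p+1)/(\numTest+p+1)$, by direct substitution. Since $\frac{p\sigma_r^2}{\numTest}\stepSizeTest^2\ge 0$, we have $g(\stepSizeTest)\ge A\cdot h(\stepSizeTest) \ge A\cdot h(\stepSizeTest^{\star})$ for every $\stepSizeTest$, which is the claimed lower bound.

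For the final identity, I would plug $\stepSizeTest^{\star}=\numTest/(\numTest+p+1)$ into $g$. Then $1-\stepSizeTest^{\star}=(p+1)/(\numTest+p+1)$, so $(1-\stepSizeTest^{\star})^2=(p+1)^2/(\numTest+p+1)^2$ and $\tfrac{p+1}{\numTest}(\stepSizeTest^{\star})^2=(p+1)\numTest/(\numTest+p+1)^2$; these sum to $(p+1)/(\numTest+p+1)$. Similarly $\tfrac{p}{\numTest}(\stepSizeTest^{\star})^2=\numTest p/(\numTest+p+1)^2$, multiplied by $\sigma_r^2$. Assembling the pieces produces exactly the stated equality.

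No genuine obstacle is anticipated: once Lemma~\ref{le.with_test_task} has absorbed all randomness into a closed-form quadratic, everything else is single-variable calculus and arithmetic. The only conceptual point worth noting is that $\stepSizeTest^{\star}$ minimizes only the $A$-coefficient $h(\stepSizeTest)$, not the full $g(\stepSizeTest)$ when $\sigma_r>0$; that is why the proposition advertises $\stepSizeTest^{\star}$ as ``optimal when $\sigma_r=0$'' while the lower bound, which merely drops the non-negative noise term, remains tight in exactly that same regime.
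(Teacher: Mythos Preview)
Your proposal is correct. It differs slightly from the paper's argument, which computes the exact minimizer $\stepSizeTestOpti=\dfrac{K}{(1+\frac{p+1}{\numTest})K+\frac{p\sigma_r^2}{\numTest}}$ for arbitrary $\sigma_r$, evaluates the minimum as $K\cdot\dfrac{p+1+p\sigma_r^2/K}{\numTest+p+1+p\sigma_r^2/K}$, and then observes that this expression is monotone increasing in $\sigma_r^2$, so the endpoints $\sigma_r=0$ and $\sigma_r\to\infty$ give the two bounds. Your route is more direct: you obtain the upper bound by evaluating at $\stepSizeTest=0$ and the lower bound by simply dropping the nonnegative noise term before minimizing, never needing the general minimizer or the monotonicity step. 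The paper's approach yields the extra information of the exact optimal step size and minimum for every $\sigma_r$; yours is shorter and makes the structure of the bounds (evaluation at a feasible point, dropping a nonnegative term) more transparent.
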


The derivation of Proposition~\ref{propo.test_task} is in Appendix~\ref{app.proof_coro_test_task}.
Some insights from Proposition~\ref{propo.test_task} are as follows.

\textbf{1) Optimal $\stepSizeTest$ does not help much when overparameterized.}
For meta-learning, the number of training samples $\numTest$ for the test task is usually small (otherwise there is no need to do meta-learning). Therefore, in Proposition~\ref{propo.test_task}, when overparameterized (i.e., $p$ is relatively large), the coefficient $\frac{p+1}{\numTest+p+1}$ of $(\normTwo{\wHat-\wMean}^2+\nu_r^2)$ is close to $1$. On the other hand, the upper bound $\normTwo{\wHat-\wMean}^2+\nu_r^2$ can be achieved by letting $\stepSizeTest=0$, which implies that the effect of optimally choosing $\stepSizeTest$ is limited under this circumstance. Further, calculating the optimal $\stepSizeTest$ requires the precise values of $\normTwo{\wHat-\wMean}^2$, $\nu_r^2$, and $\sigma_r^2$. However, those values are usually impossible/hard to get beforehand. Hence, we next investigate how to choose an easy-to-obtain $\stepSizeTest$.

\textbf{2) Choosing $\stepSizeTest=\numTest/(\numTest+p+1)$ is practical and good enough when overparameterized.} Choosing $\stepSizeTest=\numTest/(\numTest+p+1)$ is practical since $\numTest$ and $p$ are known. By Proposition~\ref{propo.test_task}, the gap between choosing $\stepSizeTest=\numTest/(\numTest+p+1)$ and choosing optimal $\stepSizeTest$ is at most $\frac{\numTest p}{(\numTest+p+1)^2}\sigma_r^2\leq \frac{\numTest}{p}\sigma_r^2$. When $p$ increases, this gap will decrease to zero. In other words, when heavily overparameterized, choosing $\stepSizeTest=\numTest/(\numTest+p+1)$ is good enough.

\textbf{3) Overparameterization can reduce the noise error to zero but cannot diminish the task diversity error to zero.} In the expression of $\ftest{\normTwo{\wHat-\wMean}^2}$ in Lemma~\ref{le.with_test_task}, there are two parts related to the test task: the noise error (the term for $\sigma_r$) and the task diversity error (the term for $\nu_r$). By Proposition~\ref{propo.test_task}, even if we choose the optimal $\stepSizeTest$, the term of $\nu_r^2$ will not diminish to zero when $p$ increases. In contrast, by letting $\stepSizeTest=\numTest/(\numTest+p+1)$, the noise term $\frac{\numTest p}{(\numTest+p+1)^2}\sigma_r^2\leq \frac{\numTest}{p}\sigma_r^2$ will diminish to zero when $p$ increases to infinity.

% \yl{Do you want to interpret this as ``overparameterization cause different effects on the two errors: task $\nu_r$ due to task variance and noise error due to $\sigma_r$. Namely, overparameterization can reduce the noise error to zero but cannot diminish the task error to zero.}

\subsection{characterization of model error}

% In the region that $p\gg \numTest$, we further know that $\stepSizeTestOpti$ is almost proposition to $1/p$; ii) when $\nu$ is sufficiently large that $\nu\gg \normTwo{\wHat-\wMean}$, then $\frac{\sigma_r^2}{\normTwo{\wHat-\wMean}^2+\nu^2}$ does not change much when $p$ increases and thus $\stepSizeTestOpti$ is almost proportional to $1/p$.

% \textbf{Effect of noise in the training data of the test task when $p$ increases}

% As we mentioned in the last part, we should choose $\stepSizeTest\propto p$. Let $\stepSizeTest=a p$. then the last term of Proposition~\ref{prop.with_test_task} becomes $\frac{a^2}{\numTest p}\sigma^2$, which decreases when $p$ increases.

% \subsection{For the training tasks}
Since we already have Lemma~\ref{le.with_test_task}, to estimate the generalization error, it only remains to estimate $\normTwo{\wHat-\wMean}^2$, which we refer as \emph{model error}. The following Theorem~\ref{th.main} gives a high probability upper bound on the model error.

\begin{theorem}\label{th.main}
Under Assumptions~\ref{as.input} and \ref{as.truth_distribution}, when $\min\{p,\numTrain\}\geq 256$,  we must have
\begin{align*}
\textstyle    \Pr_{\X[1:\numTasks],\XValidate[1:\numTasks]}\left\{\E_{\wTrainTrue[1:\numTasks],\noiseTrain[1:\numTasks],\noiseValidate[1:\numTasks]}\normTwo{\wTwo-\wMean}^2 \leq b_w\right\}\geq 1 - \eta,
\end{align*}
where $b_w \defeq b_{\wMean} + \bideal$ and $\eta\defeq \frac{27\numTasks^2\numValidate^2}{\min\{p,\numTrain\}^{0.4}}$.
\end{theorem}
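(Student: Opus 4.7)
The plan is to leverage the closed form of the min $\ell_2$-norm interpolator and decompose $\normTwo{\wTwo-\wMean}^2$ into two orthogonal pieces: a geometric piece determined by the row space of $\metaB$ and a residual piece driven by noise and task heterogeneity. In the overparameterized regime $p>\numTasks\numValidate$, $\metaB\metaB^T$ is invertible almost surely, so $\wTwo=\metaB^T(\metaB\metaB^T)^{-1}\gamma$. Writing $\gamma=\metaB\wMean+(\gamma-\metaB\wMean)$ and noting that $(\iMatrix_p-P_{\metaB})\wMean$ lies in the null space of $\metaB$ while $\metaB^T(\metaB\metaB^T)^{-1}(\gamma-\metaB\wMean)$ lies in its row space (with $P_{\metaB}\defeq\metaB^T(\metaB\metaB^T)^{-1}\metaB$), the two pieces are orthogonal, so
\begin{align*}
\normTwo{\wTwo-\wMean}^2 = \normTwo{(\iMatrix_p-P_{\metaB})\wMean}^2 + (\gamma-\metaB\wMean)^T(\metaB\metaB^T)^{-1}(\gamma-\metaB\wMean).
\end{align*}
I would identify $b_{\wMean}$ with a high-probability bound on the first summand (purely a function of the $\X[1:\numTasks]$ and $\XValidate[1:\numTasks]$) and $\bideal$ with a high-probability bound on the conditional expectation of the second summand after integrating out $\wTrainTrue[1:\numTasks]$, $\noiseTrain[1:\numTasks]$, $\noiseValidate[1:\numTasks]$.

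For the geometric piece, I would use that the distribution of $\metaB$ is invariant under right-multiplication by any orthogonal $p\times p$ matrix, because $\X[i]$ and $\XValidate[i]$ are isotropic Gaussian and the inner factor $\iMatrix_p-\frac{\stepSize}{\numTrain}\X[i]\X[i]^T$ transforms equivariantly. Conditional on $\metaB$ having full row rank, the row space of $\metaB$ is therefore uniform on the Grassmannian $\mathrm{Gr}(\numTasks\numValidate,p)$, and $\normTwo{P_{\metaB}\wMean}^2/\normTwo{\wMean}^2$ follows a $\mathrm{Beta}(\numTasks\numValidate/2,(p-\numTasks\numValidate)/2)$ law. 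A standard Beta tail bound then yields $\normTwo{(\iMatrix_p-P_{\metaB})\wMean}^2\leq\bigl(\frac{p-\numTasks\numValidate}{p}+O(\sqrt{\ln p/p})\bigr)\normTwo{\wMean}^2$ with the required probability, matching the first term of the macro \texttt{\textbackslash approxSum} that the authors isolate.

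For the residual piece, taking the conditional expectation over $\wTrainTrue[i]$, $\noiseTrain[i]$, $\noiseValidate[i]$ and using independence across tasks makes the covariance of $\gamma-\metaB\wMean$ block-diagonal, with $i$-th block $\metaB_i\mathbf{\Lambda}_{(i)}\metaB_i^T+\sigma^2\iMatrix_{\numValidate}+\frac{\stepSize^2\sigma^2}{\numTrain^2}\XValidate[i]^T\X[i]\X[i]^T\XValidate[i]$, where $\metaB_i$ denotes the $i$-th row block of $\metaB$ (this identity comes from expanding $\gamma_i-\metaB_i\wMean=\metaB_i(\wTrainTrue[i]-\wMean)+\noiseValidate[i]-\frac{\stepSize}{\numTrain}\XValidate[i]^T\X[i]\noiseTrain[i]$). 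I would then bound its trace against $(\metaB\metaB^T)^{-1}$ via $\Tr(A^{-1}C)\leq\|A^{-1}\|\,\Tr(C)$ applied separately to the diversity block and the two noise blocks, invoking a lower bound on the smallest singular value of $\metaB$ (see below) together with Marchenko--Pastur-type upper bounds on the largest singular values of $\X[i]$ and $\XValidate[i]$.

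The main obstacle is controlling the smallest singular value of $\metaB$ from below with an explicit polynomial failure rate in $\min\{p,\numTrain\}$: conditional on $\X[1:\numTasks]$, the rows of $\metaB$ are Gaussian with task-dependent covariance $(\iMatrix_p-\frac{\stepSize}{\numTrain}\X[i]\X[i]^T)^2$ rather than $\iMatrix_p$. My plan is to use the SVD $\X[i]=\SVDU[i]\SVDD[i]\SVDV[i]^T$, rewrite $\metaB_i=\XValidate[i]^T\SVDU[i]\bigl(\iMatrix_p-\frac{\stepSize}{\numTrain}\SVDD[i](\SVDD[i])^T\bigr)\SVDU[i]^T$, and absorb the orthogonal $\SVDU[i]^T$ into the rotationally invariant Gaussian $\XValidate[i]$. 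The spectrum of $\metaB_i\metaB_i^T$ then factorizes into a $\chi^2$-like quadratic-form contribution from $\XValidate[i]$ and the deterministic factor $\bigl(1-\stepSize\,\singularValue{k}{i}/\numTrain\bigr)^2$; the per-task envelopes encoded in the preamble macros \texttt{\textbackslash BdiagLowerBound} and \texttt{\textbackslash BdiagUpperBound} can then be established by Gaussian concentration combined with classical Marchenko--Pastur bounds. A union bound across the $\numTasks$ tasks and $\numValidate$ validation samples finally produces the $\numTasks^2\numValidate^2$ prefactor and the $\min\{p,\numTrain\}^{0.4}$ polynomial rate appearing in $\eta$.
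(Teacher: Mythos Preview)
Your overall architecture is exactly the paper's: the orthogonal decomposition into $\normTwo{(\iMatrix_p-P_{\metaB})\wMean}^2$ plus $(\delta\gamma)^T(\metaB\metaB^T)^{-1}\delta\gamma$, the rotational-invariance argument for the first piece (matching Lemma~\ref{le.B_rotate} and Proposition~\ref{le.projection}), and the $\Tr(A^{-1}C)\leq\|A^{-1}\|\Tr(C)$ step for the second (matching Eq.~\eqref{eq.temp_090101} and Proposition~\ref{prop.norm_delta_gamma_new}) are all correct and are what the paper does.

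The gap is in your plan for the smallest singular value of $\metaB$. After the per-task SVD you describe, you can indeed control the \emph{diagonal} entries of $\metaB\metaB^T$---this is precisely what the macros \texttt{\textbackslash BdiagLowerBound}/\texttt{\textbackslash BdiagUpperBound} encode, and the paper proves it as Lemma~\ref{le.type_1}. But diagonal bounds alone, or even per-block bounds on $\metaB_i\metaB_i^T$, do not give a lower bound on $\lambda_{\min}(\metaB\metaB^T)$: the off-diagonal blocks $\metaB_i\metaB_l^T$ for $i\neq l$ can shift eigenvalues arbitrarily unless you show they are small. Your SVD trick does not handle these, because the right-hand orthogonal factors $\SVDU[i]^T$ and $\SVDU[l]^T$ are different and do not cancel in the cross product. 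The paper closes this by Gershgorin's circle theorem (Lemma~\ref{le.disc}), which requires separate elementwise bounds on the ``Type~2'' off-diagonal entries within a task block (Lemma~\ref{le.type_2}) and the ``Type~3'' cross-task entries (Lemma~\ref{le.type_3}, via a rotational-symmetry argument for the angle between two independent copies of $(\iMatrix_p-\tfrac{\stepSize}{\numTrain}\X[i]\X[i]^T)\bm{a}$). The $\numTasks^2\numValidate^2$ prefactor in $\eta$ is exactly the union bound over all $(\numTasks\numValidate)^2$ entries needed for Gershgorin, not a per-task/per-sample union bound as you suggest; a union over $\numTasks$ tasks and $\numValidate$ samples would only produce $\numTasks\numValidate$, which is a tell that the off-diagonal step is missing from your outline.
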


The value of $b_{\wMean}$ and $\bideal$ are completely determined by the finite (i.e., non-asymptotic region) system parameters $p,s,\numTasks,\numTrain,\numValidate,\normTwo{\wMean}^2,\nu^2,\sigma^2$, and $\stepSize$. The precise expression will be given in Section \ref{sec.sketch_tightness}, along with the proof sketch of Theorem~\ref{th.main}. Notice that although $b_w$ is only an upper bound, we will show in Section \ref{sec.sketch_tightness} that each component of this upper bound is relatively tight.

% We will discuss the tightness of each term and the meaning of each term later in Section~\ref{sec.sketch_tightness} \yl{the last sentence may be removed}.

Theorem~\ref{th.main} differs from the result in \cite{bernacchia2021meta} in two main aspects. First, Theorem~\ref{th.main} works in the non-asymptotic region where $p$ and $\numTrain$ are both finite, whereas their result holds only in the asymptotic regime where $p,\numTrain\to \infty$. In general, a non-asymptotic result is more powerful than an asymptotic result in order to understand and characterize how the generalization performance changes as the model becomes more overparameterized. Second, the nature of our bound is in high probability with respect to the training and validation data, which is much stronger than their result in expectation. Due to the above key differences, our derivation of the bound is very different and much harder.

%Since Theorem~\ref{th.main} works in the non-asymptotic region that $p$ and $\numTrain$ are both finite, whose derivation is different from and much harder than the estimation in the asymptotic region where $p,\numTrain\to \infty$ in other related work such as \cite{bernacchia2021meta}. Notice that for the purpose of understanding/characterizing how the generalization performance changes when using more parameters/features in the learning, a non-asymptotic result is more useful than an asymptotic result.

As we will show in Section \ref{sec.sketch_tightness}, the detailed expressions of $b_{\wMean}$ and $\bideal$ are complicated. In order to derive some useful interpretations, we provide a simpler form by approximation\footnote{The approximation considers only the dominating terms and treats logarithm terms as constants (since they change slowly). Notice that our approximation here is different from an asymptotic result, since the precision of such approximation in the finite regime can be precisely quantified, whereas an asymptotic result can only estimate the precision in the order of magnitude in the infinite regime as typically denoted by $O(\cdot)$ notations.} for an overparameterized regime, where $\stepSize p \ll 1$ and $\min\{p,\numTrain\}\gg \numTasks\numValidate$. In such a regime, we have
\begin{align}
\textstyle
    b_{\wMean}\approx \frac{p-\numTasks\numValidate}{p}\normTwo{\wMean}^2,\quad \bideal\approx \frac{b_{\delta}}{p-C_4\numTasks\numValidate},\label{eq.approx}
\end{align}
where $b_{\delta}\approx \numTasks \numValidate \left((1+\frac{C_1}{\numTrain})\sigma^2+ C_2 (1+\frac{C_3}{\numTrain}) \nu^2 \right)$, and $C_1$ to $C_4$ are some constants. It turns out that a number of interesting insights can be obtained from the simplified bounds above.

\textbf{1) Heavier overparameterization reduces the negative effects of noise and task diversity.} From Eq.~\eqref{eq.approx}, we know that when $p$ increases, $\bideal$ decreases to zero. Notice that $\bideal$ is the only term that is related to $\sigma^2$ and $\nu^2$, i.e., $\bideal$ corresponds to the negative effect of noise and task diversity/fluctuation. Therefore, we can conclude that using more features/parameters can reduce the negative effect of noise and task diversity/fluctuation.

%Another interesting interpretation about $\bideal$ is that $\bideal$ corresponds to
Interestingly, $\bideal$ can be interpreted as the model error for the \emph{ideal interpolator} $\wIdeal$ defined as
\begin{align*}
\textstyle
    \wIdeal\defeq \argmin_{\wHat}\normTwo{\wHat-\wMean}^2 \quad \text{ subject to } \quad \metaB \wHat=\gamma.
\end{align*}
Differently from the min $\ell_2$-norm overfitted solution in Eq.~\eqref{eq.def_wTwo} that minimizes the norm of $\wHat$, the ideal interpolator minimizes the distance between $\wHat$ and $\wMean$, i.e., the model error (this is why we define it as the ideal interpolator). The following proposition states that $\bideal$ corresponds to the model error of $\wIdeal$.

\begin{proposition}\label{prop.ideal}
When $\min\{p,\numTrain\}\geq 256$, we must have
\begin{align*}
    \Pr_{\XValidate[1:\numTasks],\X[1:\numTasks]}\left\{\E_{\wTrainTrue[1:\numTasks],\noiseTrain[1:\numTasks],\noiseValidate[1:\numTasks]}\normTwo{\wIdeal-\wMean}^2\leq \bideal\right\}\geq 1 - \frac{26\numTasks^2\numValidate^2}{\min\{p,\numTrain\}^{0.4}},
\end{align*}
\end{proposition}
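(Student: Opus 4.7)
The plan is to exploit the closed-form solution for $\wIdeal$ and then re-apply the same spectral estimates on $\metaB$ that already underlie Theorem~\ref{th.main}. Since $\wIdeal$ is the orthogonal projection of $\wMean$ onto the affine interpolation set $\{\wHat:\metaB\wHat=\gamma\}$, the normal equations give
\begin{equation*}
    \wIdeal - \wMean = \metaB^T(\metaB\metaB^T)^{-1}(\gamma - \metaB\wMean),
    \qquad
    \normTwo{\wIdeal-\wMean}^2 = (\gamma-\metaB\wMean)^T(\metaB\metaB^T)^{-1}(\gamma-\metaB\wMean).
\end{equation*}
I would handle the two randomness sources separately: first take the expectation over the task parameters and noises conditioned on the design matrices, and then bound the spectrum of $\metaB\metaB^T$ with high probability over the design matrices.

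For the conditional expectation, I would use Eq.~(\ref{eq.def_gamma}) to write the $i$-th block of $\gamma-\metaB\wMean$ as
\begin{equation*}
    \mathbf{X}^{v(i)\,T}\bigl(\iMatrix_p-\tfrac{\stepSize}{\numTrain}\mathbf{X}^{t(i)}\mathbf{X}^{t(i)\,T}\bigr)(\bm{w}^{(i)}-\wMean) \;+\; \bm{\epsilon}^{v(i)} \;-\; \tfrac{\stepSize}{\numTrain}\mathbf{X}^{v(i)\,T}\mathbf{X}^{t(i)}\,\bm{\epsilon}^{t(i)}.
\end{equation*}
By Assumptions~\ref{as.input} and \ref{as.truth_distribution} this expression is zero-mean and independent across $i$, so its conditional covariance $\mathbf{M}$ is block-diagonal with the $i$-th block being the sum of a noise piece controlled by $\sigma^2$ and a diversity piece controlled by $\nu_{(i)}^2$. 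The conditional expectation of the quadratic form then equals $\Tr\bigl(\mathbf{M}(\metaB\metaB^T)^{-1}\bigr)$; bounding each diagonal block of $\mathbf{M}$ by concentration of $\mathbf{X}^{t(i)}\mathbf{X}^{t(i)\,T}/\numTrain$ and $\mathbf{X}^{v(i)}\mathbf{X}^{v(i)\,T}$ around their expected traces collapses this to a scalar multiple of $\Tr\bigl((\metaB\metaB^T)^{-1}\bigr)$ whose coefficient matches precisely the definition of $b_\delta$.

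The main obstacle is the high-probability lower bound on $\lambda_{\min}(\metaB\metaB^T)$, which is what produces the denominator $p-C_4\numTasks\numValidate$ in $\bideal$. This is exactly the matrix-concentration statement that the proof sketch for Theorem~\ref{th.main} must already establish: each row of $\metaB$ is approximately isotropic in $\mathds{R}^p$ once $\stepSize p \ll 1$, so a tall ($p\gg \numTasks\numValidate$) stack of $\numTasks\numValidate$ such rows satisfies $\lambda_{\min}(\metaB\metaB^T)\gtrsim p-C_4\numTasks\numValidate$ with probability at least $1-26\numTasks^2\numValidate^2/\min\{p,\numTrain\}^{0.4}$. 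Combining this lower bound with the conditional expectation computed above yields the advertised bound on $\E\normTwo{\wIdeal-\wMean}^2$ with the stated failure probability. Conceptually, the reason $b_{\wMean}$ is absent here is that the projection onto $\wMean$ eliminates the bias term that arises for the min-$\ell_2$-norm choice of origin in Theorem~\ref{th.main}, which is also why the failure probability can be shaved from $27$ to $26$ over $\min\{p,\numTrain\}^{0.4}$.
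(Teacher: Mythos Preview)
Your proposal is correct and follows essentially the same route as the paper: write $\normTwo{\wIdeal-\wMean}^2=\delta\gamma^T(\metaB\metaB^T)^{-1}\delta\gamma$, take the conditional expectation over $(\wTrainTrue[1:\numTasks],\noiseTrain[1:\numTasks],\noiseValidate[1:\numTasks])$, and combine the high-probability bound on $\E\normTwo{\delta\gamma}^2$ (Proposition~\ref{prop.norm_delta_gamma_new}) with the high-probability lower bound on $\lambda_{\min}(\metaB\metaB^T)$ (Propositions~\ref{prop.eig_BB} and~\ref{prop.eig_p_less_n}) via a union bound. The only cosmetic difference is that the paper bounds $\Tr(\mathbf{M}(\metaB\metaB^T)^{-1})\le \Tr(\mathbf{M})/\lambda_{\min}$ directly rather than first passing through an operator-norm bound on $\mathbf{M}$ and $\Tr((\metaB\metaB^T)^{-1})$; since $b_\delta$ is defined as a trace bound on $\mathbf{M}$ (it carries the factor $\numTasks\numValidate$), your phrasing ``a scalar multiple of $\Tr((\metaB\metaB^T)^{-1})$ whose coefficient matches $b_\delta$'' should really read $b_\delta/(\numTasks\numValidate)$, but this does not affect the final bound $\bideal=b_\delta/\lambda_{\min}$.
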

% \yl{Can we write Proposition 2 in a similar form to Theorem 1 without defining the set $\AtargetNone$?}

% The proof of Proposition~\ref{prop.ideal} is in Appendix~\ref{app.proof_ideal}.

\textbf{2) Overfitting is beneficial to reduce model error for the ideal interpolator.} Although calculating $\wIdeal$ is not practical since it needs to know the value of $\wMean$, we can still use it as a benchmark that describes the best performance among all overfitted solutions. From the previous analysis, we have already shown that $\bideal\to 0$ when $p\to\infty$, i.e., the model error of the ideal interpolator decreases to $0$ when the number of features grows. Thus, we can conclude that overfitting is beneficial to reduce the model error for the ideal interpolator. This can be viewed as evidence that overfitting itself should not always be viewed negatively, which is consistent with the success of DNNs in meta-learning.

\begin{figure}[t]
\includegraphics[width=4.2in]{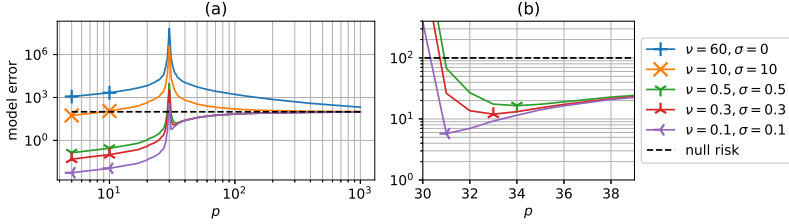}
\centering
\caption{The model error w.r.t. different values of $\nu$ and $\sigma$, where $\numTasks=10$, $\numTrain=50$, $\numValidate=3$, $s=5$, $\normTwo{\wMean}^2=100$, and $\stepSize=\frac{0.02}{p}$. Subfigure~(b) is a copy of subfigure~(a) that zooms in the descent floor. Every point is the average of 100 random simulations. The markers in subfigure~(b) indicate the descent floor for each curve.}\label{fig.descent}
\end{figure}

\textbf{3) The descent curve is easier to observe under large noise and task diversity, and the curve sometimes has a descent floor.} In Eq.~\eqref{eq.approx}, when $p$ increases, $b_{\wMean}$ increases but $\bideal$ decreases. When $\sigma$ and $\nu$ becomes larger, $\bideal$ becomes larger while $b_{\wMean}$ does not change, so that the descent of $\bideal$ contributes more to the trend of $b_w=b_{\wMean}+\bideal$ (i.e., the overall model error). By further calculating the derivative of $b_w$ with respect to $p$ (see details in Appendix~\ref{app.cal_descent_floor}), we observe that, if $g\geq 1$, then $b_w$ always decreases for $p>C_4\numTasks\numValidate$. If $g\defeq \frac{b_{\delta}}{\numTasks\numValidate\normTwo{\wMean}^2}<1$, then $b_w$ decreases when $p\in (C_4\numTasks\numValidate, \frac{C_4\numTasks\numValidate}{1-\sqrt{g}})$, and increases when $p>\frac{C_4\numTasks\numValidate}{1-\sqrt{g}}$. Thus, the minimal value (i.e., the floor value) of $b_w$ is equal to $\normTwo{\wMean}^2\left(1-\frac{(1-\sqrt{g})^2}{C_4}\right)$, which is achieved at $p=\frac{C_4\numTasks\numValidate}{1-\sqrt{g}}$. This implies that the descent curve of the model error has a floor only when $b_{\delta}$ is small, i.e., when the noise and task diversity are small.
Notice that the threshold $\frac{C_4\numTasks\numValidate}{1-\sqrt{g}}$ and the floor value $\normTwo{\wMean}^2\left(1-\frac{(1-\sqrt{g})^2}{C_4}\right)$ increase as $g$ increases. Therefore, we anticipate that as $\nu$ and $\sigma$ increase, the descent floor value and its location both increase.

In Fig.~\ref{fig.descent}, we draw the curve of the model error with respect to $p$ for the min $\ell_2$-norm solution. In subfigure~(a), the blue curve (with the marker ``$+$'') and the yellow curve (with the marker ``$\times$'') have relatively large $\nu$ and/or $\sigma$. These two curves always decrease in the overparameterized region $p>\numTasks\numValidate$ and have no descent floor. In contrast, the rest three curves (purple, red, green) in subfigure~(a) have descent floor since they have relatively small $\nu$ and $\sigma$. Subfigure~(b) shows the location and the value of the descent floor. As we can see, when $\nu$ and $\sigma$ increase, the descent floor becomes higher and locates at larger $p$. These observations are consistent with our theoretical analysis.

In Appendix~\ref{app.nn_simu}, we provide a further experiment where we train a two-layer fully connected neural network over the MNIST data set. 
%run simulations (in Appendix~\ref{app.nn_simu}) with real-world data and a simple neural network. 
We observe that a descent floor still occurs. Readers can find more details about the experiment in Appendix~\ref{app.nn_simu}. %Understanding such a phenomenon in meta-learning with neural network can be an interesting future direction.}

% \TODO{Explain Fig. 1}

% \textbf{Effect of noise in the training data of the training tasks when $p$ increases}

% % \begin{align*}
% %     \left(\frac{\left(\frac{1}{\numTrain}-2a\right)p + \frac{1}{\numTest} + a^2}{p^2}+1\right)\left(\normTwo{\wTwo-\wMean}^2+\nu^2\right)+\frac{a^2}{\numTest p}\sigma^2
% % \end{align*}

% By Eq.~\eqref{eq.bw}, the terms that relate to $\sigma$ is
% \begin{align*}
%     \frac{\numTasks\numValidate+\numTasks\stepSize^2\numValidate \frac{(\ln \numTrain)^2}{\numTrain} p\ln p}{c_1 p}\sigma^2.
% \end{align*}
% If $p/\numTrain$ is not too large, then this term also decreases as $p$ increases.

\textbf{4) Task diversity yields double descent under zero noise.} 
For a single-task classical linear regression model $y=\x^T \wMean + \epsilon$, the authors of \cite{belkin2020two} study the overfitted min $\ell_2$-norm solutions $\wSingleTask$ learned by interpolating $n$ training samples with $p\geq n+2$ \emph{i.i.d.} Gaussian features. The result in \cite{belkin2020two} shows that its expected model error is
\begin{align*}
    \E\normTwo{\wSingleTask-\wMean}^2=\frac{p-n}{p}\normTwo{\wMean}^2+\frac{n}{p-n-1}\sigma^2.
\end{align*}
We find that both meta-learning and the single-task regression have similar bias terms $\frac{p-\numTasks\numValidate}{p}\normTwo{\wMean}^2$ and $\frac{p-n}{p}\normTwo{\wMean}^2$, respectively. When $p\to \infty$, these bias terms increase to $\normTwo{\wMean}^2$, which corresponds to the \emph{null risk} (the error of a model that always predicts zero). For the single-task regression, the remaining term  $\frac{n}{p-n-1}\sigma^2\approx \frac{n}{p}\sigma^2$ when $p\gg n$, which contributes to the descent of generalization error when $p$ increases. On the other hand, if there is no noise ($\sigma=0$), then the benign overfitting/double descent will disappear for the single-task regression. In contrast, for meta-learning, the term that contributes to benign overfitting is $\bideal$. As we can see in the expression of $\bideal$ in Eq.~\eqref{eq.approx}, even if the noise is zero, as long as there exists task diversity/fluctuation (i.e., $\nu>0$), the descent of the model error with respect to $p$ should still exist. This is also confirmed by Fig.~\ref{fig.descent}(a) with the descent blue curve (with marker ``+'')  of $\nu=60$ and $\sigma=0$.

\textbf{5) Overfitted solution can generalize better than underfitted solution.} 
Let $\wSingle$ denote the solution when $p=s=1$. We have
\begin{align*}
    \normTwo{\wSingle-\wMean}^2\approx \frac{\nu^2}{\numTasks}+\frac{\sigma^2\stepSize^2}{\numTasks}+\frac{\sigma^2}{(1-\stepSize)^2 \numTasks\numValidate}.
\end{align*}
The derivation is in Appendix~\ref{app.underparam}. Notice that $p=s$ means that all features are true features, which is ideal for an underparameterized solution. Compared to the model error of overfitted solution, there is no bias term of $\wMean$, but the terms of $\nu^2$ and $\sigma^2$ are only discounted by $\numTasks$ and $\numValidate$. In contrast, for the overfitted solution when $p\gg \numTasks\numValidate$, the terms of $\nu^2$ and $\sigma^2$ are discounted by $p$. Since the value of $\numTasks$ and the value of $\numValidate$ are usually fixed or limited, while $p$ can be chosen freely and made arbitrarily large, the overfitted solution can do much better to mitigate the negative effect caused by noise and task divergence. This provides us a new insight that when $\sigma$, $\nu$ are large and $\normTwo{\wMean}$ is small, the overfitted $\ell_2$-norm solution can have a much better overall generalization performance than the underparameterized solution. This is further verified in Fig.~\ref{fig.descent}(a) by the descent blue curve (with marker ``+''), where the first point of this curve with $p=5=s$ (in the underparameterized regime) has a larger test error than its last point with $p=1000$ (in the overparameterized regime).

% {\color{red}The following part is not ready to read.}

\section{Tightness of the bound and its proof sketch}\label{sec.sketch_tightness}
We now present the main ideas of proving Theorem~\ref{th.main} and theoretically explain why the bound in Theorem~\ref{th.main} is reasonably tight by showing each part of the bound is tight. (Numerical verification of the tightness is provided in Appendix~\ref{app.tight_simu}.) The expressions of $b_{\wMean}$ and $\bideal$ (along with other related quantities) in Theorem~\ref{th.main} will be defined progressively as we sketch our proof. Readers can also refer to the beginning of Appendix~\ref{app.proof_main} for
%want to find these definitions immediately, we also provide 
a full list of the definitions of these quantities.
%at the beginning of Appendix~\ref{app.proof_main}.}

We first re-write $\normTwo{\wMean-\wTwo}^2$ into terms related to $\metaB$. When $\metaB$ is full row-rank (which holds almost surely when $p>\numTasks\numValidate$), we have
\begin{align}\label{eq.wTwo}
    \wTwo=\metaB^T\left(\metaB\metaB^T\right)^{-1}\gamma.
\end{align}
% By Eq.~\eqref{eq.def_metaB}, we have
Define $\delta\gamma$ as 
\begin{align}\label{eq.temp_051303}
    \delta\gamma\defeq \gamma - \metaB \wMean.
\end{align}
Doing some algebra transformations (details in Appendix~\ref{app.term_1_2}), we have
\begin{align}\label{eq.decomp_w0_minus_w}
    \normTwo{\wMean-\wTwo}^2=\underbrace{\normTwo{\left(\iMatrix_p-\metaB^T(\metaB\metaB^T)^{-1}\metaB\right)\wMean}^2}_{\text{Term 1}}+\underbrace{\normTwo{\metaB^T(\metaB\metaB^T)^{-1}\delta\gamma}^2}_{\text{Term 2}}
\end{align}

To estimate the model error, we take a divide-and-conquer strategy and provide a set of propositions as follows to estimate Term~1 and Term~2 in Eq.~\eqref{eq.decomp_w0_minus_w}.

%The first term of the right-hand-side of
% Term 1 in
% Eq.~\eqref{eq.decomp_w0_minus_w} is estimated in Proposition~\ref{le.projection}, which is relatively tight.

% \yl{$b_{\wMean}$ is already defined. Also do you want to define the bound in $\AEventOther[3]$ as $\tilde{b}_{\omega_0}$?}

\begin{proposition}\label{le.projection}
Define
\begin{align*}
    &b_{\wMean}\defeq\frac{(p-\numTasks\numValidate)+2\sqrt{(p-\numTasks\numValidate)\ln p}+2\ln p}{p-2\sqrt{p\ln p}}\normTwo{\wMean}^2,\\
    &\tilde{b}_{\wMean}\defeq \frac{(p-\numTasks\numValidate)-2\sqrt{(p-\numTasks\numValidate)\ln p}}{p+2\sqrt{p\ln p}+2\ln p}\normTwo{\wMean}^2.
\end{align*}
When $p\geq \numTasks\numValidate$ and $p\geq 16$, 
We then have
\begin{align*}
\textstyle
    &\Pr[\text{Term 1 of Eq.~\eqref{eq.decomp_w0_minus_w}}\leq b_{\wMean}]\geq 1 - {2}/{p},\quad \Pr[\text{Term 1 of Eq.~\eqref{eq.decomp_w0_minus_w}}\geq \tilde{b}_{\wMean}]\geq 1 - {2}/{p},\\
    &\E_{\X[1:\numTasks],\XValidate[1:\numTasks]}[\text{Term 1 of Eq.~\eqref{eq.decomp_w0_minus_w}}]=\frac{p-\numTasks\numValidate}{p}\normTwo{\wMean}^2.
\end{align*}
\end{proposition}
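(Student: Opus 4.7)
The plan is to exploit the rotational invariance of the Gaussian feature distribution to reduce Term 1 to the squared projection of a fixed vector onto a uniformly random $(\numTasks\numValidate)$-dimensional subspace of $\mathds{R}^p$, and then evaluate that quantity via chi-squared concentration. Concretely, Term 1 $= \normTwo{\wMean}^2 - \wMean^T Q \wMean$, where $Q = \metaB^T(\metaB\metaB^T)^{-1}\metaB$ is the orthogonal projection onto the row space of $\metaB$; under Assumption~\ref{as.input} this row space is almost surely $\numTasks\numValidate$-dimensional when $p \geq \numTasks\numValidate$.

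The key structural step is to show that this row space is Haar-uniform on the Grassmannian. For any orthogonal $R \in \mathds{R}^{p \times p}$, replacing the feature matrices $\X[i], \XValidate[i]$ by $R^T \X[i], R^T \XValidate[i]$ preserves their joint distribution, since each column is i.i.d.\ $\Gaussian(0, \iMatrix_p)$. Substituting into Eq.~\eqref{eq.def_gamma}, the two rotations sandwiching $\X[i]\X[i]^T$ cancel while the bare $\iMatrix_p$ picks up a trailing $R$, so the rotated-feature version of $\metaB$ equals $\metaB R$. Hence $\metaB R \overset{d}{=} \metaB$, and therefore $R^T Q R \overset{d}{=} Q$, showing that the row space of $\metaB$ is rotationally invariant and thus uniformly distributed among $(\numTasks\numValidate)$-planes. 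Choosing $R$ that aligns $\wMean$ with $\normTwo{\wMean}\, e_1$ yields $\wMean^T Q \wMean / \normTwo{\wMean}^2 \overset{d}{=} Q_{11}$. Realizing this uniform subspace as the column span of a $p \times (\numTasks\numValidate)$ matrix of i.i.d.\ Gaussians, one obtains $Q_{11} \overset{d}{=} U/(U+V)$ with independent $U \sim \chi^2_{\numTasks\numValidate}$ and $V \sim \chi^2_{p-\numTasks\numValidate}$, so that Term 1 $\overset{d}{=} \normTwo{\wMean}^2 \cdot V/(U+V)$ with $U+V \sim \chi^2_p$. The expectation formula then follows from $\E[V/(U+V)] = (p-\numTasks\numValidate)/p$.

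For the two tail bounds I would apply the Laurent--Massart chi-squared deviation inequalities to $V$ and to $U+V$ with parameter $t = \ln p$, each failing with probability at most $1/p$, and take a union bound. Combining the upper tail on $V$ with the lower tail on $U+V$ yields exactly the numerator and denominator appearing in $b_{\wMean}$; combining the reverse pair yields $\tilde{b}_{\wMean}$; each bound holds with probability at least $1 - 2/p$. The hypothesis $p \geq 16$ enters only to keep the denominator $p - 2\sqrt{p \ln p}$ strictly positive. The one step that genuinely requires care is the rotational-invariance calculation above: one must verify that $R$ threads cleanly through $\iMatrix_p - \frac{\stepSize}{\numTrain} \X[i]\X[i]^T$ and emerges only as a right factor, so that $Q$ inherits rotational invariance from $\metaB$. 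Once that is in hand, the remainder is a standard Haar-projection computation paired with routine chi-squared concentration.
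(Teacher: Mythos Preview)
Your proposal is correct and follows essentially the same route as the paper: the paper proves a rotational-symmetry lemma for the row space of $\metaB$ (your $\metaB R \overset{d}{=} \metaB$ computation), reduces Term~1 to $\normTwo{\wMean}^2$ times a ratio of two chi-squared variables of degrees $p-\numTasks\numValidate$ and $p$, and then applies the Laurent--Massart bounds with $x=\ln p$ together with a union bound, invoking $p\geq 16$ exactly to keep $p-2\sqrt{p\ln p}>0$. The only cosmetic difference is that the paper phrases the reduction as ``random vector onto a fixed subspace'' via the angle $\theta$, while you phrase it as ``fixed vector onto a Haar-random subspace'' via $Q_{11}$; the resulting distribution $V/(U+V)$ and the subsequent concentration argument are identical.
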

% \begin{proof}
%     See Appendix~\ref{app.projection}.
% \end{proof}

Proposition~\ref{le.projection} gives three estimates on Term~1 of Eq.~\eqref{eq.decomp_w0_minus_w}: the upper bound $b_{\wMean}$, lower bound $\tilde{b}_{\wMean}$, and the mean value. If we omit all logarithm terms, then these three estimates are the same, which implies that our estimation on 
Term 1 of Eq.~\eqref{eq.decomp_w0_minus_w} is fairly precise. (Proof of Proposition~\ref{le.projection} is in Appendix~\ref{app.projection}.)
It then remains to estimate 
%the second term of the right-hand-side of
Term 2 in Eq.~\eqref{eq.decomp_w0_minus_w}. Indeed, this term is also the model error of the ideal interpolator. To see this, since $\metaB\wHat=\gamma$, we have $\metaB(\wHat-\wMean)=\gamma-\metaB \wMean=\delta\gamma$. Thus, to get $\min \normTwo{\wHat-\wMean}^2$ as the ideal interpolator, we have $\wIdeal-\wMean=\metaB^T(\metaB\metaB^T)^{-1}\delta\gamma$. Therefore, we have
\begin{align}
    \normTwo{\wIdeal-\wMean}^2=\normTwo{\metaB^T(\metaB\metaB^T)^{-1}\delta\gamma}^2.\label{le.ideal}
\end{align}

Now we focus on $\normTwo{\metaB^T(\metaB\metaB^T)^{-1}\delta\gamma}^2$.
By Lemma~\ref{le.norm_min_max_eig} in Appendix~\ref{app.norm_eig}, we have
\begin{align}\label{eq.temp_090101}
\textstyle
    \frac{\normTwo{\delta\gamma}^2}{\lambda_{\max}(\metaB\metaB^T)}\leq \normTwo{\metaB^T(\metaB\metaB^T)^{-1}\delta\gamma}^2\leq  \frac{\normTwo{\delta\gamma}^2}{\lambda_{\min}(\metaB\metaB^T)}.
\end{align}

We have the following results about the eigenvalues of $\metaB\metaB^T$.

\begin{proposition}\label{prop.eig_BB}
Define $\stepSizeNormalized\defeq \frac{\stepSize}{\numTrain}\left(\sqrt{p}+\sqrt{\numTrain}+\ln \sqrt{\numTrain}\right)^2$ and
\begin{align*}
    b_{\mathsf{eig},\min}\defeq & p+\left(\max\{0,1-\stepSizeNormalized\}^2-1\right)\numTrain-\left(\left(\numValidate+1\right)\max\{\stepSizeNormalized,1-\stepSizeNormalized\}^2+6\numTasks\numValidate\right)\sqrt{p}\ln p,\\
    b_{\mathsf{eig},\max}\defeq &p+\left(\max\{\stepSizeNormalized,1-\stepSizeNormalized\}^2-1\right)\numTrain+\left(\left(\numValidate+1\right)\max\{\stepSizeNormalized,1-\stepSizeNormalized\}^2+6\numTasks\numValidate\right)\sqrt{p}\ln p.
\end{align*}
When $p\geq \numTrain\geq 256$, we must have
\begin{align*}
    \Pr\left\{b_{\mathsf{eig},\min}\leq \lambda_{\min}(\metaB\metaB^T)\leq \lambda_{\max}(\metaB\metaB^T)\leq b_{\mathsf{eig},\max}\right\}\geq 1 - {23\numTasks^2\numValidate^2 }/{\numTrain^{0.4}}.
\end{align*}
\end{proposition}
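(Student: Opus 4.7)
The plan is to condition on the training features $\X[1],\ldots,\X[\numTasks]$ and reduce the problem to a matrix concentration question in the validation features. Writing $A_i \defeq \iMatrix_p - \frac{\stepSize}{\numTrain}\X[i]\X[i]^T$, the $i$-th row-block of $\metaB$ is $\XValidate[i]^T A_i$, so $\metaB\metaB^T$ has $\numValidate\times \numValidate$ blocks $\XValidate[i]^T A_i A_j \XValidate[j]$. Taking the conditional expectation over $\XValidate[1:\numTasks]$ kills the off-diagonal blocks and makes the $(i,i)$ block equal $\Tr(A_i^2)\iMatrix_{\numValidate}$; hence the conditional mean of $\metaB\metaB^T$ is the $\numTasks\numValidate\times \numTasks\numValidate$ block-diagonal matrix whose spectrum is $\{\Tr(A_i^2)\}_{i=1}^{\numTasks}$, each eigenvalue appearing $\numValidate$ times. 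The proof then amounts to (i) pinning down the $\Tr(A_i^2)$'s via the randomness of $\X[i]$, and (ii) controlling $\|\metaB\metaB^T - \E[\metaB\metaB^T\mid \X[1:\numTasks]]\|_{\mathrm{op}}$ via the randomness of $\XValidate[1:\numTasks]$.

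For step (i), I would invoke the Davidson--Szarek / Gordon concentration of the extreme singular values of the $p\times \numTrain$ Gaussian matrix $\X[i]$: with high probability all of them lie in $[\sqrt{p}-\sqrt{\numTrain}-\ln\sqrt{\numTrain},\ \sqrt{p}+\sqrt{\numTrain}+\ln\sqrt{\numTrain}]$. Since $A_i$ has $p-\numTrain$ null-space eigenvalues equal to $1$ and $\numTrain$ range-space eigenvalues of the form $1-\frac{\stepSize}{\numTrain}\lambda_k^2$, squaring and summing gives
\begin{align*}
(p-\numTrain)+\numTrain\max\{0,1-\stepSizeNormalized\}^2 - O(\sqrt{p}\ln p)\ \le\ \Tr(A_i^2)\ \le\ (p-\numTrain)+\numTrain\max\{\stepSizeNormalized,1-\stepSizeNormalized\}^2 + O(\sqrt{p}\ln p),
\end{align*}
where the $O(\sqrt{p}\ln p)$ slack is the quadratic propagation of the Davidson--Szarek interval. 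A union bound over $i=1,\ldots,\numTasks$ keeps these simultaneous, and this already explains the leading $p+(\max\{\cdot\}^2-1)\numTrain$ behaviour of $b_{\mathsf{eig},\min}$ and $b_{\mathsf{eig},\max}$.

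For step (ii), on the good event for the $\X[i]$'s we have $\|A_iA_j\|_{\mathrm{op}}\le \max\{\stepSizeNormalized,1-\stepSizeNormalized\}^2$ and similar Frobenius control. For each of the $\numTasks$ diagonal blocks a Hanson--Wright-style bound applied to $\XValidate[i]^T A_i^2 \XValidate[i]-\Tr(A_i^2)\iMatrix_{\numValidate}$ yields an operator-norm deviation of order $(\numValidate+1)\max\{\stepSizeNormalized,1-\stepSizeNormalized\}^2\sqrt{p}\ln p$. For each of the $\numTasks(\numTasks-1)$ off-diagonal blocks the independence of $\XValidate[i]$ and $\XValidate[j]$ turns $\XValidate[i]^T A_i A_j \XValidate[j]$ into a mean-zero Gaussian chaos, whose operator norm can be controlled entry-by-entry and union-bounded to $O(\numTasks\numValidate\sqrt{p}\ln p)$; the explicit factor $6\numTasks\numValidate$ in the proposition absorbs this union bound over the $\numTasks^2\numValidate^2$ scalar entries. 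Adding the two contributions via the triangle inequality produces exactly the $\bigl((\numValidate+1)\max\{\stepSizeNormalized,1-\stepSizeNormalized\}^2+6\numTasks\numValidate\bigr)\sqrt{p}\ln p$ slack, and Weyl's inequality then sandwiches $\lambda_{\min}(\metaB\metaB^T)$ and $\lambda_{\max}(\metaB\metaB^T)$ between $\min_i\Tr(A_i^2)\pm\text{slack}$ and $\max_i\Tr(A_i^2)\pm\text{slack}$, which after simplification gives $b_{\mathsf{eig},\min}$ and $b_{\mathsf{eig},\max}$. The main obstacle I anticipate is calibrating each per-event tail so that the union bound produces the precise $23\numTasks^2\numValidate^2/\numTrain^{0.4}$ failure probability: Gaussian tails yield exponential-in-(deviation-squared) bounds, but the deviation scale $\sqrt{p}\ln p$ is only logarithmically, not polynomially, above the natural fluctuation, so the final rate degrades to polynomial and one must carefully track how the exponent $0.4$ arises from the singular-value concentration of the $\X[i]$'s rather than from the easier validation-side concentration.
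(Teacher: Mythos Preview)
Your proposal is sound and would reach the conclusion, but the paper takes a more elementary organizational route. Instead of centering $\metaB\metaB^T$ around its conditional mean and invoking Weyl plus Hanson--Wright, the paper bounds every scalar entry of $\metaB\metaB^T$ directly and then applies Gershgorin's circle theorem. It distinguishes three entry types: diagonal entries (Type~1), bounded by diagonalizing $A_i$ and using $\chi^2$ concentration on the resulting quadratic form; off-diagonal entries within a diagonal block (Type~2), reduced after the same diagonalization to sums $\sum_k u_kv_k$ of products of independent Gaussians; and entries of off-diagonal blocks (Type~3), handled by a rotational-symmetry trick---the paper shows $A_i\bm{a}$ is rotationally invariant in law and then bounds the cosine between two such independent vectors exactly as if they were Gaussian. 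The Gershgorin radius from Types~2 and~3 produces precisely the $\bigl((\numValidate+1)\max\{\stepSizeNormalized,1-\stepSizeNormalized\}^2+6\numTasks\numValidate\bigr)\sqrt{p}\ln p$ slack. Your conditional-mean framing exposes the leading $\Tr(A_i^2)$ term more transparently, while the paper's approach avoids any matrix-level concentration tool; but once you control the deviation's operator norm ``entry-by-entry'' and sum across a row you are in effect doing Gershgorin, so the two arguments converge technically. The $\numTrain^{-0.4}$ rate in both cases stems from the same scalar tail bound for $\sum_k u_kv_k$ at deviation $\sqrt{k\ln k}$, which yields only $k^{-0.4}$ rather than anything sharper.
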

% \begin{proof}
% See Appendix~\ref{proof.combine_three_types}.
% \end{proof}

\begin{proposition}\label{prop.eig_p_less_n}
Define
\begin{align*}
    &c_{\mathsf{eig},\min}\defeq\max\{0, 1-\stepSizeNormalized\}^2 p - 2\numTasks\numValidate \max\{\stepSizeNormalized,1-\stepSizeNormalized\}^2\sqrt{p\ln p},\\
    &c_{\mathsf{eig},\max}\defeq\max\{\stepSizeNormalized, 1-\stepSizeNormalized\}^2 \left(p + (2\numTasks\numValidate+1) \sqrt{p\ln p}\right).
\end{align*}
When $\numTrain\geq p\geq 256$, we have
\begin{align*}
    \Pr\left\{c_{\mathsf{eig},\min}\leq \lambda_{\min}(\metaB\metaB^T)\leq \lambda_{\max}(\metaB\metaB^T)\leq c_{\mathsf{eig},\max}\right\}\geq 1 - {16\numTasks^2\numValidate^2}/{p^{0.4}}.
\end{align*}
\end{proposition}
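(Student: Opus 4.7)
The plan is to condition on $\X[1:\numTasks]$ and then exploit the conditionally Gaussian structure of $\metaB = [\XValidate[i]^T M_i]_{i=1}^{\numTasks}$, where $M_i \defeq \iMatrix_p - (\stepSize/\numTrain)\X[i]\X[i]^T$. In the regime $\numTrain \geq p$, the Davidson--Szarek tail bound for singular values of a $p \times \numTrain$ i.i.d.\ Gaussian matrix confines every singular value of each $\X[i]$ to $[\sqrt{\numTrain}-\sqrt{p}-\ln\sqrt{\numTrain},\, \sqrt{\numTrain}+\sqrt{p}+\ln\sqrt{\numTrain}]$ with probability $1 - O(1/\numTrain^{0.4})$. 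A union bound over the $\numTasks$ tasks, combined with $\lambda_k(M_i) = 1 - (\stepSize/\numTrain)\sigma_k(\X[i])^2$, then yields the sandwich $(\stepSize/\numTrain)(\sqrt{\numTrain}-\sqrt{p}-\ln\sqrt{\numTrain})^2 \leq \mu_k \leq \stepSizeNormalized$ on the eigenvalues $\mu_k$ of $(\stepSize/\numTrain)\X[i]\X[i]^T$, from which one reads off operator-norm bounds on $M_i$: $\sigma_{\min}(M_i) \geq \max\{0, 1-\stepSizeNormalized\}$ and $\sigma_{\max}(M_i) \leq \max\{\stepSizeNormalized, 1-\stepSizeNormalized\}$ uniformly in $i$, up to the logarithmic slack.

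With the $M_i$'s controlled, I would bound $\lambda_{\min}(\metaB\metaB^T)$ and $\lambda_{\max}(\metaB\metaB^T)$ by a covering argument applied to $\|\metaB^T u\|_2^2$ for unit $u \in \mathbb{R}^{\numTasks\numValidate}$. Splitting $u = (u_1,\ldots,u_{\numTasks})$ into task blocks, we have $\metaB^T u = \sum_i M_i \XValidate[i] u_i$, which conditional on $\X[1:\numTasks]$ is a centered Gaussian in $\mathbb{R}^p$ with covariance $\Sigma_u \defeq \sum_i \normTwo{u_i}^2 M_i^2$. Hence $\|\metaB^T u\|_2^2$ is a weighted chi-squared in the eigenvalues of $\Sigma_u$, with mean $\Tr(\Sigma_u) = \sum_i \normTwo{u_i}^2 \Tr(M_i^2)$, which by Step~1 and $\sum_i \normTwo{u_i}^2 = 1$ is sandwiched between a quantity of order $\max\{0,1-\stepSizeNormalized\}^2 p$ and one of order $\max\{\stepSizeNormalized,1-\stepSizeNormalized\}^2 p$. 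I would then apply the Laurent--Massart chi-squared tail bound per fixed $u$ to concentrate $\|\metaB^T u\|_2^2$ around $\Tr(\Sigma_u)$ with fluctuations of order $\sqrt{p\ln p}$, and take an $\epsilon$-net of $S^{\numTasks\numValidate-1}$ of size $(3/\epsilon)^{\numTasks\numValidate}$ to upgrade to a uniform-in-$u$ statement. Since $\numTasks\numValidate$ is fixed, the net inflates the failure probability by only $\exp(O(\numTasks\numValidate))$, and the $\numTasks^2\numValidate^2/p^{0.4}$ aggregate in the statement is then produced by union-bounding the task-level Gaussian tails from Step~1 together with the net points against the $p^{-0.4}$ base rate, where the pairwise $\numTasks^2\numValidate^2$ factor arises from having to control the off-diagonal block interactions $\XValidate[i]^T M_i M_j \XValidate[j]$ ($i\neq j$) at the same time.

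The main obstacle will be making the chi-squared concentration uniform in $u$ while matching the $\sqrt{p\ln p}$ correction appearing in $c_{\mathsf{eig},\min}$ and $c_{\mathsf{eig},\max}$. Standard Laurent--Massart gives deviations of order $\sqrt{t\,\Tr(\Sigma_u^2)} + t\,\sigma_{\max}(\Sigma_u)$, both of which depend on $u$; the resolving observation is that $\Sigma_u \preceq \max_i \sigma_{\max}(M_i)^2 \iMatrix_p$ and (on its range) $\Sigma_u \succeq \max\{0,1-\stepSizeNormalized\}^2 \iMatrix_p$, so the fluctuation-controlling quantities collapse to $u$-independent functions of the operator norms of the $M_i$'s already bounded in Step~1. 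A secondary subtlety is that when $\stepSizeNormalized > 1$ the eigenvalues of $M_i$ straddle zero and $\sigma_{\min}(M_i)$ degrades to $0$, which is precisely why $c_{\mathsf{eig},\min}$ uses the truncated $\max\{0,1-\stepSizeNormalized\}^2$ in place of $(1-\stepSizeNormalized)^2$; one must carry this truncation coherently through both the mean and the variance contributions of the weighted chi-squared to avoid accruing a spurious factor of $\numTasks$ in the leading $p$ term.
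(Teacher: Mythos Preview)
Your conditional-Gaussian observation that $\metaB^T u \sim \mathcal{N}(0,\Sigma_u)$ given $\X[1:\numTasks]$ is correct and is a genuinely different angle from the paper, but the probability accounting in your covering step has a gap. An $\epsilon$-net of $S^{\numTasks\numValidate-1}$ has cardinality of order $\exp(\numTasks\numValidate)$, so union-bounding a per-direction Laurent--Massart failure probability of order $p^{-0.4}$ across the net yields aggregate failure probability of order $\exp(O(\numTasks\numValidate))\cdot p^{-0.4}$, not the polynomial $16\numTasks^2\numValidate^2/p^{0.4}$ in the statement. Your closing sentence that the $\numTasks^2\numValidate^2$ factor ``arises from having to control the off-diagonal block interactions $\XValidate[i]^T M_i M_j \XValidate[j]$'' describes a \emph{different} argument---entrywise control of the $(\numTasks\numValidate)\times(\numTasks\numValidate)$ matrix $\metaB\metaB^T$---and does not fit the covering scheme you set up, in which off-diagonal blocks are never isolated.

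The paper in fact takes that entrywise route: it bounds each diagonal entry of $\metaB\metaB^T$ within $[\underline{c}_1,\overline{c}_1]$ and each off-diagonal entry by $\overline{c}_2$ or $\overline{c}_3$ (according to whether the two validation samples share a task), each with failure probability $O(p^{-0.4})$, and then applies Gershgorin's circle theorem. The union bound over the $O(\numTasks^2\numValidate^2)$ entries is exactly what produces the $\numTasks^2\numValidate^2$ prefactor, and the $2\numTasks\numValidate\sqrt{p\ln p}$ correction in $c_{\mathsf{eig},\min}$ and $c_{\mathsf{eig},\max}$ is the Gershgorin radius---the sum of $\numTasks\numValidate-1$ off-diagonal magnitudes along a row. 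If you want to push the covering argument through, you must raise the Laurent--Massart deviation parameter to order $\numTasks\numValidate + \ln p$ to absorb the net size; that would sharpen the fluctuation to order $\sqrt{(\numTasks\numValidate + \ln p)\,p}$ rather than $\numTasks\numValidate\sqrt{p\ln p}$, but would give a probability bound of a different shape than the one stated.
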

% \begin{proof}
%     See Appendix~\ref{app.proof_eig_less}.
% \end{proof}

To see how the upper and lower bounds of the eigenvalues of $\metaB\metaB^T$ match, consider $\stepSize p\ll 1$, which implies $\stepSizeNormalized\ll 1$, and  the fact that $\sqrt{p}$ and $\ln p$ are lower order terms than $p$, 
%When $\stepSize p\ll 1$, we have $\stepSizeNormalized\ll 1$. Under this situation, if we regard $\sqrt{p}$ and $\ln p$ as some constants (since they are much smaller than $p$), 
then each of $b_{\mathsf{eig},\min}$, $b_{\mathsf{eig},\max}$, $c_{\mathsf{eig},\min}$, $c_{\mathsf{eig},\max}$ can be approximated by $p\pm \tilde{C}\numTasks\numValidate$ for some constant $\tilde{C}$. Further, when $p\gg \numTasks\numValidate$, all $b_{\mathsf{eig},\min},b_{\mathsf{eig},\max},c_{\mathsf{eig},\min},c_{\mathsf{eig},\max}$ can be approximated by $p$, i.e., the upper and lower bounds of the eigenvalues of $\metaB\metaB^T$ match. Therefore, our  estimation on $\lambda_{\max}(\metaB\metaB^T)$ and $\lambda_{\min}(\metaB\metaB^T)$ in Proposition~\ref{prop.eig_BB} and Proposition~\ref{prop.eig_p_less_n} are fairly tight. (Proposition~\ref{prop.eig_BB} is proved in Appendix~\ref{proof.combine_three_types}, and Proposition~\ref{prop.eig_p_less_n} is proved in Appendix~\ref{app.proof_eig_less}.)
From Eq.~\eqref{eq.temp_090101}, it remains to estimate $\normTwo{\delta\gamma}^2$.

\begin{proposition}\label{prop.norm_delta_gamma_new}
Define
\begin{align*}
\textstyle
    &D\defeq \left(\max\left\{\abs{1-\stepSize\frac{\numTrain+2\sqrt{\numTrain\ln (s\numTrain)}+2\ln(s\numTrain)}{\numTrain}},\ \abs{1-\stepSize\frac{\numTrain-2\sqrt{\numTrain\ln (s\numTrain)}}{\numTrain}}\right\}\right)^2,\\
    &b_{\delta}\defeq \numTasks\numValidate\sigma^2\left(1+\frac{\stepSize^2  p(\ln \numTrain)^2\ln p}{\numTrain}\right)+\numTasks\numValidate\nu^2\cdot 2\ln(s\numTrain)\cdot \left(D+\frac{\stepSize^2(p-1)}{\numTrain} 6.25(\ln (sp\numTrain))^2\right).
\end{align*}
When $\min\{p,\numTrain\}\geq 256$, we must have
\begin{align*}
\textstyle
    \Pr_{\X[1:\numTasks],\XValidate[1:\numTasks]}\left\{\E_{\wTrainTrue[1:\numTasks],\noiseTrain[1:\numTasks],\noiseValidate[1:\numTasks]}\normTwo{\delta\gamma}^2\leq b_{\delta}\right\}\geq 1 - \frac{5\numTasks\numValidate}{\numTrain} - \frac{2\numTasks\numValidate}{p^{0.4}}.
\end{align*}
We also have $\E\normTwo{\delta \gamma}^2=\numTasks\numValidate\sigma^2\left(1+\frac{\stepSize^2 p}{\numTrain}\right)+\nu^2\numTasks\numValidate\left((1-\stepSize)^2+\frac{\stepSize^2(p+1)}{\numTrain}\right)$,
where the expectation is on all random variables.
\end{proposition}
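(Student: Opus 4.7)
The plan is to expand $\delta\gamma$ into the three random sources it inherits from the data-generating model. Writing $\bm{\xi}^{(i)}\defeq \wTrainTrue[i]-\wMean$ (which by Assumption~\ref{as.truth_distribution} is zero-mean with covariance $\Lambda_{(i)}$ supported on the first $s$ coordinates) and $M^{(i)}\defeq \iMatrix_p-\frac{\stepSize}{\numTrain}\X[i]\X[i]^T$, substituting~\eqref{eq.y_train} into~\eqref{eq.def_gamma} yields
\begin{align*}
\delta\gamma^{(i)}=\XValidate[i]^{T}M^{(i)}\bm{\xi}^{(i)}+\noiseValidate[i]-\tfrac{\stepSize}{\numTrain}\XValidate[i]^{T}\X[i]\noiseTrain[i].
\end{align*}
Because $\bm{\xi}^{(i)},\noiseTrain[i],\noiseValidate[i]$ are mutually independent, mean-zero, and independent across tasks, every cross term vanishes under $\E_{\bm{\xi},\bm{\noise}}$ and the blocks sum, giving
\begin{align*}
\E_{\bm{\xi},\bm{\noise}}\normTwo{\delta\gamma}^{2}=\sum_{i=1}^{\numTasks}\!\Big[\Tr\!\big(M^{(i)}\XValidate[i]\XValidate[i]^{T}M^{(i)}\Lambda_{(i)}\big)+\numValidate\sigma^{2}+\tfrac{\stepSize^{2}\sigma^{2}}{\numTrain^{2}}\|\XValidate[i]^{T}\X[i]\|_{F}^{2}\Big].
\end{align*}
The exact-expectation claim then follows by a further expectation over the Gaussian designs: $\E[\X[i]\X[i]^{T}]=\numTrain\iMatrix_{p}$, the Wishart second moment $\E[(\X[i]\X[i]^{T})^{2}]=\numTrain(\numTrain+p+1)\iMatrix_{p}$, and $\E[\XValidate[i]\XValidate[i]^{T}]=\numValidate\iMatrix_{p}$ collapse the three summands to $\numTasks\numValidate\sigma^{2}$, $\numTasks\numValidate\sigma^{2}\stepSize^{2}p/\numTrain$, and $\numTasks\numValidate\nu^{2}\!\big((1-\stepSize)^{2}+\stepSize^{2}(p+1)/\numTrain\big)$ via $\sum_{i}\diversitySum[i]^{2}=\numTasks\nu^{2}$, matching the stated formula.

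For the high-probability bound I would control the three data-dependent terms separately. The validation-noise term contributes $\numValidate\sigma^{2}$ deterministically. For the training-noise propagation, $\|\XValidate[i]^{T}\X[i]\|_{F}^{2}$ is, conditional on $\XValidate[i]$, a sum of $\numTrain$ iid Gaussian quadratic forms $\bm{x}^{T}(\XValidate[i]\XValidate[i]^{T})\bm{x}$ with common mean $\|\XValidate[i]\|_{F}^{2}\approx\numValidate p$, so a Hanson--Wright bound for the inner form together with chi-squared concentration on $\|\XValidate[i]\|_{F}^{2}$ produces a bound of order $\numValidate\numTrain p\cdot(\ln\numTrain)^{2}\ln p$. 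For the signal term, because $\Lambda_{(i)}$ is diagonal and supported on the first $s$ coordinates,
\begin{align*}
\Tr\!\big(M^{(i)}\XValidate[i]\XValidate[i]^{T}M^{(i)}\Lambda_{(i)}\big)=\sum_{j=1}^{s}\diversity[i]{j}^{2}\normTwo{\XValidate[i]^{T}M^{(i)}e_{j}}^{2}\le\diversitySum[i]^{2}\cdot\max_{j\le s}\normTwo{\XValidate[i]^{T}M^{(i)}e_{j}}^{2},
\end{align*}
so after summing in $i$ via $\sum_{i}\diversitySum[i]^{2}=\numTasks\nu^{2}$ it suffices to produce a single uniform bound on $\max_{i\le\numTasks,\,j\le s}\normTwo{\XValidate[i]^{T}M^{(i)}e_{j}}^{2}$.

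To bound this uniform quantity, I factor $\normTwo{\XValidate[i]^{T}M^{(i)}e_{j}}^{2}=\normTwo{M^{(i)}e_{j}}^{2}\cdot Q_{ij}$, where, conditional on $\X[i]$, the scalar $Q_{ij}=\normTwo{\XValidate[i]^{T}u}^{2}/\normTwo{u}^{2}$ with $u=M^{(i)}e_{j}$ has a $\chi^{2}_{\numValidate}$ law (since $\XValidate[i]^{T}u$ has iid $\Gaussian(0,\normTwo{u}^{2})$ entries); a chi-squared tail at deviation $\ln(s\numTrain)$ combined with a union bound over $(i,j)$ yields the pre-factor $2\numValidate\ln(s\numTrain)$. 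For $\normTwo{M^{(i)}e_{j}}^{2}$ I split off the $j$-th coordinate $\big(1-\tfrac{\stepSize}{\numTrain}\normTwo{\X[i]^{T}e_{j}}^{2}\big)^{2}$---a function of $\chi^{2}_{\numTrain}$ bounded by $D$ via concentration at deviation $\ln(s\numTrain)$---from the remaining $p-1$ coordinates, which, conditional on the row $\bm{z}=\X[i]^{T}e_{j}$, are iid $\Gaussian(0,\stepSize^{2}\normTwo{\bm{z}}^{2}/\numTrain^{2})$ so their squared-norm sum equals $\tfrac{\stepSize^{2}\normTwo{\bm{z}}^{2}}{\numTrain^{2}}\cdot\chi^{2}_{p-1}$; combining these two chi-squared tails (taking deviation $\approx 0.4\ln p$ in the outer $\chi^{2}_{p-1}$ bound) produces the $\tfrac{\stepSize^{2}(p-1)}{\numTrain}\cdot 6.25(\ln(sp\numTrain))^{2}$ term. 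A final union bound over $(i,j)\in[\numTasks]\times[s]$ and over the noise events assembles the failure budget $\tfrac{5\numTasks\numValidate}{\numTrain}+\tfrac{2\numTasks\numValidate}{p^{0.4}}$. The main obstacle is the careful bookkeeping of polylog constants---in particular extracting the explicit factor $6.25$ from the two-stage $(\chi^{2}_{\numTrain},\chi^{2}_{p-1})$ product without absorbing a hidden $\numValidate$-dependence, and tuning each deviation parameter so that the aggregate failure probability lands exactly at the stated level while keeping the matching $D$ and the $(\ln\numTrain)^{2}\ln p$ factor of the noise bound consistent across blocks.
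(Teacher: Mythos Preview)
Your three-term decomposition of $\E_{\bm{\xi},\bm{\noise}}\normTwo{\delta\gamma}^{2}$, the trace rewriting of the signal piece, and the Wishart-moment computation of the full expectation are exactly what the paper does (its Term~A is your $\sum_{i}\Tr(M^{(i)}\XValidate[i]\XValidate[i]^{T}M^{(i)}\Lambda_{(i)})$ and its Term~B is your $\tfrac{\stepSize^{2}\sigma^{2}}{\numTrain^{2}}\sum_{i}\|\XValidate[i]^{T}\X[i]\|_{F}^{2}$). The only difference is inside the concentration step. The paper works \emph{entrywise}: for Term~A it bounds each scalar $(\XValidate[i])_{\cdot,k}^{T}M^{(i)}e_{j}/\normTwo{M^{(i)}e_{j}}$ by a one-dimensional Gaussian tail at level $\sqrt{2\ln(s\numTrain)}$ (union over $(i,j,k)$; this is where the factor $\numValidate\cdot 2\ln(s\numTrain)$ comes from), and each inner product $\langle\X[i]_{j,:},\X[i]_{l,:}\rangle$ by a sub-exponential tail at level $2.5\sqrt{\numTrain}\ln(sp\numTrain)$ (union over $l\in[p]$; this is the source of $6.25(\ln(sp\numTrain))^{2}$); for Term~B it likewise bounds each $\langle[\XValidate[i]]_{:,j},[\X[i]]_{:,k}\rangle$ individually. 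Your aggregation into $\chi^{2}_{\numValidate}$, conditional $\chi^{2}_{p-1}$, and Hanson--Wright is a legitimate and in fact cleaner alternative that avoids the $p$-fold union bound, but it does \emph{not} reproduce those particular constants: for instance Laurent--Massart on $\chi^{2}_{\numValidate}$ at deviation $\ln(s\numTrain)$ gives $\numValidate+2\sqrt{\numValidate\ln(s\numTrain)}+2\ln(s\numTrain)$, which for small $\numValidate$ exceeds $2\numValidate\ln(s\numTrain)$. So if you want $b_{\delta}$ exactly as stated you should fall back to the per-entry bounds; otherwise your route yields a correct bound with slightly different (often tighter) polylog factors.
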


Proposition~\ref{prop.norm_delta_gamma_new} provides an upper bound $b_{\delta}$ on $\normTwo{\delta\gamma}^2$ and an explicit form for $\E\normTwo{\delta\gamma}^2$. By comparing $b_{\delta}$ and $\E\normTwo{\delta\gamma}^2$, the differences are only some coefficients and logarithm terms. Thus, the estimation on $\normTwo{\delta\gamma}^2$ in Proposition~\ref{prop.norm_delta_gamma_new} is fairly tight. Proposition~\ref{prop.norm_delta_gamma_new} is proved in Appendix~\ref{app.proof_norm_delta_gamma_new}.

Combining Eq.~\eqref{le.ideal}, Eq.~\eqref{eq.temp_090101}, Proposition~\ref{prop.eig_BB}, Proposition~\ref{prop.eig_p_less_n}, and Proposition~\ref{prop.norm_delta_gamma_new}, we can get the result of Proposition~\ref{prop.ideal} by the union bound, where $\bideal\defeq\frac{b_{\delta}}{\max\{b_{\mathsf{eig},\min}\mathbbm{1}_{\{p>\numTrain\}}+c_{\mathsf{eig},\min}\mathbbm{1}_{\{p\leq \numTrain\}},\ 0\}}$. The detailed proof is in Appendix~\ref{app.proof_ideal}. Then, by Eq.~\eqref{le.ideal}, Proposition~\ref{prop.ideal}, Proposition~\ref{le.projection}, and Eq.~\eqref{eq.decomp_w0_minus_w}, we can get a high probability upper bound on $\normTwo{\wMean-\wTwo}^2$, i.e., Theorem~\ref{th.main}. The detailed proof of Theorem~\ref{th.main} is in Appendix~\ref{app.proof_main}. Notice that we can easily plug our estimation of the model error (Proposition~\ref{prop.ideal} and  Theorem~\ref{th.main}) into Lemma~\ref{le.with_test_task} to get an  estimation of the overall test error defined in Eq.~\eqref{eq.def_test_err}, which is omitted in this paper by the limit of space.

\section{Conclusion}
We study the generalization performance of overfitted meta-learning under a linear model with Gaussian features. We characterize the descent curve of the model error for the overfitted min $\ell_2$-norm solution and show the differences compared with the underfitted meta-learning and overfitted classical (single-task) linear regression. Possible future directions include relaxing the assumptions and extending the result to other models related to DNNs (e.g., neural tangent kernel (NTK) models).

\section*{Acknowledgement}
The work of P. Ju and N. Shroff has been partly supported by the NSF grants NSF AI Institute (AI-EDGE) CNS-2112471, CNS-2106933, 2007231, CNS-1955535, and CNS-1901057, and in part by Army Research Office under Grant W911NF-21-1-0244. The work of Y. Liang has been partly supported by the NSF grants NSF AI Institute (AI-EDGE) CNS-2112471 and DMS-2134145.

\bibliography{refs}
\bibliographystyle{iclr2023_conference}

\clearpage

\appendix
\section{Related work}\label{sec.related_work}

Our work is related to recent studies on characterizing the double-descent phenomenon for overfitted solutions of single-task linear regression. Some works study the min $\ell_2$-norm solutions for linear regression with simple features such as Gaussian or Fourier features \citep{belkin2018understand, belkin2019two,bartlett2019benign,	hastie2019surprises,muthukumar2019harmless}, where they show the existence of the double-descent phenomenon. Some others \citep{mitra2019understanding,ju2020overfitting} study the min $\ell_1$-norm overfitted solution and show it also has the double-descent phenomenon, but with a descent curve whose shape is very different from that of the min $\ell_2$-norm solution. Some recent works study the generalization performance when overparameterization in random feature (RF) models \citep{mei2019generalization}, two-layer neural tangent kernel (NTK) models \citep{arora2019fine,satpathi2021dynamics,ju2021generalization}, and three-layer NTK models \citep{ju2022generalization}. (RF and NTK models are  linear approximations of shallow but wide neural networks.) These works show the benign overfitting for a certain set of learnable ground truth functions that depends on the network structure and which layer to train. However, all those works are on single-task learning, which cannot be directly used to characterize the generalization performance of meta-learning due to their differences in many aspects mentioned in the introduction.

Our work is related to a few recent works on the generalization performance of meta-learning. In \citet{bernacchia2021meta}, the expected value of the test error for the overfitted min $\ell_2$-norm solution of MAML is provided. However, the result only works in the asymptotic regime where the number of features and the number of training samples go to infinity, which is different from ours where all quantities are finite.
In \citet{chen2022understanding}, a high probability upper bound on the test error of a similar overfitted min $\ell_2$-norm solution is given in the non-asymptotic regime.
% However, the bound contains unknown quantities such as eigenvalues of weight matrices, and the tightness of the bound is unclear. In contrast, our bound is completely determined by system parameters and we show every part of our upper bound is relatively tight.
A significant difference between ours and \citet{chen2022understanding} is that our theoretical bound describes the shape of double descent in a straightforward manner (i.e., our theoretical bound consists of decoupled system parameters). In contrast, the bound in \citet{chen2022understanding} contains coupled parts (e.g., eigenvalues of weight matrices are affected by the number of tasks,  samples, and features), which may not be directly used to analyze the shape of double descent. Besides, we also explain and verify the tightness of the bound, while \citet{chen2022understanding} does not\footnote{We use a more specific setup (such as Gaussian features) than that in \citet{chen2022understanding}, which allows us to provide a more specific bound and verify its tightness.}.
In \citet{huang2022provable}, the authors focus on the generalization error during the SGD process when the training error is not zero, which is different from our focus on the overfitted solutions that make the training error equal to zero (i.e., the interpolators). Our work also differs from \citet{bernacchia2021meta,chen2022understanding,huang2022provable} in the data generation model for the purpose of quantifying how the number of features affects the test error, which has been explained in detail at the beginning of Section~\ref{subsec.data_gen}.

\begin{table}[h!]
\centering
\footnotesize
\begin{tabular}{|c| c| c| c| c| c| c|} 
\hline
 & \multicolumn{2}{c|}{Type~of meta-learning} & \multicolumn{2}{c|}{Overparameterization} & \multirow{2}{*}{Method}  & \multirow{2}{*}{Focus of analysis}\\\cline{2-5}
 & Reps. & Nested & Per-task & Meta & & \\\hline
\citet{bai2021important} & & \checkmark & \checkmark & & iMAML & Train-validation split \\\hline
\citet{chen2022bayesian} & &\checkmark &\checkmark & & MAML, BMAML & Test risk \\\hline
\citet{saunshi2021representation} & &\checkmark & &\checkmark & MAML & Optimal step size \\\hline
\citet{bernacchia2021meta}  &\checkmark & &\checkmark & & - & Train-validation split\\\hline
\citet{sun2021towards}  &\checkmark & & &\checkmark & - & Optimal representation\\\hline
\citet{zou2021unraveling}  & &\checkmark & &\checkmark & MAML & Optimal step size\\\hline
\citet{chen2022understanding}  & &\checkmark & &\checkmark & MAML,iMAML & Benign overfitting\\\hline\hline
Ours & & $\checkmark$ & & $\checkmark$ & MAML & Descent curve shape\tablefootnote{E.g., how the generalization performance of overfitted solutions changes with respect to the number of features.}\\\hline
\end{tabular}
\caption{Positioning our work in Table~1 of \cite{chen2022understanding} (which is shown here by the part above the last row).}
\label{table.compare}
\end{table}

\section{Notation Table}\label{app.notation}
We summarize the important notations in Table~\ref{table.notations}.

\begin{table}[ht!]
\centering
\begin{tabular}{|c | l | l |} 
\hline
\textbf{Symbol} & \multicolumn{1}{c|}{\textbf{Meaning}} & \multicolumn{1}{c|}{\textbf{Type}}\\
\hline
$s$ & sparsity (the number of non-zero true parameters) & integer \\\hline
$p$ & the number of chosen parameters/features & integer \\\hline
$m$ & the number of training tasks & integer\\\hline
$\x$ & the (general) input vector & vector $\mathds{R}^{p}$ \\\hline
$\wTrainTrue[i]$ & the true parameters of the $i$-th task & vector $\mathds{R}^p$\\\hline
$\noise$ & the noise & scalar \\\hline
$\x_s$ & the input vector corresponding to the true parameters & vector $\mathds{R}^s$\\\hline
$\wTrainTrueSparse[i]$ & the true parameters of the $i$-th task & vector $\mathds{R}^s$\\\hline
$\wTrainTrue[i]$ & the true parameters of the $i$-th task (padding with zeros) & vector $\mathds{R}^p$ \\\hline
$\wTestTrueSparse$ & the true parameters of the test task & vector $\mathds{R}^s$\\\hline
$\wTestTrue$ & the true parameters of the test task (padding with zeros) & vector $\mathds{R}^p$ \\\hline
$\wSparseMean$ & mean value of $\wTrainTrueSparse[i]$ & vector $\mathds{R}^{s}$ \\\hline
$\wMean$ & mean value of $\wTrainTrue[i]$ & vector $\mathds{R}^{p}$ \\\hline
$\wHat$ & the (general) meta training result (before one-step gradient adaptation) & vector $\mathds{R}^p$ \\\hline
$\wHatTest$ & the solution for the test task after one-step gradient adaptation on $\wHat$ & vector $\mathds{R}^p$ \\\hline
$\wHatAdapt{i}$ & the solution for the $i$-th task after one-step gradient adaptation on $\wHat$ & vector $\mathds{R}^p$  \\\hline
$\wTwo$ & the overfitted min $\ell_2$-norm solution & vector $\mathds{R}^p$ \\\hline
$\wIdeal$ & the ideal overfitted solution & vector $\mathds{R}^{p}$\\\hline
$\wHatTest$ & the solution for the test task after one-step adaptation on $\wHat$ & vector $\mathds{R}^p$ \\\hline
$\numTrain$ & the number of the training samples for each training task & integer\\\hline
$\numValidate$ & the number of the validation samples for each training task & integer\\\hline
$\numTest$ & the number of the training samples for the test task & integer\\\hline
$\X[i]$ & the matrix formed by $\numTrain$ training inputs of the $i$-th task & matrix $\mathds{R}^{p\times \numTrain}$\\\hline
$\y[i]$ & the output vector corresponding to $\X[i]$ & vector $\mathds{R}^{\numTrain}$\\\hline
$\noiseTrain[i]$ & the noise in $\y[i]$ & vector $\mathds{R}^{\numTrain}$\\\hline
$\XValidate[i]$ & the matrix formed by $\numValidate$ validation inputs of the $i$-th task & matrix $\mathds{R}^{p\times \numValidate}$\\\hline
$\yValidate[i]$ & the output vector corresponding to $\XValidate[i]$ & vector $\mathds{R}^{\numValidate}$\\\hline
$\noiseValidate[i]$ & the noise in $\yValidate[i]$ & vector $\mathds{R}^{\numValidate}$\\\hline
$\Xtest$ & the matrix of $\numTest$ training samples for the test task & matrix $\mathds{R}^{p\times \numTest}$\\\hline
$\ytest$ & the output corresponding to $\Xtest$ & vector $\mathds{R}^{\numTest}$\\\hline
$\noiseTest$ & the noise in $\ytest$ & vector $\mathds{R}^{\numTest}$\\\hline
$\sigma$ & standard deviation of the noise in the training samples & scalar \\\hline
$\sigma_r$ & standard deviation of the noise in the validation samples & scalar \\\hline
$\nu$,$\diversitySum[i]$ & fluctuation of the ground truth for the training tasks & scalar \\\hline
$\nu_r$ & fluctuation of the ground truth for the target task & scalar \\\hline
$\LossInner{i}$ & the inner loss (on the training samples of the $i$-th training task) & loss function \\\hline
$\LossOuter{i}$ & the error for the validation samples after one-step gradient on the $i$-th task & loss function\\\hline
$\MetaLoss$ & the meta loss (average of $\LossInner{i}$)  & loss function\\\hline
$\mathcal{L}_{\text{test}}$ & the squared test error for the test task & loss function\\\hline
$\stepSize$ & step size of the one-step gradient on each training task & scalar \\\hline
$\stepSizeTest$ & step size of the one-step gradient on the target task & scalar \\\hline
\end{tabular}
\caption{Table of the notations.}
\label{table.notations}
\end{table}

\section{The calculation of meta Loss}\label{app.meta_loss}
By the definition of $\LossInner{i}$ in Eq.~\eqref{eq.temp_092201}, we have
\begin{align*}
    \frac{\partial \LossInner{i}}{\partial \wHat}=&\frac{\partial \left(\y[i]-\X[i]^T \wHat\right)}{\partial \wHat}\frac{\partial \frac{1}{2}\normTwo{\y[i]-\X[i]^T \wHat}^2}{\partial\left(\y[i]-\X[i]^T \wHat\right)}\\
    =&-\X[i]\left(\y[i]-\X[i]^T \wHat\right)\\
    =&\X[i]\X[i]^T \wHat - \X[i]\y[i].
\end{align*}
Plugging it into Eq.~\eqref{eq.temp_092201}, we thus have
\begin{align}\label{eq.wi}
    \wHatAdapt{i}=\left(\iMatrix_p - \frac{\stepSize}{\numTrain}\X[i]\X[i]^T\right)\wHat + \frac{\stepSize}{\numTrain}\X[i]\y[i].
\end{align}
Plugging Eq.~\eqref{eq.wi} into Eq.~\eqref{eq.temp_092202}, we thus have
\begin{align*}
    \LossOuter{i}=&\frac{1}{2}\normTwo{\yValidate[i]-\XValidate[i]^T \wHatAdapt{i}}^2\\
    =&\frac{1}{2}\normTwo{\yValidate[i] - \XValidate[i]^T\left(\left(\iMatrix_p - \frac{\stepSize}{\numTrain}\X[i]\X[i]^T\right)\wHat + \frac{\stepSize}{\numTrain}\X[i]\y[i]\right)}^2\\
    =&\frac{1}{2}\normTwo{\yValidate[i] - \frac{\stepSize}{\numTrain}\XValidate[i]^T\X[i]\y[i] - \XValidate[i]^T\left(\iMatrix_p - \frac{\stepSize}{\numTrain}\X[i]\X[i]^T\right)\wHat}^2.
\end{align*}
By the definition of $\metaB$ in Eq.~\eqref{eq.def_gamma} and the definition of $\gamma$ in Eq.~\eqref{eq.def_gamma}, we thus have
\begin{align*}
    \MetaLoss=\frac{1}{2\numTasks\numValidate}=\frac{1}{2\numTasks\numValidate}\normTwo{\gamma - \metaB\wHat}^2.
\end{align*}
Eq.~\eqref{eq.def_meta_loss} thus follows.

\section{Calculation of descent floor}\label{app.cal_descent_floor}

Define $h(p)\defeq\approxSum$ where $p>C_4\numTasks\numValidate$. We have
\begin{align*}
    \frac{\partial h(p)}{\partial p}=&\frac{\numTasks\numValidate}{p^2}\normTwo{\wMean}^2 - \frac{b_{\delta}}{(p-C_4\numTasks\numValidate)^2}\\
    =&\frac{\numTasks\numValidate\normTwo{\wMean}^2}{(p-C_4\numTasks\numValidate)^2}\left(\left(1-\frac{C_4\numTasks\numValidate}{p}\right)^2-\frac{b_{\delta}}{\numTasks\numValidate\normTwo{\wMean}^2}\right)\\
    =&\frac{\numTasks\numValidate\normTwo{\wMean}^2}{(p-C_4\numTasks\numValidate)^2}\left(1-\frac{C_4\numTasks\numValidate}{p}+\sqrt{g}\right)\left(1-\frac{C_4\numTasks\numValidate}{p}-\sqrt{g}\right)\\
    &\quad \text{ (recall that  $g\defeq\frac{b_{\delta}}{\numTasks\numValidate\normTwo{\wMean}^2}$)}.
\end{align*}
Notice that when $p>C_4\numTasks\numValidate$ and $\wMean\neq \bm{0}$, the first and the second factors are positive. Thus, we only need to consider the sign of $A\defeq \left(1-\frac{C_4\numTasks\numValidate}{p}-\sqrt{g}\right)$. If $g \geq 1$, then $A<0$, which implies that $h(p)$ is monotone decreasing. If $g<1$, then we have
\begin{align*}
    A\begin{cases}
    <0, & \text{ when }p\in \left(C_4\numTasks\numValidate,\ \frac{C_4\numTasks\numValidate}{1-\sqrt{g}}\right),\\
    >0,  & \text{ when }p>\frac{C_4\numTasks\numValidate}{1-\sqrt{g}},
    \end{cases}
\end{align*}
which implies that $h(p)$ is monotone decreasing when $p\in \left(C_4\numTasks\numValidate,\ \frac{C_4\numTasks\numValidate}{1-\sqrt{g}}\right)$, and is monotone increasing when $p>\frac{C_4\numTasks\numValidate}{1-\sqrt{g}}$. This suggests a descent floor at $p=\frac{C_4\numTasks\numValidate}{1-\sqrt{g}}$, where at this point
\begin{align*}
    h(p)\Bigg\vert_{p=\frac{C_4\numTasks\numValidate}{1-\sqrt{g}}}=&\normTwo{\wMean}^2\left(1-\frac{1-\sqrt{g}}{C_4}+\frac{g \numTasks\numValidate}{C_4\numTasks\numValidate\left(\frac{1}{1-\sqrt{g}}-1\right)}\right)\\
    =&\normTwo{\wMean}^2\left(1-\frac{1-\sqrt{g}}{C_4}+\frac{g }{C_4\left(\frac{1}{1-\sqrt{g}}-1\right)}\right)\\
    =&\normTwo{\wMean}^2\left(1-\frac{(1-\sqrt{g})^2}{C_4}\right).
\end{align*}

\section{Additional Experiments}

\subsection{Simulations to verify the tightness of Theory~\ref{th.main}}\label{app.tight_simu}

\begin{figure}[t]
\includegraphics[width=5in]{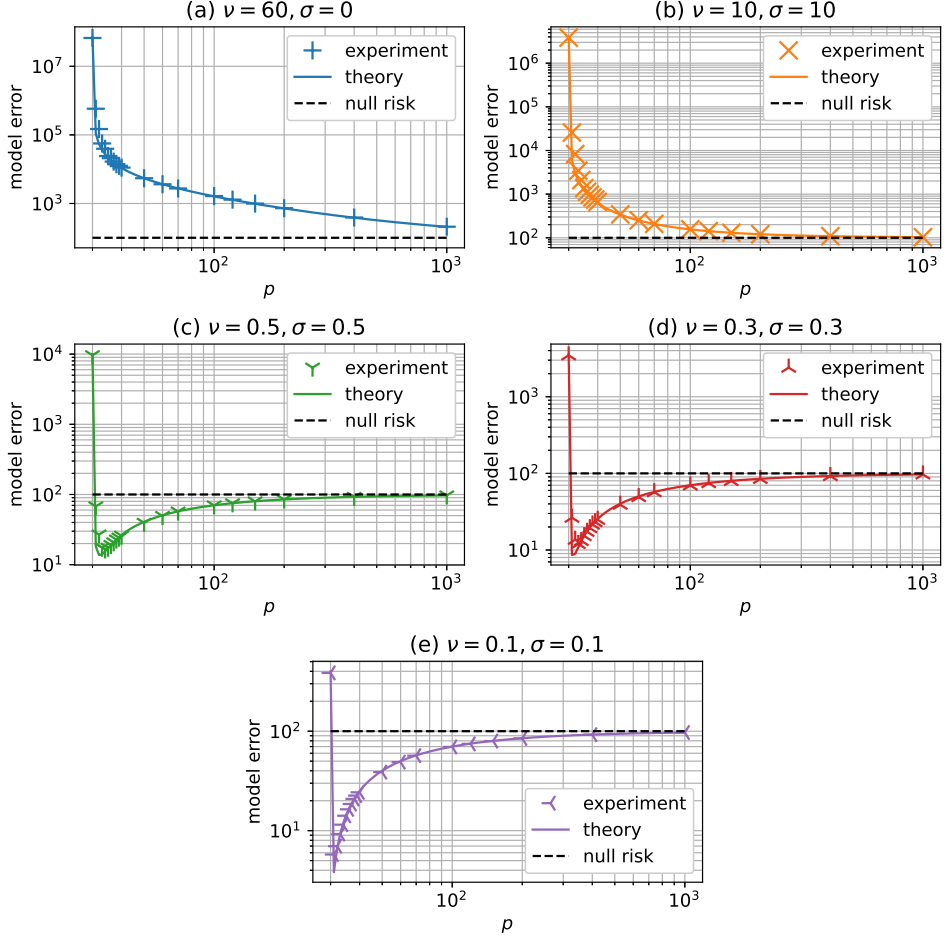}
\centering
\caption{Comparison between the experimental values and the theoretical values of the model error.}\label{fig.tightness}
\end{figure}

In Fig.~\ref{fig.tightness}, we plot both the experimental values (denoted by separated markers) and the theoretical values (denoted by the continuous curve) of the model error. The simulation setup and (consequently) the experimental values are the same as those in Fig.~\ref{fig.descent} (note that we only show the points in the overparameterized region, i.e., $p\geq \numTasks \numValidate$). The theoretical value is calculated by Theorem~\ref{th.main} and the approximation Eq.~\eqref{eq.approx} with constants\footnote{These constants are manually calibrated to fit the experimental values better.} $C_1=C_3=0.001$ and $C_2=C_4=0.99995$. As we can see in Fig.~\ref{fig.tightness}, the experimental value points closely match the theoretical curves, which suggests that Theorem~\ref{th.main} and Eq.~\eqref{eq.approx} are fairly tight.

\subsection{Experiments of meta-learning with Neural Network on Real-World Data}\label{app.nn_simu}

\begin{figure}[t]
\includegraphics[width=4.5in]{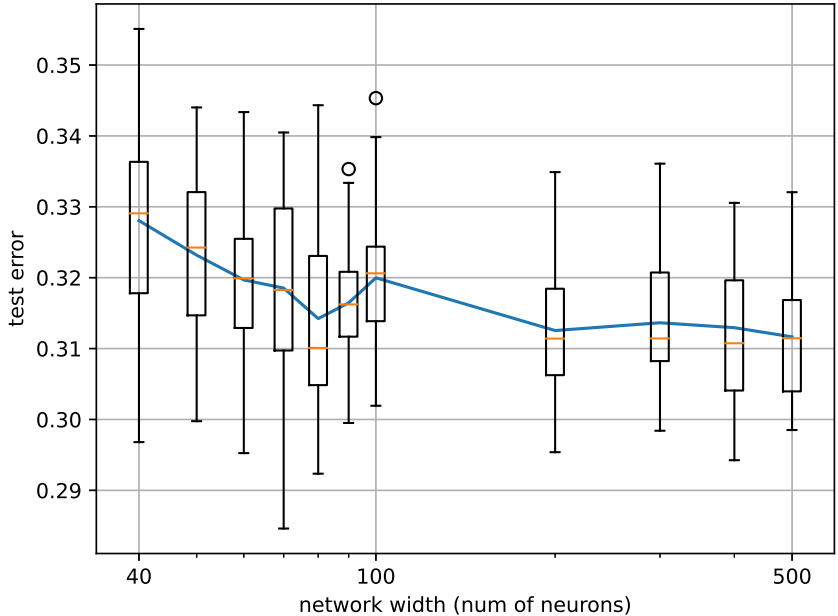}
\centering
\caption{The test error for a two-layer fully connected neural network. The x-axis denotes the neural network width (i.e., the number of neurons in the hidden layer).}\label{fig.NN}
\end{figure}

In this section, we further verify our theoretical findings by an experiment over a two-layer fully-connected neural network on the MNIST data set. 
%with a neural network on real-world data. For simplicity, we adopt a two-layer fully-connected neural network on the MNIST data set. 

\textbf{Neural network structure:} The input dimension of the neural network is $49$ (i.e., $7\times 7$ gray-scale image shrunk from the original $28\times 28$ gray-scale image). The input is multiplied by the first-layer weights and fully connected to the hidden layer that consists of ReLUs (rectified linear units $\max\{0,\cdot\}$) with bias. The output of these ReLUs is then multiplied by the second layer weights and then goes to the output layer. The output layer is a sigmoid activation function with bias.

\textbf{Experimental setup:} There are $4$ training tasks and $1$ test task. The objective of each task is to identify whether the input image belongs to a set of $5$ different digits. Specifically, the sets for $4$ training tasks are $\{1,2,3,7,9\}$, $\{0,2,3,6,9\}$, $\{0,1,4,6,8\}$, $\{1,2,3,5,7\}$, respectively. The set for the test task is $\{0,2,3,6,8\}$. For each training task, there are $1000$ training samples and $100$ validation samples. All samples (except the ones to calculate the test error) are corrupted by \emph{i.i.d.} Gaussian noise in each pixel with zero mean and standard deviation $0.3$. The number of samples for the one-step gradient is $1000$ for these $4$ training tasks and is $500$ for the test task, i.e., $\numTrain=1000$ and $\numTest=500$. The number of validation samples is $\numValidate=100$ for each of these $4$ training tasks. The initial weights are uniformly randomly chosen in the range $[0, 1]$.

\textbf{Training process:} We use gradient descent to train the neural network. The step size in the outer-loop training is $0.3$ and the step size of the one-step gradient adaptation is $\stepSize=\stepSizeTest=0.05$. After training $500$ epochs, the meta-training error for each simulation is lower than $0.025$ (the range of the meta-training error is $[0, 1]$), which means that the trained model almost completely fits all validation samples.

\textbf{Simulation results and interpretation:} We run the simulation $30$ times with different random seeds. In Fig.~\ref{fig.NN}, we draw a box plot showing the test error for the test task, where the blue curve denotes the average value of these $30$ runs. We can see that the overall trend of the test curve is decreasing, which suggests that more parameters help to enhance the overfitted generalization performance. Another interesting phenomenon we find from Fig.~\ref{fig.NN} is that the curve is not strictly monotonically decreasing and there exist some decent floors (e.g., when the network width is $80$). 

\section{Details of deriving \texorpdfstring{Eq.~\eqref{eq.decomp_w0_minus_w}}{Eq. (12)}}\label{app.term_1_2}
Notice that
\begin{align*}
    \left\langle\left(\iMatrix_p-\metaB^T(\metaB\metaB^T)^{-1}\metaB\right)\wMean,\ \metaB^T(\metaB\metaB^T)^{-1}\delta\gamma\right\rangle= & (\delta \gamma)^T (\metaB\metaB^T)^{-1}\metaB \left(\iMatrix_p-\metaB^T(\metaB\metaB^T)^{-1}\metaB\right)\wMean\\
    =&(\delta \gamma)^T\left((\metaB\metaB^T)^{-1}\metaB-(\metaB\metaB^T)^{-1}\metaB\right)\wMean\\
    =&0.\numberthis \label{eq.temp_083001}
\end{align*}
We thus have
\begin{align*}
    \normTwo{\wMean-\wTwo}^2=&\normTwo{\wMean-\metaB^T(\metaB\metaB^T)^{-1}\gamma}^2\text{ (by Eq.~\eqref{eq.wTwo})}\\
    =&\normTwo{\left(\iMatrix_p-\metaB^T(\metaB\metaB^T)^{-1}\metaB\right)\wMean-\metaB^T(\metaB\metaB^T)^{-1}\delta\gamma}^2\text{ (by Eq.~\eqref{eq.temp_051303})}\\
    =&\underbrace{\normTwo{\left(\iMatrix_p-\metaB^T(\metaB\metaB^T)^{-1}\metaB\right)\wMean}^2}_{\text{Term 1}}+\underbrace{\normTwo{\metaB^T(\metaB\metaB^T)^{-1}\delta\gamma}^2}_{\text{Term 2}}\text{ (by Eq.~\eqref{eq.temp_083001})}.
\end{align*}

\section{Some Useful Lemmas}

\subsection{Estimation on logarithm}
\begin{lemma}\label{le.log}
    For any $x>0$, we must have $\ln x \geq 1 - \frac{1}{x}$.
\end{lemma}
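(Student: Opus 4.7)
The plan is to treat this as an elementary one-variable calculus exercise: show that the function $f(x) \defeq \ln x - 1 + \frac{1}{x}$ is nonnegative on $(0,\infty)$ by locating its unique critical point and evaluating the function there.

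First I would compute $f'(x) = \frac{1}{x} - \frac{1}{x^2} = \frac{x-1}{x^2}$, which is negative on $(0,1)$, zero at $x=1$, and positive on $(1,\infty)$. This shows $f$ is strictly decreasing on $(0,1)$ and strictly increasing on $(1,\infty)$, so $x=1$ is a global minimum on $(0,\infty)$. Then I would simply evaluate $f(1) = \ln 1 - 1 + 1 = 0$, concluding that $f(x) \geq f(1) = 0$ for all $x>0$, which is exactly the claimed inequality (with equality iff $x=1$).

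There is essentially no obstacle here, since the inequality is a standard tangent-line bound for the concave function $\ln$ (equivalent to the well-known $\ln t \leq t-1$ via the substitution $t = 1/x$). An alternative one-line proof I could present instead is the integral comparison: for $x \geq 1$, $\ln x = \int_1^x \frac{dt}{t} \geq \int_1^x \frac{dt}{t^2} = 1 - \frac{1}{x}$ since $1/t \geq 1/t^2$ on $[1,x]$; for $0 < x < 1$, the same comparison on $[x,1]$ with inequalities reversed yields $-\ln x \leq \frac{1}{x} - 1$, i.e., the same conclusion. Either route is short enough that the only real drafting decision is which form to include for readability.
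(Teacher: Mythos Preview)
Your proof is correct and takes essentially the same approach as the paper: the paper simply remarks that the inequality follows by examining the monotonicity of $\ln x - (1 - \tfrac{1}{x})$, which is exactly the function $f$ whose derivative you compute and whose global minimum at $x=1$ you identify. Your alternative integral argument is a nice bonus but not needed.
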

\begin{proof}
This can be derived by examining the monotonicity of $\ln x - (1 - \frac{1}{x})$. The complete proof can be found, e.g., in Lemma~33 of \citet{ju2020overfitting}.
\end{proof}

\begin{lemma}\label{le.log_over_k}
When $k\geq 16$, we must have $2\sqrt{\frac{\ln k}{k}}< 1$.
\end{lemma}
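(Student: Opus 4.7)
The plan is to reduce the claim to a one-variable calculus exercise. Squaring both sides, the inequality $2\sqrt{\ln k / k} < 1$ is equivalent to $4\ln k < k$, i.e., $\ln k < k/4$. So it suffices to verify $f(k) \defeq k/4 - \ln k > 0$ for all integers (or reals) $k \geq 16$.

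First I would check the base case $k=16$: here $\ln 16 = 4\ln 2$, and using the standard bound $\ln 2 < 0.7$ one gets $\ln 16 < 2.8 < 4 = 16/4$, so $f(16) > 0$. Then I would establish monotonicity of $f$ on $[16,\infty)$ by computing $f'(k) = 1/4 - 1/k$, which is strictly positive whenever $k > 4$, and in particular for all $k \geq 16$. Combining the two observations, $f(k) \geq f(16) > 0$ for every $k \geq 16$, which is the desired inequality.

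There is no real obstacle here; the only place that requires a small amount of care is the numerical verification at $k=16$, which can alternatively be phrased via Lemma~\ref{le.log} (applied in reverse with an appropriate argument) or via the elementary bound $\ln 2 < 3/4$. If one prefers to avoid any numerical constant, one can instead note that $f(16) = 4 - 4\ln 2$ and use $\ln 2 < 1$ trivially, giving $f(16) > 0$ immediately. Either route yields the lemma in a couple of lines.
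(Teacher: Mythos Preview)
Your proposal is correct and essentially identical to the paper's own proof: the paper also reduces the claim to $4\ln k - k \leq 0$ for $k\geq 16$, checks the derivative has the right sign on $[16,\infty)$, and verifies the base case $k=16$ numerically. The only cosmetic difference is that the paper works with $g(k)=4\ln k - k$ (showing it is decreasing and negative at $16$) whereas you work with the sign-flipped $f(k)=k/4-\ln k$; the content is the same.
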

\begin{proof}
We only need to prove that the function $g(k)\defeq 4\ln k - k\leq 0$ when $k\geq 16$. To that end, when $k\geq 16$,  we have
\begin{align*}
    \frac{\partial g(k)}{\partial k}=\frac{4}{k}-1\leq 0.
\end{align*}
Thus, $g(k)$ is monotone decreasing when $k\geq 16$. Also notice that $g(16)=4\ln 16 -16\approx -4.91< 0$. The result of this lemma thus follows.
\end{proof}

\subsection{Estimation on norm and eigenvalues}\label{app.norm_eig}
Let $\lambda_{\min}(\cdot)$ and $\lambda_{\min}(\cdot)$ denote the minimum and maximum singular value of a matrix.

\begin{lemma}\label{le.norm_orthonormal}
Consider an orthonormal basis matrix $\mathbf{U}\in \mathds{R}^{k\times k}$ and any vector $\bm{a}\in \mathds{R}^{k\times 1}$. Then we must have 
\begin{align*}
    \normTwo{\mathbf{U}\bm{a}}=\normTwo{\bm{a}}.
\end{align*}
\end{lemma}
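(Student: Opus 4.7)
The plan is to use the definition of the Euclidean norm together with the defining algebraic property of an orthonormal basis matrix, namely $\mathbf{U}^T \mathbf{U} = \iMatrix_k$. This is a one-line computation, so the ``proposal'' really just spells out the order of operations.

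First I would expand the squared norm on the left-hand side using the inner-product definition: $\normTwo{\mathbf{U}\bm{a}}^2 = (\mathbf{U}\bm{a})^T (\mathbf{U}\bm{a})$. Next, applying the transpose-of-a-product rule gives $\bm{a}^T \mathbf{U}^T \mathbf{U}\, \bm{a}$. At this point I invoke the orthonormality of $\mathbf{U}$, which (since $\mathbf{U}$ is square with orthonormal columns) means $\mathbf{U}^T \mathbf{U} = \iMatrix_k$. Substituting this in yields $\bm{a}^T \iMatrix_k \bm{a} = \bm{a}^T \bm{a} = \normTwo{\bm{a}}^2$. Since both $\normTwo{\mathbf{U}\bm{a}}$ and $\normTwo{\bm{a}}$ are nonnegative, taking square roots gives the claimed equality.

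There is no real obstacle here; the only thing to be explicit about is that a square ``orthonormal basis matrix'' satisfies $\mathbf{U}^T \mathbf{U} = \iMatrix_k$ (equivalently, its columns form an orthonormal basis of $\mathds{R}^k$), which is precisely what makes the middle factor collapse to the identity. No probabilistic or high-dimensional machinery is needed, and no hypotheses on $\bm{a}$ beyond it being a vector in $\mathds{R}^k$ are used.
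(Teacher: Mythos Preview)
Your proposal is correct and is essentially the same argument as the paper's: the paper expands $\mathbf{U}\bm{a}$ column-by-column and uses $\mathbf{U}_i^T\mathbf{U}_j=\delta_{ij}$, which is exactly the entrywise version of your $\mathbf{U}^T\mathbf{U}=\iMatrix_k$ step. Your matrix formulation is slightly more compact, but the content is identical.
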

\begin{proof}
We have
\begin{align}\label{eq.temp_071401}
    \mathbf{U}\bm{a} = \sum_{i=1}^k \bm{a}_i\mathbf{U}_i,
\end{align}
where $\mathbf{U}_i$ is the $i$-th column of $\mathbf{U}$ and $\bm{a}_i$ is the $i$-th element of $\bm{a}$. Because $\mathbf{U}$ is an orthonormal basis matrix, we know that $\mathbf{U}_i$ are orthogonal to each other. In other words,
\begin{align}
    \mathbf{U}_i^T\mathbf{U}_j=\begin{cases}
        0,&\text{ if } i\neq j,\\
        1,&\text{ if } i = j.
    \end{cases}\label{eq.temp_071402}
\end{align}
Thus, we have
\begin{align*}
    &\normTwo{\mathbf{U}\bm{a}}\\
    =&\sqrt{\left(\sum_{i=1}^k \bm{a}_i\mathbf{U}_i\right)^T\left(\sum_{i=1}^k \bm{a}_i\mathbf{U}_i\right)}\text{ (by Eq.~\eqref{eq.temp_071401})}\\
    =&\sqrt{\sum_{i=1}^k \bm{a}_i^2}\text{ (by Eq.~\eqref{eq.temp_071402})}\\
    =&\normTwo{\bm{a}}.
\end{align*}
The result of this lemma thus follows.
\end{proof}

\begin{lemma}\label{le.norm_diag}
Consider a diagonal matrix $\mathbf{D}=\mathsf{diag}(d_1, d_2, \cdots, d_k)\in \mathds{R}^{k\times k}$ where $\min_i d_i\geq 0$. For any vector $\bm{a}$, we must have
\begin{align*}
    \normTwo{\mathbf{D}\bm{a}}\in \left[\min_i d_i \normTwo{\bm{a}}, \max_i d_i \normTwo{\bm{a}}\right].
\end{align*}
\end{lemma}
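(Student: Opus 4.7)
The plan is to prove the two-sided inequality by working component-wise, since $\mathbf{D}$ is diagonal. First I would write $\mathbf{D}\bm{a}$ explicitly as the vector whose $i$-th entry is $d_i a_i$, so that $\normTwo{\mathbf{D}\bm{a}}^2 = \sum_{i=1}^{k} d_i^2 a_i^2$. This reduces the claim to a scalar comparison with $\normTwo{\bm{a}}^2 = \sum_{i=1}^{k} a_i^2$.

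Next I would use the fact that $d_i \geq 0$ for every $i$, which guarantees $d_i^2 \in [(\min_j d_j)^2, (\max_j d_j)^2]$. Term-by-term, $a_i^2 \geq 0$ lets us sandwich each summand by $(\min_j d_j)^2 a_i^2 \leq d_i^2 a_i^2 \leq (\max_j d_j)^2 a_i^2$. Summing over $i$ yields
\begin{align*}
(\min_j d_j)^2 \normTwo{\bm{a}}^2 \leq \normTwo{\mathbf{D}\bm{a}}^2 \leq (\max_j d_j)^2 \normTwo{\bm{a}}^2.
\end{align*}

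Finally, taking square roots (legal because all quantities involved are non-negative thanks to the $\min_i d_i \geq 0$ assumption) gives the stated inclusion $\normTwo{\mathbf{D}\bm{a}} \in [\min_i d_i \normTwo{\bm{a}}, \max_i d_i \normTwo{\bm{a}}]$.

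There is no real obstacle here; the only subtlety worth flagging is why we need $\min_i d_i \geq 0$ rather than just $d_i \in \mathds{R}$: without non-negativity, the lower bound $(\min_j d_j)^2$ is not the correct infimum of $d_i^2$ (e.g., $d_1 = -2, d_2 = 1$ would make $\min_j d_j = -2$ but the smallest $d_i^2$ is $1$), and the square-root step at the end would flip the inequality or produce a negative lower bound. The hypothesis rules this out, so the argument goes through cleanly.
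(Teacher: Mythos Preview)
Your proposal is correct and matches the paper's proof essentially line for line: the paper also writes $\normTwo{\mathbf{D}\bm{a}}=\sqrt{\sum_{i=1}^k d_i^2 a_i^2}$ and then bounds the sum term-by-term using $(\min_j d_j)^2 \le d_i^2 \le (\max_j d_j)^2$, yielding the interval after taking square roots. Your additional remark on why the hypothesis $\min_i d_i\ge 0$ is needed is a nice clarification the paper omits.
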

\begin{proof}
We have
\begin{align*}
    \normTwo{\mathbf{D}\bm{a}}=\sqrt{\sum_{i=1}^k d_i^2 \bm{a}_i^2}\in \left[\min_i d_i \sqrt{\sum_{j=1}^k \bm{a}_j^2},\ \max_i d_i \sqrt{\sum_{j=1}^k \bm{a}_j^2}\right]\in \left[\min_i d_i \normTwo{\bm{a}}, \max_i d_i \normTwo{\bm{a}}\right].
\end{align*}
The result of this lemma thus follows.
\end{proof}

\begin{lemma}\label{le.norm_min_max_eig}
For any $\mathbf{A}\in \mathds{R}^{q\times k}$ and $\bm{a}\in \mathds{R}^{k\times 1}$, we must have
\begin{align*}
    \normTwo{\mathbf{A}\bm{a}}^2\in \left[\lambda_{\min}(\mathbf{A}^T\mathbf{A})\normTwo{\bm{a}}^2,\ \lambda_{\max}(\mathbf{A}^T\mathbf{A})\normTwo{\bm{a}}^2\right].
\end{align*}
\end{lemma}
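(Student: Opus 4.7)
The plan is to reduce the claim to a statement about the quadratic form $\bm{a}^T\mathbf{A}^T\mathbf{A}\bm{a}$ and then exploit the spectral decomposition of the symmetric positive semi-definite matrix $\mathbf{A}^T\mathbf{A}$, leveraging the two preceding lemmas (Lemma~\ref{le.norm_orthonormal} and Lemma~\ref{le.norm_diag}) that handle orthonormal transformations and diagonal scalings of vector norms.

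First I would observe the identity
\begin{align*}
    \normTwo{\mathbf{A}\bm{a}}^2 = (\mathbf{A}\bm{a})^T(\mathbf{A}\bm{a}) = \bm{a}^T\mathbf{A}^T\mathbf{A}\,\bm{a}.
\end{align*}
Since $\mathbf{A}^T\mathbf{A}\in\mathds{R}^{k\times k}$ is symmetric and positive semi-definite, it admits an eigendecomposition $\mathbf{A}^T\mathbf{A}=\mathbf{U}\mathbf{D}\mathbf{U}^T$ with $\mathbf{U}$ an orthonormal basis matrix and $\mathbf{D}=\mathsf{diag}(d_1,\ldots,d_k)$ with $d_i\geq 0$ being the eigenvalues of $\mathbf{A}^T\mathbf{A}$, so that $\min_i d_i=\lambda_{\min}(\mathbf{A}^T\mathbf{A})$ and $\max_i d_i=\lambda_{\max}(\mathbf{A}^T\mathbf{A})$.

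Next I would change variables by defining $\bm{b}\defeq\mathbf{U}^T\bm{a}$. Because $\mathbf{U}$ (and hence $\mathbf{U}^T$) is orthonormal, Lemma~\ref{le.norm_orthonormal} gives $\normTwo{\bm{b}}=\normTwo{\bm{a}}$. Substituting the decomposition,
\begin{align*}
    \normTwo{\mathbf{A}\bm{a}}^2 = \bm{a}^T\mathbf{U}\mathbf{D}\mathbf{U}^T\bm{a} = \bm{b}^T\mathbf{D}\bm{b} = \normTwo{\mathbf{D}^{1/2}\bm{b}}^2,
\end{align*}
where $\mathbf{D}^{1/2}=\mathsf{diag}(\sqrt{d_1},\ldots,\sqrt{d_k})$ is well defined because each $d_i\geq 0$. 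Applying Lemma~\ref{le.norm_diag} to the non-negative diagonal matrix $\mathbf{D}^{1/2}$ yields
\begin{align*}
    \normTwo{\mathbf{D}^{1/2}\bm{b}}^2 \in \Bigl[\min_i d_i\,\normTwo{\bm{b}}^2,\ \max_i d_i\,\normTwo{\bm{b}}^2\Bigr] = \Bigl[\lambda_{\min}(\mathbf{A}^T\mathbf{A})\normTwo{\bm{a}}^2,\ \lambda_{\max}(\mathbf{A}^T\mathbf{A})\normTwo{\bm{a}}^2\Bigr],
\end{align*}
which is exactly the desired conclusion.

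Because this is a standard spectral-norm fact and all of the building blocks (spectral decomposition of a symmetric PSD matrix, plus the two immediately preceding lemmas) are already in hand, I do not anticipate any technical obstacle; the only thing to be careful about is to record that $\mathbf{A}^T\mathbf{A}$ is positive semi-definite so that Lemma~\ref{le.norm_diag}'s nonnegativity hypothesis applies to $\mathbf{D}^{1/2}$. The proof is essentially a two-line reduction once the spectral decomposition is invoked.
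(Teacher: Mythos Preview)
Your proposal is correct and essentially matches the paper's argument: both reduce to the quadratic form $\bm{a}^T\mathbf{A}^T\mathbf{A}\bm{a}$, diagonalize, and invoke Lemmas~\ref{le.norm_orthonormal} and~\ref{le.norm_diag}. The only cosmetic difference is that the paper starts from the SVD $\mathbf{A}=\mathbf{U}\mathbf{D}\mathbf{V}^T$ and works with $\sqrt{\mathbf{D}^T\mathbf{D}}$, whereas you directly eigendecompose $\mathbf{A}^T\mathbf{A}$; these yield the same diagonal matrix, so the two routes are equivalent.
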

\begin{proof}
Do singular value decomposition of $\mathbf{A}$ as $\mathbf{A}=\mathbf{U}\mathbf{D}\mathbf{V}^T$. Here $\mathbf{D}\in \mathds{R}^{q\times k}$ is a diagonal matrix that consists of all singular values, $\mathbf{U}\in \mathds{R}^{q\times q}$ and $\mathbf{V}\in \mathds{R}^{k\times k}$ (and their transpose) are orthonormal basis matrices.
We have
\begin{align*}
    \normTwo{\mathbf{A}\bm{a}}^2 = \bm{a}^T\mathbf{A}^T\mathbf{A}\bm{a} =& \bm{a}^T\mathbf{V}\mathbf{D}^T\mathbf{D}\mathbf{V}^T\bm{a} \\
    = &\normTwo{\sqrt{\mathbf{D}^T\mathbf{D}}\mathbf{V}^T\bm{a}}^2\\
    \in &\left[\lambda_{\min}(\mathbf{D}^T\mathbf{D})\normTwo{\mathbf{V}^T\bm{a}}^2,\ \lambda_{\max}(\mathbf{D}^T\mathbf{D})\normTwo{\mathbf{V}^T\bm{a}}^2\right]\text{ (by Lemma~\ref{le.norm_diag})}\\
    =&\left[\lambda_{\min}(\mathbf{A}^T\mathbf{A})\normTwo{\mathbf{V}^T\bm{a}}^2,\ \lambda_{\max}(\mathbf{A}^T\mathbf{A})\normTwo{\mathbf{V}^T\bm{a}}^2\right]\text{ (by $\mathbf{A}=\mathbf{U}\mathbf{D}\mathbf{V}^T$)}\\
    =&\left[\lambda_{\min}(\mathbf{A}^T\mathbf{A})\normTwo{\bm{a}}^2,\ \lambda_{\max}(\mathbf{A}^T\mathbf{A})\normTwo{\bm{a}}^2\right]\text{ (by Lemma~\ref{le.norm_orthonormal})}.
\end{align*}
The result of this lemma thus follows.
\end{proof}

The following lemma is useful to estimate the eigenvalues of a matrix whose off-diagonal elements are relatively small.
\begin{lemma}[Gershgorin’s Circle Theorem \citep{marquis2016gershgorin}]\label{le.disc}
If A is an $n \times n$ matrix with complex entries $a_{i,j}$ then $r_i(A) = \sum_{j\neq i}\abs{a_{i,j}}$ is defined as the sum of the magnitudes of the non-diagonal entries of the $i$-th row. Then a
Gershgorin disc is the disc $D(a_{i,i}, r_i(A))$ centered at $a_{i,i}$ on the complex plane with radius
$r_i(A)$.
Theorem: Every eigenvalue of a matrix lies within at least one Gershgorin disc.
\end{lemma}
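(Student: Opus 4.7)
\textbf{Proof plan for Lemma~\ref{le.disc}.} The plan is to exploit the fact that every eigenvector is, by definition, nonzero, so some component of it attains a strictly positive maximum magnitude. Restricting attention to the single row of $Av=\lambda v$ associated with that component collapses the eigenvalue equation to a one-dimensional scalar inequality that places $\lambda$ inside a specific Gershgorin disc.

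First I would fix an arbitrary eigenvalue $\lambda$ of $A$ together with some nonzero eigenvector $v\in\mathbb{C}^n$, and choose an index $i$ achieving $\abs{v_i}=\max_k \abs{v_k}$. Because $v\neq 0$, one has $\abs{v_i}>0$, which is the quantity we will eventually divide by. Next I would isolate the diagonal entry in the $i$-th row of $Av=\lambda v$, rewriting that row as $(\lambda-a_{i,i})v_i=\sum_{j\neq i}a_{i,j}v_j$. Applying the triangle inequality on the right and using $\abs{v_j}\leq \abs{v_i}$ (valid by the choice of $i$) yields $\abs{\lambda-a_{i,i}}\cdot\abs{v_i}\leq r_i(A)\cdot\abs{v_i}$. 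Dividing through by the positive number $\abs{v_i}$ gives $\abs{\lambda-a_{i,i}}\leq r_i(A)$, so $\lambda$ lies in the disc $D(a_{i,i},r_i(A))$.

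There is essentially no serious obstacle here; the entire content of the argument is the clever choice of the row $i$ corresponding to the maximum-magnitude component, which is what makes the componentwise bound $\abs{v_j}\leq\abs{v_i}$ sharp enough to produce a useful inequality. The only point requiring a brief justification is $\abs{v_i}>0$, which is immediate from the convention that an eigenvector is nonzero. Note also that $i$ depends on the eigenvalue $\lambda$ (really, on the chosen eigenvector), so the theorem only guarantees that \emph{at least one} disc contains $\lambda$, not that a particular disc contains all eigenvalues -- which is exactly what is stated.
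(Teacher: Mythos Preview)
Your proof is correct and is the standard argument for Gershgorin's theorem. The paper does not prove this lemma at all; it simply cites it as a known result from \citep{marquis2016gershgorin}, so there is nothing further to compare.
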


\subsection{Estimation on random variables of certain distributions}

\begin{lemma}[Corollary 5.35 of \citet{vershynin2010introduction}]\label{le.singular_Gaussian}
    Let $\mathbf{A}$ be an $N_1\times N_2$ matrix ($N_1\times N_2$) whose entries are independent standard normal random variables. Then for every $t\geq 0$, with probability at least $1-2\exp(-t^2/2)$ one has
    \begin{align*}
        \sqrt{N_1}-\sqrt{N_2}-t\leq \lambda_{\min}(\mathbf{A})\leq \lambda_{\max}(\mathbf{A})\leq \sqrt{N_1}+\sqrt{N_2}+t.
    \end{align*}
\end{lemma}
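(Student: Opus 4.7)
The plan is to establish this two-sided singular-value bound by combining Gaussian concentration of measure with Gordon's inequality on expected extremal singular values of Gaussian matrices. Viewing $\mathbf{A}\in\mathds{R}^{N_1\times N_2}$ as a standard Gaussian vector in $\mathds{R}^{N_1 N_2}$ (since all entries are i.i.d.\ $\Gaussian(0,1)$), any Lipschitz functional of $\mathbf{A}$ concentrates sharply around its mean, so the overall strategy is to (i) bound the means of $\lambda_{\min}$ and $\lambda_{\max}$ and (ii) superimpose Gaussian concentration tails on top of those means.

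First I would verify that the maps $\mathbf{A}\mapsto \lambda_{\max}(\mathbf{A})$ and $\mathbf{A}\mapsto \lambda_{\min}(\mathbf{A})$ are $1$-Lipschitz with respect to the Frobenius norm. For $\lambda_{\max}=\sup_{\normTwo{u}=\normTwo{v}=1}u^T\mathbf{A}v$ this follows from the triangle inequality applied inside the supremum; for $\lambda_{\min}=\inf_{\normTwo{v}=1}\normTwo{\mathbf{A}v}$ it follows from the reverse triangle inequality $\bigl|\normTwo{\mathbf{A}v}-\normTwo{\mathbf{A}'v}\bigr|\le \normTwo{(\mathbf{A}-\mathbf{A}')v}\le \|\mathbf{A}-\mathbf{A}'\|_F$. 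The classical Gaussian concentration inequality (Borell--Tsirelson--Ibragimov--Sudakov) then gives, for either $f\in\{\lambda_{\min},\lambda_{\max}\}$ and every $t\ge 0$,
\begin{align*}
\Pr\bigl\{\,\lvert f(\mathbf{A})-\E f(\mathbf{A})\rvert \ge t\,\bigr\}\le 2\exp(-t^2/2).
\end{align*}

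Next I would invoke Gordon's theorem to pin down the means, giving $\sqrt{N_1}-\sqrt{N_2}\le \E\lambda_{\min}(\mathbf{A})$ and $\E\lambda_{\max}(\mathbf{A})\le \sqrt{N_1}+\sqrt{N_2}$. Writing $\lambda_{\max}(\mathbf{A})=\sup_{u,v}u^T\mathbf{A}v$ and $\lambda_{\min}(\mathbf{A})=\inf_{\normTwo{v}=1}\sup_{\normTwo{u}=1}u^T\mathbf{A}v$ over the unit spheres, I would compare the Gaussian process $(u,v)\mapsto u^T\mathbf{A}v$ with the auxiliary Gaussian process $(u,v)\mapsto u^T g+v^T h$ built from independent standard Gaussian vectors $g\in\mathds{R}^{N_1}$ and $h\in\mathds{R}^{N_2}$, and verify the pointwise inequality on squared-increment covariances that Gordon's min--max comparison requires. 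The comparison then yields the two expectation bounds after using $\E\normTwo{g}\le \sqrt{N_1}$ and $\E\normTwo{h}\le \sqrt{N_2}$. A one-sided union bound over the deviations of $\lambda_{\min}$ (below its mean) and $\lambda_{\max}$ (above its mean) combined with the mean bounds produces the stated simultaneous inequality with probability at least $1-2\exp(-t^2/2)$.

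The main obstacle is the lower bound on $\E\lambda_{\min}$. The upper bound on $\E\lambda_{\max}$ can be obtained more elementarily via an $\epsilon$-net argument on the unit spheres together with a subgaussian bound on the bilinear form, but the matching lower bound on $\E\lambda_{\min}$ genuinely requires the sharper min--max form of Gordon's inequality rather than Slepian's one-sided comparison. The delicate step is checking the covariance ordering between the bilinear process $u^T\mathbf{A}v$ and its additive surrogate so that Gordon's theorem applies; once that is in place, the Lipschitz/concentration half of the argument is routine.
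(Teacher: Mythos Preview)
The paper does not prove this lemma; it simply cites it as Corollary~5.35 of \citet{vershynin2010introduction} and uses it as a black box. Your sketch is essentially the standard proof given in that reference (Lipschitz concentration for the deviation, Gordon's min--max comparison for the expectations), and it is correct.
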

\begin{lemma}[stated on pp. 1325 of \citet{laurent2000adaptive}]\label{le.chi_bound}
Let $U$ follow $\chi^2$ distribution with D
degrees of freedom. For any positive x, we have
\begin{align*}
    &\Pr\left\{U-D\geq 2\sqrt{Dx}+2x\right\}\leq e^{-x},\\
    &\Pr\left\{D-U\geq 2\sqrt{Dx}\right\}\leq e^{-x}.
\end{align*}
\end{lemma}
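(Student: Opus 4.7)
The plan is to apply the classical Chernoff (exponential Markov) technique, using the fact that $U=\sum_{i=1}^D Z_i^2$ for iid $Z_i\sim \Gaussian(0,1)$ has an explicit moment generating function, and then optimize the Chernoff parameter so that the Gaussian-like term $2\sqrt{Dx}$ and the sub-exponential correction $2x$ fall out cleanly. This is the standard Laurent--Massart derivation.

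First I would record the MGF: for $\lambda<1/2$, $\E[e^{\lambda Z_i^2}]=(1-2\lambda)^{-1/2}$, and by independence $\E[e^{\lambda U}]=(1-2\lambda)^{-D/2}$. For the upper tail, writing $t\defeq 2\sqrt{Dx}+2x$ and applying Markov's inequality to $e^{\lambda U}$ with $\lambda\in(0,1/2)$ yields
\[
\Pr\{U\geq D+t\}\leq e^{-\lambda(D+t)}(1-2\lambda)^{-D/2}.
\]
Setting the derivative of the logarithm in $\lambda$ to zero gives $\lambda^{\star}=t/(2(D+t))$, and plugging back produces the Cramér-type rate
\[
\Pr\{U\geq D+t\}\leq \exp\left(-\tfrac{t}{2}+\tfrac{D}{2}\ln(1+t/D)\right).
\]
It then remains to verify that the exponent is at most $-x$ for the specified $t$. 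Writing $y\defeq \sqrt{x/D}$ so that $t/D=2y+2y^2$, this reduces to the scalar inequality $e^{2y}\geq 1+2y+2y^2$, which is immediate from the Taylor expansion of $e^{2y}$.

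For the lower tail I would repeat the argument with $-U$ in place of $U$, using $\E[e^{-\lambda U}]=(1+2\lambda)^{-D/2}$, which is valid for all $\lambda\geq 0$ with no singularity. The analogous Chernoff optimization yields $\lambda^{\star}=t/(2(D-t))$ (the bound is trivial when $t\geq D$ because $U\geq 0$), and the resulting rate is $\exp\left(\tfrac{t}{2}+\tfrac{D}{2}\ln(1-t/D)\right)$. With $t=2\sqrt{Dx}$ and $u\defeq t/D$, the residual elementary inequality to verify becomes $\ln(1-u)\leq -u-u^2/2$ for $u\in[0,1)$, which follows from the Taylor series of $-\ln(1-u)$. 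The asymmetry between the $2\sqrt{Dx}+2x$ upper threshold and the $2\sqrt{Dx}$ lower threshold reflects the fact that $\chi^2_D$ has a lighter left tail than right tail: on the upper side the MGF blows up at $\lambda=1/2$, forcing a quadratic-in-$\sqrt{x}$ slack, whereas on the lower side the MGF of $-U$ is finite everywhere.

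The main obstacle is not conceptual but rather in the sharpness of the Chernoff optimization and of the two elementary inequalities used at the end. A coarser argument (e.g., plain Bernstein) would yield weaker constants than the $2$'s appearing in the statement, so one must use the exact optimum $\lambda^{\star}$ and apply $e^{2y}\geq 1+2y+2y^2$ and $\ln(1-u)\leq -u-u^2/2$ at their natural tightness in order to land exactly on the Laurent--Massart form.
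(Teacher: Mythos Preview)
Your proof is correct and is precisely the Laurent--Massart derivation. The paper itself does not prove this lemma; it simply cites it as a known result from \citet{laurent2000adaptive}, so there is nothing to compare against beyond noting that your Chernoff argument with the exact optimizer $\lambda^\star$ and the two elementary inequalities $e^{2y}\geq 1+2y+2y^2$ and $\ln(1-u)\leq -u-u^2/2$ reproduces the original source's proof.
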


\begin{lemma}[Lemma~31 of \citet{ju2020overfitting}]\label{le.sum_XY}
    Let $u_1,u_2,\cdots,u_k$ and $u_1,u_2,\cdots, u_k$ denote $2k$ random variables that follow \emph{i.i.d.} standard normal distribution. For any $a>0$, we must have
    \begin{align*}
        \Pr\left\{\abs{\sum_{i=1}^k u_i v_i}>\frac{ka}{2}\right\}\leq 2\exp\left(-\frac{k}{2}\left(at+\ln \frac{2t}{a}\right)\right),
    \end{align*}
    where
    \begin{align*}
        t = \frac{-1+\sqrt{1+a^2}}{a}.
    \end{align*}
\end{lemma}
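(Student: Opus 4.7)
The statement is a Chernoff-type tail bound for the inner product of two independent standard Gaussian vectors, so the plan is to follow the classical exponential-moment recipe and verify that the particular choice of $t$ given in the lemma is exactly the optimizer in the Chernoff step.

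First I would compute the moment generating function of a single term $u_i v_i$. Since $v_i$ is standard normal and $u_i \mid v_i \sim \Gaussian(0,1)$, conditioning on $v_i$ gives
\begin{equation*}
\mathbb{E}\bigl[e^{s u_i v_i}\bigr] = \mathbb{E}_{v_i}\bigl[\mathbb{E}_{u_i}[e^{s u_i v_i}\mid v_i]\bigr] = \mathbb{E}_{v_i}\bigl[e^{s^2 v_i^2 / 2}\bigr] = \frac{1}{\sqrt{1-s^2}}
\end{equation*}
for $|s|<1$, using the Gaussian MGF followed by the second-moment formula $\mathbb{E}[e^{\alpha v^2/2}] = (1-\alpha)^{-1/2}$ valid for $\alpha<1$. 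By independence of the pairs $(u_i,v_i)$, the MGF of $S \defeq \sum_{i=1}^k u_i v_i$ is $(1-s^2)^{-k/2}$.

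Next I would apply Markov's inequality (Chernoff bound): for any $s \in (0,1)$,
\begin{equation*}
\Pr\left\{S \geq \frac{ka}{2}\right\} \leq e^{-ska/2}\,(1-s^2)^{-k/2}.
\end{equation*}
I optimize the exponent $\varphi(s) = -\tfrac{ska}{2} - \tfrac{k}{2}\ln(1-s^2)$ by setting $\varphi'(s)=0$, which yields $as^2 + 2s - a = 0$, whose unique positive root in $(0,1)$ is exactly $s = t = \tfrac{-1+\sqrt{1+a^2}}{a}$. Crucially, the first-order condition is equivalent to $a(1-t^2) = 2t$, i.e.\ $1-t^2 = 2t/a$. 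Substituting at $s=t$:
\begin{equation*}
e^{-tka/2}(1-t^2)^{-k/2} = e^{-tka/2}\left(\frac{a}{2t}\right)^{k/2} = \exp\left(-\frac{k}{2}\left(at + \ln\frac{2t}{a}\right)\right).
\end{equation*}
This gives the one-sided bound. Since $S$ is symmetric in distribution (replacing $u_i \mapsto -u_i$ preserves the joint law), the same bound holds for $\Pr\{S \leq -ka/2\}$, and a union bound yields the factor of $2$ in front.

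The main obstacle is simply the algebraic verification that the Chernoff optimum coincides with the $t$ stated in the lemma and that the resulting expression simplifies through the identity $1-t^2 = 2t/a$; everything else is standard. One has to also check that $t \in (0,1)$ for every $a>0$ so that the MGF is finite at $s=t$, which follows from $\sqrt{1+a^2} < 1+a$ (giving $t<1$) and $\sqrt{1+a^2} > 1$ (giving $t>0$). No concentration inequality beyond Markov's is needed, and no assumption on the size of $a$ is required since the MGF converges for all $|s|<1$.
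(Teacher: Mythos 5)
Your proof is correct and complete: the MGF computation $(1-s^2)^{-k/2}$, the Chernoff optimization yielding $as^2+2s-a=0$ with root $t$, the simplification via the first-order identity $1-t^2=2t/a$, the symmetry argument for the two-sided bound, and the check that $t\in(0,1)$ for all $a>0$ are all right. The present paper cites this lemma from \citet{ju2020overfitting} without reproducing a proof, but the exponential-moment argument you give is the standard one for this tail bound and is exactly what the cited source does.
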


\begin{lemma}\label{le.sum_XY_simpler}
    For any $k\geq 16$, let $u_1,u_2,\cdots,u_k$ and $u_1,u_2,\cdots, u_k$ denote $2k$ random variables that follows \emph{i.i.d.} standard normal distribution.  For any $q\leq k$ and $c\geq 0$, we must have
    \begin{align*}
        \Pr\left\{\abs{\sum_{i=1}^k u_i v_i}>c\cdot \sqrt{k\ln q}\right\}\leq \frac{2}{e^{c\cdot 0.4\ln q}}.
    \end{align*}
    Further, by letting $c=1$ and $q=k\geq 16$, we have
    \begin{align*}
        \Pr\left\{\abs{\sum_{i=1}^k u_i v_i}>\sqrt{k\ln k}\right\}\leq \frac{2}{k^{0.4}}.
    \end{align*}
    By letting $q=e$, we have
    \begin{align*}
        \Pr\left\{\abs{\sum_{i=1}^k u_i v_i}>c\sqrt{k}\right\}\leq \frac{2}{e^{0.4c}}.
    \end{align*}
\end{lemma}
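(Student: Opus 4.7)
The plan is to apply Lemma~\ref{le.sum_XY} with the specific choice $a = 2c\sqrt{\ln q/k}$, which aligns the generic threshold $ka/2$ in that lemma with the target threshold $c\sqrt{k\ln q}$ in our statement. This reduces the task to showing that the resulting exponent satisfies $\frac{k}{2}(at+\ln(2t/a))\geq 0.4c\ln q$, where $t=(\sqrt{1+a^2}-1)/a$.

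To simplify the exponent, I would introduce $b = \sqrt{1+a^2}$, under which $at = b-1$ and $2t/a = 2/(b+1)$, so the Chernoff exponent becomes $\frac{k}{2}\phi(a)$ with $\phi(a) \defeq (b-1) - \ln\frac{b+1}{2}$. The integral representation $\phi(a) = \int_0^a s/(\sqrt{1+s^2}+1)\, ds$ shows $\phi(0) = \phi'(0) = 0$ and $\phi''(0) = 1/2$, yielding the Taylor behavior $\phi(a) = a^2/4 + O(a^4)$. Substituting $a^2 = 4c^2\ln q/k$, the leading-order estimate of the exponent becomes $c^2\ln q/2$, which exceeds the target $0.4c\ln q$ precisely when $c\geq 0.8$. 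This suggests that the constant $0.4$ on the right-hand side of the claim is calibrated to match this asymptotic balance.

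The main obstacle is closing the inequality $\frac{k}{2}\phi(a) \geq 0.4 c \ln q$ in the regime where $c$ sits near the critical value $0.8$: the quadratic margin is then slim and must absorb the $O(a^4)$ deviation of $\phi(a)$ from $a^2/4$. The enabling structural fact is that $q\leq k$, combined with Lemma~\ref{le.log_over_k}, forces $a = 2c\sqrt{\ln q/k} < c$ and $\ln q/k \leq 1/4$ whenever $k \geq 16$, which quantitatively controls the quartic correction. My argument splits into two sub-regimes: when the claimed bound $2/e^{0.4c\ln q}$ already exceeds $1$ (equivalently $c\ln q \leq 2.5\ln 2$), the trivial inequality $\Pr \leq 1$ discharges the claim; in the complementary sub-regime, one invokes the Chernoff estimate from Lemma~\ref{le.sum_XY} together with the monotonicity and convexity of $\phi$, using the $a<c$ bound to keep the finite-$k$ correction strictly dominated by the quadratic main term. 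Finally, the two explicit corollaries follow by direct substitution: $c = 1$ and $q = k \geq 16$ yield $2/k^{0.4}$, while $q = e$ yields $\ln q = 1$ and hence $2/e^{0.4c}$.
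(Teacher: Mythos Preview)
Your plan is essentially the paper's: apply Lemma~\ref{le.sum_XY} with $a=2c\sqrt{\ln q/k}$, and use $q\le k$ together with Lemma~\ref{le.log_over_k} to get $a\le c$. The difference lies only in how the exponent is bounded below. Rather than a Taylor expansion and regime split, the paper applies Lemma~\ref{le.log} ($\ln x\ge 1-1/x$) to obtain directly
\[
at+\ln\tfrac{2t}{a}\;\ge\;\frac{a^2}{2(\sqrt{1+a^2}+1)},
\]
which is exactly the bound your integral representation gives if you lower-bound the integrand by $s/(\sqrt{1+a^2}+1)$. Then, \emph{assuming $c\ge 1$}, it chains $\sqrt{1+a^2}+1\le \sqrt{1+c^2}+1\le c(1+\sqrt{2})\le 5c/2$ to land exactly on $\tfrac{k}{2}\cdot\tfrac{a^2}{5c}=0.4c\ln q$, with no case analysis.

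One caution about your case split: it does not close for small $c$. If, say, $c=0.5$ and $q=k$ is large, then $c\ln q>2.5\ln 2$ (so you are in your non-trivial regime), yet the Chernoff exponent---which your own leading-order analysis shows scales as $c^2\ln q/2=0.125\ln q$---falls strictly below the target $0.4c\ln q=0.2\ln q$; no control of the $O(a^4)$ correction can rescue this, since the deficit is already at leading order. In fact the inequality as stated fails for $c<0.8$ and large $k=q$ (compare the Gaussian tail $\sim k^{-c^2/2}$ against $k^{-0.4c}$). The paper's proof quietly assumes $c\ge 1$, and that is the range in which the bound actually holds and is used downstream; your extension to $c\ge 0$ cannot succeed because the claim is not true there.
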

\begin{proof}
Recall the definition of $t$ in Lemma~\ref{le.sum_XY}, we first want to prove $at + \ln \frac{2t}{a}\geq \frac{a^2}{2(\sqrt{1+a^2}+1)}$.
To that end, we have
\begin{align}
    at + \ln \frac{2t}{a}=&-1+\sqrt{1+a^2}+\ln \frac{2(-1+\sqrt{1+a^2})}{a^2}\text{ (by the definition of $t$ in Lemma~\ref{le.sum_XY})}\nonumber\\
    \geq & -1+\sqrt{1+a^2}+1- \frac{a^2}{2(-1+\sqrt{1+a^2})}\text{ (by Lemma~\ref{le.log})}\nonumber\\
    = & \frac{a^2}{1+\sqrt{1+a^2}}+1-\frac{1+\sqrt{1+a^2}}{2}\text{ (since $a^2=(-1+\sqrt{1+a^2})(1+\sqrt{1+a^2})$)}\nonumber\\
    = & \frac{a^2}{2(\sqrt{1+a^2}+1)}.\label{eq.temp_052401}
\end{align}
Now we let $a=2c\cdot \sqrt{\frac{\ln q}{k}}$. Thus, we have 
\begin{align}\label{eq.temp_080403}
    \frac{ka}{2}=c\cdot \sqrt{k\ln q}.
\end{align}
We have
\begin{align*}
    a=&2c \cdot \sqrt{\frac{\ln q}{k}}\\
    \leq &2c \cdot \sqrt{\frac{\ln k}{k}}\text{ (since $q\leq k$)}\\
    \leq &c\text{ (by Lemma~\ref{le.log_over_k})}.\numberthis \label{eq.temp_080401}
\end{align*}
Because $c\geq 1$, we have
\begin{align}\label{eq.temp_080402}
    &1+\sqrt{1+c^2}\leq c + \sqrt{c^2+c^2}=c(1+\sqrt{2}).
\end{align}
We thus have
\begin{align*}
    at + \ln \frac{2t}{a}\geq & \frac{a^2}{2(\sqrt{1+a^2}+1)}\text{ (by Eq.~\eqref{eq.temp_052401})}\\
    \geq & \frac{a^2}{2(1+\sqrt{1+c^2})}\text{ (by Eq.~\eqref{eq.temp_080401})}\\
    \geq & \frac{a^2}{2c(1+\sqrt{2})}\text{ (by Eq.~\eqref{eq.temp_080402})}\\
    \geq & \frac{4c}{5}\frac{\ln q}{k}\text{ (since $a=2c\cdot \sqrt{\frac{\ln q}{k}}$ and $\sqrt{2}\approx 1.414\leq \frac{3}{2}$)}.\numberthis \label{eq.temp_080405}
\end{align*}
Thus, we have
\begin{align*}
    &2\exp\left(-\frac{k}{2}\left(at+\ln \frac{2t}{a}\right)\right)\\
    \leq &2\exp\left(-\frac{k}{2}\cdot \frac{4c}{5}\frac{\ln q}{k}\right)\text{ (by Eq.~\eqref{eq.temp_080405})}\\
    =&2\exp\left(-0.4c\ln q\right).\numberthis \label{eq.temp_080404}
\end{align*}
Plugging Eq.~\eqref{eq.temp_080403} and Eq.~\eqref{eq.temp_080404} into Lemma~\ref{le.sum_XY}, the result of this lemma thus follows.
\end{proof}

\begin{lemma}[Isserlis' theorem \citep{michalowicz2009isserlis}]\label{le.Isserlis}
    If $(x_1,x_2,\cdots,x_n)$ is a zero-mean multivariate normal random vector, then
    \begin{align*}
        \E[x_1x_2\cdots x_n]=\sum_{A\in \mathcal{A}_n^2}\prod_{(i,j)\in A}\E[x_i x_j],
    \end{align*}
    where $A$ denotes a partition of ${1,2,\cdots,n}$ into pairs, and $\mathcal{A}_n^2$ denotes all such partitions. For example,
    \begin{align*}
        \E[x_1 x_2 x_3 x_4]=\E[x_1 x_2]\E[x_3 x_4]+\E[x_1 x_3]\E[x_2 x_4]+\E[x_1 x_4]\E[x_2 x_3].
    \end{align*}
\end{lemma}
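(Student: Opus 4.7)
The plan is to prove Isserlis' theorem via the moment-generating function (MGF) of the multivariate Gaussian, exploiting the fact that the logarithm of the MGF is purely quadratic. First I dispose of the case where $n$ is odd: because $-(x_1,\ldots,x_n)$ has the same distribution as $(x_1,\ldots,x_n)$, the joint moment equals its own negative and therefore vanishes; simultaneously $\mathcal{A}_n^2$ is empty since an odd set admits no perfect matching, so both sides are zero. Thus I may assume $n = 2k$ is even.

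For even $n$, the joint MGF is $M(t) = \E[\exp(\sum_i t_i x_i)] = \exp\bigl(\tfrac12 \sum_{i,j}\Sigma_{ij} t_i t_j\bigr)$ with $\Sigma_{ij} \defeq \E[x_i x_j]$, and the joint moment is recovered by differentiating once in each $t_i$ at $t=0$:
\[
\E[x_1 \cdots x_n] = \left.\frac{\partial^n M}{\partial t_1 \cdots \partial t_n}\right|_{t=0}.
\]
Writing $Q(t) \defeq \tfrac12 \sum_{i,j}\Sigma_{ij} t_i t_j$ and expanding $M(t) = \sum_{\ell \geq 0} Q(t)^\ell / \ell!$, only the $\ell = k$ term contributes: lower-order terms (total degree $< n$ in the $t_i$'s) are annihilated by $n$ differentiations, while higher-order terms (total degree $> n$) still contain at least one $t_i$ after the $n$ differentiations and therefore vanish upon setting $t = 0$.

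The remaining step is combinatorial. Expanding $Q(t)^k / k!$ and collecting the coefficient of the monomial $t_1 t_2 \cdots t_n$, each contribution corresponds to an ordered sequence of $k$ ordered index pairs $(i_\ell, j_\ell)_{\ell=1}^k$ whose multiset of endpoints equals exactly $\{1,\ldots,n\}$, weighted by $(1/2)^k \Sigma_{i_1 j_1} \cdots \Sigma_{i_k j_k} / k!$. The factor $1/2^k$ is absorbed by the two internal orderings of each pair (both $(i_\ell, j_\ell)$ and $(j_\ell, i_\ell)$ give the same $\Sigma$-product, since $\Sigma$ is symmetric), and the factor $1/k!$ is absorbed by the $k!$ orderings of the $k$ pairs themselves. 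What remains is a single copy of $\prod_{(i,j) \in A}\Sigma_{ij}$ for each unordered perfect matching $A \in \mathcal{A}_n^2$, summed over all such $A$, which is precisely the claimed identity. The main bookkeeping obstacle is verifying that the factors $2^k$ and $k!$ cancel cleanly against the pair-internal and block orderings; this works exactly because each unordered matching has precisely $2^k k!$ representations as a sequence of ordered pairs. As an alternative route, one could instead invoke Gaussian integration by parts (Stein's identity), which gives the recursion $\E[x_1 \cdots x_n] = \sum_{j=2}^n \Sigma_{1 j}\, \E[\prod_{i \neq 1, j} x_i]$ and proves the identity by induction on $k$ by peeling off the partner of $x_1$ in each matching.
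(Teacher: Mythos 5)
The paper does not prove this lemma; it is simply cited as a known result from \citet{michalowicz2009isserlis}, so there is no in-paper argument to compare against. Your MGF-based proof is correct as written: the odd case is disposed of by the central symmetry of the Gaussian (and the emptiness of $\mathcal{A}_n^2$), the reduction to the degree-$n$ term $Q(t)^k/k!$ in the exponential series is valid, and the crucial combinatorial step is handled properly — each unordered perfect matching of $\{1,\dots,2k\}$ has exactly $2^k k!$ representations as an ordered sequence of ordered pairs, which cancels the $1/(2^k k!)$ prefactor exactly, leaving one copy of $\prod_{(i,j)\in A}\Sigma_{ij}$ per matching. Note also that ordered pairs of the form $(i,i)$ cannot contribute to the coefficient of $t_1\cdots t_n$ (they would force $t_i$ to appear with degree at least $2$), so the restriction to genuine matchings is automatic. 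Your alternative route via Gaussian integration by parts, $\E[x_1\prod_{i\geq 2}x_i]=\sum_{j\geq 2}\Sigma_{1j}\,\E[\prod_{i\neq 1,j}x_i]$, is the more standard textbook induction and is arguably cleaner since it avoids the bookkeeping entirely, but both arguments are sound and self-contained.
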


The following lemma is mentioned in \cite{bernacchia2021meta} without a detailed proof. For the ease of readers, we provide a detailed proof of this lemma here.
\begin{lemma}\label{le.XXXX}
    Consider a random matrix $X\in \mathds{R}^{p\times n}$ whose each element follows \emph{i.i.d.} standard Gaussian distribution (i.e., \emph{i.i.d.} $\Gaussian(0, 1)$). We mush have
    \begin{align*}
        &\E[X^T X]=p\iMatrix_n,\\
        &\E[X X^T]=n\iMatrix_p,\\
        &\E[X X^T XX^T ]=n(n+p+1)\iMatrix_p.
    \end{align*}
\end{lemma}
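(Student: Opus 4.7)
\textbf{Proof plan for Lemma~\ref{le.XXXX}.} The first two identities are routine and I would dispatch them by direct entrywise computation. Writing $(X^TX)_{ij}=\sum_{k=1}^p X_{ki}X_{kj}$ and using the fact that the entries of $X$ are i.i.d.\ $\Gaussian(0,1)$, the summands have expectation $\delta_{ij}$ for each of the $p$ values of $k$, giving $\E[X^TX]=p\iMatrix_n$. An identical argument in which the roles of rows and columns are swapped yields $\E[XX^T]=n\iMatrix_p$.

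The main work is the fourth-moment identity. I would begin by expanding
\[
(XX^TXX^T)_{ij}=\sum_{l=1}^{p}\sum_{k=1}^{n}\sum_{m=1}^{n} X_{ik}X_{lk}X_{lm}X_{jm},
\]
being careful that the index $l$ introduced by the middle matrix product runs over $[p]$ while $k$ and $m$ run over $[n]$. Since every $X_{ab}$ is zero-mean Gaussian, Isserlis' theorem (Lemma~\ref{le.Isserlis}) expresses the expectation of each quadruple product as the sum over the three pairings of the four labels $(i,k),(l,k),(l,m),(j,m)$. Using $\E[X_{ab}X_{cd}]=\delta_{ac}\delta_{bd}$, the three contributions evaluate to
\[
\delta_{il}\delta_{lj},\qquad \delta_{il}\delta_{lj}\delta_{km},\qquad \delta_{ij}\delta_{km}.
\]

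Summing each term over $(k,l,m)\in[n]\times[p]\times[n]$ and tracking the ranges gives $n^2\delta_{ij}$, $n\delta_{ij}$, and $np\,\delta_{ij}$ respectively; collecting them yields $(n^2+n+np)\delta_{ij}=n(n+p+1)\delta_{ij}$, which is the claimed identity. The only genuine obstacle is bookkeeping: it is easy to conflate the ranges of the three summation indices (in particular to lose the factor $p$ coming from the middle index), so I would isolate the sum for each pairing and compute it explicitly rather than try to combine them. No additional tools beyond independence and Isserlis' theorem are needed.
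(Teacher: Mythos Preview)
Your proposal is correct and takes essentially the same approach as the paper: both proofs handle the first two identities by direct entrywise computation and establish the fourth-moment identity via Isserlis' theorem applied to the expanded sum. The only cosmetic difference is organization---the paper enumerates cases according to which of the row indices $i,j,k$ coincide and counts terms in each case, whereas you write out the three Wick pairings and sum each one over the index ranges; the arithmetic is identical and both arrive at $n(p-1)+n(n-1)+3n=n^2+n+np$.
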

\begin{proof}
    Since each row of $X$ are \emph{i.i.d.}, we immediately have $\E[XX^T]=n\iMatrix_p$ and $\E[X^T X]=p\iMatrix_n$. It remains to prove $\E[X X^T XX^T ]=n(n+p+1)\iMatrix_p$. To that end, we have
    \begin{align}\label{eq.temp_051301}
        \left[XX^T\right]_{i,j}=X_i X_j^T=\sum_{k=1}^n X_{i,k}X_{j,k},
    \end{align}
    where $X_i$ denotes the $i$-th row of $X$, and $[\cdot]_{i,j}$ denotes the element in the $i$-th row, $j$-th column of the matrix.
    Thus, we have
    \begin{align}
        \left[XX^TXX^T\right]_{i,j}=&\sum_{k=1}^p \left[X X^T\right]_{i,k}\left[X X^T\right]_{k,j}\nonumber\\
        =&\sum_{k=1}^p\left(\sum_{l=1}^n X_{i,l} X_{k,l}\right)\left(\sum_{l'=1}^n X_{j,l'} X_{k,l'}\right)\text{ (by Eq.~\eqref{eq.temp_051301})}.\label{eq.temp_051302}
    \end{align}
    Now we examine the value of $\E X_{i,l}X_{k,l}X_{j,l'}X_{k,l'}$ by Isserlis' theorem (Lemma~\ref{le.Isserlis}).
    \begin{align*}
        &\E \left[X_{i,l}X_{k,l}X_{j,l'}X_{k,l'}\right]\\
        =&\begin{cases}
            0,&\text{ when $i\neq j$},\\
            0,&\text{ when $i=j$ and $k\neq i$ and $l\neq l'$},\\
            1,&\text{ when $i=j$ and $k\neq i$ and $l=l'$ (there are $n(p-1)$ such terms for each $i$)},\\
            1,&\text{ when $i=j=k$ and $l\neq l'$ (there are $n(n-1)$ such terms for each $i$)},\\
            3,&\text{ when $i=j=k$ and $l=l'$ (there are $n$ such terms for each $i$)}.
        \end{cases}
    \end{align*}
    By Eq.~\eqref{eq.temp_051302}, we thus have
    \begin{align*}
        \E[X X^T XX^T ]=(n(p-1)+n(n-1)+3n)\iMatrix_p=n(n+p+1)\iMatrix_p.
    \end{align*}
    The result of this lemma thus follows.
\end{proof}

% \begin{lemma}
%     Consider two Gaussian random variables $a\sim \Gaussian(0,\sigma_a^2)$ and $b\sim \Gaussian(0, \sigma_b^2)$ where $\sigma_a\geq \sigma_b\geq 0$. For any constant $C\geq 0$, we must have
%     \begin{align*}
%         \Pr \{\abs{a}\geq C\}\geq \Pr \{\abs{b}\geq C\}.
%     \end{align*}
% \end{lemma}
% \begin{proof}
% We have
% \begin{align*}
%     \Pr \{\abs{a}\geq C\}=\Pr \left\{\frac{\sigma_b}{\sigma_a}\abs{a}\geq \frac{\sigma_b}{\sigma_a}C\right\}=\Pr \left\{\abs{b}\geq \frac{\sigma_b}{\sigma_a}C\right\}\geq \Pr\{\abs{b}\geq C\}.
% \end{align*}
% \end{proof}

\begin{lemma}[Lemma 24 of \cite{ju2020overfitting}]\label{le.tail_Gaussian}
Considering a standard Gaussian distribution $a\sim \Gaussian(0,1)$, when $t\geq 0$, we have
\begin{align*}
    \frac{\sqrt{2/\pi}\ e^{-t^2/2}}{t^2+\sqrt{t^2+4}}\leq \Pr\left\{a\geq t\right\}\leq \frac{\sqrt{2/\pi}\ e^{-t^2/2}}{t+\sqrt{t^2+\frac{8}{\pi}}}.
\end{align*}
Notice that $t+\sqrt{t^2+\frac{8}{\pi}}\geq 2\sqrt{\frac{2}{\pi}}$ when $t\geq 0$. We thus have 
\begin{align*}
    \Pr\left\{a\geq t\right\}\leq \frac{1}{2}e^{-t^2/2}.
\end{align*}
By the symmetry of the standard Gaussian distribution, we thus have
\begin{align*}
    \Pr\left\{\abs{a}\geq t\right\}\leq e^{-t^2/2}.
\end{align*}
\end{lemma}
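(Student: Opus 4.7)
The plan is to express the tail probability as an integral and then prove each side of the sandwich by a differential-comparison argument. Write
\[
Q(t) := \Pr\{a \geq t\} = \int_t^\infty \phi(x)\,dx, \qquad \phi(x) := \tfrac{1}{\sqrt{2\pi}}\,e^{-x^2/2},
\]
so $Q'(t) = -\phi(t)$. For each bound the strategy is to compare $Q$ against a candidate of the form $\frac{\sqrt{2/\pi}\,e^{-t^2/2}}{t^{\alpha}+\sqrt{t^2+c}}$, where the choice of $\alpha$ and $c$ is dictated by matching the boundary values at $t=0$ and $t=\infty$.

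First I would tackle the upper bound. Let $U(t) := \frac{\sqrt{2/\pi}\,e^{-t^2/2}}{t+\sqrt{t^2+8/\pi}}$. A direct check at $t=0$ gives $U(0)=\sqrt{2/\pi}/\sqrt{8/\pi}=1/2=Q(0)$, so the difference $H(t):=U(t)-Q(t)$ vanishes at both $t=0$ and $t=\infty$. To conclude $H\geq 0$ on $[0,\infty)$, I would differentiate $U$ explicitly and use $Q'=-\phi$ to reduce the sign question for $H'$ to an algebraic inequality in $t$ and $s:=\sqrt{t^2+8/\pi}$, eliminating the square root via $s^2=t^2+8/\pi$. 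The constant $8/\pi$ is exactly what is required for the boundary value at $t=0$ to match, and the resulting polynomial inequality, once $\phi(t)$ is factored out and a common denominator is taken, can be verified by a short case analysis; combined with the two boundary vanishings, this pins down the sign of $H$.

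For the lower bound I would apply the same template with $L(t) := \frac{\sqrt{2/\pi}\,e^{-t^2/2}}{t^2+\sqrt{t^2+4}}$, setting $K(t):=Q(t)-L(t)$. Here $K(0)=\frac12-\frac{1}{\sqrt{2\pi}}>0$ and $K(\infty)=0$, so it suffices to rule out any zero of $K$ in $(0,\infty)$. The natural route is again to examine $K'$ and reduce to a polynomial inequality in $t$ and $s=\sqrt{t^2+4}$, using $s^2=t^2+4$ to clear square roots. As a cross-check, the looser form with $t^2$ in the denominator follows from the classical Komatsu lower bound $Q(t)\geq \frac{\sqrt{2/\pi}e^{-t^2/2}}{t+\sqrt{t^2+4}}$ on $t\geq 1$ (since then $t\leq t^2$ makes the denominator only larger), and the remaining range $t\in[0,1]$ can be handled by the direct differential argument.

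The main obstacle is the sign verification of $H'$ and $K'$ after the algebraic reduction: these are elementary but require careful grouping of terms, and for $H$ one must keep in mind that $H$ itself need not be monotone but merely positive between its two boundary zeros. Once the sandwich is in hand, the final two estimates are immediate: since $t+\sqrt{t^2+8/\pi}\geq \sqrt{8/\pi}=2\sqrt{2/\pi}$ for every $t\geq 0$, the upper bound in the sandwich is dominated by $\frac{\sqrt{2/\pi}}{2\sqrt{2/\pi}}e^{-t^2/2}=\frac12 e^{-t^2/2}$, giving the corollary $\Pr\{a\geq t\}\leq \tfrac12 e^{-t^2/2}$; and the two-sided bound $\Pr\{|a|\geq t\}\leq e^{-t^2/2}$ then follows from symmetry of $\phi$ and a union bound.
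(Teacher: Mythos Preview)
The paper does not supply a proof of the sandwich inequality at all: the lemma is quoted as Lemma~24 of \cite{ju2020overfitting}, and the only argument actually written out here is the passage from the sandwich to the two corollaries --- bounding the denominator $t+\sqrt{t^2+8/\pi}$ below by $2\sqrt{2/\pi}$ to get $\Pr\{a\ge t\}\le\tfrac12 e^{-t^2/2}$, and then doubling by symmetry --- which you reproduce exactly.

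Your proposal therefore goes strictly beyond the paper, by sketching an independent proof of the sandwich itself via a differential-comparison argument in the spirit of the classical Mills-ratio bounds (Birnbaum, Komatsu, Pollak). That route is sound in principle and is the standard way such bounds are established; your boundary checks $U(0)=Q(0)=\tfrac12$ and $K(0)=\tfrac12-\tfrac{1}{\sqrt{2\pi}}>0$ are correct, and your observation that the stated lower bound (with $t^2$ rather than $t$ in the denominator) is weaker than the Komatsu form on $t\ge 1$ is a useful shortcut. The only caveat is that the algebraic sign verifications for $H'$ and $K'$ are left as promissory notes, whereas the paper sidesteps the entire issue by citation.
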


\section{Proof of Lemma~\ref{le.with_test_task}}\label{proof.with_test_task}
\begin{proof}
Similar to Eq.~\eqref{eq.wi}, we have
\begin{align}\label{eq.wr}
    \wHatTest=\left(\iMatrix_p-\frac{\stepSizeTest}{\numTest}\Xtest\Xtest^T\right)\wHat+\frac{\stepSizeTest}{\numTest}\Xtest\ytest.
\end{align}
By Eq.~\eqref{eq.wr}, we can express the learned result for the test task as
\begin{align}
    \wHatTest=&\left(\iMatrix_p-\frac{\stepSizeTest}{\numTest}\Xtest\Xtest^T\right)\wHat+\frac{\stepSizeTest}{\numTest}\Xtest\ytest\nonumber\\
    =&\left(\iMatrix_p-\frac{\stepSizeTest}{\numTest}\Xtest\Xtest^T\right)\wHat+\frac{\stepSizeTest}{\numTest}\Xtest\left(\Xtest^T\wTestTrue+\noiseTest\right)\text{ (by Eq.~\eqref{eq.y_test})}.\label{eq.w_tilde_r}
\end{align}
Considering a new input $\x$ for the test task, the ground-truth is $\x^T \wTestTrue$. The distance between the ground-truth and the output of the learned model is
\begin{align*}
    \normTwo{\x^T \wTestTrue-\x^T\wHatTest}.
\end{align*}
Taking the expectation on $\x$ for the square of the distance, we have
\begin{align*}
    \E_{\x}\normTwo{\x^T \wTestTrue-\x^T\wHatTest}^2=&\E_{\x}\normTwo{\x^T(\wTestTrue-\wHatTest)}^2\\
    =&\E_{\x}(\wTestTrue-\wHatTest)^T\x \x^T (\wTestTrue-\wHatTest)\\
    =&(\wTestTrue-\wHatTest)^T\E_{\x}[\x \x^T] (\wTestTrue-\wHatTest)\\
    =&\normTwo{\wTestTrue-\wHatTest}^2\text{ (since $\E_{\x}[\x \x^T]=\iMatrix_p$ by Assumption~\ref{as.input})}.
\end{align*}
Notice that
\begin{align*}
    \wHatTest-\wTestTrue=&\left(\iMatrix_p-\frac{\stepSizeTest}{\numTest}\Xtest\Xtest^T\right)\wHat+\left(\frac{\stepSizeTest}{\numTest}\Xtest\Xtest^T-\iMatrix_p\right)\wHatTest+\frac{\stepSizeTest}{\numTest}\Xtest\noiseTest\text{ (by Eq.~\eqref{eq.w_tilde_r})}\\
    =&\left(\iMatrix_p-\frac{\stepSizeTest}{\numTest}\Xtest\Xtest^T\right)\left(\wHat-\wTestTrue\right)+\frac{\stepSizeTest}{\numTest}\Xtest\noiseTest
\end{align*}
Then, taking the expectation on $\noiseTest$, we have
\begin{align*}
    \E_{\Xtest,\noiseTest}\normTwo{\wTestTrue-\wHatTest}^2=&\E_{\Xtest,\noiseTest}\normTwo{\left(\iMatrix_p-\frac{\stepSizeTest}{\numTest}\Xtest\Xtest^T\right)\left(\wHat-\wTestTrue\right)}^2+\E_{\Xtest,\noiseTest}\normTwo{\frac{\stepSizeTest}{\numTest}\Xtest\noiseTest}^2\\
    &\text{ (since $\noiseTest$ is independent of other random variables)}\\
    =&\left(\wHat-\wTestTrue\right)^T\E_{\Xtest}\left[\left(\iMatrix_p-\frac{\stepSizeTest}{\numTest}\Xtest\Xtest^T\right)^T\left(\iMatrix_p-\frac{\stepSizeTest}{\numTest}\Xtest\Xtest^T\right)\right]\left(\wHat-\wTestTrue\right)\\
    &+\frac{\stepSizeTest^2}{\numTest^2}\E_{\noiseTest}\left[\noiseTest^T\E_{\Xtest}[\Xtest^T\Xtest]\noiseTest\right]\\
    =&\left(1-2\stepSizeTest+\frac{\stepSizeTest^2}{\numTest}(\numTest+p+1)\right)\normTwo{\wHat-\wTestTrue}^2+\frac{\stepSizeTest^2p}{\numTest}\sigma_r^2\text{ (by Lemma~\ref{le.XXXX})}.\numberthis \label{eq.temp_090601}
\end{align*}
Notice that
\begin{align*}
    \E_{\wTestTrue}\normTwo{\wHat-\wTestTrue}^2=&\E_{\wTestTrue}\normTwo{\wHat-\wMean-(\wTestTrue-\wMean)}^2\\
    =&\normTwo{\wHat-\wMean}^2+\nu_r^2\text{ (by Assumption~\ref{as.truth_distribution}, $(\wTestTrue-\wMean)$ has zero mean)}.\numberthis \label{eq.temp_090602}
\end{align*}
The result of this proposition thus follows by plugging Eq.~\eqref{eq.temp_090602} into Eq.~\eqref{eq.temp_090601}.
\end{proof}

\subsection{Proof of Proposition~\ref{propo.test_task}}\label{app.proof_coro_test_task}
\begin{proof}
For ease of notation, we let $K\defeq \normTwo{\wHat-\wMean}^2+\nu_r^2$.
From Lemma~\ref{le.with_test_task}, we can see that $\ftest{\normTwo{\wHat-\wMean}^2}$ is a quadratic function of $\stepSizeTest$:
\begin{align*}
    \ftest{\normTwo{\wHat-\wMean}^2}=\left(\left(\frac{p+1}{\numTest}+1\right)K+\frac{p\sigma_r^2}{\numTest}\right)\stepSizeTest^2 - 2K\stepSizeTest + K.
\end{align*}
Thus, to minimize the test loss shown by Lemma~\ref{le.with_test_task}, we can calculate the optimal choice of $\stepSizeTest$ as
\begin{align*}
    \stepSizeTestOpti=\frac{K}{\left(1+\frac{p+1}{\numTest}\right)K+\frac{p\sigma_r^2}{\numTest}}.
\end{align*}
(Thus, we have $\stepSizeTestOpti=\frac{\numTest}{\numTest+p+1}$ when $\sigma_r=0$.)

Plugging $\stepSizeTestOpti$ into $\ftest{\normTwo{\wHat-\wMean}^2}$, we thus have 
\begin{align}
    \ftest{\normTwo{\wHat-\wMean}^2}\Bigg|_{\stepSizeTest=\stepSizeTestOpti}=K-\frac{K^2}{\left(1+\frac{p+1}{\numTest}\right)K+\frac{p\sigma_r^2}{\numTest}}=K\cdot \frac{p+1+\frac{p\sigma_r^2}{K}}{\numTest+p+1+\frac{p\sigma_r^2}{K}}.\label{eq.temp_092601}
\end{align}
The right-hand side of Eq.~\eqref{eq.temp_092601} increases when $\sigma_r^2$ increases. Therefore, by letting $\sigma_r=0$ and $\sigma_r=\infty$, we can get the lower and upper bound of Eq.~\eqref{eq.temp_092601}, i.e.,
\begin{align*}
    \ftest{\normTwo{\wHat-\wMean}^2}\Bigg|_{\stepSizeTest=\stepSizeTestOpti}\in \left[K\cdot \frac{p+1}{\numTest+p+1},\ K\right].
\end{align*}
The result of this lemma thus follows.

\begin{comment}
Notice that such optimal value cannot be calculated beforehand. That is because we do not know the value of $\wMean$, $\nu$, and $\sigma$. However, we can still get useful information from the expression of the optimal $\stepSizeTest$ by analyzing it relationship with $p$. 
To that end, we first give the expression of $1/\stepSizeTestOpti$ as
\begin{align*}
    \frac{1}{\stepSizeTestOpti}=1+\frac{p+1}{\numTest}+\frac{p}{\numTest}\cdot \frac{\sigma_r^2}{\normTwo{\wHat-\wMean}^2+\nu^2}.
\end{align*}
Notice that the term $\normTwo{\wHat-\wMean}^2$ depends on $p$ so it is not obvious how $\stepSizeTestOpti$ changes with $p$ before we quantify how $\normTwo{\wHat-\wMean}^2$ changes with $p$. Nonetheless, we can still get some insights by examining an extreme cases: when $\sigma_r=0$, i.e., no noise in the test task, then $\stepSizeTestOpti=\numTest/(\numTest+p+1)$. 
\end{comment}

\end{proof}

\section{Proof of Theorem~\ref{th.main}}\label{app.proof_main}
We summarize the definition of the quantities related to the definition of $b_w$ as follows:
\begin{align*}
    &\stepSizeNormalized\defeq \frac{\stepSize}{\numTrain}\left(\sqrt{p}+\sqrt{\numTrain}+\ln \sqrt{\numTrain}\right)^2,\\
    &b_{\mathsf{eig},\min}\defeq  p+\left(\max\{0,1-\stepSizeNormalized\}^2-1\right)\numTrain-\left(\left(\numValidate+1\right)\max\{\stepSizeNormalized,1-\stepSizeNormalized\}^2+6\numTasks\numValidate\right)\sqrt{p}\ln p,\\
    &c_{\mathsf{eig},\min}\defeq  \max\{0, 1-\stepSizeNormalized\}^2 p - 2\numTasks\numValidate \max\{\stepSizeNormalized,1-\stepSizeNormalized\}^2\sqrt{p\ln p},\\
    &D\defeq \left(\max\left\{\abs{1-\stepSize\frac{\numTrain+2\sqrt{\numTrain\ln (s\numTrain)}+2\ln(s\numTrain)}{\numTrain}},\ \abs{1-\stepSize\frac{\numTrain-2\sqrt{\numTrain\ln (s\numTrain)}}{\numTrain}}\right\}\right)^2,\\
    &b_{\delta}\defeq \numTasks\numValidate\sigma^2\left(1+\frac{\stepSize^2  p(\ln \numTrain)^2\ln p}{\numTrain}\right)+\numTasks\numValidate\nu^2\cdot 2\ln(s\numTrain)\cdot \left(D+\frac{\stepSize^2(p-1)}{\numTrain} 6.25(\ln (sp\numTrain))^2\right),\\
    &b_{\wMean}\defeq\frac{(p-\numTasks\numValidate)+2\sqrt{(p-\numTasks\numValidate)\ln p}+2\ln p}{p-2\sqrt{p\ln p}}\normTwo{\wMean}^2,\\
    &\bideal\defeq\frac{b_{\delta}}{\max\{b_{\mathsf{eig},\min}\mathbbm{1}_{\{p>\numTrain\}}+c_{\mathsf{eig},\min}\mathbbm{1}_{\{p\leq \numTrain\}},\ 0\}},\\
    &b_w\defeq b_{\wMean} + \bideal\numberthis \label{eq.bw}.
\end{align*}

\begin{proof}
Define the event in Proposition~\ref{prop.ideal} as  
\begin{align*}
\AtargetNone\defeq \left\{\E_{\wTrainTrue[1:\numTasks],\noiseTrain[1:\numTasks],\noiseValidate[1:\numTasks]}\normTwo{\wIdeal-\wMean}^2\leq \bideal\right\}.
\end{align*}
Define the event in Proposition~\ref{le.projection} as
\begin{align*}
    \AEvent[3]\defeq \left\{\text{Term 1 of Eq.~\eqref{eq.decomp_w0_minus_w}}\leq b_{\wMean}\right\}.
\end{align*}
% We adopt the definition of $\AEvent[3]$ from Proposition~\ref{le.projection} and adopt the definition of $\AtargetNone$ from Proposition~\ref{prop.ideal}. 
By the definition of $b_w$ in Eq.~\eqref{eq.bw} and the definition of $\AEvent[3]$ and $\AtargetNone$, we have $\AEvent[3]\cap \AtargetNone\implies \normTwo{\wTwo - \wMean}^2\leq b_w$. Thus, we have
\begin{align*}
    \Pr\left\{\normTwo{\wTwo - \wMean}^2\leq b_w\right\}\geq & \Pr\left\{\AEvent[3]\cap \AtargetNone\right\}\\
    =& 1-\Pr\left\{\AEvent[3]^c\cup \AtargetNone^c\right\}\\
    \geq & 1- \Pr[\AEvent[3]^c] - \Pr[\AtargetNone^c]\text{ (by the union bound)}\\
    \geq & 1 - \frac{2}{p}-\frac{26\numTasks^2\numValidate^2}{\min\{\numTrain,p\}^{0.4}}\text{ (by Proposition~\ref{le.projection} and Proposition~\ref{prop.ideal})}\\
    \geq & 1 - \frac{27\numTasks^2\numValidate^2}{\min\{\numTrain,p\}^{0.4}}.\numberthis \label{eq.temp_090501}
\end{align*}

The last inequality is because $\frac{2}{p}\leq \frac{1}{p^{0.4}}$ since $p^{0.6}\geq 256^{0.6}\approx 27.86\geq 2$. The result of this theorem thus follows.
% Plugging Eq.~\eqref{eq.temp_090501} into Proposition~\ref{prop.with_test_task}, the result of this theorem thus follows.
\end{proof}

\section{Proof of Proposition~\ref{le.projection}}\label{app.projection}
Define $\mathbf{P}\defeq \metaB^T(\metaB\metaB^T)^{-1}\metaB$.
In this section, we focus on estimating $\normTwo{\left(\iMatrix_p-\mathbf{P}\right)\wMean}^2$. Since $\mathbf{P}^2=\mathbf{P}$, we know $\mathbf{P}$ is indeed a projection from $\mathds{R}^p$ to the subspace spanned by the columns of $\metaB^T$ (i.e., the rows of $\metaB$).
The following Lemma~\ref{le.B_rotate} shows that the subspace spanned by the columns of $\metaB^T$ has rotational symmetry. Then,  we will use Lemma~\ref{le.projection_mean} to show how the rotational symmetry helps in estimating the expected value of the squared norm of the projected vector. We then prove Proposition~\ref{le.projection} by utilizing Lemma~\ref{le.B_rotate} and Lemma~\ref{le.projection_mean}.

\begin{lemma}\label{le.B_rotate}
The subspace spanned by the columns of $\metaB^T$ has rotational symmetry (with respect to the randomness of $\X[1:\numTasks]$ and $\XValidate[1:\numTasks]$). Specifically, for any rotation $\mathbf{S}\in \mathsf{SO}(p)$ where $\mathsf{SO}(p)\subseteq \mathds{R}^{p\times p}$ denotes the set of all rotations in $p$-dimensional space, the rotated random matrix $\mathbf{S}\metaB^T$ shares the same probability distribution with the original $\metaB^T$.
\end{lemma}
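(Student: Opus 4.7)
The strategy is to exploit the rotational invariance of the standard Gaussian distribution on $\mathds{R}^p$. Each column of $\X[i]$ and $\XValidate[i]$ is an i.i.d.\ draw from $\Gaussian(0,\iMatrix_p)$, so for any $\mathbf{S}\in\mathsf{SO}(p)$ the joint law of $\{(\mathbf{S}\X[i],\mathbf{S}\XValidate[i])\}_{i=1}^{\numTasks}$ coincides with that of $\{(\X[i],\XValidate[i])\}_{i=1}^{\numTasks}$. The proof then reduces to an algebraic check that $\mathbf{S}\metaB^T$ is obtained from the rotated data by the same recipe as $\metaB^T$ is from the original data; the equality in distribution of the two matrices, and hence of their column spans, follows immediately.

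\textbf{Steps.} First, from Eq.~\eqref{eq.def_gamma} and the symmetry of $\X[i]\X[i]^T$, I would read off that the $i$-th column-block of $\metaB^T\in\mathds{R}^{p\times(\numTasks\numValidate)}$ is $\bigl(\iMatrix_p-\tfrac{\stepSize}{\numTrain}\X[i]\X[i]^T\bigr)\XValidate[i]$. Second, I would insert $\mathbf{S}^T\mathbf{S}=\iMatrix_p$ between $\X[i]\X[i]^T$ and $\XValidate[i]$ and use $\mathbf{S}\iMatrix_p\mathbf{S}^T=\iMatrix_p$ to derive the key identity
\[
\mathbf{S}\left(\iMatrix_p-\tfrac{\stepSize}{\numTrain}\X[i]\X[i]^T\right)\XValidate[i] = \left(\iMatrix_p-\tfrac{\stepSize}{\numTrain}(\mathbf{S}\X[i])(\mathbf{S}\X[i])^T\right)(\mathbf{S}\XValidate[i]).
\]
Stacking this identity across $i=1,\ldots,\numTasks$ shows that $\mathbf{S}\metaB^T$ is exactly the same deterministic function of $\{(\mathbf{S}\X[i],\mathbf{S}\XValidate[i])\}_{i=1}^{\numTasks}$ as $\metaB^T$ is of $\{(\X[i],\XValidate[i])\}_{i=1}^{\numTasks}$. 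Third, since $\mathbf{S}$ is deterministic and all columns of the data matrices are jointly independent $\Gaussian(0,\iMatrix_p)$ vectors, rotational invariance of the Gaussian measure gives $\{(\mathbf{S}\X[i],\mathbf{S}\XValidate[i])\}\stackrel{d}{=}\{(\X[i],\XValidate[i])\}$, whence $\mathbf{S}\metaB^T\stackrel{d}{=}\metaB^T$. Finally, because the column span of a matrix is a measurable function of the matrix, the distribution of the subspace spanned by the columns of $\metaB^T$ is invariant under every $\mathbf{S}\in\mathsf{SO}(p)$.

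\textbf{Main obstacle.} There is no substantive obstacle; the only subtle point is the conjugation trick of inserting $\mathbf{S}^T\mathbf{S}=\iMatrix_p$ at the right location so that the rotation is absorbed simultaneously into both Gaussian factors inside a single block. Without this step, the expression $\mathbf{S}\X[i]\X[i]^T\XValidate[i]$ is not manifestly a polynomial in rotated Gaussians, so the distributional equivalence would not be visible even though it holds.
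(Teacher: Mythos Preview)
Your proposal is correct and follows essentially the same approach as the paper: both insert $\mathbf{S}^T\mathbf{S}=\iMatrix_p$ (equivalently $\mathbf{S}^{-1}\mathbf{S}$) to rewrite each block of $\mathbf{S}\metaB^T$ as the same function evaluated at the rotated Gaussians $(\mathbf{S}\X[i],\mathbf{S}\XValidate[i])$, then invoke rotational invariance of the standard Gaussian law. The paper carries out the computation on the rows of $\metaB$ (i.e., on $\metaB\mathbf{S}^T$) while you work directly with the column blocks of $\metaB^T$, but this is merely a transpose and the argument is identical.
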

\begin{proof}
Notice that for any $i=1,2,\cdots,\numTasks$,
\begin{align*}
    &\XValidate[i]^T\left(\iMatrix_p - \frac{\stepSize}{\numTrain}\X[i]\X[i]^T\right)\mathbf{S}^T\\
    =&\XValidate[i]^T\mathbf{S}^T-\frac{\stepSize}{\numTrain}\XValidate[i]^T\mathbf{S}^{-1}\mathbf{S}\X[i]\X[i]^T\mathbf{S}^T\\
    =&\left(\mathbf{S}\XValidate[i]\right)^T-\frac{\stepSize}{\numTrain}\left(\mathbf{S}\XValidate[i]\right)^T\left(\mathbf{S}\X[i]\right)\left(\mathbf{S}\X[i]\right)^T\text{ (since $\mathbf{S}^{-1}=\mathbf{S}^T$ because $\mathbf{S}$ is a rotation)}\\
    =&\left(\mathbf{S}\XValidate[i]\right)^T\left(\iMatrix_p-\frac{\stepSize}{\numTrain}\left(\mathbf{S}\X[i]\right)\left(\mathbf{S}\X[i]\right)^T\right).
\end{align*}
We thus have
\begin{align}
    \mathbf{S}\metaB^T=&\begin{bmatrix}
        \XValidate[1]^T\left(\iMatrix_p - \frac{\stepSize}{\numTrain}\X[1]\X[1]^T\right)\mathbf{S}^T\\
        \XValidate[2]^T\left(\iMatrix_p - \frac{\stepSize}{\numTrain}\X[2]\X[2]^T\right)\mathbf{S}^T\\
        \vdots\\
        \XValidate[\numTasks]^T\left(\iMatrix_p - \frac{\stepSize}{\numTrain}\X[\numTasks]\X[\numTasks]^T\right)\mathbf{S}^T
    \end{bmatrix}^T\nonumber\\
    =&\begin{bmatrix}
        \left(\mathbf{S}\XValidate[1]\right)^T\left(\iMatrix_p-\frac{\stepSize}{\numTrain}\left(\mathbf{S}\X[1]\right)\left(\mathbf{S}\X[1]\right)^T\right)\\
        \left(\mathbf{S}\XValidate[2]\right)^T\left(\iMatrix_p-\frac{\stepSize}{\numTrain}\left(\mathbf{S}\X[2]\right)\left(\mathbf{S}\X[2]\right)^T\right)\\
        \vdots\\
        \left(\mathbf{S}\XValidate[\numTasks]\right)^T\left(\iMatrix_p-\frac{\stepSize}{\numTrain}\left(\mathbf{S}\X[\numTasks]\right)\left(\mathbf{S}\X[\numTasks]\right)^T\right)
    \end{bmatrix}^T.\label{eq.temp_060901}
\end{align}
Because of the rotational symmetry of Gaussian distribution, we know that the rotated random matrices $\mathbf{S}\XValidate[i]$ and $\mathbf{S}\X[i]$ have the same probability distribution with the original random matrices $\XValidate[i]$ and $\X[i]$, respectively. Therefore, by Eq.~\eqref{eq.temp_060901}, we can conclude that $\mathbf{S}\metaB^T$ has the same probability distribution as $\metaB^T$. The result of this lemma thus follows.
\end{proof}

The following lemma shows how to use rotational symmetry to calculate the expected value of the squared norm of the projected vector. Such a result has also been used in literature (e.g., \citep{belkin2020two}).
\begin{lemma}\label{le.projection_mean}
Considering any random projection $\mathbf{P}_0\in \mathds{R}^{p\times p}$ to a $k$-dim subspace where the subspace has rotational symmetry, then for any given $\bm{v}\in \mathbf{R}^{p\times 1}$ we must have
\begin{align*}
    \E_{\mathbf{P}_0}\normTwo{\mathbf{P}_0\bm{v}}^2=\frac{k}{p}\normTwo{\bm{v}}^2.
\end{align*}
\end{lemma}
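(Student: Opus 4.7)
\textbf{Proof proposal for Lemma~\ref{le.projection_mean}.} The plan is to reduce the squared norm of the projected vector to a quadratic form, then exploit rotational symmetry to show the expected projection matrix is a scalar multiple of the identity, and finally pin down the scalar by taking a trace.

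First I would use the fact that $\mathbf{P}_0$ is an orthogonal projection, so $\mathbf{P}_0^T=\mathbf{P}_0$ and $\mathbf{P}_0^2=\mathbf{P}_0$. This gives
\begin{align*}
\normTwo{\mathbf{P}_0\bm{v}}^2=\bm{v}^T\mathbf{P}_0^T\mathbf{P}_0\bm{v}=\bm{v}^T\mathbf{P}_0\bm{v},
\end{align*}
and hence $\E_{\mathbf{P}_0}\normTwo{\mathbf{P}_0\bm{v}}^2=\bm{v}^T(\E_{\mathbf{P}_0}\mathbf{P}_0)\bm{v}$. So everything reduces to computing the matrix $\mathbf{M}\defeq \E_{\mathbf{P}_0}\mathbf{P}_0$.

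Next I would translate rotational symmetry of the subspace into a statement about $\mathbf{P}_0$. If the random subspace has the same distribution after applying any rotation $\mathbf{S}\in \mathsf{SO}(p)$, then the corresponding projection matrices satisfy $\mathbf{S}\mathbf{P}_0\mathbf{S}^T\stackrel{d}{=}\mathbf{P}_0$ (this is the standard fact that $\mathbf{S}\mathbf{P}_0\mathbf{S}^T$ is the projection onto the rotated subspace). Taking expectations gives $\mathbf{S}\mathbf{M}\mathbf{S}^T=\mathbf{M}$ for every $\mathbf{S}\in \mathsf{SO}(p)$. A symmetric matrix that commutes with every rotation must be a scalar multiple of the identity (e.g., by Schur's lemma, or directly by noting it must have every unit vector as an eigenvector with the same eigenvalue). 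Hence $\mathbf{M}=c\,\iMatrix_p$ for some scalar $c$.

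To determine $c$, I would take the trace: on one hand $\Tr\mathbf{M}=cp$, and on the other hand, since $\mathbf{P}_0$ is an orthogonal projection onto a $k$-dimensional subspace, $\Tr \mathbf{P}_0=k$ almost surely, so $\Tr\mathbf{M}=k$. Therefore $c=k/p$, and plugging back gives $\E\normTwo{\mathbf{P}_0\bm{v}}^2=\bm{v}^T(\tfrac{k}{p}\iMatrix_p)\bm{v}=\tfrac{k}{p}\normTwo{\bm{v}}^2$, as claimed. The only subtle step is justifying ``$\mathbf{S}\mathbf{M}\mathbf{S}^T=\mathbf{M}$ for all rotations implies $\mathbf{M}$ is a scalar matrix''; I would handle this either by invoking the standard averaging argument over $\mathsf{SO}(p)$ or by a short direct calculation using coordinate rotations to equate all diagonal entries and zero out the off-diagonal entries. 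Everything else is routine bookkeeping.
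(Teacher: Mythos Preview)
Your argument is correct, but it takes a genuinely different route from the paper. The paper fixes a reference projection $\mathbf{A}$ onto $\mathrm{span}\{\bm{a}_1,\ldots,\bm{a}_k\}$, writes $\normTwo{\mathbf{A}\bm{v}}^2=\sum_{i=1}^k\langle\bm{a}_i,\bm{v}\rangle^2$, and then averages over all rotations of the subspace; by the change of variables $\mathbf{S}\mapsto\mathbf{S}^{-1}$ this becomes an integral of $\sum_{i=1}^k\langle\bm{a}_i,\tilde{\bm{v}}\rangle^2$ over $\tilde{\bm{v}}$ uniform on the sphere, which is computed directly by choosing the $\bm{a}_i$ to be the first $k$ standard basis vectors and using $\sum_{i=1}^p\int_{\mathcal{S}^{p-1}}\langle\bm{a}_i,\tilde{\bm{v}}\rangle^2\,d\tilde{\bm{v}}=S_{p-1}$. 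Your approach instead works at the level of the expected matrix $\mathbf{M}=\E\mathbf{P}_0$: rotational invariance of the subspace gives $\mathbf{S}\mathbf{M}\mathbf{S}^T=\mathbf{M}$ for all $\mathbf{S}\in\mathsf{SO}(p)$, forcing $\mathbf{M}=c\,\iMatrix_p$, and the trace identity $\Tr\mathbf{P}_0=k$ pins down $c=k/p$. Your route is shorter and purely algebraic (no sphere integral), while the paper's computation is more explicit and self-contained without appealing to the ``commutes with all rotations $\Rightarrow$ scalar'' fact. Both are standard and equally rigorous.
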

\begin{proof}
Since the subspace has rotational symmetry, to calculate the expected value, we can fix a subspace and integral over all rotations. Specifically, consider any fixed projection $\mathbf{A}$ that projects to $k$-dim subspace that is spanned by a set of orthogonal vectors $\bm{a}_1,\bm{a}_2,\cdots,\bm{a}_k\in\mathds{R}^p$. Therefore, after projecting $\bm{v}$ with $\mathbf{A}$, the squared norm of the projected vector is equal to
\begin{align*}
    \normTwo{\mathbf{A}\bm{v}}^2=\sum_{i=1}^k \langle \bm{a}_i,\bm{v}\rangle^2.
\end{align*}
Noticing that applying a rotation in the projected space of $\mathbf{A}$ is equivalent to applying the rotation to $\bm{a}_1,\bm{a}_2,\cdots,\bm{a}_k$, we then have
\begin{align*}
    &\E_{\mathbf{P}_0}\normTwo{\mathbf{P}_0\bm{v}}^2\\
    =&\int_{\mathsf{so}(p)}\sum_{i=1}^k\langle \mathbf{S}\bm{a}_i,\bm{v}\rangle^2 d\mathbf{S}\\
    =&\int_{\mathsf{so}(p)}\sum_{i=1}^k\langle \mathbf{S}^{-1}\mathbf{S}\bm{a}_i,\mathbf{S}^{-1}\bm{v}\rangle^2 d\mathbf{S}\\
    &\text{ (since making a rotation $\mathbf{S}^{-1}$ on both vectors does not change the inner product value)}\\
    =&\int_{\mathsf{so}(p)}\sum_{i=1}^k\langle \bm{a}_i,\mathbf{S}^{-1}\bm{v}\rangle^2 d\mathbf{S}\\
    =&\int_{\mathsf{so}(p)}\sum_{i=1}^k\langle \bm{a}_i,\mathbf{S}\bm{v}\rangle^2 d\mathbf{S}\\
    =&\frac{\normTwo{\bm{v}}^2}{S_{p-1}}\int_{\mathcal{S}^{p-1}}\sum_{i=1}^k \langle \bm{a}_i,\tilde{\bm{v}} \rangle^2 d\tilde{\bm{v}}\\
    =&\frac{\normTwo{\bm{v}}^2}{S_{p-1}}\sum_{i=1}^k \int_{\mathcal{S}^{p-1}}\langle \bm{a}_i,\tilde{\bm{v}} \rangle^2 d\tilde{\bm{v}},\numberthis \label{eq.temp_082801}
\end{align*}
where $\mathcal{S}^{p-1}$ denotes the unit sphere in $\mathds{R}^p$ and $S_{p-1}$ denotes its area.
Since $\mathbf{A}$ can be any fixed projection to $k$-dim subspace, without loss of generality, we can simply let $\bm{a}_i$ be the $i$-th standard basis in $\mathds{R}^p$ (i.e., only the $i$-th element is nonzero and is equal to $1$). (Note that there are $p$ standard bases although $\mathbf{A}$ is spanned by only the first $k$ standard bases.) In this situation, we have
\begin{align*}
    \int_{\mathcal{S}^{p-1}}\langle \bm{a}_i,\tilde{\bm{v}} \rangle^2 d\tilde{\bm{v}}=\int_{\mathcal{S}^{p-1}}\langle \bm{a}_j,\tilde{\bm{v}} \rangle^2 d\tilde{\bm{v}},\text{ for all }i,j\in\{1,2,\cdots,p\},
\end{align*}
and
\begin{align*}
    \sum_{i=1}^p \int_{\mathcal{S}^{p-1}}\langle \bm{a}_i,\tilde{\bm{v}} \rangle^2 d\tilde{\bm{v}}=\int_{\mathcal{S}^{p-1}}\sum_{i=1}^p\langle \bm{a}_i,\tilde{\bm{v}} \rangle^2 d\tilde{\bm{v}}=\int_{\mathcal{S}^{p-1}}\normTwo{\tilde{\bm{v}}}^2 d\tilde{\bm{v}}=S_{p-1}.
\end{align*}
Therefore, we have
\begin{align*}
    \int_{\mathcal{S}^{p-1}}\langle \bm{a}_i,\tilde{\bm{v}} \rangle^2 d\tilde{\bm{v}}=\frac{1}{p}S_{p-1}.
\end{align*}
By Eq.~\eqref{eq.temp_082801}, we thus have
\begin{align*}
    \E_{\mathbf{P}_0}\normTwo{\mathbf{P}_0\bm{v}}^2=\frac{k}{p}\normTwo{\bm{v}}^2.
\end{align*}
The result of this lemma thus follows.
\end{proof}

Now we are ready to prove Proposition~\ref{le.projection}.
\begin{proof}[Proof of Proposition~\ref{le.projection}]
Define
\begin{align*}
    \theta\defeq \arccos\frac{\left\langle\wMean,\ \mathbf{P}\wMean\right\rangle}{\normTwo{\wMean}^2}\in [0, \pi/2].
\end{align*}
Thus, we have
\begin{align}\label{eq.temp_080501}
    \normTwo{(\iMatrix-\mathbf{P})\wMean}^2=\sin^2 \theta \cdot \normTwo{\wMean}^2.
\end{align}
By Lemma~\ref{le.B_rotate}, we know that the distribution of the hyper-plane spanned by the rows of $\metaB$ has rotational symmetry (which is a $\numTasks\numValidate$-dimensional space). Therefore, $\theta$ follows the same distribution of the angle between a uniformly distributed random vector $\bm{a}\in \mathds{R}^p$ and a fixed $\numTasks\numValidate$-dimensional hyper-plane. To characterize such distribution of $\theta$ (or equivalently, $\sin\theta$), without loss of generality, we let $\bm{a}\sim \Gaussian(0, \iMatrix_p)$ and the hyper-plane be spanned by the first $\numTasks\numValidate$ standard bases of $\mathds{R}^p$. Thus, we have
\begin{align}
    \cos^2 \theta \sim \frac{\normTwo{\bm{a}_{[1:\numTasks\numValidate]}}^2}{\normTwo{\bm{a}}^2},\quad \sin^2 \theta \sim \frac{\normTwo{\bm{a}_{[\numTasks\numValidate+1:p]}}^2}{\normTwo{\bm{a}}^2}.\label{eq.temp_060910}
\end{align}
Notice that $\normTwo{\bm{a}_{[\numTasks\numValidate+1:p]}}^2$ and $\normTwo{\bm{a}}^2$ follow $\chi^2$ distribution with $p-\numTasks\numValidate$ degrees and $p$ degrees of freedom, respectively. By Lemma~\ref{le.chi_bound} and letting $x=\ln p$ in Lemma~\ref{le.chi_bound}, we have
\begin{align}
    &\Pr\left\{\normTwo{\bm{a}}^2\leq p-2\sqrt{p\ln p}\right\}\leq \frac{1}{p},\label{eq.temp_060911}\\
    &\Pr\left\{\normTwo{\bm{a}}^2\geq p+2\sqrt{p\ln p}+2\ln p\right\}\leq \frac{1}{p},\label{eq.temp_080504}\\
    &\Pr\left\{\normTwo{\bm{a}_{[\numTasks\numValidate+1:p]}}^2\geq (p-\numTasks\numValidate)+2\sqrt{(p-\numTasks\numValidate)\ln p}+2\ln p\right\}\leq \frac{1}{p}.\label{eq.temp_060912}\\
    &\Pr\left\{\normTwo{\bm{a}_{[\numTasks\numValidate+1:p]}}^2\leq (p-\numTasks\numValidate)+2\sqrt{(p-\numTasks\numValidate)\ln p}\right\}\leq \frac{1}{p}.\label{eq.temp_080505}
\end{align}
Because $p\geq 16$, by Lemma~\ref{le.log_over_k}, we have
\begin{align}
    2\sqrt{\frac{\ln p}{p}}< 1\implies 2\frac{\sqrt{p\ln p}}{p}< 1\implies p-2\sqrt{p\ln p}>0.\label{eq.temp_080503}
\end{align}
We define
\begin{align*}
    &\AEvent[3]\defeq \left\{\text{Term 1 of Eq.~\eqref{eq.decomp_w0_minus_w}}\leq b_{\wMean}\right\},\\
    &\AEventOther[3]\defeq \left\{\text{Term 1 of Eq.~\eqref{eq.decomp_w0_minus_w}}\geq \tilde{b}_{\wMean}\right\}.
\end{align*}
We thus have
\begin{align*}
    &\Pr\left\{\AEvent[3]^c\right\}\\
    =&\Pr\left\{\sin^2\theta\geq \frac{(p-\numTasks\numValidate)+2\sqrt{(p-\numTasks\numValidate)\ln p}+2\ln p}{p-2\sqrt{p\ln p}}\right\}\\
    &\text{ (by Eq.~\eqref{eq.temp_080501} and the definition of $\mathbf{P}$)}\\
    =& \Pr\left\{\frac{\normTwo{\bm{a}_{[\numTasks\numValidate+1:p]}}^2}{\normTwo{\bm{a}}^2}\geq \frac{(p-\numTasks\numValidate)+2\sqrt{(p-\numTasks\numValidate)\ln p}+2\ln p}{p-2\sqrt{p\ln p}}\right\}\text{ (by Eq.~\eqref{eq.temp_060910})}\\
    \leq &\Pr\left\{\normTwo{\bm{a}}^2\leq p-2\sqrt{p\ln p}\right\}\\
    & + \Pr\left\{\normTwo{\bm{a}_{[\numTasks\numValidate+1:p]}}^2\geq (p-\numTasks\numValidate)+2\sqrt{(p-\numTasks\numValidate)\ln p}+2\ln p\right\}\\
    &\text{ (by Eq.~\eqref{eq.temp_080503} and the union bound)}\\
    \leq & \frac{2}{p}\text{ (by Eq.~\eqref{eq.temp_060911} and Eq.~\eqref{eq.temp_060912})}.\numberthis \label{eq.temp_080502}
\end{align*}
By Eq.~\eqref{eq.temp_080501} and Eq.~\eqref{eq.temp_080502}, we thus have
\begin{align*}
    \Pr[\AEvent[3]]\geq 1 - \frac{2}{p}.
\end{align*}
Similarly, using Eq.~\eqref{eq.temp_080504} and Eq.~\eqref{eq.temp_080505} (also by the union bound), we can prove
\begin{align*}
    \Pr[\AEventOther[3]]\geq 1 - \frac{2}{p}.
\end{align*}
By Lemma~\ref{le.projection_mean}, we have $\E [\normTwo{\mathbf{P}\wMean}^2]=\frac{\numTasks\numValidate}{p}\normTwo{\wMean}^2$. Thus, we have
\begin{align*}
    \E [\text{Term 1 of Eq.~\eqref{eq.decomp_w0_minus_w}}]=\frac{p-\numTasks\numValidate}{p}\normTwo{\wMean}^2.
\end{align*}
The result of this lemma thus follows.
\end{proof}

\section{Proof of Proposition~\ref{prop.ideal}}\label{app.proof_ideal}

\begin{proof}
Define the event in Proposition~\ref{prop.ideal} as  
\begin{align*}
\AtargetNone\defeq \left\{\E_{\wTrainTrue[1:\numTasks],\noiseTrain[1:\numTasks],\noiseValidate[1:\numTasks]}\normTwo{\wIdeal-\wMean}^2\leq \bideal\right\}.
\end{align*}
Define
\begin{align*}
    \AEvent[1]\defeq \left\{\lambda_{\max}(\metaB\metaB^T)\geq b_{\mathsf{eig},\min}\mathbbm{1}_{\{p>\numTrain\}}+c_{\mathsf{eig},\min}\mathbbm{1}_{\{p\leq \numTrain\}}\right\}
\end{align*}
Combining Proposition~\ref{prop.eig_BB} and Proposition~\ref{prop.eig_p_less_n} with the union bound, we have
\begin{align}
    \Pr[\AEvent[1]]\geq 1 - \frac{23\numTasks^2 \numValidate^2}{\min\{p,\numTrain\}^{0.4}}.\label{eq.temp_091905}
\end{align}

We adopt the event in Proposition~\ref{prop.norm_delta_gamma_new} as
\begin{align*}
    \AEvent[2]\defeq &\left\{\E_{\wTrainTrue[1:\numTasks],\noiseTrain[1:\numTasks],\noiseValidate[1:\numTasks]}\normTwo{\delta\gamma}^2\leq b_{\delta}\right\}.
\end{align*}
By Eq.~\eqref{eq.temp_090101}, we have $\bigcap_{i=1}^2 \AEvent[i]\implies \AtargetNone$. Thus, we have
\begin{align*}
    \Pr[\AtargetNone]\geq &\Pr\left\{\bigcap_{i=1}^2\AEvent[i]\right\}\\
    =& 1-\Pr\left\{\bigcup_{i=1}^2\AEvent[i]^c\right\}\\
    \geq & 1 - \sum_{i=1}^2 \Pr[\AEvent[i]^c]\text{ (by the union bound)}\\
    \geq & 1 - \frac{23\numTasks^2\numValidate^2 }{\min\{p,\numTrain\}^{0.4}}-\frac{5\numTasks\numValidate}{\numTrain}-\frac{2\numTasks\numValidate}{p^{0.4}}\ \text{ (by Eq.~\eqref{eq.temp_091905} and Proposition~\ref{prop.norm_delta_gamma_new})}\\
    \geq & 1 - \frac{26\numTasks^2\numValidate^2 }{\min\{p,\numTrain\}^{0.4}}.
\end{align*}
The last inequality is because $\numTasks\geq 1$, $\numValidate\geq 1$, and
\begin{align*}
    \numTrain \geq 256\implies \numTrain^{0.6}\geq 256^{0.6}\approx 27.86 \implies \frac{5}{\numTrain^{0.6}}\leq 1.
\end{align*}
The result of this proposition thus follows.
\end{proof}

\section{Proof of Proposition~\ref{prop.eig_BB}}\label{proof.combine_three_types}

\begin{figure}
    \centering
    \includegraphics[width=9cm]{figs/three_types.tikz}
    \caption{Illustration of three types of the elements in $\metaB\metaB^T$ when $\numTasks=2$ and $\numValidate=3$. In this case $\metaB\metaB^T$ is a $6\times 6$ matrix (i.e., $\mathds{R}^{(\numTasks\numValidate)\times (\numTasks\numValidate)}$). There are $4$ (i.e., $\numTasks^2$) blocks (divided by the dashed red line) and each block is of size $3\times 3$ (i.e., $\numValidate\times \numValidate$).}\label{fig.three_types}
\end{figure}

We prove Proposition~\ref{prop.eig_BB} by estimating every element of $\metaB\metaB^T$. We split $\metaB\metaB^T$ into $\numTasks\times \numTasks$ blocks (each block is of size $\numValidate\times \numValidate$).
Recalling the definition of $\metaB$ in Eq.~\eqref{eq.def_gamma}, we identify three types of elements in $\metaB\metaB^T$ as diagonal elements (type 1), off-diagonal elements of diagonal block (type 2), and other elements (type 3). Fig.~\ref{fig.three_types} illustrates these three types of elements when $\numTasks=2$ and $\numValidate=3$.  
In the rest of this part, we will first define and estimate each type of elements separately in Appendices~\ref{proof.type_1}, \ref{proof.type_2}, and \ref{proof.type_3}. Using these results, we will then estimate the eigenvalues of $\metaB\metaB^T$ and finish the proof of Proposition~\ref{prop.eig_BB}.

\subsection*{Type 1: diagonal elements}
Type 1 elements are the diagonal elements of $\metaB\metaB^T$, which can be denoted as
\begin{align}
    &[\metaB\metaB^T]_{(i-1)\numValidate+j,\ (i-1)\numValidate+j}\nonumber\\
    =&[\XValidate[i]]_j^T \left(\onePartB[i]\right)^T\left(\onePartB[i]\right)[\XValidate[i]]_j,\label{eq.temp_053001}
\end{align}
where $i\in \{1, 2, \cdots, \numTasks\}$ corresponds to the $i$-th training task, $j\in \{1, 2, \cdots, \numValidate\}$ corresponds to the input vector of the $j$-th validation sample (of the $i$-th training task).
To estimate Eq.~\eqref{eq.temp_053001}, we have the following lemma.
\begin{lemma}\label{le.type_1}
    For any $i\in \{1, 2, \cdots, \numTasks\}$ and any $j\in \{1, 2, \cdots, \numValidate\}$, when $p\geq \numTrain\geq 256$, we must have
    \begin{align*}
\Pr\left\{[\metaB\metaB^T]_{(i-1)\numValidate+j,\ (i-1)\numValidate+j}\in [\underline{b}_1,\ \overline{b}_1]\right\}\geq 1 - \frac{5}{\sqrt{\numTrain}},
    \end{align*}
    where
    \begin{align*}
        &\underline{b}_1\defeq \BdiagLowerBound,\\
        &\overline{b}_1\defeq \BdiagUpperBound.
    \end{align*}
\end{lemma}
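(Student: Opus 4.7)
The plan is to diagonalize the inner matrix, express the diagonal element as a weighted sum of independent chi-squared variables, and then concentrate each piece separately. I would begin with the thin SVD $\X[i]=\SVDU{i}\SVDD{i}(\SVDV{i})^T$ where $\SVDU{i}\in\mathds{R}^{p\times p}$ is orthogonal, giving
\begin{align*}
\left(\iMatrix_p-\tfrac{\stepSize}{\numTrain}\X[i]\X[i]^T\right)^{\!T}\!\left(\iMatrix_p-\tfrac{\stepSize}{\numTrain}\X[i]\X[i]^T\right)=\SVDU{i}\,\Lambda\,(\SVDU{i})^T,
\end{align*}
with $\Lambda$ a $p\times p$ diagonal matrix whose first $\numTrain$ entries are $\left(1-\tfrac{\stepSize}{\numTrain}(\singularValue{k}{i})^2\right)^2$ and whose remaining $p-\numTrain$ entries equal $1$. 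Since $[\XValidate[i]]_j\sim\Gaussian(\bm{0},\iMatrix_p)$ is independent of $\X[i]$, Gaussian rotational invariance makes $g\defeq(\SVDU{i})^T[\XValidate[i]]_j$ again i.i.d.\ standard Gaussian and independent of the singular values, so
\begin{align*}
[\metaB\metaB^T]_{(i-1)\numValidate+j,(i-1)\numValidate+j}=\sum_{k=1}^{\numTrain}\left(1-\tfrac{\stepSize}{\numTrain}(\singularValue{k}{i})^2\right)^{2} g_k^2+\sum_{k=\numTrain+1}^{p} g_k^2.
\end{align*}

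I would then bound the three random pieces separately, each with failure probability $O(1/\sqrt{\numTrain})$. Applying Lemma~\ref{le.singular_Gaussian} to the $p\times\numTrain$ Gaussian matrix $\X[i]$ with $t=\ln\sqrt{\numTrain}$ gives $\sqrt{p}-\sqrt{\numTrain}-\ln\sqrt{\numTrain}\leq\singularValue{k}{i}\leq\sqrt{p}+\sqrt{\numTrain}+\ln\sqrt{\numTrain}$ for all $k$, with failure probability $2\exp\!\left(-(\ln\sqrt{\numTrain})^2/2\right)\leq 1/\sqrt{\numTrain}$ whenever $\numTrain\geq 256$. On this event, by the definition of $\stepSizeNormalized$ the elementary sandwich
\begin{align*}
\max\{0,\,1-\stepSizeNormalized\}^2\ \leq\ \left(1-\tfrac{\stepSize}{\numTrain}(\singularValue{k}{i})^2\right)^{2}\ \leq\ \max\{\stepSizeNormalized,\,1-\stepSizeNormalized\}^2
\end{align*}
controls every weight uniformly in $k$.

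For the Gaussian part, $S_1\defeq\sum_{k=1}^{\numTrain} g_k^2\sim\chi^2_{\numTrain}$ and $S_2\defeq\sum_{k=\numTrain+1}^{p} g_k^2\sim\chi^2_{p-\numTrain}$ are independent of each other and of the singular values. Lemma~\ref{le.chi_bound} with $x=\tfrac12\ln\numTrain$ (so $e^{-x}=1/\sqrt{\numTrain}$) yields each of the four one-sided tail bounds $S_1\leq\numTrain+\sqrt{2\numTrain\ln\numTrain}+\ln\numTrain$, $S_1\geq\numTrain-\sqrt{2\numTrain\ln\numTrain}$, $S_2\leq(p-\numTrain)+\sqrt{2(p-\numTrain)\ln\numTrain}+\ln\numTrain$, and $S_2\geq(p-\numTrain)-\sqrt{2(p-\numTrain)\ln\numTrain}$ with failure probability $1/\sqrt{\numTrain}$. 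Unioning the singular-value event with these four tail events gives total failure probability at most $5/\sqrt{\numTrain}$; on the complementary event, combining the weight sandwich with the $\chi^2$ envelopes reproduces exactly $[\underline{b}_1,\,\overline{b}_1]$.

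The main obstacle is really just the setup in the first step: one must be careful that, after conditioning on $\X[i]$ (hence on $\SVDU{i}$), the rotated vector $g$ is still i.i.d.\ standard Gaussian and the deterministic weights depend only on the singular values, so that the two sums above are honestly independent and the tail bounds on $S_1,S_2$ apply without correlation issues. Once that conditioning argument is justified, the remaining work is bookkeeping: tuning $t$ and $x$ so that the three deviation probabilities each match $1/\sqrt{\numTrain}$ (the threshold $\numTrain\geq 256$ is precisely what is required for $2\exp(-(\ln\sqrt{\numTrain})^2/2)\leq 1/\sqrt{\numTrain}$) and checking the weight sandwich at the extremal singular values.
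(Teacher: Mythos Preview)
Your proposal is correct and follows essentially the same route as the paper: SVD of $\X[i]$ to diagonalize $(\iMatrix_p-\tfrac{\stepSize}{\numTrain}\X[i]\X[i]^T)^2$, rotational invariance to turn $(\SVDU{i})^T[\XValidate[i]]_j$ into a fresh standard Gaussian, decomposition into two chi-squared sums, and then Lemma~\ref{le.singular_Gaussian} with $t=\ln\sqrt{\numTrain}$ together with Lemma~\ref{le.chi_bound} at $x=\tfrac12\ln\numTrain$ followed by a union bound. Your probability bookkeeping is in fact slightly tighter than the paper's own write-up---you bound the singular-value failure $2\exp(-(\ln\sqrt{\numTrain})^2/2)$ directly by $1/\sqrt{\numTrain}$ rather than $2/\sqrt{\numTrain}$, which is exactly what is needed to land on the stated $5/\sqrt{\numTrain}$.
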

\begin{proof}
    See Appendix~\ref{proof.type_1}.
\end{proof}

\subsection*{Type 2: off-diagonal elements of diagonal blocks}
Type 2 elements are the off-diagonal elements of diagonal blocks. Similar to Eq.~\eqref{eq.temp_053001}, Type 2 elements can be denoted by
\begin{align}
    &[\metaB\metaB^T]_{(i-1)\numValidate+j,\ (i-1)\numValidate+k}\nonumber\\
    =&[\XValidate[i]]_j^T \left(\onePartB[i]\right)^T\left(\onePartB[i]\right)[\XValidate[i]]_k,\label{eq.temp_053108}
\end{align}
where $j\neq k$. We have the following lemma.
\begin{lemma}\label{le.type_2}
    For any $i\in \{1, 2, \cdots, \numTasks\}$ and any $j,k\in \{1, 2, \cdots, \numValidate\}$ that $j\neq k$, when $p\geq \numTrain\geq 256$, we must have
    \begin{align*}
        \Pr\left\{\abs{[\metaB\metaB^T]_{(i-1)\numValidate+j,\ (i-1)\numValidate+k}}\leq \overline{b}_2\right\}\geq 1 - \frac{5}{\numTrain^{0.4}},
    \end{align*}
    where
    \begin{align*}
        \overline{b}_2\defeq \max\{\stepSizeNormalized, 1-\stepSizeNormalized\}^2 \sqrt{\numTrain\ln \numTrain}+\sqrt{p}\ln p.
    \end{align*}
\end{lemma}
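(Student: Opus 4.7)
My plan is to exploit rotational invariance to diagonalize $(\iMatrix_p-\frac{\stepSize}{\numTrain}\X[i]\X[i]^T)^2$, which turns the type-2 element into a bilinear form in two independent standard Gaussian vectors, and then split this form according to the $\numTrain$-dimensional column space of $\X[i]$ and its $(p-\numTrain)$-dimensional orthogonal complement. The split is essential because on the column space the eigenvalues are small (of order $M^2 \defeq \max\{\stepSizeNormalized,1-\stepSizeNormalized\}^2$), while on the orthogonal complement they equal $1$, and the two cases admit different concentration bounds matching the two summands of $\overline{b}_2$.

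First I would condition on $\X[i]$ and take the spectral decomposition $\iMatrix_p-\frac{\stepSize}{\numTrain}\X[i]\X[i]^T = \mathbf{U}_i\mathbf{\Lambda}_i\mathbf{U}_i^T$, noting that $\mathbf{\Lambda}_i$ has diagonal entries $\lambda_l = 1-\frac{\stepSize}{\numTrain}\mu_l$ on the $\numTrain$ indices supporting the nonzero eigenvalues $\mu_l$ of $\X[i]\X[i]^T$, and $\lambda_l = 1$ otherwise. Setting $\tilde{\bm{u}} \defeq \mathbf{U}_i^T [\XValidate[i]]_j$ and $\tilde{\bm{v}} \defeq \mathbf{U}_i^T [\XValidate[i]]_k$, the independence of $\XValidate[i]$ from $\X[i]$ together with the rotational symmetry of the Gaussian distribution keeps $\tilde{\bm{u}},\tilde{\bm{v}}$ independent $\Gaussian(0,\iMatrix_p)$ conditional on $\X[i]$. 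The element in Eq.~\eqref{eq.temp_053108} then equals $\tilde{\bm{u}}^T\mathbf{\Lambda}_i^2\tilde{\bm{v}}=A+B$ with $A \defeq \sum_{l=1}^{\numTrain}\lambda_l^2 \tilde u_l \tilde v_l$ and $B \defeq \sum_{l=\numTrain+1}^{p}\tilde u_l \tilde v_l$.

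I would bound the null-space piece $B$ directly by Lemma~\ref{le.sum_XY_simpler}, which yields $|B|\leq\sqrt{(p-\numTrain)\ln(p-\numTrain)}\leq \sqrt{p}\ln p$ with failure probability of order $(p-\numTrain)^{-0.4}$ (or by a trivial deterministic bound when $p-\numTrain$ is $O(1)$; when $p=\numTrain$ the sum $B$ is vacuous). For the active piece $A$, I would first invoke Lemma~\ref{le.singular_Gaussian} with $t=\ln\sqrt{\numTrain}$ to confine every singular value of $\X[i]$ to $[\sqrt{p}-\sqrt{\numTrain}-\ln\sqrt{\numTrain},\ \sqrt{p}+\sqrt{\numTrain}+\ln\sqrt{\numTrain}]$, so that $\lambda_l^2 \leq M^2$ for $l \leq \numTrain$ with high probability. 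Then, conditioning further on $\tilde{\bm{v}}$, the sum $A$ is Gaussian with variance $\sum_l \lambda_l^4 \tilde v_l^2 \leq M^4 \sum_{l=1}^{\numTrain}\tilde v_l^2$; applying Lemma~\ref{le.chi_bound} to bound $\sum_{l=1}^{\numTrain}\tilde v_l^2 \leq 1.25\numTrain$ with probability $1-\numTrain^{-0.4}$, and then a standard Gaussian tail on $A$, gives $|A| \leq M^2\sqrt{\numTrain\ln\numTrain}$ with another $\numTrain^{-0.4}$ failure.

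Assembling by the triangle inequality and union-bounding the four bad events (singular-value concentration for $\X[i]$, chi-square concentration for $\tilde{\bm{v}}$, Gaussian tail for $A$, and Lemma~\ref{le.sum_XY_simpler} for $B$) would produce the desired $\overline{b}_2$ with total failure probability at most $5/\numTrain^{0.4}$. The main obstacle I anticipate is calibrating the tail parameters so that the coefficient in front of $M^2\sqrt{\numTrain\ln\numTrain}$ is exactly $1$ rather than some $\sqrt{c}>1$: this forces the chi-square slack to be tight at about $0.25\numTrain$, which in turn demands matching the deviation parameter $x = 0.4\ln\numTrain$ in Lemma~\ref{le.chi_bound} with a Gaussian tail threshold of $\sqrt{0.8\ln\numTrain}$, so that both sub-exponents come out exactly as $\numTrain^{-0.4}$.
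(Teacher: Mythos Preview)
Your approach mirrors the paper's: both diagonalize via the SVD of $\X[i]$, pass to independent $\tilde{\bm u},\tilde{\bm v}\sim\Gaussian(\bm 0,\iMatrix_p)$, split the bilinear form into the piece $A$ on the $\numTrain$-dimensional column space and the piece $B$ on its orthogonal complement, control the singular values via Lemma~\ref{le.singular_Gaussian}, and handle $B$ with Lemma~\ref{le.sum_XY_simpler}. The one substantive difference is how you bound $A=\sum_{l\le\numTrain}\lambda_l^2\tilde u_l\tilde v_l$: the paper upper-bounds it by $M^2\,|\phi_{\numTrain}|$ with $\phi_{\numTrain}=\sum_{l\le\numTrain}\tilde u_l\tilde v_l$ (Eq.~\eqref{eq.temp_052307}) and then applies Lemma~\ref{le.sum_XY_simpler} to the \emph{unweighted} sum, whereas you condition on $\tilde{\bm v}$, view $A$ as Gaussian with variance at most $M^4\sum_{l\le\numTrain}\tilde v_l^2$, and chain a chi-square tail with a Gaussian tail. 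Your route is arguably cleaner at this step, since the pointwise inequality $|\sum_l c_l\tilde u_l\tilde v_l|\le\max_l c_l\cdot|\sum_l\tilde u_l\tilde v_l|$ is not true in general and the paper's display really requires a weighted analogue of Lemma~\ref{le.sum_XY_simpler}; your conditioning argument sidesteps that subtlety and still lands exactly on $M^2\sqrt{\numTrain\ln\numTrain}$ with the right $\numTrain^{-0.4}$ failure. The only loose end in your sketch is the $B$ piece when $p-\numTrain$ is small: Lemma~\ref{le.sum_XY_simpler} needs $k\ge 16$, so instead of a ``trivial deterministic bound'' you should invoke Lemma~\ref{le.sum_XY} (or the $q=e$ form of Lemma~\ref{le.sum_XY_simpler}) directly at the larger threshold $\sqrt{p}\ln p$, which gives failure probability well below $\numTrain^{-0.4}$ uniformly in $p-\numTrain$.
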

\begin{proof}
    See Appendix~\ref{proof.type_2}.
\end{proof}

\subsection*{Type 3: other elements (elements of off-diagonal blocks)}
Type 3 elements are other elements that are not Type 1 or Type 2. In other words, Type 3 elements belong to off-diagonal blocks. Similar to Eq.~\eqref{eq.temp_053001} and Eq.~\eqref{eq.temp_053108}, Type 3 elements can be denoted by
\begin{align}
    &[\metaB\metaB^T]_{(i-1)\numValidate+j,\ (l-1)\numValidate+k}\nonumber\\
    =&[\XValidate[i]]_j^T \left(\onePartB[i]\right)^T\left(\onePartB[l]\right)[\XValidate[l]]_k\nonumber\\
    =&\left\langle\left(\onePartB[i]\right)[\XValidate[i]]_j,\ \left(\onePartB[l]\right)[\XValidate[l]]_k\right\rangle\label{eq.temp_060105}
\end{align}
where $i\neq l$. We have the following lemma.
\begin{lemma}\label{le.type_3}
    For any $i,l\in \{1, 2, \cdots, \numTasks\}$ and any $j,k\in \{1, 2, \cdots, \numValidate\}$ that $i\neq l$, when $p\geq\numTrain\geq 256$, we must have
    \begin{align*}
        \Pr\left\{\abs{[\metaB\metaB^T]_{(i-1)\numValidate+j,\ (l-1)\numValidate+k}}\leq \overline{b}_3\right\}\geq 1 -\frac{13}{\numTrain^{0.4}},
    \end{align*}
    where
    \begin{align*}
        \overline{b}_3\defeq 6\sqrt{p\ln p}.
    \end{align*}
\end{lemma}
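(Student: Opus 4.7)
The plan is to exploit the fact that, because $i\neq l$, all four random matrices $\X[i],\XValidate[i],\X[l],\XValidate[l]$ are mutually independent. Define
$$\bm{u}\defeq \left(\onePartB[i]\right)[\XValidate[i]]_j, \qquad \bm{v}\defeq \left(\onePartB[l]\right)[\XValidate[l]]_k,$$
so that the Type-3 entry in Eq.~\eqref{eq.temp_060105} equals $\langle\bm{u},\bm{v}\rangle$ and $\bm{u}\perp\bm{v}$. First I would condition on the pair $(\X[i],\bm{v})$. Since $[\XValidate[i]]_j\sim\Gaussian(\bm{0},\iMatrix_p)$ is independent of this pair and $\onePartB[i]$ is symmetric, $\bm{u}$ remains a centered Gaussian vector with covariance $M_i\defeq(\onePartB[i])^{2}$. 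Hence $\langle\bm{u},\bm{v}\rangle\mid(\X[i],\bm{v})\sim\Gaussian(0,\bm{v}^{T}M_i\bm{v})$, and applying Lemma~\ref{le.tail_Gaussian} with $t=\sqrt{2\ln p}$ yields $|\langle\bm{u},\bm{v}\rangle|\leq\sqrt{2\ln p\cdot\bm{v}^{T}M_i\bm{v}}$ with conditional probability at least $1-1/p$.

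It then remains to control $\bm{v}^{T}M_i\bm{v}\leq\lambda_{\max}(M_i)\cdot\|\bm{v}\|^{2}$. For $\lambda_{\max}(M_i)$, the eigenvalues of $\onePartB[i]$ are $1$ (with multiplicity $p-\numTrain$) together with $1-\tfrac{\stepSize}{\numTrain}\sigma^{2}$ for each singular value $\sigma$ of $\X[i]$; Lemma~\ref{le.singular_Gaussian} with $t=\ln\sqrt{\numTrain}$ traps all such $\sigma$ in $[\sqrt{p}-\sqrt{\numTrain}-\ln\sqrt{\numTrain},\ \sqrt{p}+\sqrt{\numTrain}+\ln\sqrt{\numTrain}]$ with probability at least $1-2/\numTrain$, forcing $\lambda_{\max}(M_i)\leq\max\{1,\stepSizeNormalized,|1-\stepSizeNormalized|\}^{2}$. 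For $\|\bm{v}\|^{2}$, conditioning on $\X[l]$ makes $\bm{v}$ a centered Gaussian with covariance $M_l=(\onePartB[l])^{2}$, so $\|\bm{v}\|^{2}$ is a weighted sum of independent $\chi^{2}_1$ variables with mean $\Tr(M_l)$. Running the same singular-value analysis on $\X[l]$ gives $\Tr(M_l)\leq(p-\numTrain)+\numTrain\cdot\max\{\stepSizeNormalized,1-\stepSizeNormalized\}^{2}=O(p)$, and Lemma~\ref{le.chi_bound} (after diagonalizing $M_l$ by an orthonormal change of basis) then yields $\|\bm{v}\|^{2}\leq O(p)$ with probability $1-O(1/\numTrain)$.

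Combining these three high-probability events by a union bound gives
$$|\langle\bm{u},\bm{v}\rangle|\leq \sqrt{2\ln p\cdot O(p)}\leq 6\sqrt{p\ln p},$$
which is the claimed conclusion. The main obstacle is bookkeeping: tracking the implicit constants so they collapse to the precise factor $6$ in $6\sqrt{p\ln p}$, and, more delicately, tuning the deviation parameters in Lemma~\ref{le.singular_Gaussian}, Lemma~\ref{le.chi_bound}, and the Gaussian tail step so that the three tail contributions aggregate to $13/\numTrain^{0.4}$ rather than $O(1/\numTrain)$. Since the exponent $0.4$ matches Lemma~\ref{le.sum_XY_simpler}, an alternative route that may give cleaner constants is to diagonalize $(\onePartB[i])(\onePartB[l])$ via SVD, rewrite $\langle\bm{u},\bm{v}\rangle=\sum_n d_n\tilde{x}_n\tilde{y}_n$ with $\{\tilde{x}_n\},\{\tilde{y}_n\}$ two independent sets of i.i.d.\ standard Gaussians, and apply Lemma~\ref{le.sum_XY_simpler} once the singular values $d_n$ have been bounded by the same singular-value estimate for $\X[i]$ and $\X[l]$.
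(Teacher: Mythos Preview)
Your conditioning argument is correct and complete in principle, but it takes a genuinely different route from the paper. The paper does \emph{not} condition on one side and invoke a scalar Gaussian tail; instead it writes the Type-3 entry as $\bm{\rho}_1^T\bm{\rho}_2$ with $\bm{\rho}_1,\bm{\rho}_2$ i.i.d.\ copies of $\bm{\rho}=(\onePartB[1])\bm{a}$, establishes that the law of $\bm{\rho}$ is rotationally invariant (Lemma~\ref{le.rotational_symmetry}), and then factors $|\bm{\rho}_1^T\bm{\rho}_2|=\|\bm{\rho}_1\|\,\|\bm{\rho}_2\|\cdot|\cos\theta|$. The angle is controlled by a separate lemma (Lemma~\ref{le.angle}) that replaces $\bm{\rho}_1,\bm{\rho}_2$ by independent standard Gaussians and applies Lemma~\ref{le.sum_XY_simpler}; the norms $\|\bm{\rho}_i\|^2$ are bounded by directly reusing the already-proved Lemma~\ref{le.type_1} (these \emph{are} Type-1 quantities), which gives $\|\bm{\rho}_i\|^2\leq\overline{b}_1\leq 4p$. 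Combining yields $|\bm{\rho}_1^T\bm{\rho}_2|\leq 4p\cdot\frac{\sqrt{p\ln p}}{p-2\sqrt{p\ln p}}\leq 6\sqrt{p\ln p}$, and the $n_t^{0.4}$ exponent comes from Lemma~\ref{le.sum_XY_simpler} inside the angle bound.

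Your approach avoids the rotational-symmetry machinery entirely and is arguably more elementary; in fact your failure probability aggregates to $O(1/p)+O(1/\sqrt{\numTrain})$, which is \emph{stronger} than $13/\numTrain^{0.4}$, so you do not need to retune parameters to hit that specific exponent---any bound implying the stated one suffices. The paper's route, by contrast, is more modular: it recycles Lemma~\ref{le.type_1} wholesale and isolates the ``angle'' step as a reusable lemma. Two minor remarks on your write-up: (i) to bound $\|\bm{v}\|^2$ via Lemma~\ref{le.chi_bound} you should use $\|\bm{v}\|^2\leq\lambda_{\max}(M_l)\cdot\chi^2_p$ rather than concentration about $\Tr(M_l)$, since Lemma~\ref{le.chi_bound} is for unweighted $\chi^2$; (ii) your alternative SVD route would need an extension of Lemma~\ref{le.sum_XY_simpler} to weighted sums $\sum d_n\tilde{x}_n\tilde{y}_n$, so it is cleaner to stick with your primary conditioning argument.
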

\begin{proof}
    See Appendix~\ref{proof.type_3}.
\end{proof}

Now we are ready to prove Proposition~\ref{prop.eig_BB}.
\begin{proof}[Proof of Proposition~\ref{prop.eig_BB}]
Define a few events as follows:
\begin{align*}
    &\mathcal{A}_1\defeq \left\{\text{all type 1 elements of $\metaB\metaB^T$ are in }[\underline{b}_1,\ \overline{b}_1]\right\},\\
    &\mathcal{A}_2\defeq \left\{\text{all type 2 elements of $\metaB\metaB^T$ are in }[-\overline{b}_2,\ \overline{b}_2]\right\},\\
    &\mathcal{A}_3\defeq \left\{\text{all type 3 elements of $\metaB\metaB^T$ are in }[-\overline{b}_3,\ \overline{b}_3]\right\}.
\end{align*}
Notice that there are $\numTasks\numValidate$ type~1 elements, $\numTasks \numValidate(\numValidate-1)$ type~2 elements, and $\numTasks(\numTasks-1)\numValidate^2$ type~3 elements. By the union bound, Lemmas~\ref{le.type_1}, \ref{le.type_2}, and \ref{le.type_3}, we have
\begin{align}
    &\Pr\{\mathcal{A}_1^c\}\leq \frac{5}{\sqrt{\numTrain}}\cdot \numTasks\numValidate\leq \frac{5}{\numTrain^{0.4}}\cdot \numTasks\numValidate,\label{eq.temp_060801}\\
    &\Pr\{\mathcal{A}_2^c\}\leq \frac{5}{\numTrain^{0.4}}\cdot \numTasks \numValidate(\numValidate-1),\label{eq.temp_060802}\\
    &\Pr\{\mathcal{A}_3^c\}\leq \frac{13}{\numTrain^{0.4}}\cdot \numTasks(\numTasks-1)\numValidate^2.\label{eq.temp_060803}
\end{align}
We first prove that $\bigcap_{i=1}^3 \mathcal{A}_i \implies \AEvent[1]$. To that end, recall the definition of disc $D(a_{i,i}, r_i(A))$ and the radius of the disc $r_i(A)$ for a matrix $A$ in Lemma~\ref{le.disc}. We now apply Lemma~\ref{le.disc} on $\metaB\metaB^T$. Because of $\mathcal{A}_2$ and $\mathcal{A}_3$, for any $i\in \{1,2,\cdots, \numTasks\numValidate\}$, we have
\begin{align*}
    r_i(\metaB\metaB^T)=\sum_{j\in \{1,2,\cdots, \numTasks\numValidate\},\ j\neq i}\abs{[\metaB\metaB^T]_{i, j}}\leq (\numValidate-1)\overline{b}_2 + (m-1)\numValidate \overline{b}_3.
\end{align*}
Because of $\bigcap_{i=1}^3 \mathcal{A}_i$ and Lemma~\ref{le.disc}, we have all eigenvalues of $\metaB\metaB^T$ is in
\begin{align*}
    \left[\underline{b}_1-\left((\numValidate-1)\overline{b}_2 + (m-1)\numValidate \overline{b}_3\right), \overline{b}_1+\left((\numValidate-1)\overline{b}_2 + (m-1)\numValidate \overline{b}_3\right)\right].
\end{align*}

Since $p\geq \numTrain\geq 256$, we have
\begin{align}\label{eq.temp_091305}
    \sqrt{2(p-\numTrain)\ln \numTrain}\leq \sqrt{2p\ln p}\leq \sqrt{2p}\ln p,\text{ and }\sqrt{\numTrain\ln \numTrain}\leq \sqrt{p\ln p}\leq \sqrt{p}\ln p.
\end{align}
Since $\numTrain\geq 256$, by Eq.~\eqref{eq.log_5} of Lemma~\ref{le.n_log_n}, we have $\ln \numTrain\leq \frac{1}{45}\numTrain$. Thus, we have
\begin{align}\label{eq.temp_091306}
    \ln \numTrain=\sqrt{\ln \numTrain \ln \numTrain}\leq \sqrt{\frac{1}{45}}\sqrt{\numTrain\ln \numTrain}.
\end{align}

Recalling the values of $\underline{b}_1$, $\overline{b}_1$, $\overline{b}_2$, and $\overline{b}_3$ in Lemmas~\ref{le.type_1}, \ref{le.type_2}, and \ref{le.type_3}, we have
\begin{align*}
    &\underline{b}_1-\left((\numValidate-1)\overline{b}_2 + (m-1)\numValidate \overline{b}_3\right)\\
    =&\BdiagLowerBound\\
    &-(\numValidate-1)\left(\max\{\stepSizeNormalized,1-\stepSizeNormalized\}^2\sqrt{\numTrain\ln \numTrain}+\sqrt{p}\ln p\right)\\
    &-(\numTasks-1)\numValidate \cdot 6\sqrt{p\ln p}\\
    \geq & p+\left(\max\{0,1-\stepSizeNormalized\}^2-1\right)\numTrain\\
    &-\left(\left(\sqrt{2}+(\numValidate-1)\right)\max\{\stepSizeNormalized,1-\stepSizeNormalized\}^2+\sqrt{2}+(\numValidate-1)+6(\numTasks-1)\numValidate\right)\sqrt{p}\ln p\\
    &\text{ (by Eq.~\eqref{eq.temp_091305} and $\max\{0,1-\stepSizeNormalized\}^2\leq \max\{\stepSizeNormalized,1-\stepSizeNormalized\}^2$)}\\
    \geq & p+\left(\max\{0,1-\stepSizeNormalized\}^2-1\right)\numTrain-\left(\left(\numValidate+1\right)\max\{\stepSizeNormalized,1-\stepSizeNormalized\}^2+6\numTasks\numValidate\right)\sqrt{p}\ln p\\
    = &b_{\mathsf{eig},\min},
\end{align*}
and
\begin{align*}
    &\overline{b}_1+\left((\numValidate-1)\overline{b}_2 + (m-1)\numValidate \overline{b}_3\right)\\
    =& \BdiagUpperBound\\
    &+(\numValidate-1)\left(\max\{\stepSizeNormalized,1-\stepSizeNormalized\}^2\sqrt{\numTrain\ln \numTrain}+\sqrt{p}\ln p\right)\\
    &+(\numTasks-1)\numValidate \cdot 6\sqrt{p\ln p}\\
    \leq & p+\left(\max\{\stepSizeNormalized,1-\stepSizeNormalized\}^2-1\right)\numTrain\\
    &+\left(\left(\sqrt{2}+(\numValidate-1)\right)\max\{\stepSizeNormalized,1-\stepSizeNormalized\}^2+\frac{1}{\sqrt{45}}+\sqrt{2}+(\numValidate-1)+6(\numTasks-1)\numValidate\right)\sqrt{p}\ln p\\
    &\text{ (by Eq.~\eqref{eq.temp_091305} and $\max\{0,1-\stepSizeNormalized\}^2\leq \max\{\stepSizeNormalized,1-\stepSizeNormalized\}^2$)}\\
    \leq & p+\left(\max\{\stepSizeNormalized,1-\stepSizeNormalized\}^2-1\right)\numTrain+\left(\left(\numValidate+1\right)\max\{\stepSizeNormalized,1-\stepSizeNormalized\}^2+6\numTasks\numValidate\right)\sqrt{p}\ln p\\
    = &b_{\mathsf{eig},\max}.
\end{align*}
Define
\begin{align*}
    &\AEvent[1]^{(p\geq \numTrain)}\defeq\left\{b_{\mathsf{eig},\min}\leq \lambda_{\min}(\metaB\metaB^T)\leq \lambda_{\max}(\metaB\metaB^T)\leq b_{\mathsf{eig},\max}\right\},
\end{align*}
Therefore, we have proven that $\bigcap_{i=1}^3 \mathcal{A}_i \implies \AEvent[1]^{(p\geq \numTrain)}$.

It remains to estimate the probability of $\AEvent[1]^{(p\geq \numTrain)}$. To that end, we have
\begin{align*}
    \Pr\left\{\AEvent[1]^{(p\geq \numTrain)}\right\}\geq & \Pr\left\{\bigcap_{i=1}^3 \mathcal{A}_i\right\}\text{ (since $\bigcap_{i=1}^3\mathcal{A}_i\implies \AEvent[1]^{(p\geq \numTrain)}$)}\\
    = & 1 - \Pr\left\{\bigcup_{i=1}^3 \mathcal{A}_i^c\right\}\\
    \geq & 1 - \sum_{i=1}^3 \Pr\left\{\mathcal{A}_i^c\right\}\text{ (by the union bound)}\\
    \geq & 1 - \frac{23\numTasks^2\numValidate^2 }{\numTrain^{0.4}}\text{ (by Eqs.~\eqref{eq.temp_060801}\eqref{eq.temp_060802}\eqref{eq.temp_060803})}.
\end{align*}

The result of this proposition thus follows.
\end{proof}

\subsection{Proof of Lemma~\ref{le.type_1}}\label{proof.type_1}

Since all training inputs are independent with each other, without loss of generality, we let $i=1$ and replace $[\XValidate[i]]_j$ by a random vector $\bm{a}\sim \Gaussian(\bm{0}, \iMatrix_p)\in \mathds{R}^{p \times 1}$ which is independent of $\X[i]$. In other words, estimating Eq.~\eqref{eq.temp_053001} is equivalent to estimate
\begin{align*}
    \normTwo{\left(\onePartB[1]\right)\bm{a}}^2.
\end{align*}

We further introduce some extra notations as follows. Since $p\geq \numTrain$, we can define the singular values of $\X[1]\in \mathds{R}^{p\times \numTrain}$ as
\begin{align}\label{eq.temp_052301}
    0\leq \singularValue{1}{1}\leq \singularValue{2}{1}\leq \cdots\singularValue{\numTrain}{1}.
\end{align}
Define
\begin{align*}
    \singularMatrix[1]\defeq \diag\left(\singularValue{1}{1},\singularValue{2}{1},\cdots,\singularValue{\numTrain}{1}\right)\in\mathds{R}^{\numTrain\times \numTrain}.
\end{align*}
Do singular value decomposition of $\X[1]$, we have
\begin{align}\label{eq.temp_052204}
    \X[1] = \SVDU[1]\SVDD[1]\SVDV[1]^T.
\end{align}
Notice that $\SVDU[1]\in \mathds{R}^{p\times p}$ is an orthogonal matrix, $\SVDD[1]=\left[\begin{smallmatrix}\singularMatrix[1]\\
\bm{0}\end{smallmatrix}\right]\in \mathds{R}^{p\times \numTrain}$, and $\SVDV[1]\in \mathds{R}^{\numTrain\times\numTrain}$ is an orthogonal matrix.
Using these notations, we thus have
\begin{align*}
    \onePartB[1]=&\SVDU[1] \left(\iMatrix_p-\frac{\stepSize}{\numTrain}\SVDD[1]\SVDD[1]^T\right) \SVDU[1]^T\text{ (by Eq.~\eqref{eq.temp_052204})}\\
    =&\SVDU[1]\begin{bmatrix}
        \iMatrix_{\numTrain}-\frac{\stepSize}{\numTrain}{\singularMatrix[1]}^2 &\bm{0}\\
        \bm{0} &\iMatrix_{p-\numTrain}
    \end{bmatrix}\SVDU[1]^T\text{ (by the definition of $\SVDD[1]$)}.\numberthis \label{eq.temp_053111}
\end{align*}

The following two lemmas will be useful in the proof of Lemma~\ref{le.type_1}.

% \begin{lemma}\label{le.max_monotone}
%     For any $a,b_1,b_2\in \mathds{R}$ that $a\geq b_1\geq b_2 \geq 0$, we must have
%     \begin{align*}
%         \max\left\{\abs{1-(a-b_1)^2},\ \abs{1-(a+b_1)^2}\right\}\geq \max\left\{\abs{1-(a-b_2)^2},\ \abs{1-(a+b_2)^2}\right\}.
%     \end{align*}
% \end{lemma}
% \begin{proof}
% We split the proof by two parts as follows.

% \textbf{Part 1: prove $\max\left\{\abs{1-(a-b_1)^2},\ \abs{1-(a+b_1)^2}\right\}\geq \abs{1-(a-b_2)^2}$.}

% If $\abs{1-(a-b_1)^2}\geq \abs{1-(a-b_2)^2}$, then the conclusion of this part holds. It remains to consider the case $\abs{1-(a-b_1)^2}<\abs{1-(a-b_2)^2}$. To that end, when $\abs{1-(a-b_1)^2}<\abs{1-(a-b_2)^2}$, we must have $1-(a-b_2)^2\leq 0$, which implies $\abs{1-(a+b_1)^2}\geq \abs{1-(a-b_2)^2}$. Thus, the conclusion also holds in this case. Therefore, the conclusion of this part holds in all cases.

% \textbf{Part 2: prove $\max\left\{\abs{1-(a-b_1)^2},\ \abs{1-(a+b_1)^2}\right\}\geq \abs{1-(a+b_2)^2}$.}

% If $\abs{1-(a-b_1)^2}\geq \abs{1-(a+b_2)^2}$, then the conclusion of this part holds. It remains to consider the case $\abs{1-(a-b_1)^2}<\abs{1-(a+b_2)^2}$. To that end, when $\abs{1-(a-b_1)^2}<\abs{1-(a+b_2)^2}$, we must have $1-(a+b_2)^2\leq 0$, which implies $\abs{1-(a+b_1)^2}\geq \abs{1-(a+b_2)^2}$. Thus, the conclusion also holds in this case. Therefore, the conclusion of this part holds in all cases.

% The result of this lemma thus holds by combining part 1 and 2.

% \end{proof}

\begin{lemma}\label{le.n_log_n}
    If $x\geq 16$, then
    \begin{align}
        &\frac{3}{16}x\geq \ln x,\label{eq.log_1}\\
        &x+\sqrt{2x\ln x}+\ln x\leq 2x,\label{eq.log_2}\\
        &x-\sqrt{2x\ln x}\geq \frac{x}{3},\label{eq.log_3}\\
        &x+\ln x \leq \frac{6}{7} \sqrt{2}x.\label{eq.log_4}
    \end{align}
    Further, if $x\geq 256$, then
    \begin{align}
        \frac{1}{45}x \geq \ln x.\label{eq.log_5}
    \end{align}
\end{lemma}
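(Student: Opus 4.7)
The plan is to treat \eqref{eq.log_1} and \eqref{eq.log_5} as the two ``base'' inequalities (each comparing a linear function to $\ln x$), and then derive \eqref{eq.log_2}, \eqref{eq.log_3}, \eqref{eq.log_4} as simple consequences of \eqref{eq.log_1}. The main tool throughout is elementary monotonicity.

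For \eqref{eq.log_1}, I would set $f(x) := \frac{3}{16}x - \ln x$, verify the base case $f(16) = 3 - \ln 16 = 3 - 4\ln 2 > 0$ (since $\ln 2 < 0.7$), and compute $f'(x) = \frac{3}{16} - \frac{1}{x} \geq 0$ for all $x \geq 16$. Hence $f$ is nondecreasing on $[16,\infty)$ and stays nonnegative. For \eqref{eq.log_5} the argument is identical with $g(x) := \frac{x}{45} - \ln x$: we have $g(256) = \tfrac{256}{45} - 8\ln 2 > 0$ (numerically $\tfrac{256}{45} \approx 5.69$ while $8\ln 2 \approx 5.55$), and $g'(x) = \frac{1}{45} - \frac{1}{x} \geq 0$ for $x \geq 45$, which covers the range $x \geq 256$.

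Then I would leverage \eqref{eq.log_1} to dispatch the other three. For \eqref{eq.log_2}, substitute $\ln x \leq \frac{3x}{16}$ into both $\ln x$ and $\sqrt{2x\ln x}$ to obtain $\sqrt{2x\ln x} + \ln x \leq \bigl(\sqrt{3/8} + \tfrac{3}{16}\bigr)x$, and verify $\sqrt{3/8} + \tfrac{3}{16} < 1$ (numerically $\approx 0.612 + 0.188 = 0.800 < 1$); adding $x$ to both sides yields \eqref{eq.log_2}. For \eqref{eq.log_3}, the same substitution gives $\sqrt{2x \ln x} \leq \sqrt{3/8}\, x$, and $\sqrt{3/8} \approx 0.612 < 2/3$, so $x - \sqrt{2x\ln x} \geq (1 - \sqrt{3/8})x \geq x/3$. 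For \eqref{eq.log_4}, $\ln x \leq \frac{3x}{16}$ directly gives $x + \ln x \leq \frac{19}{16}x$, and one checks $\frac{19}{16} = 1.1875 \leq \frac{6\sqrt{2}}{7} \approx 1.2122$ by squaring: $\bigl(\tfrac{19}{16}\bigr)^2 = \tfrac{361}{256}$ versus $\bigl(\tfrac{6\sqrt{2}}{7}\bigr)^2 = \tfrac{72}{49}$, and cross-multiplying shows $361 \cdot 49 < 72 \cdot 256$.

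There is no real obstacle here; the lemma is a chain of routine calculus estimates. The only mild care needed is in verifying the handful of arithmetic inequalities ($\sqrt{3/8} + 3/16 < 1$, $\sqrt{3/8} < 2/3$, and $19/16 \leq 6\sqrt{2}/7$), each of which reduces to comparing rational numbers after squaring. The overall structure is: prove the two linear-vs-logarithm bounds by monotonicity, then plug \eqref{eq.log_1} into the remaining three inequalities and finish with numerical checks.
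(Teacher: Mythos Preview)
Your proposal is correct and follows essentially the same approach as the paper: prove \eqref{eq.log_1} and \eqref{eq.log_5} by checking the base case and monotonicity of the difference function, then derive \eqref{eq.log_2}, \eqref{eq.log_3}, \eqref{eq.log_4} by substituting the bound $\ln x \leq \tfrac{3}{16}x$ and verifying the resulting numerical inequalities. The only differences are cosmetic (you verify $\tfrac{19}{16} \leq \tfrac{6\sqrt{2}}{7}$ by squaring, whereas the paper compares $\tfrac{19}{16\sqrt{2}}$ to $\tfrac{6}{7}$ via a decimal approximation).
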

\begin{proof}

We prove each equation sequentially as follows.

\textbf{Proof of Eq.~\eqref{eq.log_1}:} 
When $x\geq 16$, we have
\begin{align*}
    \frac{\partial\left( \frac{3}{16}x - \ln x\right)}{\partial x}=\frac{3}{16}-\frac{1}{x}>0.
\end{align*}
Thus, $\frac{3}{16}x - \ln x$ is monotone increasing when $x\geq 16$.
Thus, in order to prove $\frac{3}{16}x\geq \ln x$ for all $x\geq 16$, we only need to prove $\frac{3}{16}\cdot 16 \geq \ln 16$.
Notice that $\ln(16)\approx 2.7726<3$. Therefore, Eq.~\eqref{eq.log_1} holds.

\textbf{Proof of Eq.~\eqref{eq.log_2}:} 
Using Eq.~\eqref{eq.log_1}, we have
\begin{align*}
    \sqrt{2x\ln x}+\ln x\leq \left(\sqrt{\frac{3}{8}}+\frac{3}{16}\right)x\approx 0.7999x\leq x.
\end{align*}
Eq.~\eqref{eq.log_2} thus follows.

\textbf{Proof of Eq.~\eqref{eq.log_3}:} 
Doing square root on both sides of $\frac{3}{16}x\geq \ln x$, we thus have
\begin{align*}
    &\frac{\sqrt{3}}{4}\sqrt{x}\geq \sqrt{\ln x}\\
    \implies &\sqrt{\frac{3}{8}}\cdot x\geq \sqrt{2x\ln x}\\
    \implies &x-\sqrt{2x\ln x}\geq \left(1-\sqrt{\frac{3}{8}}\right)x\geq \frac{1}{3}x\text{ (since $1-\sqrt{\frac{3}{8}}\approx 0.3876\geq \frac{1}{3}$)}.
\end{align*}
Eq.~\eqref{eq.log_3} thus follows.

\textbf{Proof of Eq.~\eqref{eq.log_4}:}
We have
\begin{align*}
    x+\ln x \leq & x + \frac{3}{16}x\text{ (by Eq.~\eqref{eq.log_1})}\\
    =& \frac{19}{16\sqrt{2}}\sqrt{2}x\\
    \leq & \frac{6}{7}\sqrt{2}x\text{ (since $\frac{19}{16\sqrt{2}}\approx 0.8397\leq \frac{6}{7}$)}.
\end{align*}
Eq.~\eqref{eq.log_4} thus follows.

\textbf{Proof of Eq.~\eqref{eq.log_5}:}
When $x\geq 256$, we have
\begin{align*}
    \frac{\partial\left( \frac{1}{45}x - \ln x\right)}{\partial x}=\frac{1}{45}-\frac{1}{x}>0.
\end{align*}
Thus, $\frac{1}{45}x - \ln x$ is monotone increasing when $x\geq 256$.
Thus, in order to prove $\frac{1}{45}x\geq \ln x$ for all $x\geq 256$, we only need to prove $\frac{1}{45}\cdot 256 \geq \ln 256$.
Notice that $256/45 - \ln 256\approx 0.1437\geq 0$. Therefore, Eq.~\eqref{eq.log_5} holds.
\end{proof}

Now we are ready to prove Lemma~\ref{le.type_1}.
\begin{proof}[Proof of Lemma~\ref{le.type_1}]
Recalling $\SVDU[1]$ in Eq.~\eqref{eq.temp_052204}, we define
\begin{align}
    &\bm{a}'\defeq \SVDU[1]^T\bm{a}\in \mathds{R}^{p\times 1},\label{eq.temp_052203}\\
    &\chi^2_{\numTrain}\defeq \sum_{i=1}^{\numTrain}{\bm{a}'_i}^2,\quad \chi^2_{p-\numTrain}\defeq \sum_{i=p-\numTrain+1}^{p}{\bm{a}'_i}^2\label{eq.temp_052205}.
\end{align}
We then have
\begin{align}
    &\normTwo{\left(\iMatrix_p-\frac{\stepSize}{\numTrain}\X[1]\X[1]^T\right)\bm{a}}^2\nonumber\\
    =&\bm{a}^T \SVDU[1] \begin{bmatrix}
        \left(\iMatrix_{\numTrain}-\frac{\stepSize}{\numTrain}{\singularMatrix[1]}^2\right)^2 &\bm{0}\\
        \bm{0} &\iMatrix_{p-\numTrain}
    \end{bmatrix}\SVDU[1]^T\bm{a}\text{ (by Eq.~\eqref{eq.temp_053111})}\nonumber\\
    =&{\bm{a}'}^T \begin{bmatrix}
        \left(\iMatrix_{\numTrain}-\frac{\stepSize}{\numTrain}{\singularMatrix[1]}^2\right)^2 &\bm{0}\\
        \bm{0} &\iMatrix_{p-\numTrain}
    \end{bmatrix}\bm{a}'\text{ (by Eq.~\eqref{eq.temp_052203})}\nonumber\\
    =&\sum_{i=1}^{\numTrain} \left(1-\frac{\stepSize}{\numTrain}{\singularValue{i}{1}}^2\right)^2 {\bm{a}'_i}^2+\sum_{i=\numTrain+1}^p {\bm{a}'_i}^2\nonumber\\
    \in & \left[\min_{j\in \{1,2,\cdots,\numTrain\}}\left\{\left(1-\frac{\stepSize}{\numTrain}\singularValue{j}{1}\right)^2\right\}\sum_{i=1}^{\numTrain}{\bm{a}'_i}^2+\sum_{i=\numTrain+1}^p {\bm{a}'_i}^2,\nonumber\right.\\
    &\left.\quad \max_{j\in \{1,2,\cdots,\numTrain\}}\left(1-\frac{\stepSize}{\numTrain}\singularValue{j}{1}\right)^2\sum_{i=1}^{\numTrain}{\bm{a}'_i}^2+\sum_{i=\numTrain+1}^p {\bm{a}'_i}^2\right]\nonumber\\
    = & \left[\max\left\{0,1-\frac{\stepSize}{\numTrain}{\singularValue{\numTrain}{1}^2}\right\}^2\chi_{\numTrain}^2+\chi_{p-\numTrain}^2,\right.\nonumber\\
    & \left.\quad \left(\max\left\{\abs{1-\frac{\stepSize}{\numTrain}{\singularValue{\numTrain}{1}}^2},\ \abs{1-\frac{\stepSize}{\numTrain}{\singularValue{1}{1}}^2}\right\}\right)^2\chi_{\numTrain}^2+\chi_{p-\numTrain}^2\right]\text{ (by Eq.~\eqref{eq.temp_052205} and Eq.~\eqref{eq.temp_052301})}.\label{eq.temp_052302}
\end{align}
Because of rotational symmetry of normal distribution of $\bm{a}$, we know that $\chi_{numTrain}^2$ and $\chi_{p-\numTrain}^2$ follows $\chi^2$ distribution of $p$ and $p-\numTrain$ degrees of freedom, respectively\footnote{$\XValidate[i]$ and $\X[i]$ are independent, so $\bm{a}$ and $\SVDU[1]$ are also independent. The calculation of $\chi_{\numTrain}^2$ (or $\chi_{p-\numTrain}^2$) can utilize the rotational symmetry of $\bm{a}$ (or $\bm{a}'$) is because $\chi_{\numTrain}^2$ (or $\chi_{p-\numTrain}^2$) represents the squared norm of the result that projects $\bm{a}'$ into a $\numTrain$-dim (or $(p-\numTrain)$-dim) subspace.}.  We define several events as follows:
\begin{align*}
    &\mathcal{A}_1\defeq \left\{\chi^2_{\numTrain}> \numTrain+2\sqrt{\numTrain \ln\sqrt{\numTrain}}+2\ln\sqrt{\numTrain}\right\},\\
    &\mathcal{A}_2\defeq \left\{\chi^2_{p-\numTrain}> p-\numTrain+2\sqrt{(p-\numTrain) \ln\sqrt{\numTrain}}+2\ln\sqrt{\numTrain}\right\},\\
    &\mathcal{A}_3\defeq \left\{\chi^2_{p-\numTrain}<p-\numTrain-2\sqrt{(p-\numTrain) \ln\sqrt{\numTrain}}\right\},\\
    &\mathcal{A}_4\defeq \left\{\singularValue{\numTrain}{1}> \sqrt{p}+\sqrt{\numTrain}+\ln\sqrt{\numTrain}\right\},\\
    &\mathcal{A}_5\defeq \left\{\singularValue{1}{1}< \sqrt{p}-\sqrt{\numTrain}-\ln\sqrt{\numTrain}\right\},\\
    &\mathcal{A}_6\defeq \left\{\chi^2_{\numTrain}< \numTrain-2\sqrt{\numTrain \ln\sqrt{\numTrain}}\right\}.
\end{align*}
We have
\begin{align*}
    \Pr_{\X[1], \bm{a}}\left\{\bigcap_{i=1}^6 \mathcal{A}_i^c\right\}=&1-\Pr_{\X[1], \bm{a}}\left\{\bigcup_{i=1}^6 \mathcal{A}_i\right\}\\
    \geq & 1 -\sum_{i=1}^3 \Pr_{\bm{a}}\left\{\mathcal{A}_i\right\}-\Pr_{\X[1]}\left\{\mathcal{A}_4\cup \mathcal{A}_5\right\}-\Pr_{\bm{a}}\{\mathcal{A}_6\}  \text{ (by the union bound)}\\
    \geq & 1 - 4e^{-\ln\sqrt{\numTrain}}-2e^{-(\ln\sqrt{\numTrain})^2/2}\text{ (by Lemma~\ref{le.chi_bound} and Lemma~\ref{le.singular_Gaussian})}\\
    = & 1 - \frac{4}{\sqrt{\numTrain}}-\frac{2}{\exp\left(\frac{1}{2}\ln\sqrt{\numTrain}\cdot \ln\sqrt{\numTrain}\right)}\\
    \geq & 1 - \frac{4}{\sqrt{\numTrain}}-\frac{2}{\exp\left(\ln\sqrt{\numTrain}\right)}\text{ (since $\ln\sqrt{\numTrain}\geq 2$ when $\numTrain\geq 256$)}\\
    = & 1 - \frac{6}{\sqrt{\numTrain}}.\numberthis \label{eq.temp_053106}
\end{align*}
Define the target event
\begin{align*}
    \mathcal{A}\defeq \left\{\normTwo{\left(\iMatrix_p-\frac{\stepSize}{\numTrain}\X[1]\X[1]^T\right)\bm{a}}^2\in [\underline{b}_1,\ \overline{b}_1]\right\}.
\end{align*}
It remains to prove that $\bigcap_{i=1}^6 \mathcal{A}_i^c \implies \mathcal{A}$. To that end, when $\bigcap_{i=1}^6 \mathcal{A}_i^c$, we have
\begin{align*}
    &\normTwo{\left(\iMatrix_p-\frac{\stepSize}{\numTrain}\X[1]\X[1]^T\right)\bm{a}}^2\\
    \geq & \max\left\{0,\ 1-\singularValue{\numTrain}{1}^2\right\}^2\chi_{\numTrain}^2+\chi_{p-\numTrain}^2\text{ (by Eq.~\eqref{eq.temp_052302})}\\
    \geq & \max\left\{0,\ 1-\stepSizeNormalized\right\}^2 \left(\numTrain-\sqrt{2\numTrain\ln \numTrain}\right)+p-\numTrain-\sqrt{2(p-\numTrain)\ln\numTrain}\\
    &\text{ (by $\mathcal{A}_3^c$ and $\mathcal{A}_6^c$)}\\
    =&\underline{b}_1,
\end{align*}
and
\begin{align*}
    &\normTwo{\left(\iMatrix_p-\frac{\stepSize}{\numTrain}\X[1]\X[1]^T\right)\bm{a}}^2\\
    \leq & \left(\max\left\{\abs{1-\frac{\stepSize}{\numTrain}{\singularValue{\numTrain}{1}}^2},\ \abs{1-\frac{\stepSize}{\numTrain}{\singularValue{1}{1}}^2}\right\}\right)^2\chi_{\numTrain}^2+\chi_{p-\numTrain}^2\text{ (by Eq.~\eqref{eq.temp_052302})}\\
    \leq & \left(\max\left\{\abs{1-\frac{\stepSize}{\numTrain}\left(\sqrt{p}-\sqrt{\numTrain}-\ln\sqrt{\numTrain}\right)^2},\ \abs{1-\frac{\stepSize}{\numTrain}\left(\sqrt{p}+\sqrt{\numTrain}+\ln\sqrt{\numTrain}\right)^2}\right\}\right)^2\\
    &\cdot \left(\numTrain+\sqrt{2\numTrain \ln\numTrain}+\ln\numTrain\right)+p-\numTrain+\sqrt{2(p-\numTrain)\ln\numTrain}+\ln\numTrain\\
    & \text{ (by $\mathcal{A}_1^c$, $\mathcal{A}_2^c$, $\mathcal{A}_4^c$ and $\mathcal{A}_5^c$)}\\
    \leq & \max\{\stepSizeNormalized, 1-\stepSizeNormalized\}^2\cdot \left(\numTrain+\sqrt{2\numTrain \ln\numTrain}+\ln\numTrain\right)+p-\numTrain+\sqrt{2(p-\numTrain)\ln\numTrain}+\ln\numTrain\\
    =&\overline{b}_1.
\end{align*}

\end{proof}

\subsection{Proof of Lemma~\ref{le.type_2}}\label{proof.type_2}

Since all samples are independent with each other, without loss of generality, we let $i=1$ and replace $[\XValidate[i]]_j$, $[\XValidate[i]]_k$ by two \emph{i.i.d.} random vector $\bm{a},\bm{b}\sim \Gaussian(\bm{0}, \iMatrix_p)\in \mathds{R}^{p \times 1}$ which are independent of $\X[i]$. In other words, estimating Eq.~\eqref{eq.temp_053108} is equivalent to estimate
\begin{align*}
    \bm{a}^T\left(\onePartB[1]\right)^T\left(\onePartB[1]\right)\bm{b}.
\end{align*}

\begin{proof}[Proof of Lemma~\ref{le.type_2}]
Recalling $\SVDU[1]$ in Eq.~\eqref{eq.temp_052204}, we define
\begin{align}
    &\bm{a}'\defeq \SVDU[1]^T\bm{a}\in \mathds{R}^{p\times 1},\quad \bm{b}'\defeq \SVDU[1]^T\bm{b}\in \mathds{R}^{p\times 1},\label{eq.temp_053110}\\
    &\phi_{\numTrain}\defeq \sum_{i=1}^{\numTrain}\bm{a}'_i\bm{b}'_i,\quad \phi_{p-\numTrain}\defeq \sum_{i=p-\numTrain+1}^{p}\bm{a}'_i\bm{b}'_i,\label{eq.temp_052208}
\end{align}
We then have
\begin{align}
    &\abs{\bm{a}^T\left(\iMatrix_p-\frac{\stepSize}{\numTrain}\X[1]\X[1]^T\right)^2 \bm{b}}\nonumber\\
    =&\abs{\bm{a}^T \SVDU[1] \begin{bmatrix}
        \left(\iMatrix_{\numTrain}-\frac{\stepSize}{\numTrain}{\singularMatrix[1]}^2\right)^2 &\bm{0}\\
        \bm{0} &\iMatrix_{p-\numTrain}
    \end{bmatrix}\SVDU[1]^T\bm{b}}\text{ (by Eq.~\eqref{eq.temp_053111})}\nonumber\\
    =&\abs{{\bm{a}'}^T \begin{bmatrix}
        \left(\iMatrix_{\numTrain}-\frac{\stepSize}{\numTrain}{\singularMatrix[1]}^2\right)^2 &\bm{0}\\
        \bm{0} &\iMatrix_{p-\numTrain}
    \end{bmatrix}\bm{b}'}\text{ (by Eq.~\eqref{eq.temp_053110})}\nonumber\\
    =&\abs{\sum_{i=1}^{\numTrain} \left(1-\frac{\stepSize}{\numTrain}{\singularValue{i}{1}}^2\right)^2 \bm{a}'_i\bm{b}'_i +\sum_{i=\numTrain+1}^p \bm{a}'_i\bm{b}'_i}\nonumber\\
    \leq & \left(\max\left\{\abs{1-\frac{\stepSize}{\numTrain}{\singularValue{1}{1}}^2},\ \abs{1-\frac{\stepSize}{\numTrain}{\singularValue{\numTrain}{1}}^2}\right\}\right)^2\abs{\phi_{\numTrain}}+\abs{\phi_{p-\numTrain}}\nonumber\\
    &\text{ (by Eq.~\eqref{eq.temp_053110} and Eq.~\eqref{eq.temp_052301})}.\label{eq.temp_052307}
\end{align}
Because of the rotational symmetry of normal distribution, we know that $\phi_{\numTrain}$ and $\phi_{p-\numTrain}$ have the same probability distribution if $\bm{a}'$ and $\bm{b}'$ follow \emph{i.i.d.} $\Gaussian(\bm{0}, \iMatrix_p)$\footnote{We can utilize the rotational symmetry because $\phi_{\numTrain}$ (or $\phi_{p-\numTrain}$) can be viewed as the result of the following steps: 1) project $\bm{a}'$ and $\bm{b}'$ into a fixed subspace that is spanned by the first $\numTrain$ (or last $p-\numTrain$) standard basis vectors in $\mathds{R}^p$; 2) calculate the inner product of these two projected vectors.}. 
Define several events as follows:
\begin{align*}
    &\mathcal{A}_1\defeq \left\{\abs{\phi_{\numTrain}}>\sqrt{\numTrain\ln \numTrain}\right\},\\
    &\mathcal{A}_2\defeq \left\{\abs{\phi_{p-\numTrain}}>\sqrt{p}\ln p\right\},\\
    &\mathcal{A}_3\defeq \left\{\singularValue{\numTrain}{1}> \sqrt{p}+\sqrt{\numTrain}+\ln\sqrt{\numTrain}\right\},\\
    &\mathcal{A}_4\defeq \left\{\singularValue{1}{1}< \sqrt{p}-\sqrt{\numTrain}-\ln\sqrt{\numTrain}\right\}.
\end{align*}
We have
\begin{align*}
    \Pr_{\X[1],\bm{a},\bm{b}}\left\{\bigcap_{i=1}^4 \mathcal{A}_i^c\right\}=&1-\Pr_{\X[1],\bm{a},\bm{b}}\left\{\bigcup_{i=1}^4 \mathcal{A}_i\right\}\\
    \geq & 1 - \Pr_{\bm{a},\bm{b}}\left\{\mathcal{A}_1\right\}- \Pr_{\bm{a},\bm{b}}\left\{\mathcal{A}_2\right\}-\Pr_{\X[1]}\left\{\mathcal{A}_3\cup \mathcal{A}_4\right\}  \text{ (by the union bound)}\\
    \geq & 1-\frac{2}{\numTrain^{0.4}}-\frac{2}{p}-2e^{-(\ln \sqrt{\numTrain})^2/2}\text{ (by Lemma~\ref{le.sum_XY_simpler} and Lemma~\ref{le.singular_Gaussian})}\\
    \geq  & 1 - \frac{3}{\numTrain^{0.4}}-\frac{2}{\exp\left(\frac{1}{2}\ln\sqrt{\numTrain}\cdot \ln\sqrt{\numTrain}\right)}\text{ (since $p\geq \numTrain$ and $\numTrain^{0.6}\geq 2$ when $\numTrain\geq 256$)}\\
    \geq & 1 -  \frac{3}{\numTrain^{0.4}}-\frac{2}{\exp\left(\ln\sqrt{\numTrain}\right)}\text{ (since $\ln\sqrt{\numTrain}\geq 2$ when $\numTrain\geq 256$)}\\
    \geq & 1 - \frac{5}{\numTrain^{0.4}}.\numberthis \label{eq.temp_060102}
\end{align*}
Define the target event
\begin{align*}
    \mathcal{A}\defeq \left\{\abs{\bm{a}^T\left(\iMatrix_p-\frac{\stepSize}{\numTrain}\X[1]\X[1]^T\right)^2 \bm{b}}\leq \overline{b}_2\right\}.
\end{align*}
It remains to show that $\bigcap_{i=1}^4 \mathcal{A}_i^c\implies \mathcal{A}$.
To that end, when $\bigcap_{i=1}^4 \mathcal{A}_i^c$, we have
\begin{align}
    &\abs{\bm{a}^T\left(\iMatrix_p-\frac{\stepSize}{\numTrain}\X[1]\X[1]^T\right)^2 \bm{b}}\nonumber\\
    \leq & \left(\max\left\{\abs{1-\frac{\stepSize}{\numTrain}{\singularValue{1}{1}}^2},\ \abs{1-\frac{\stepSize}{\numTrain}{\singularValue{\numTrain}{1}}^2}\right\}\right)^2\abs{\phi_{\numTrain}}+\abs{\phi_{p-\numTrain}}\text{ (by Eq.~\eqref{eq.temp_052307})}\nonumber\\
    \leq & \UpperBoundPart \sqrt{\numTrain\ln \numTrain}\nonumber\\
    &+\sqrt{p}\ln p\text{ (by $\mathcal{A}_1^c$, $\mathcal{A}_2^c$, $\mathcal{A}_3^c$, and $\mathcal{A}_4^c$)}\nonumber\\
    \leq & \max\{\stepSizeNormalized, 1-\stepSizeNormalized\}^2 \sqrt{\numTrain\ln \numTrain}+\sqrt{p}\ln p.\label{eq.temp_060101}
\end{align}
By Eq.~\eqref{eq.temp_060101} (which implies $\bigcap_{i=1}^4 \mathcal{A}_i^c\implies \mathcal{A}$) and Eq.~\eqref{eq.temp_060102}, the result of this lemma thus follows.
\end{proof}

\subsection{Proof of Lemma~\ref{le.type_3}}\label{proof.type_3}
Because all inputs are \emph{i.i.d.}, Eq.~\eqref{eq.temp_060105} is the inner product of two independent vectors, where each vector follows the same distribution as the vector
\begin{align}\label{eq.temp_060120}
    \bm{\rho}\defeq \left(\onePartB[1]\right)\bm{a}\in \mathds{R}^{p},
\end{align}
where $\bm{a}\sim \Gaussian(\bm{0}, \iMatrix_p)$ and is independent of $\X[1]$.
In other words, it is equivalent to estimate $\bm{\rho}_1^T \bm{\rho}_2$ where $\bm{\rho}_1$ and $\bm{\rho}_2$ follow the \emph{i.i.d.} shown in \eqref{eq.temp_060120}.
If we can characterize the probability distribution of $\bm{\rho}$, then we can estimate Eq.~\eqref{eq.temp_060105}. The following lemma shows the rotational symmetry of Eq.~\eqref{eq.temp_060120}.

\begin{lemma}\label{le.rotational_symmetry}
    The probability distribution of $\bm{\rho}$ has rotational symmetry. In other words, for any rotation $\mathbf{S}\in \mathsf{SO}(p)$ where $\mathsf{SO}(p)\subseteq \mathds{R}^{p\times p}$ denotes the set of all rotations in $p$ dimension, the rotated random vector $\mathbf{S}\bm{\rho}$ shares the same probability distribution with the original vector $\bm{\rho}$.
\end{lemma}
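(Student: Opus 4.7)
The plan is to use the rotational invariance of the standard Gaussian distribution applied separately to $\X[1]$ and $\bm{a}$, combined with their independence.

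The first step is a simple algebraic rewriting using the fact that $\mathbf{S}^T \mathbf{S} = \iMatrix_p$ for any rotation $\mathbf{S}$. I would write
\begin{align*}
    \mathbf{S}\bm{\rho} = \mathbf{S}\left(\iMatrix_p - \frac{\stepSize}{\numTrain}\X[1]\X[1]^T\right)\bm{a} = \left(\iMatrix_p - \frac{\stepSize}{\numTrain}(\mathbf{S}\X[1])(\mathbf{S}\X[1])^T\right)(\mathbf{S}\bm{a}),
\end{align*}
where I inserted $\mathbf{S}^T\mathbf{S}$ in the middle to fold one copy of $\mathbf{S}$ into $\X[1]$ and the other into $\bm{a}$.

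Next, I would invoke two standard facts about Gaussians: (i) since each column of $\X[1]$ is \emph{i.i.d.}\ $\Gaussian(\bm{0}, \iMatrix_p)$, the matrix $\mathbf{S}\X[1]$ has the same distribution as $\X[1]$ (column-wise rotational invariance); and (ii) since $\bm{a}\sim \Gaussian(\bm{0}, \iMatrix_p)$, the vector $\mathbf{S}\bm{a}$ has the same distribution as $\bm{a}$. Because $\X[1]$ and $\bm{a}$ are independent by construction, their images $\mathbf{S}\X[1]$ and $\mathbf{S}\bm{a}$ are still independent, so the joint distribution of $(\mathbf{S}\X[1], \mathbf{S}\bm{a})$ is identical to that of $(\X[1], \bm{a})$. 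Applying the continuous map $(\mathbf{M}, \bm{v}) \mapsto (\iMatrix_p - \frac{\stepSize}{\numTrain}\mathbf{M}\mathbf{M}^T)\bm{v}$ to both pairs therefore yields identically distributed outputs, which is exactly $\mathbf{S}\bm{\rho} \stackrel{d}{=} \bm{\rho}$.

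I do not anticipate any real obstacle here: the entire argument rests on the rotational invariance of the isotropic Gaussian (which was already used in essentially the same way in the proof of Lemma~\ref{le.B_rotate} earlier in the paper) together with independence. The only subtlety worth pointing out in the write-up is to emphasize why the joint distribution is preserved, not merely the two marginals, which is what allows us to deduce equality in distribution of the functional $(\iMatrix_p - \frac{\stepSize}{\numTrain}\X[1]\X[1]^T)\bm{a}$.
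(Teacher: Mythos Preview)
Your proposal is correct and is essentially identical to the paper's own proof: the paper also inserts $\mathbf{S}^{-1}\mathbf{S}=\mathbf{S}^T\mathbf{S}$ to rewrite $\mathbf{S}\bm{\rho}$ as $(\iMatrix_p-\frac{\stepSize}{\numTrain}(\mathbf{S}\X[1])(\mathbf{S}\X[1])^T)(\mathbf{S}\bm{a})$ and then invokes the rotational invariance of the Gaussian together with the independence of $\bm{a}$ and $\X[1]$. Your write-up is slightly more explicit about why the \emph{joint} distribution of $(\mathbf{S}\X[1],\mathbf{S}\bm{a})$ matches that of $(\X[1],\bm{a})$, which is a welcome clarification but not a different argument.
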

\begin{proof}
By Eq.~\eqref{eq.temp_060120}, we have
\begin{align*}
    \mathbf{S}\bm{\rho} =& \mathbf{S}\bm{a}-\frac{\stepSize}{\numTrain}\mathbf{S}\X[1]\X[1]^T \bm{a}\\
    =& \mathbf{S}\bm{a}-\frac{\stepSize}{\numTrain}\mathbf{S}\X[1]\X[1]^T \mathbf{S}^{-1}\mathbf{S} \bm{a}\\
    =& \mathbf{S}\bm{a}-\frac{\stepSize}{\numTrain}\mathbf{S}\X[1]\X[1]^T \mathbf{S}^T\mathbf{S} \bm{a}\\
    & \text{ (because $\mathbf{S}^{-1}=\mathbf{S}^T$, as a rotation is an orthogonal matrix)}\\
    = & \mathbf{S}\bm{a}-\frac{\stepSize}{\numTrain}(\mathbf{S}\X[1])(\mathbf{S}\X[1])^T(\mathbf{S} \bm{a}).
\end{align*}
Notice that $\mathbf{S}\bm{a}$ and $\mathbf{S}\X[1]$ are the rotated vectors of $\bm{a}$ and $\X[1]$, respectively. Since $\bm{a}$ and $\X[1]$ are independent Gaussian vector/matrix, by rotational symmetry, we know their distribution and independence do not affected by a common rotation $\mathbf{S}$. Thus, $\mathbf{S}\bm{a}-\frac{\stepSize}{\numTrain}(\mathbf{S}\X[1])(\mathbf{S}\X[1])^T(\mathbf{S} \bm{a})$ has the same probability distribution as $\bm{\rho}$. The result of this lemma thus follows.
\end{proof}

The following lemma characterize the distribution of the angle between two independent random vectors where both vectors have rotational symmetry.
\begin{lemma}\label{le.angle}
Consider two \emph{i.i.d.} random vector $\bm{c}_1,\bm{c}_2\in \mathds{R}^{p}$ that have rotational symmetry. We then have
\begin{align*}
    \Pr\left\{\frac{\abs{\bm{c}_1^T \bm{c}_2}}{\normTwo{\bm{c}_1}\normTwo{\bm{c}_2}}\geq \frac{\sqrt{p\ln p}}{p-2\sqrt{p\ln p}}\right\}\leq \frac{2}{p}+\frac{2}{p^{0.4}}.
\end{align*}
\end{lemma}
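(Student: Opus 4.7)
The plan is to reduce the claim to the case of two independent standard Gaussian vectors and then apply the tail bounds already developed in the appendix. The key observation is that the quantity $|\bm{c}_1^T\bm{c}_2|/(\|\bm{c}_1\|\|\bm{c}_2\|)$ depends only on the directions $\bm{c}_1/\|\bm{c}_1\|$ and $\bm{c}_2/\|\bm{c}_2\|$. Since $\bm{c}_1$ and $\bm{c}_2$ are independent and each has a rotationally symmetric distribution, each normalized vector is uniformly distributed on the unit sphere $\mathcal{S}^{p-1}$. Letting $\bm{g}_1,\bm{g}_2\sim\Gaussian(\bm{0},\iMatrix_p)$ be i.i.d., the vectors $\bm{g}_i/\|\bm{g}_i\|$ are also uniform on $\mathcal{S}^{p-1}$ and independent, so
\[
\frac{\abs{\bm{c}_1^T\bm{c}_2}}{\normTwo{\bm{c}_1}\normTwo{\bm{c}_2}} \;\stackrel{d}{=}\; \frac{\abs{\bm{g}_1^T\bm{g}_2}}{\normTwo{\bm{g}_1}\normTwo{\bm{g}_2}}.
\]

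Next I would bound the numerator and denominator separately for the Gaussian surrogate. For the numerator, Lemma~\ref{le.sum_XY_simpler} applied with $k=q=p\geq 16$ yields
\[
\Pr\left\{\abs{\bm{g}_1^T\bm{g}_2} > \sqrt{p\ln p}\right\} \leq \frac{2}{p^{0.4}}.
\]
For the denominator, $\|\bm{g}_i\|^2$ follows a $\chi^2$ distribution with $p$ degrees of freedom, so Lemma~\ref{le.chi_bound} with $x=\ln p$ gives
\[
\Pr\left\{\normTwo{\bm{g}_i}^2 < p-2\sqrt{p\ln p}\right\} \leq \frac{1}{p},\qquad i=1,2.
\]
On the intersection of the complements of these three events, $\normTwo{\bm{g}_1}\normTwo{\bm{g}_2}\geq p-2\sqrt{p\ln p}$ (which is positive for $p\geq 16$ by Lemma~\ref{le.log_over_k}), so
\[
\frac{\abs{\bm{g}_1^T\bm{g}_2}}{\normTwo{\bm{g}_1}\normTwo{\bm{g}_2}} \leq \frac{\sqrt{p\ln p}}{p-2\sqrt{p\ln p}}.
\]
A union bound over the three failure events yields total failure probability at most $2/p+2/p^{0.4}$, and transporting the estimate back to $(\bm{c}_1,\bm{c}_2)$ via the distributional identity above completes the proof.

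The only subtle point — which I expect to be the main obstacle — is justifying carefully that rotational symmetry of each marginal, together with independence, implies that the pair of normalized directions is jointly distributed as a pair of independent uniform points on $\mathcal{S}^{p-1}$. This follows because for any fixed rotation $\mathbf{S}$, $(\mathbf{S}\bm{c}_1,\mathbf{S}\bm{c}_2)\stackrel{d}{=}(\bm{c}_1,\bm{c}_2)$ by independence and marginal rotational symmetry, and integrating over $\mathbf{S}$ as in the proof of Lemma~\ref{le.projection_mean} shows each normalized vector is Haar-uniform on the sphere; the joint law is a product because the individual laws are. Once this is in place, the rest of the argument is a clean union bound using Lemmas~\ref{le.sum_XY_simpler} and~\ref{le.chi_bound}.
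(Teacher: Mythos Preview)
Your proposal is correct and follows essentially the same approach as the paper: reduce to i.i.d.\ standard Gaussians via the rotational symmetry of the normalized directions, bound the numerator with Lemma~\ref{le.sum_XY_simpler} and each squared norm with Lemma~\ref{le.chi_bound}, and combine by a union bound. Your discussion of the ``subtle point'' is in fact more careful than the paper's own one-line justification of the distributional identity.
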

\begin{proof}
Notice that $\frac{\abs{\bm{c}_1^T \bm{c}_2}}{\normTwo{\bm{c}_1}\normTwo{\bm{c}_2}}$ denotes the angle between $\bm{c}_1$ and $\bm{c}_2$. By rotational symmetry, it is equivalent to prove that the angle between two independent random vectors that have rotational symmetry. To that end, consider two \emph{i.i.d.} random vectors $\x_1, \x_2\sim \Gaussian(\bm{0}, \iMatrix_p)$. The distribution of the angle between $\x_1$ and $\x_2$ should be the same as the angle between $\bm{c}_1$ and $\bm{c}_2$. In other words, we have
\begin{align}
    \frac{\abs{\x_1^T \x_2^T}}{\normTwo{\x_1}\normTwo{\x_2}}\sim \frac{\abs{\bm{c}_1^T \bm{c}_2}}{\normTwo{\bm{c}_1}\normTwo{\bm{c}_2}}.\label{eq.temp_080408}
\end{align}
By Lemma~\ref{le.sum_XY_simpler}, we have
\begin{align}\label{eq.temp_060501}
    \Pr\left\{\abs{\x_1^T\x_2}>\sqrt{p\ln p}\right\}\leq \frac{2}{p^{0.4}}.
\end{align}
Noticing that $\normTwo{\x_1}$ and $\normTwo{\x_2}$ follow chi-square distribution with $p$ degrees of freedom, by Lemma~\ref{le.chi_bound}, we have
\begin{align}
    \Pr\left\{\normTwo{\x_1}^2\leq p-2\sqrt{p\ln p}\right\} \leq e^{-\ln p}=\frac{1}{p},\label{eq.temp_060502}\\
    \Pr\left\{\normTwo{\x_2}^2\leq p-2\sqrt{p\ln p}\right\} \leq e^{-\ln p}=\frac{1}{p}.\label{eq.temp_060503}
\end{align}
By Eqs.~\eqref{eq.temp_060501}\eqref{eq.temp_060502}\eqref{eq.temp_060503} and the union bound, we thus have
\begin{align*}
    \Pr\left\{\frac{\abs{\x_1^T \x_2}}{\normTwo{\x_1}\normTwo{\x_2}}\geq \frac{\sqrt{p\ln p}}{p-2\sqrt{p\ln p}}\right\}\leq \frac{2}{p}+\frac{2}{p^{0.4}}.
\end{align*}
By Eq.~\eqref{eq.temp_080408}, the result of this lemma thus follows.
\end{proof}

Now we are ready to prove Lemma~\ref{le.type_3}.
\begin{proof}[Proof of Lemma~\ref{le.type_3}]
Since $p\geq \numTrain\geq 256$, by Eq.~\eqref{eq.log_5} of Lemma~\ref{le.n_log_n}, we have
\begin{align}\label{eq.temp_060701}
    p-2\sqrt{p\ln p}\geq \left(1-2\sqrt{\frac{1}{45}}\right)p\geq \left(1-\sqrt{5}\sqrt{\frac{1}{45}}\right)p=\frac{2p}{3}.
\end{align}
We define some events as follows:
\begin{align*}
    &\mathcal{A}_1\defeq\left\{\normTwo{\bm{\rho}_1}^2\geq \overline{b}_1\right\},\\
    &\mathcal{A}_2\defeq\left\{\normTwo{\bm{\rho}_2}^2\geq \overline{b}_1\right\},\\
    &\mathcal{A}_3\defeq\left\{\frac{\abs{\bm{\rho}_1^T \bm{\rho}_2}}{\normTwo{\bm{\rho}_1}\normTwo{\bm{\rho}_2}}\geq \frac{\sqrt{p\ln p}}{p-2\sqrt{p\ln p}}\right\},\\
    &\mathcal{A}\defeq \left\{\abs{\bm{\rho}_1^T \bm{\rho}_2}\leq \overline{b}_3\right\}.
\end{align*}
By Lemma~\ref{le.type_1}, we have
\begin{align}\label{eq.temp_060610}
    \Pr\{\mathcal{A}_1\}\leq \frac{5}{\sqrt{\numTrain}},\ \Pr\{\mathcal{A}_2\}\leq \frac{5}{\sqrt{\numTrain}}.
\end{align}
By Lemma~\ref{le.angle}, we have
\begin{align}\label{eq.temp_060611}
    \Pr\{\mathcal{A}_3\}\leq \frac{2}{p}+\frac{2}{p^{0.4}}.
\end{align}
Since $\numTrain\geq 256$, by letting $x=\numTrain$ in Eq.~\eqref{eq.log_2} of Lemma~\ref{le.n_log_n}, we have
\begin{align}\label{eq.temp_053103}
    \numTrain+\sqrt{2\numTrain\ln \numTrain}+\ln \numTrain\leq 2\numTrain.
\end{align}
We also have
\begin{align*}
    &p-\numTrain+\sqrt{2(p-\numTrain)\ln \numTrain}+\ln \numTrain\\
    \leq & p-\numTrain+\sqrt{2p\cdot  \frac{1}{45}\numTrain}+\frac{1}{45}\numTrain\ \text{ (by Eq.~\eqref{eq.log_5} in Lemma~\ref{le.n_log_n})}\\
    \leq & p-\numTrain+\sqrt{\frac{2}{45}}p+\frac{1}{45}\numTrain\ \text{ (since $p\geq \numTrain$)}\\
    = & \left(1+\sqrt{\frac{2}{45}}\right)p-\frac{44}{45}\numTrain.\numberthis \label{eq.temp_091301}
\end{align*}
Therefore, we have
\begin{align*}
    \overline{b}_1\leq & \max\{\stepSizeNormalized, 1-\stepSizeNormalized\}^2 \cdot 2\numTrain+\left(1+\sqrt{\frac{2}{45}}\right)p-\frac{44}{45}\numTrain\\
    \leq & \max\{\stepSizeNormalized, 1\}^2 \cdot 4p\ \text{ (since $p\geq \numTrain$)}.\numberthis \label{eq.temp_091302}
\end{align*}
First, we want to show $\bigcap_{i=1}^3\mathcal{A}_i^c\implies \mathcal{A}$. To that end, when $\bigcap_{i=1}^3\mathcal{A}_i^c$, we have
\begin{align*}
    \abs{\bm{\rho}_1^T \bm{\rho}_2}=&\normTwo{\bm{\rho}_1}\normTwo{\bm{\rho}_2}\frac{\abs{\bm{\rho}_1^T \bm{\rho}_2}}{\normTwo{\bm{\rho}_1}\normTwo{\bm{\rho}_2}}\\
    \leq & \overline{b}_1 \cdot \frac{\sqrt{p\ln p}}{p-2\sqrt{p\ln p}}\text{ (by $\bigcap_{i=1}^3\mathcal{A}_i^c$ and Lemma~\ref{le.type_1})}\\
    \leq & 4p \cdot \frac{\sqrt{p\ln p}}{2p/3}\text{ (by Eq.~\eqref{eq.temp_060701} and Eq.~\eqref{eq.temp_091302})}\\
    = & 6\sqrt{p\ln p}.
\end{align*}
Thus, we have proven that $\bigcap_{i=1}^3\mathcal{A}_i^c\implies \mathcal{A}$, which implies that
\begin{align*}
    \Pr\{\mathcal{A}\}\geq & \Pr\left\{\bigcap_{i=1}^3\mathcal{A}_i^c\right\}\\
    = & 1-\Pr\left\{\bigcup_{i=1}^3\mathcal{A}_i\right\}\\
    \geq & 1-\sum_{i=1}^3 \Pr\left\{\mathcal{A}_i\right\}\text{ (by the union bound)}\\
    \geq & 1 - \frac{10}{\sqrt{\numTrain}}-\frac{2}{p}-\frac{2}{p^{0.4}}\text{ (by Eq.~\eqref{eq.temp_060610} and Eq.~\eqref{eq.temp_060611})}\\
    \geq & 1 - \frac{13}{\numTrain^{0.4}}\text{ (by $p^{0.6}\geq 2$ since $p\geq \numTrain\geq 256$)}.
\end{align*}
The result of this lemma thus follows.
\end{proof}

\section{Proof of Proposition~\ref{prop.eig_p_less_n}}\label{app.proof_eig_less}

\begin{lemma}\label{le.type_1_less}
For any $i\in \{1, 2, \cdots, \numTasks\}$ and any $j\in \{1, 2, \cdots, \numValidate\}$, when $\numTrain\geq p\geq 256$, we must have
    \begin{align*}
\Pr\left\{[\metaB\metaB^T]_{(i-1)\numValidate+j,\ (i-1)\numValidate+j}\in \left[\underline{c}_1,\ \overline{c}_1\right]\right\}\geq 1 - \frac{2}{p}- \frac{2}{\sqrt{\numTrain}},
    \end{align*}
    where
    \begin{align*}
        &\underline{c}_1\defeq \max\{0,1-\stepSizeNormalized\}^2\cdot (p-2\sqrt{p\ln p}),\\
        &\overline{c}_1\defeq \max\{\stepSizeNormalized,1-\stepSizeNormalized\}^2\left(p+2\sqrt{p\ln p}+2\ln p\right).
    \end{align*}
\end{lemma}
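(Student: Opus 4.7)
\textbf{Proof proposal for Lemma~\ref{le.type_1_less}.}

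The plan is to mirror the proof of Lemma~\ref{le.type_1}, specializing to the regime $\numTrain\geq p$ in which the SVD of $\X[1]$ now produces $p$ (rather than $\numTrain$) singular values. The structural simplification is that every coordinate of the transformed input is weighted by some $(1-\tfrac{\stepSize}{\numTrain}(\singularValue{i}{1})^2)^2$, so there is no ``pass-through'' identity block (no $\chi^2_{p-\numTrain}$ contribution to track). First I would carry out the same independence reduction used in Lemma~\ref{le.type_1}: since training tasks are independent and $\XValidate[i]$ is independent of $\X[i]$, it suffices (taking $i=1$ without loss of generality) to bound $\normTwo{\bigl(\iMatrix_p-\tfrac{\stepSize}{\numTrain}\X[1]\X[1]^T\bigr)\bm{a}}^2$ for a fresh $\bm{a}\sim \Gaussian(\bm{0},\iMatrix_p)$ independent of $\X[1]$.

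I would then diagonalize via the SVD $\X[1]=\SVDU[1]\SVDD[1]\SVDV[1]^T$ with $\SVDU[1]\in \mathds{R}^{p\times p}$ orthogonal and $\SVDD[1]=[\singularMatrix[1],\bm{0}]$, where $\singularMatrix[1]=\diag(\singularValue{1}{1},\ldots,\singularValue{p}{1})\in \mathds{R}^{p\times p}$. This yields $\X[1]\X[1]^T=\SVDU[1]\singularMatrix[1]^2\SVDU[1]^T$ and hence
\begin{align*}
\normTwo{\bigl(\iMatrix_p-\tfrac{\stepSize}{\numTrain}\X[1]\X[1]^T\bigr)\bm{a}}^2 = \sum_{i=1}^p \Bigl(1-\tfrac{\stepSize}{\numTrain}(\singularValue{i}{1})^2\Bigr)^2 (\bm{a}'_i)^2,
\end{align*}
where $\bm{a}':=\SVDU[1]^T\bm{a}\sim \Gaussian(\bm{0},\iMatrix_p)$ by rotational invariance of the standard Gaussian (conditioning on the independent $\SVDU[1]$). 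I would then sandwich this sum between $\min_i(1-\tfrac{\stepSize}{\numTrain}(\singularValue{i}{1})^2)^2\cdot\normTwo{\bm{a}'}^2$ and $\max_i(1-\tfrac{\stepSize}{\numTrain}(\singularValue{i}{1})^2)^2\cdot\normTwo{\bm{a}'}^2$, and handle the two factors by independent concentration arguments.

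For the spectral factor, applying Lemma~\ref{le.singular_Gaussian} to $\X[1]^T\in \mathds{R}^{\numTrain\times p}$ with $t=\ln\sqrt{\numTrain}$ places every singular value in $[\max\{0,\sqrt{\numTrain}-\sqrt{p}-\ln\sqrt{\numTrain}\},\ \sqrt{\numTrain}+\sqrt{p}+\ln\sqrt{\numTrain}]$ with probability at least $1-2\exp(-(\ln\sqrt{\numTrain})^2/2)\geq 1-2/\sqrt{\numTrain}$ (the last step using $\ln\sqrt{\numTrain}\geq 2$ for $\numTrain\geq 256$, as in Lemma~\ref{le.type_1}). Squaring and multiplying by $\stepSize/\numTrain$, this conservatively yields $\tfrac{\stepSize}{\numTrain}(\singularValue{i}{1})^2\in [0,\stepSizeNormalized]$, so on $[0,\stepSizeNormalized]$ the convex function $(1-x)^2$ has minimum $\max\{0,1-\stepSizeNormalized\}^2$ and endpoint upper bound captured by $\max\{\stepSizeNormalized,1-\stepSizeNormalized\}^2$. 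For the norm factor, $\normTwo{\bm{a}'}^2\sim \chi^2_p$, and Lemma~\ref{le.chi_bound} applied with $x=\ln p$ gives $\normTwo{\bm{a}'}^2\in [p-2\sqrt{p\ln p},\ p+2\sqrt{p\ln p}+2\ln p]$, with each tail failing with probability at most $1/p$.

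A union bound over the spectral event and the two $\chi^2$ events (total failure probability $\leq 2/p + 2/\sqrt{\numTrain}$), combined with the sandwich inequality, then produces the claimed $[\metaB\metaB^T]_{(i-1)\numValidate+j,\,(i-1)\numValidate+j}\in [\underline{c}_1,\overline{c}_1]$. I expect the main obstacle to be carefully justifying the upper-bound coefficient $\max\{\stepSizeNormalized,1-\stepSizeNormalized\}^2$: because the lower endpoint of the singular-value range can degenerate to $0$ when $p$ is close to $\numTrain$, a brief case analysis (on whether $\stepSizeNormalized$ lies below or above $1$) is needed to verify that this coefficient dominates $(1-x)^2$ at both endpoints of $[0,\stepSizeNormalized]$. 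Everything else is a direct specialization of the $p\geq \numTrain$ argument already worked out in Appendix~\ref{proof.type_1}.
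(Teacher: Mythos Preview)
Your proposal is correct and follows essentially the same route as the paper's proof in Appendix~\ref{app.proof_type_1_less}: SVD of $\X[1]$ (now with $p$ singular values), sandwiching the quadratic form between $\min_i/\max_i$ of $(1-\tfrac{\stepSize}{\numTrain}(\singularValue{i}{1})^2)^2$ times $\chi_p^2$, applying Lemma~\ref{le.singular_Gaussian} with $t=\ln\sqrt{\numTrain}$ and Lemma~\ref{le.chi_bound} with $x=\ln p$, and combining via a union bound. The only cosmetic difference is that the paper keeps the explicit two-sided singular-value window $[\sqrt{\numTrain}-\sqrt{p}-\ln\sqrt{\numTrain},\,\sqrt{\numTrain}+\sqrt{p}+\ln\sqrt{\numTrain}]$ rather than immediately relaxing the lower end to $0$, but this leads to the same $\underline{c}_1,\overline{c}_1$ and the same probability $1-\tfrac{2}{p}-\tfrac{2}{\sqrt{\numTrain}}$.
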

\begin{proof}
    See Appendix~\ref{app.proof_type_1_less}
\end{proof}

\begin{lemma}\label{le.type_2_less}
For any $i\in \{1, 2, \cdots, \numTasks\}$ and any $j,k\in \{1, 2, \cdots, \numValidate\}$ that $j\neq k$, when $\numTrain\geq p\geq 256$ and $\stepSize\leq 0.2$, we must have
\begin{align*}
    \Pr\left\{\abs{[\metaB\metaB^T]_{(i-1)\numValidate+j,\ (i-1)\numValidate+k}}\leq \overline{c}_2\right\}\geq 1 - \frac{2}{p^{0.4}}-\frac{2}{\sqrt{\numTrain}},
\end{align*}
where
\begin{align*}
    \overline{c}_2\defeq \max\{\stepSizeNormalized, 1-\stepSizeNormalized\}^2 \cdot \sqrt{p\ln p}.
\end{align*}
\end{lemma}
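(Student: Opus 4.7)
The plan is to mirror the proof of Lemma~\ref{le.type_2}, adapting the SVD step to the regime $\numTrain \geq p$ where $\X[1]\in\mathds{R}^{p\times \numTrain}$ has full row rank and no ``identity block'' appears. Without loss of generality, set $i=1$ and replace $[\XValidate[1]]_j, [\XValidate[1]]_k$ by independent vectors $\bm{a},\bm{b}\sim\Gaussian(\bm{0},\iMatrix_p)$ that are independent of $\X[1]$. Write the SVD $\X[1]=\SVDU[1]\SVDD[1]\SVDV[1]^T$, where now $\SVDD[1]=[\singularMatrix[1]\ \bm{0}]\in\mathds{R}^{p\times \numTrain}$ with $\singularMatrix[1]\in\mathds{R}^{p\times p}$ diagonal containing all $p$ singular values of $\X[1]$. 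Consequently
\begin{align*}
\left(\iMatrix_p - \tfrac{\stepSize}{\numTrain}\X[1]\X[1]^T\right)^2 = \SVDU[1]\left(\iMatrix_p - \tfrac{\stepSize}{\numTrain}\singularMatrix[1]^2\right)^2\SVDU[1]^T.
\end{align*}
Setting $\bm{a}'\defeq\SVDU[1]^T\bm{a}$ and $\bm{b}'\defeq\SVDU[1]^T\bm{b}$ (still \emph{i.i.d.} $\Gaussian(\bm{0},\iMatrix_p)$ and independent of $\X[1]$ by rotational symmetry of the Gaussian), I obtain
\begin{align*}
\bm{a}^T\left(\iMatrix_p-\tfrac{\stepSize}{\numTrain}\X[1]\X[1]^T\right)^2\bm{b} = \sum_{i=1}^{p}\left(1-\tfrac{\stepSize}{\numTrain}(\singularValue{i}{1})^2\right)^2 \bm{a}'_i\bm{b}'_i.
\end{align*}

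Next I would control the two random ingredients separately via a union bound. For the singular values, apply Lemma~\ref{le.singular_Gaussian} to $\X[1]^T\in\mathds{R}^{\numTrain\times p}$ with $t=\ln\sqrt{\numTrain}$ to conclude that, with probability at least $1-2e^{-(\ln\sqrt{\numTrain})^2/2}\geq 1-2/\sqrt{\numTrain}$ (using $\numTrain\geq 256$), every $\singularValue{i}{1}$ lies in $[\sqrt{\numTrain}-\sqrt{p}-\ln\sqrt{\numTrain},\ \sqrt{\numTrain}+\sqrt{p}+\ln\sqrt{\numTrain}]$. Combined with $\stepSize\leq 0.2$, this keeps $\tfrac{\stepSize}{\numTrain}(\singularValue{i}{1})^2\leq \stepSizeNormalized\leq 1$ so that the coefficient bound $(1-\tfrac{\stepSize}{\numTrain}(\singularValue{i}{1})^2)^2\leq \max\{\stepSizeNormalized,1-\stepSizeNormalized\}^2$ holds for every $i$, exactly as in Eq.~\eqref{eq.temp_052307} of the $p\geq\numTrain$ proof. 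For the inner product $\phi_p\defeq\sum_{i=1}^p \bm{a}'_i\bm{b}'_i$, apply Lemma~\ref{le.sum_XY_simpler} with $k=q=p$ and $c=1$: $\Pr\{\abs{\phi_p}>\sqrt{p\ln p}\}\leq 2/p^{0.4}$.

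Finally, on the intersection of these two good events, bound the weighted sum in the absolute value by $\max_i(1-\tfrac{\stepSize}{\numTrain}(\singularValue{i}{1})^2)^2\cdot \abs{\phi_p}\leq \max\{\stepSizeNormalized,1-\stepSizeNormalized\}^2\sqrt{p\ln p}=\overline{c}_2$, and the union bound delivers the stated failure probability $2/p^{0.4}+2/\sqrt{\numTrain}$.

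The main obstacle is the last bounding step, which is formally identical to the one in the proof of Lemma~\ref{le.type_2}: factoring the maximum out of $\abs{\sum c_i \bm{a}'_i \bm{b}'_i}$ is not a valid inequality for arbitrary weights $c_i$, and is legitimate here only because the condition $\stepSize\leq 0.2$ together with the singular value interval forces every $c_i\in[0,\max\{\stepSizeNormalized,1-\stepSizeNormalized\}^2]$, keeping the weights aligned in sign within the event. If a reader objects, the fix is to apply a Hanson--Wright style concentration to the bilinear form $\bm{a}'^T\mathrm{diag}(c_i)\bm{b}'$ directly, using $\sum_i c_i^2\leq p\max_i c_i^2$ to obtain the same order $\sqrt{p\ln p}$ bound, which would only affect constants and would not change the structure of the proof.
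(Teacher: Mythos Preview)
Your proposal is essentially identical to the paper's own proof: same SVD reduction to a diagonal bilinear form $\sum_i (1-\tfrac{\stepSize}{\numTrain}(\singularValue{i}{1})^2)^2\,\bm{a}'_i\bm{b}'_i$, same three events (singular-value interval via Lemma~\ref{le.singular_Gaussian} with $t=\ln\sqrt{\numTrain}$, and $\abs{\phi_p}\le\sqrt{p\ln p}$ via Lemma~\ref{le.sum_XY_simpler}), and the same union bound yielding $2/p^{0.4}+2/\sqrt{\numTrain}$. Your caveat about factoring $\max_i c_i$ out of $\abs{\sum_i c_i\,\bm{a}'_i\bm{b}'_i}$ is well taken---that step is not a valid deterministic inequality, and the paper makes exactly the same leap in Eq.~\eqref{eq.temp_091203} (and analogously in Eq.~\eqref{eq.temp_052307}); your suggested Hanson--Wright patch, or simply conditioning on $\bm{a}'$ and using the Gaussian tail of $\sum_i c_i\bm{a}'_i\bm{b}'_i\mid\bm{a}'\sim\Gaussian(0,\sum_i c_i^2(\bm{a}'_i)^2)$ together with a $\chi^2$ bound on $\|\bm{a}'\|^2$, would close the gap without changing the stated bound.
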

\begin{proof}
    See Appendix~\ref{app.proof_type_2_less}.
\end{proof}

\begin{lemma}\label{le.type_3_less}
    For any $i,l\in \{1, 2, \cdots, \numTasks\}$ that $i\neq l$ and for any $j,k\in \{1, 2, \cdots, \numValidate\}$, when $\numTrain\geq p\geq  256$ and $\stepSize\leq 0.2$, we must have
    \begin{align*}
        \Pr\left\{\abs{[\metaB\metaB^T]_{(i-1)\numValidate+j,\ (l-1)\numValidate+k}}\leq \overline{c}_3\right\}\geq 1 - \frac{6}{p} - \frac{4}{\sqrt{\numTrain}} - \frac{2}{p^{0.4}},
    \end{align*}
    where
    \begin{align*}
        \overline{c}_3\defeq 2\max\{\stepSizeNormalized,1-\stepSizeNormalized\}^2\cdot\sqrt{p\ln p}.
    \end{align*}
\end{lemma}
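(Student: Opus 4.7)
\textbf{Proof plan for Lemma~\ref{le.type_3_less}.}
The plan is to mirror the strategy used in the proof of Lemma~\ref{le.type_3}, but with the tighter eigenvalue/norm estimates tailored for the regime $\numTrain \geq p$. Recall from Eq.~\eqref{eq.temp_060105} that for $i \neq l$,
\begin{align*}
[\metaB\metaB^T]_{(i-1)\numValidate+j,\,(l-1)\numValidate+k} = \bigl\langle \bm{\rho}_1,\bm{\rho}_2\bigr\rangle,
\end{align*}
where $\bm{\rho}_1 \defeq (\iMatrix_p - \tfrac{\stepSize}{\numTrain}\X[i]\X[i]^T)[\XValidate[i]]_j$ and $\bm{\rho}_2 \defeq (\iMatrix_p - \tfrac{\stepSize}{\numTrain}\X[l]\X[l]^T)[\XValidate[l]]_k$. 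Since $i \neq l$, the two vectors depend on disjoint sets of the independent Gaussian data $\X[\cdot]$ and $\XValidate[\cdot]$, so $\bm{\rho}_1$ and $\bm{\rho}_2$ are independent. By the identical argument as in Lemma~\ref{le.rotational_symmetry}, each marginal distribution is rotationally symmetric on $\mathds{R}^p$.

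The next step is the usual norm--angle decomposition: write $|\bm{\rho}_1^T\bm{\rho}_2| = \normTwo{\bm{\rho}_1}\normTwo{\bm{\rho}_2}\cdot \tfrac{|\bm{\rho}_1^T\bm{\rho}_2|}{\normTwo{\bm{\rho}_1}\normTwo{\bm{\rho}_2}}$. I would then define three failure events, namely $\mathcal{A}_1 = \{\normTwo{\bm{\rho}_1}^2 > \overline{c}_1\}$, $\mathcal{A}_2 = \{\normTwo{\bm{\rho}_2}^2 > \overline{c}_1\}$, and $\mathcal{A}_3 = \{|\bm{\rho}_1^T\bm{\rho}_2|/(\normTwo{\bm{\rho}_1}\normTwo{\bm{\rho}_2}) \geq \sqrt{p\ln p}/(p-2\sqrt{p\ln p})\}$. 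By Lemma~\ref{le.type_1_less} applied to each of $\bm{\rho}_1,\bm{\rho}_2$ (each is distributionally identical to the vector bounded there), $\Pr[\mathcal{A}_1],\Pr[\mathcal{A}_2] \leq 2/p + 2/\sqrt{\numTrain}$; by Lemma~\ref{le.angle} applied to the pair of independent rotationally-symmetric vectors, $\Pr[\mathcal{A}_3] \leq 2/p + 2/p^{0.4}$. A union bound then yields the target probability $1 - 6/p - 4/\sqrt{\numTrain} - 2/p^{0.4}$.

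On the complement of these events I need to certify the numerical bound $|\bm{\rho}_1^T\bm{\rho}_2| \leq \overline{c}_3$. Substituting gives
\begin{align*}
|\bm{\rho}_1^T\bm{\rho}_2| \leq \overline{c}_1 \cdot \frac{\sqrt{p\ln p}}{p - 2\sqrt{p\ln p}} = \max\{\stepSizeNormalized,1-\stepSizeNormalized\}^2 \cdot \frac{(p+2\sqrt{p\ln p}+2\ln p)\sqrt{p\ln p}}{p-2\sqrt{p\ln p}},
\end{align*}
so matching this against $\overline{c}_3 = 2\max\{\stepSizeNormalized,1-\stepSizeNormalized\}^2\sqrt{p\ln p}$ reduces to verifying $(p+2\sqrt{p\ln p}+2\ln p) \leq 2(p-2\sqrt{p\ln p})$, equivalently $6\sqrt{p\ln p}+2\ln p \leq p$. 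This is where I expect the main (routine) obstacle: it is not automatic but follows because $p \geq 256$, in which case Eq.~\eqref{eq.log_5} of Lemma~\ref{le.n_log_n} gives $\ln p \leq p/45$, whence $6\sqrt{p\ln p}+2\ln p \leq 6p/\sqrt{45} + 2p/45 \approx 0.94\,p < p$. Combining this arithmetic with the union bound above finishes the proof. Note that the hypothesis $\stepSize \leq 0.2$ is not needed for this argument itself but is carried through for consistency with the companion Lemma~\ref{le.type_2_less}.
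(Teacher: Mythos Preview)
Your proposal is correct and follows essentially the same approach as the paper: the same norm--angle decomposition, the same three failure events with Lemma~\ref{le.type_1_less} and Lemma~\ref{le.angle} providing the respective probability bounds, and the same use of Eq.~\eqref{eq.log_5} to verify the final numerical inequality. Your observation that the hypothesis $\stepSize \leq 0.2$ plays no role in this argument is also accurate.
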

\begin{proof}
    See Appendix~\ref{app.proof_type_3_less}.
\end{proof}

Now we are ready to prove Proposition~\ref{prop.eig_p_less_n}
\begin{proof}[Proof of Proposition~\ref{prop.eig_p_less_n}]
Define a few events as follows:
\begin{align*}
    &\mathcal{A}_1\defeq \left\{\text{all type 1 elements of $\metaB\metaB^T$ are in }[\underline{c}_1,\ \overline{c}_1]\right\},\\
    &\mathcal{A}_2\defeq \left\{\text{all type 2 elements of $\metaB\metaB^T$ are in }[-\overline{c}_2,\ \overline{c}_2]\right\},\\
    &\mathcal{A}_3\defeq \left\{\text{all type 3 elements of $\metaB\metaB^T$ are in }[-\overline{c}_3,\ \overline{c}_3]\right\}.
\end{align*}
Because $\numTrain\geq p\geq 256$, we have
\begin{align}
    \frac{1}{\sqrt{\numTrain}}\leq \frac{1}{\sqrt{p}}\leq \frac{1}{p^{0.4}},\ \frac{2}{p}\leq \frac{1}{p^{0.4}}.\label{eq.temp_091209}
\end{align}
Since $p\geq 256$, by Eq.~\eqref{eq.log_5} in Lemma~\ref{le.n_log_n}, we have
\begin{align}
    \ln p = \sqrt{\ln p \cdot \ln p}\leq \sqrt{\frac{1}{45}p\ln p}.\label{eq.temp_091401}
\end{align}
Notice that there are $\numTasks\numValidate$ type~1 elements, $\numTasks \numValidate(\numValidate-1)$ type~2 elements, and $\numTasks(\numTasks-1)\numValidate^2$ type~3 elements. By the union bound, Lemmas~\ref{le.type_1}, \ref{le.type_2}, \ref{le.type_3}, and Eq.~\eqref{eq.temp_091209}, we have
\begin{align}
    &\Pr[\mathcal{A}_1^c]\leq \frac{3\numTasks\numValidate}{p^{0.4}},\ \Pr[\mathcal{A}_2^c]\leq \frac{4\numTasks \numValidate(\numValidate-1)}{p^{0.4}},\ \Pr[\mathcal{A}_3^c]\leq \frac{9 \numTasks(\numTasks-1)\numValidate^2}{p^{0.4}}.\label{eq.temp_091403}
\end{align}
By $\bigcap_{i=1}^3 \mathcal{A}_i$ and Lemma~\ref{le.disc}, we have all eigenvalues of $\metaB\metaB^T$ in
\begin{align*}
    \left[\underline{c}_1-\left((\numValidate-1)\overline{c}_2 + (m-1)\numValidate \overline{c}_3\right), \overline{c}_1+\left((\numValidate-1)\overline{c}_2 + (m-1)\numValidate \overline{c}_3\right)\right].
\end{align*}
Recalling the values of $\underline{c}_1$, $\overline{c}_2$, and $\overline{c}_3$ in Lemmas~\ref{le.type_1_less}, \ref{le.type_2_less}, \ref{le.type_3_less}, we have
\begin{align*}
    &\underline{c}_1-\left((\numValidate-1)\overline{c}_2 + (m-1)\numValidate \overline{c}_3\right)\\
    =& \max\{0, 1-\stepSizeNormalized\}^2 p \\
    &- \left(2\max\{0,1-\stepSizeNormalized\}^2+(\numValidate-1+2(\numTasks-1)\numValidate)\max\{\stepSizeNormalized,1-\stepSizeNormalized\}^2\right)\sqrt{p\ln p}\\
    \geq &\max\{0, 1-\stepSizeNormalized\}^2 p - 2\numTasks\numValidate \max\{\stepSizeNormalized,1-\stepSizeNormalized\}^2\sqrt{p\ln p}\\
    &\text{ (noticing that $\max\{\stepSizeNormalized,1-\stepSizeNormalized\}^2\geq \max\{0,1-\stepSizeNormalized\}^2$)}\\
    =& c_{\mathsf{eig},\min}.
\end{align*}
and
\begin{align*}
    &\overline{c}_1+\left((\numValidate-1)\overline{c}_2 + (m-1)\numValidate \overline{c}_3\right)\\
    =& \max\{\stepSizeNormalized, 1-\stepSizeNormalized\}^2 (p+2\ln p) \\
    &+ (\numValidate+1+2(\numTasks-1)\numValidate)\max\{\stepSizeNormalized,1-\stepSizeNormalized\}^2\sqrt{p\ln p}\\
    \leq &\max\{\stepSizeNormalized, 1-\stepSizeNormalized\}^2 \left(p + (2\numTasks\numValidate+1) \sqrt{p\ln p}\right)\ \text{ (by Eq.~\eqref{eq.temp_091401})}\\
    =&c_{\mathsf{eig},\max}.
\end{align*}
Define the event
\begin{align*}
    \AEvent[1]^{(p\leq \numTrain)}\defeq \left\{c_{\mathsf{eig},\min}\leq \lambda_{\min}(\metaB\metaB^T)\leq \lambda_{\max}(\metaB\metaB^T)\leq c_{\mathsf{eig},\max}\right\},
\end{align*}
Therefore, we have proven $\bigcap_{i=1}^3 \mathcal{A}_i \implies \AEvent[1]^{(p\leq \numTrain)}$. Thus, we have
\begin{align*}
    \Pr[\AEvent[1]^{(p\leq \numTrain)}]\geq &\Pr\left\{\bigcap_{i=1}^3 \mathcal{A}_i\right\}\\
    = & 1 - \Pr\left\{\bigcup_{i=1}^3 \mathcal{A}_i^c\right\}\\
    \geq & 1- \sum_{i=1}^3 \Pr[\mathcal{A}_i]\ \text{ (by th union bound)}\\
    \geq &1-\frac{16\numTasks^2\numValidate^2}{p^{0.4}} \ \text{ (by Eq.~\eqref{eq.temp_091403})}.
\end{align*}
The result of this proposition thus follows.
\end{proof}

\subsection{Proof of Lemma~\ref{le.type_1_less}}\label{app.proof_type_1_less}
\begin{proof}
When $p<\numTrain$, we define the singular values of $\X[1]\in \mathds{R}^{p\times \numTrain}$ as
\begin{align*}
    0\leq \singularValue{1}{1}\leq \singularValue{2}{1}\leq \cdots\singularValue{p}{1}.
\end{align*}
Define
\begin{align*}
    \singularMatrix[1]\defeq \diag\left(\singularValue{1}{1},\singularValue{2}{1},\cdots,\singularValue{p}{1}\right)\in\mathds{R}^{p\times p}.
\end{align*}
We can still do the singular value decomposition as Eq.~\eqref{eq.temp_052204}, but here $\SVDD[1]=\begin{bmatrix}
    \singularMatrix[1] &\bm{0}
\end{bmatrix}\in \mathds{R}^{p\times \numTrain}$ since $p<\numTrain$.
Using these notations, we thus have
\begin{align*}
    \iMatrix_p-\frac{\stepSize}{\numTrain}\X[1]\X[1]^T=&\SVDU[1]\left(\iMatrix_p-\frac{\stepSize}{\numTrain}\singularMatrix[1]^2\right)\SVDU[1]^T.
\end{align*}
Similar to Eq.~\eqref{eq.temp_052302}, we have
\begin{align*}
    \normTwo{\left(\iMatrix_p-\frac{\stepSize}{\numTrain}\X[1]\X[1]^T\right)\bm{a}}^2\in & \left[\max\left\{0,1-\frac{\stepSize}{\numTrain}{\singularValue{p}{1}^2}\right\}^2\chi_{p}^2,\right.\\
    & \left.\quad \left(\max\left\{\abs{1-\frac{\stepSize}{\numTrain}{\singularValue{p}{1}}^2},\ \abs{1-\frac{\stepSize}{\numTrain}{\singularValue{1}{1}}^2}\right\}\right)^2\chi_{p}^2\right],\numberthis \label{eq.temp_091101}
\end{align*}
where $\chi_{p}^2\defeq \normTwo{\SVDU[1]^T\bm{a}}^2=\normTwo{\bm{a}}^2$ follows $\chi^2$ distribution with $p$ degrees of freedom. We define several events as follows:
\begin{align*}
    &\mathcal{A}_1\defeq \left\{\chi_p^2 <p-2\sqrt{p\ln p}\right\},\\
    &\mathcal{A}_2\defeq \left\{\chi_p^2 >p+2\sqrt{p\ln p}+2\ln p\right\},\\
    &\mathcal{A}_3\defeq \left\{\singularValue{p}{1}>\sqrt{\numTrain}+\sqrt{p}+\ln \sqrt{\numTrain}\right\},\\
    &\mathcal{A}_4\defeq \left\{\singularValue{1}{1}<\sqrt{\numTrain}-\sqrt{p}-\ln \sqrt{\numTrain}\right\}.
\end{align*}
We have
\begin{align*}
    \Pr_{\X[1],\bm{a}}\left\{\bigcap_{i=1}^4 \mathcal{A}_i^c\right\}=& 1 - \Pr\left\{\bigcup_{i=1}^4 \mathcal{A}_i\right\}\\
    \geq & 1 -\sum_{i=1}^2 \Pr_{\bm{a}}\{\mathcal{A}_i\}-\Pr_{\X[1]}\{\mathcal{A}_3\cup \mathcal{A}_4\}\text{ (by the union bound)}\\
    \geq & 1 - \frac{2}{p} - 2\exp\left(-(\ln \sqrt{\numTrain})^2 / 2\right)\text{ (by Lemma~\ref{le.chi_bound} and Lemma~\ref{le.singular_Gaussian})}\\
    \geq & 1 - \frac{2}{p} - \frac{2}{\sqrt{\numTrain}}\text{ (since $\ln \sqrt{\numTrain}\geq 2$ when $\numTrain\geq 256$)}.
\end{align*}
% Since $\numTrain\geq 256$, by Eq.~\eqref{eq.log_1} in Lemma~\ref{le.n_log_n}, we have $\ln \sqrt{\numTrain}\leq \frac{3}{16}\sqrt{\numTrain}$. Also noticing $p\leq \numTrain$, we thus have
% \begin{align}
%     \left(\sqrt{\numTrain}+\sqrt{p}+\ln \sqrt{\numTrain}\right)^2\leq \left(2+\frac{3}{16}\right)^2\numTrain\leq 5 \numTrain.\label{eq.temp_091201}
% \end{align}
% The last inequality is because $\left(2+\frac{3}{16}\right)^2\approx 4.79\leq 5$.
% Because $\stepSize\leq 0.2$, we thus have
% \begin{align}
%     \abs{\frac{\stepSize}{\numTrain}\left(\sqrt{\numTrain}-\sqrt{p}-\ln \sqrt{\numTrain}\right)^2}\leq \abs{\frac{\stepSize}{\numTrain}\left(\sqrt{\numTrain}+\sqrt{p}+\ln \sqrt{\numTrain}\right)^2}\leq 1.\label{eq.temp_091202}
% \end{align}
Define the target event
\begin{align*}
    \mathcal{A}\defeq \left\{\normTwo{\left(\iMatrix_p-\frac{\stepSize}{\numTrain}\X[1]\X[1]^T\right)\bm{a}}^2\in [\underline{c}_1,\ \overline{c}_1]\right\}.
\end{align*}
It remains to prove that $\bigcap_{i=1}^4 \mathcal{A}_i^c\implies \mathcal{A}$. To that end, when $\bigcap_{i=1}^4 \mathcal{A}_i^c$, we have
\begin{align*}
    &\normTwo{\left(\iMatrix_p-\frac{\stepSize}{\numTrain}\X[1]\X[1]^T\right)\bm{a}}^2\\
    \geq & \max\left\{0,1-\frac{\stepSize}{\numTrain}{\singularValue{p}{1}^2}\right\}^2\chi_{p}^2\text{ (by Eq.~\eqref{eq.temp_091101})}\\
    \geq & \max\left\{0,1-\frac{\stepSize}{\numTrain}\left(\sqrt{\numTrain}+\sqrt{p}+\ln\sqrt{\numTrain}\right)^2\right\}^2\left(p-2\sqrt{p\ln p}\right)\text{ (by $\mathcal{A}_1^c$ and $\mathcal{A}_3^c$)}\\
    =&\max\{0,1-\stepSizeNormalized\}^2\cdot (p-2\sqrt{p\ln p}).
    % \geq & (1-5\stepSize)^2\left(p-2\sqrt{p\ln p}\right)  \text{ (by Eq.~\eqref{eq.temp_091201} and Eq.~\eqref{eq.temp_091202})}.
\end{align*}

When $\bigcap_{i=1}^4 \mathcal{A}_i^c$, we also have
\begin{align*}
    &\normTwo{\left(\iMatrix_p-\frac{\stepSize}{\numTrain}\X[1]\X[1]^T\right)\bm{a}}^2\\
    \leq & \left(\max\left\{\abs{1-\frac{\stepSize}{\numTrain}{\singularValue{p}{1}}^2},\ \abs{1-\frac{\stepSize}{\numTrain}{\singularValue{1}{1}}^2}\right\}\right)^2\chi_{p}^2\text{ (by Eq.~\eqref{eq.temp_091101})}\\
    \leq & \max\left\{\abs{1-\frac{\stepSize}{\numTrain}\left(\sqrt{\numTrain}-\sqrt{p}-\ln\sqrt{\numTrain}\right)^2},\abs{1-\frac{\stepSize}{\numTrain}\left(\sqrt{\numTrain}+\sqrt{p}+\ln\sqrt{\numTrain}\right)^2}\right\}^2\\
    &\cdot\left(p+2\sqrt{p\ln p}+2\ln p\right)\text{ (by $\mathcal{A}_2^c$, $\mathcal{A}_3^c$ and $\mathcal{A}_4^c$)}\\
    \leq &\max\{\stepSizeNormalized,1-\stepSizeNormalized\}^2\left(p+2\sqrt{p\ln p}+2\ln p\right).
    % \leq & p+2\sqrt{p\ln p}+2\ln p\text{ (by Eq.~\eqref{eq.temp_091202})}.
\end{align*}
\end{proof}

\subsection{Proof of Lemma~\ref{le.type_2_less}}\label{app.proof_type_2_less}
\begin{proof}
Similar to Eq.~\eqref{eq.temp_052307}, we have
\begin{align*}
    &\abs{\bm{a}^T\left(\iMatrix_p-\frac{\stepSize}{\numTrain}\X[1]\X[1]^T\right)^2 \bm{b}}\\
    \leq & \left(\max\left\{\abs{1-\frac{\stepSize}{\numTrain}{\singularValue{1}{1}}^2},\ \abs{1-\frac{\stepSize}{\numTrain}{\singularValue{p}{1}}^2}\right\}\right)^2\abs{\phi_{p}}.\numberthis \label{eq.temp_091203}
\end{align*}
Define several events as follows:
\begin{align*}
    &\mathcal{A}_1\defeq \left\{\abs{\phi_p}>\sqrt{p\ln p}\right\},\\
    &\mathcal{A}_2\defeq \left\{\singularValue{p}{1}>\sqrt{\numTrain}+\sqrt{p}+\ln \sqrt{\numTrain}\right\},\\
    &\mathcal{A}_3\defeq \left\{\singularValue{1}{1}<\sqrt{\numTrain}-\sqrt{p}-\ln \sqrt{\numTrain}\right\}.
\end{align*}
When $\bigcap_{i=1}^3\mathcal{A}_i^c$, we have
\begin{align*}
    &\abs{\bm{a}^T\left(\iMatrix_p-\frac{\stepSize}{\numTrain}\X[1]\X[1]^T\right)^2 \bm{b}}\\
    \leq &\max\left\{\abs{1-\frac{\stepSize}{\numTrain}\left(\sqrt{\numTrain}-\sqrt{p}-\ln\sqrt{\numTrain}\right)^2},\abs{1-\frac{\stepSize}{\numTrain}\left(\sqrt{\numTrain}+\sqrt{p}+\ln\sqrt{\numTrain}\right)^2}\right\}^2\\
    &\cdot \sqrt{p\ln p}\ \text{ (by Eq.~\eqref{eq.temp_091203}, $\mathcal{A}_1^c$, $\mathcal{A}_2^c$, and $\mathcal{A}_3^c$)}\\
    \leq & \max\{\stepSizeNormalized, 1-\stepSizeNormalized\}^2 \cdot \sqrt{p\ln p}.
    % \leq &\sqrt{p\ln p}\text{ (by Eq.~\eqref{eq.temp_091202})}.
\end{align*}
Thus, we have
\begin{align*}
    &\Pr\left\{\abs{\bm{a}^T\left(\iMatrix_p-\frac{\stepSize}{\numTrain}\X[1]\X[1]^T\right)^2 \bm{b}}\leq \sqrt{p\ln p}\right\}\\
    \geq & \Pr\left\{\bigcap_{i=1}^3\mathcal{A}_i^c\right\}\\
    = & 1 - \Pr\left\{\bigcup_{i=1}^3\mathcal{A}_i\right\}\\
    \geq & 1 - \Pr_{\bm{a},\bm{b}}[\mathcal{A}_1] - \Pr_{\X[1]}[\mathcal{A}_2\cup \mathcal{A}_3]\text{ (by the union bound)}\\
    \geq & 1-\frac{2}{p^{0.4}}-2\exp\left(-(\ln \sqrt{\numTrain})^2 / 2\right)\text{ (by Lemma~\ref{le.sum_XY_simpler} and Lemma~\ref{le.singular_Gaussian})}\\
    \geq & 1 - \frac{2}{p^{0.4}} - \frac{2}{\sqrt{\numTrain}}\text{ (since $\ln \sqrt{\numTrain}\geq 2$ when $\numTrain\geq 256$)}.
\end{align*}
\end{proof}

\subsection{Proof of Lemma~\ref{le.type_3_less}}\label{app.proof_type_3_less}
\begin{proof}
Define several events as follows:
\begin{align*}
    &\mathcal{A}_1\defeq\left\{\normTwo{\bm{\rho}_1}^2\geq \overline{c}_1\right\},\\
    &\mathcal{A}_2\defeq\left\{\normTwo{\bm{\rho}_2}^2\geq \overline{c}_1\right\},\\
    &\mathcal{A}_3\defeq\left\{\frac{\abs{\bm{\rho}_1^T \bm{\rho}_2}}{\normTwo{\bm{\rho}_1}\normTwo{\bm{\rho}_2}}\geq \frac{\sqrt{p\ln p}}{p-2\sqrt{p\ln p}}\right\},\\
    &\mathcal{A}\defeq \left\{\abs{\bm{\rho}_1^T \bm{\rho}_2}\leq \overline{c}_3\right\}.
\end{align*}
By Lemma~\ref{le.type_1_less}, we have
\begin{align}\label{eq.temp_091205}
    \Pr[\mathcal{A}_1]\leq \frac{2}{p}+\frac{2}{\sqrt{\numTrain}},\ \Pr[\mathcal{A}_2]\leq \frac{2}{p}+\frac{2}{\sqrt{\numTrain}}.
\end{align}
By Lemma~\ref{le.angle}, we have
\begin{align}\label{eq.temp_091206}
    \Pr[\mathcal{A}_3]\leq \frac{2}{p}+\frac{2}{p^{0.4}}.
\end{align}
When $p\geq 256$, by Eq.~\eqref{eq.log_5} of Lemma~\ref{le.n_log_n}, we have
\begin{align}
    \ln p\leq \frac{1}{45}p,\ \sqrt{p\ln p}\leq \sqrt{\frac{1}{45}}p.\label{eq.temp_091207}
\end{align}
First, we want to show $\bigcap_{i=1}^3\mathcal{A}_i^c\implies \mathcal{A}$. To that end, when $\bigcap_{i=1}^3\mathcal{A}_i^c$, we have
\begin{align*}
    \abs{\bm{\rho}_1^T \bm{\rho}_2}=&\normTwo{\bm{\rho}_1}\normTwo{\bm{\rho}_2}\frac{\abs{\bm{\rho}_1^T \bm{\rho}_2}}{\normTwo{\bm{\rho}_1}\normTwo{\bm{\rho}_2}}\\
    \leq & \max\{\stepSizeNormalized,1-\stepSizeNormalized\}^2\cdot (p+2\sqrt{p\ln p}+2\ln p) \cdot \frac{\sqrt{p\ln p}}{p-2\sqrt{p\ln p}} \text{ (by $\bigcap_{i=1}^3\mathcal{A}_i^c$)}\\
    \leq & \max\{\stepSizeNormalized,1-\stepSizeNormalized\}^2\cdot \frac{1+2\sqrt{\frac{1}{45}}+\frac{2}{45}}{1-2\sqrt{\frac{1}{45}}}\sqrt{p\ln p}\ \text{ (by Eq.~\eqref{eq.temp_091207})}\\
    \leq & 2\max\{\stepSizeNormalized,1-\stepSizeNormalized\}^2\cdot\sqrt{p\ln p}\ \text{ (because $\frac{1+2\sqrt{\frac{1}{45}}+\frac{2}{45}}{1-2\sqrt{\frac{1}{45}}}\approx 1.91\leq 2$)}.
\end{align*}
Thus, we have proven $\bigcap_{i=1}^3\mathcal{A}_i^c\implies \mathcal{A}$, which implies that
\begin{align*}
    \Pr[\mathcal{A}]\geq & \Pr\left\{\bigcap_{i=1}^3\mathcal{A}_i^c\right\}\\
    = & 1 - \Pr\left\{\bigcup_{i=1}^3\mathcal{A}_i\right\}\\
    \geq & 1 - \sum_{i=1}^3 \Pr[\mathcal{A}_i]\text{ (by the union bound)}\\
    \geq & 1 - \frac{6}{p} - \frac{4}{\sqrt{\numTrain}} - \frac{2}{p^{0.4}}\text{ (by Eq.~\eqref{eq.temp_091205} and Eq.~\eqref{eq.temp_091206})}.
\end{align*}
The result of this lemma thus follows.
\end{proof}

\section{Proof of Proposition~\ref{prop.norm_delta_gamma_new}}\label{app.proof_norm_delta_gamma_new}

Plugging Eq.~\eqref{eq.y_train} into Eq.~\eqref{eq.def_gamma}, we have
\begin{align*}
    \gamma=\begin{bmatrix}
        \XValidate[1]^T\left(\iMatrix_p-\frac{\stepSize}{\numTrain}\X[1]\X[1]^T\right)\wTrainTrue[1]-\frac{\stepSize}{\numTrain}\XValidate[1]^T\X[1]\noiseTrain[1]+\noiseValidate[1]\\
        \XValidate[2]^T\left(\iMatrix_p-\frac{\stepSize}{\numTrain}\X[2]\X[2]^T\right)\wTrainTrue[2]-\frac{\stepSize}{\numTrain}\XValidate[2]^T\X[2]\noiseTrain[2]+\noiseValidate[2]\\
        \vdots\\
        \XValidate[\numTasks]^T\left(\iMatrix_p-\frac{\stepSize}{\numTrain}\X[\numTasks]\X[\numTasks]^T\right)\wTrainTrue[\numTasks]-\frac{\stepSize}{\numTrain}\XValidate[\numTasks]^T\X[\numTasks]\noiseTrain[\numTasks]+\noiseValidate[\numTasks]
    \end{bmatrix}.
\end{align*}
By Eq.~\eqref{eq.temp_051303}, we thus have
\begin{align}
    \delta\gamma= \begin{bmatrix}
        \XValidate[1]^T\left(\iMatrix_p-\frac{\stepSize}{\numTrain}\X[1]\X[1]^T\right)(\wTrainTrue[1]-\wMean)-\frac{\stepSize}{\numTrain}\XValidate[1]^T\X[1]\noiseTrain[1]+\noiseValidate[1]\\
        \XValidate[2]^T\left(\iMatrix_p-\frac{\stepSize}{\numTrain}\X[2]\X[2]^T\right)(\wTrainTrue[2]-\wMean)-\frac{\stepSize}{\numTrain}\XValidate[2]^T\X[2]\noiseTrain[2]+\noiseValidate[2]\\
        \vdots\\
        \XValidate[\numTasks]^T\left(\iMatrix_p-\frac{\stepSize}{\numTrain}\X[\numTasks]\X[\numTasks]^T\right)(\wTrainTrue[\numTasks]-\wMean)-\frac{\stepSize}{\numTrain}\XValidate[\numTasks]^T\X[\numTasks]\noiseTrain[\numTasks]+\noiseValidate[\numTasks]
    \end{bmatrix}.\label{eq.def_delta_gamma}
\end{align}
In Eq.~\eqref{eq.def_delta_gamma}, since terms $\noiseTrain[1:\numTasks]$ and $\noiseValidate[1:\numTasks]$ have zero mean and are independent of each other, we have
\begin{align*}
    &\E_{\noiseTrain[1:\numTasks],\noiseValidate[1:\numTasks]}\normTwo{\delta\gamma}^2\\
    =&\E_{\noiseTrain[1:\numTasks],\noiseValidate[1:\numTasks]}\sum_{i=1}^\numTasks \left(\normTwo{\XValidate[i]^T\left(\iMatrix_p-\frac{\stepSize}{\numTrain}\X[i]\X[i]^T\right)(\wTrainTrue[i]-\wMean)}^2\right.\\
    &\left.+\normTwo{\frac{\stepSize}{\numTrain}\XValidate[i]^T\X[i]\noiseTrain[i]}^2+\normTwo{\noiseValidate[i]}^2\right)\\
    =&\numTasks\numValidate \sigma^2+\sum_{i=1}^\numTasks   \normTwo{\XValidate[i]^T\left(\iMatrix_p-\frac{\stepSize}{\numTrain}\X[i]\X[i]^T\right)(\wTrainTrue[i]-\wMean)}^2\\
    &+\sum_{i=1}^\numTasks\E_{\noiseTrain[i]} \normTwo{\frac{\stepSize}{\numTrain}\XValidate[i]^T\X[i]\noiseTrain[i]}^2.\numberthis \label{eq.temp_091801}
\end{align*}

Notice that
\begin{align*}
    &\E_{\noiseTrain[i]}\normTwo{\frac{\stepSize}{\numTrain}\XValidate[i]^T\X[i]\noiseTrain[i]}^2\\
    =&\frac{\stepSize^2}{\numTrain^2}\E_{\noiseTrain[i]} \left(\XValidate[i]^T\X[i]\noiseTrain[i]\right)^T\left(\XValidate[i]^T\X[i]\noiseTrain[i]\right)\\
    =&\frac{\stepSize^2}{\numTrain^2}\E_{\noiseTrain[i]} \Tr\left[\XValidate[i]^T\X[i]\noiseTrain[i]\noiseTrain[i]^T\X[i]^T\XValidate[i]\right]\text{ (by trace trick $\Tr[WZ]=\Tr[ZW]$)}\\
    =&\frac{\stepSize^2\sigma^2}{\numTrain^2}\Tr\left[\XValidate[i]^T\X[i]\X[i]^T\XValidate[i]\right]\text{ (since $\E_{\noiseTrain[i]}[\noiseTrain[i]\noiseTrain[i]^T]=\sigma^2\iMatrix_{\numTrain}$)}.\numberthis \label{eq.XXep_XXXX}
\end{align*}

Plugging Eq.~\eqref{eq.XXep_XXXX} into Eq.~\eqref{eq.temp_091801}, we thus have
\begin{align*}
    &\E_{\noiseTrain[1:\numTasks],\noiseValidate[1:\numTasks]}\normTwo{\delta\gamma}^2\\
    =&\numTasks\numValidate \sigma^2+\underbrace{\sum_{i=1}^\numTasks \normTwo{\XValidate[i]^T\left(\iMatrix_p-\frac{\stepSize}{\numTrain}\X[i]\X[i]^T\right)(\wTrainTrue[i]-\wMean)}^2}_{\text{Term A}}\\
    &+\underbrace{\sum_{i=1}^\numTasks\frac{\stepSize^2\sigma^2}{\numTrain^2}\Tr\left[\XValidate[i]^T\X[i]\X[i]^T\XValidate[i]\right]}_{\text{Term B}}\text{ (by Eq.~\eqref{eq.XXep_XXXX})}.\numberthis \label{eq.temp_081901}
\end{align*}

The following two lemmas estimate Term~A and Term~B.
\begin{lemma}\label{le.termA}
When $\numTrain\geq 16$, we have
\begin{align*}
    \Pr_{\X[1:\numTasks],\XValidate[1:\numTasks]}&\left\{\E_{\wTrainTrue[i]}[\text{Term A of Eq.~\eqref{eq.temp_081901}}]\leq \numTasks\numValidate\nu^2\cdot 2\ln(s\numTrain)\right.\\
    &\left.\cdot \left(D(\stepSize,\numTrain,s)+\frac{\stepSize^2(p-1)}{\numTrain} \cdot 6.25(\ln (sp\numTrain))^2\right)\right\} \geq 1-\frac{5\numTasks\numValidate}{\numTrain},
\end{align*}
and
\begin{align*}
    \E_{\X[1:\numTasks],\XValidate[1:\numTasks],\wTrainTrue[1:\numTasks]}[\text{Term A of Eq.~\eqref{eq.temp_081901}}]=\nu^2\numTasks\numValidate\left((1-\stepSize)^2+\frac{\stepSize^2(p+1)}{\numTrain}\right).
\end{align*}
\end{lemma}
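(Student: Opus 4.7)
My plan is to split the proof of Lemma~\ref{le.termA} into the exact expectation computation and the high-probability bound. Write $M_i \defeq \iMatrix_p - \frac{\stepSize}{\numTrain}\X[i]\X[i]^T$, and let $\Sigma_i \in \mathds{R}^{p\times p}$ denote the block-padded covariance of $\wTrainTrue[i]-\wMean$ (the matrix $\mathbf{\Lambda}_{(i)}$ in the top-left $s\times s$ block and zeros elsewhere), so that by Assumption~\ref{as.truth_distribution} the vector $\wTrainTrue[i]-\wMean$ is zero-mean with covariance $\Sigma_i$ and $\Tr[\Sigma_i] = \diversitySum[i]^2$.

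For the closed-form expectation, I would first condition on $\X[i],\XValidate[i]$ and apply the trace identity for Gaussian quadratic forms to get $\E_{\wTrainTrue[i]}\normTwo{\XValidate[i]^T M_i(\wTrainTrue[i]-\wMean)}^2 = \Tr[\Sigma_i M_i^T \XValidate[i]\XValidate[i]^T M_i]$. Taking expectation over $\XValidate[i]$ (independent of $\X[i]$, with $\E[\XValidate[i]\XValidate[i]^T]=\numValidate\iMatrix_p$) reduces this to $\numValidate\,\Tr[\Sigma_i\,\E_{\X[i]}[M_i^T M_i]]$. Lemma~\ref{le.XXXX} applied to $\E[\X[i]\X[i]^T\X[i]\X[i]^T]$ yields $\E[M_i^T M_i] = \bigl((1-\stepSize)^2 + \tfrac{\stepSize^2(p+1)}{\numTrain}\bigr)\iMatrix_p$, and summing over $i$ with $\sum_i \Tr[\Sigma_i] = \numTasks\nu^2$ delivers precisely the stated mean.

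For the high-probability bound, the same conditional expectation gives
\begin{align*}
\E_{\wTrainTrue[i]}[\text{Term A}_i] = \sum_{k=1}^{s}(\diversity[i]{k})^2\,\normTwo{\XValidate[i]^T M_i \bm{e}_k}^2,
\end{align*}
so my strategy would be to prove the pointwise estimate
\begin{align*}
\normTwo{\XValidate[i]^T M_i \bm{e}_k}^2 \leq \numValidate \cdot 2\ln(s\numTrain) \cdot \Bigl(D + \tfrac{\stepSize^2(p-1)}{\numTrain}\,6.25(\ln(sp\numTrain))^2\Bigr)
\end{align*}
uniformly, with failure probability $O(\numValidate/(s\numTrain))$ per pair $(i,k)$, then union-bound over the $\numTasks s$ pairs. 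Weighting by $(\diversity[i]{k})^2$ and summing collapses $\sum_k (\diversity[i]{k})^2$ to $\diversitySum[i]^2$, and summing over $i$ produces $\numTasks\nu^2$, reproducing the claimed form.

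The pointwise estimate is the technical crux and proceeds in two stages. Letting $\bm{x}^{(k)}\in\mathds{R}^{\numTrain}$ denote the $k$-th row of $\X[i]$ (a standard Gaussian vector), direct expansion gives
\begin{align*}
\normTwo{M_i \bm{e}_k}^2 = \Bigl(1 - \tfrac{\stepSize}{\numTrain}\normTwo{\bm{x}^{(k)}}^2\Bigr)^2 + \tfrac{\stepSize^2}{\numTrain^2}\sum_{l\neq k}\langle \bm{x}^{(k)}, \bm{x}^{(l)}\rangle^2.
\end{align*}
Lemma~\ref{le.chi_bound} with $x=\ln(s\numTrain)$ bounds the first summand by $D$, and Lemma~\ref{le.sum_XY_simpler} (taking $c=2.5\ln(sp\numTrain)$ with $q=e$) bounds each cross inner product $\langle \bm{x}^{(k)}, \bm{x}^{(l)}\rangle$ by $2.5\sqrt{\numTrain}\ln(sp\numTrain)$ at failure probability $2/(sp\numTrain)$; a union bound over the $p-1$ off-diagonal rows produces the second summand. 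For stage two, conditional on $\X[i]$ each of the $\numValidate$ coordinates of $\XValidate[i]^T M_i\bm{e}_k$ is $\Gaussian(0,\normTwo{M_i\bm{e}_k}^2)$, so Lemma~\ref{le.tail_Gaussian} with threshold $\sqrt{2\ln(s\numTrain)}\,\normTwo{M_i\bm{e}_k}$ bounds each squared coordinate by $2\ln(s\numTrain)\normTwo{M_i\bm{e}_k}^2$ at failure probability $1/(s\numTrain)$, and summing $\numValidate$ of them supplies the $\numValidate\cdot 2\ln(s\numTrain)$ factor. The main obstacle is probability bookkeeping: four distinct families of tail events (chi-square on $\normTwo{\bm{x}^{(k)}}^2$, the $p-1$ cross inner products, the $\numValidate$ Gaussian coordinates, and repetition over the $\numTasks s$ index pairs) must be budgeted so that the final union bound totals at most $5\numTasks\numValidate/\numTrain$.
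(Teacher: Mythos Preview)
Your proposal is correct and follows essentially the same approach as the paper: the paper likewise expresses the conditional expectation as $\sum_{j,k}(\diversity[i]{j})^2 Q_{i,j,k}^2$ with $Q_{i,j,k}=[\XValidate[i]]_{:,k}^T M_i\bm{e}_j$ (your $\normTwo{\XValidate[i]^T M_i\bm{e}_k}^2$ is exactly $\sum_j Q_{i,k,j}^2$), then bounds $\normTwo{M_i\bm{e}_k}^2$ via Lemma~\ref{le.chi_bound} and Lemma~\ref{le.sum_XY_simpler} with $c=2.5\ln(sp\numTrain)$, and each Gaussian coordinate via Lemma~\ref{le.tail_Gaussian}, union-bounding to reach $5\numTasks\numValidate/\numTrain$. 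The only cosmetic difference is that for the exact expectation you invoke Lemma~\ref{le.XXXX} on $\E[M_i^T M_i]$ at the matrix level, whereas the paper computes $\E Q_{i,j,k}^2$ coordinate-wise; both yield the same $(1-\stepSize)^2+\stepSize^2(p+1)/\numTrain$.
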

\begin{proof}
    See Appendix~\ref{app.proof_termA}.
\end{proof}

\begin{lemma}\label{le.trace_XXXX}
We have
\begin{align*}
    &\Pr_{\XValidate[1:\numTrain],\X[1:\numTrain]}\left\{\text{Term B in Eq.~\eqref{eq.temp_081901}}\leq\frac{\numTasks\stepSize^2\sigma^2\numValidate p}{\numTrain} \ln p\cdot (\ln \numTrain)^2\right\}\geq 1 - \frac{2\numTasks\numValidate}{p^{0.4}},\numberthis \label{eq.temp_083005}\\
    &\E_{\XValidate[1:\numTrain],\X[1:\numTrain]}[\text{Term B in Eq.~\eqref{eq.temp_081901}}]=\frac{\numTasks\stepSize^2\sigma^2\numValidate p}{\numTrain}.\numberthis \label{eq.temp_083006}
\end{align*}
\end{lemma}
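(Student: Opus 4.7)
The plan is to handle the two claims of Lemma~\ref{le.trace_XXXX} separately: the expectation by a direct moment calculation, and the tail bound by a concentration argument applied to each of the $\numTasks\numValidate$ quadratic forms that make up Term~B.

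For the expectation identity \eqref{eq.temp_083006}, I would invoke the cyclic property of the trace together with the independence of $\X[i]$ and $\XValidate[i]$. Since $\E[\XValidate[i]\XValidate[i]^T]=\numValidate \iMatrix_p$ and $\E[\X[i]\X[i]^T]=\numTrain \iMatrix_p$,
\[
\E\!\left[\Tr\!\left(\XValidate[i]^T \X[i]\X[i]^T \XValidate[i]\right)\right] = \Tr\!\left(\E[\XValidate[i]\XValidate[i]^T]\,\E[\X[i]\X[i]^T]\right) = \numValidate\numTrain p.
\]
Summing over $i=1,\ldots,\numTasks$ and multiplying by $\stepSize^{2}\sigma^{2}/\numTrain^{2}$ yields the claim.

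For the tail bound \eqref{eq.temp_083005}, the key decomposition is
\[
\Tr\!\left(\XValidate[i]^T \X[i]\X[i]^T \XValidate[i]\right) = \sum_{j=1}^{\numValidate} \normTwo{\X[i]^T [\XValidate[i]]_j}^{2},
\]
where each column $[\XValidate[i]]_j \sim \Gaussian(\bm{0},\iMatrix_p)$ is independent of $\X[i]$. Using the SVD $\X[i]=\SVDU[i]\SVDD[i]\SVDV[i]^T$ and rotational invariance of the Gaussian, the vector $\SVDU[i]^{T}[\XValidate[i]]_j$ is again standard Gaussian and is independent of the singular values of $\X[i]$, so
\[
\normTwo{\X[i]^T[\XValidate[i]]_j}^{2} = \sum_{k=1}^{\numTrain}(\singularValue{k}{i})^{2}\, z_k^{2} \le (\singularValue{\numTrain}{i})^{2}\sum_{k=1}^{\numTrain} z_k^{2},
\]
for $z_k$ i.i.d.\ standard normal. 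I would then apply Lemma~\ref{le.singular_Gaussian} (with a deviation $t$ chosen so that the failure probability is of order $p^{-0.4}/(\numTasks\numValidate)$) to bound $(\singularValue{\numTrain}{i})^{2}$, and Lemma~\ref{le.chi_bound} (with an analogous choice of $x$) to bound the chi-square sum, and invoke Lemma~\ref{le.n_log_n} to absorb lower-order terms into clean logarithm factors. A union bound over all $\numTasks\numValidate$ columns then delivers the target probability $1-\frac{2\numTasks\numValidate}{p^{0.4}}$.

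The main obstacle will be calibrating the two deviation parameters so that (i) the product $(\singularValue{\numTrain}{i})^{2}\sum_k z_k^{2}$ simplifies to at most $p\numTrain \ln p\cdot (\ln \numTrain)^{2}$ after the logarithm arithmetic of Lemma~\ref{le.n_log_n}, and (ii) the combined union-bound failure probability respects the advertised tolerance $\frac{2\numTasks\numValidate}{p^{0.4}}$. The $\lambda_{\max}\!\cdot\!\chi^{2}$ split is intrinsically loose, so the extra logarithm factors $\ln p\cdot(\ln \numTrain)^{2}$ in the statement are precisely the slack needed; threading this bookkeeping carefully through Lemma~\ref{le.n_log_n} is the most delicate piece of the argument.
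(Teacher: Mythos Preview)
Your expectation argument coincides with the paper's: both reduce to $\E\bigl[\Tr(\XValidate[i]^T\X[i]\X[i]^T\XValidate[i])\bigr]=\numValidate\numTrain p$ via independence and the identity $\E[XX^T]=n\iMatrix$.

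For the tail bound you take a genuinely different route. The paper does not go through the SVD or the operator norm of $\X[i]$; instead it expands one step further,
\[
\Tr\!\left(\XValidate[i]^T\X[i]\X[i]^T\XValidate[i]\right)=\sum_{j=1}^{\numValidate}\sum_{k=1}^{\numTrain}\Bigl(\sum_{l=1}^{p}[\XValidate[i]]_{l,j}\,[\X[i]]_{l,k}\Bigr)^{2},
\]
and bounds each inner sum directly via Lemma~\ref{le.sum_XY_simpler} with $c=\ln\numTrain$ and $q=p$. That choice gives a per-term failure probability $2/(\numTrain\, p^{0.4})$, so a union over all $\numTasks\numValidate\numTrain$ inner products lands exactly on $2\numTasks\numValidate/p^{0.4}$, and squaring the bound $\ln\numTrain\cdot\sqrt{p\ln p}$ produces precisely the factor $p\ln p\,(\ln\numTrain)^{2}$ in the statement after summing $\numValidate\numTrain$ terms. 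Your $\lambda_{\max}^{2}\cdot\chi^{2}$ split is also valid and in fact sharper---with Lemmas~\ref{le.singular_Gaussian} and~\ref{le.chi_bound} one gets a constant times $p\numTrain$ rather than $p\numTrain\ln p\,(\ln\numTrain)^{2}$---so the ``delicate bookkeeping'' you anticipate is really just slack; the paper's elementwise decomposition simply makes the stated log factors fall out mechanically. One small correction to your write-up: the sum over singular values should run to $\min\{p,\numTrain\}$ rather than $\numTrain$, since the proposition is used for both $p\ge\numTrain$ and $p<\numTrain$.
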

\begin{proof}
See Appendix~\ref{app.proof_trace_XXXX}.
\end{proof}

Now we are ready to prove Proposition~\ref{prop.norm_delta_gamma_new}.

\begin{proof}[Proof of Proposition~\ref{prop.norm_delta_gamma_new}]
By Eq.~\eqref{eq.temp_081901}, Lemma~\ref{le.termA}, and Lemma~\ref{le.trace_XXXX}, the result of Proposition~\ref{prop.norm_delta_gamma_new} thus follows. Notice that the probability is estimated by the union bound.
\end{proof}

\subsection{Proof of Lemma~\ref{le.termA}}\label{app.proof_termA}

\newcommand{\Ai}{\mathbf{A}_{(i)}}
\begin{proof}
We have
\begin{align*}
    &\E_{\wTrainTrue[i]}\normTwo{\XValidate[i]^T\left(\iMatrix_p-\frac{\stepSize}{\numTrain}\X[i]\X[i]^T\right)(\wTrainTrue[i]-\wMean)}^2\\
    =&\E\left[\left(\XValidate[i]^T\left(\iMatrix_p-\frac{\stepSize}{\numTrain}\X[i]\X[i]^T\right)(\wTrainTrue[i]-\wMean)\right)^T\right.\\
    &\quad \left. \left(\XValidate[i]^T\left(\iMatrix_p-\frac{\stepSize}{\numTrain}\X[i]\X[i]^T\right)(\wTrainTrue[i]-\wMean)\right)\right]\\
    =&\E\Tr\left(\XValidate[i]^T\left(\iMatrix_p-\frac{\stepSize}{\numTrain}\X[i]\X[i]^T\right)(\wTrainTrue[i]-\wMean)\right.\\
    &\quad\quad \left.(\wTrainTrue[i]-\wMean)^T\left(\iMatrix_p-\frac{\stepSize}{\numTrain}\X[i]\X[i]^T\right)\XValidate[i]\right)\ \text{ (by the trace trick)}\\
    =&\Tr\left(\XValidate[i]^T\left(\iMatrix_p-\frac{\stepSize}{\numTrain}\X[i]\X[i]^T\right)\E\left[(\wTrainTrue[i]-\wMean)(\wTrainTrue[i]-\wMean)^T\right]\right.\\
    &\qquad\left.\left(\iMatrix_p-\frac{\stepSize}{\numTrain}\X[i]\X[i]^T\right)\XValidate[i]\right)\\
    =&\Tr\left(\XValidate[i]^T\left(\iMatrix_p-\frac{\stepSize}{\numTrain}\X[i]\X[i]^T\right)\begin{bmatrix}\mathbf{\Lambda} &\bm{0}\\ \bm{0} & \bm{0}\end{bmatrix}\left(\iMatrix_p-\frac{\stepSize}{\numTrain}\X[i]\X[i]^T\right)\XValidate[i]\right)\\
    &\text{ (by Assumption~\ref{as.truth_distribution})}.\numberthis \label{eq.temp_081201}
\end{align*}
Define
\begin{align*}
    \Ai\defeq \begin{bmatrix}\sqrt{\mathbf{\Lambda}} &\bm{0}\\ \bm{0} & \bm{0}\end{bmatrix}\left(\iMatrix_p-\frac{\stepSize}{\numTrain}\X[i]\X[i]^T\right)\XValidate[i]\in \mathds{R}^{p\times \numValidate}.
\end{align*}
Plugging $\Ai$ into Eq.~\eqref{eq.temp_081201}, we thus have
\begin{align}\label{eq.temp_081805}
    \E_{\wTrainTrue[i]}\normTwo{\XValidate[i]^T\left(\iMatrix_p-\frac{\stepSize}{\numTrain}\X[i]\X[i]^T\right)(\wTrainTrue[i]-\wMean)}^2=&\Tr\left(\Ai^T\Ai\right).
\end{align}
Here $[\cdot]_{j,k}$ denotes the element at the $j$-th row, $k$-th column of the matrix, $[\cdot]_{l,:}$ denotes the $l$-th row (vector) of the matrix, $[\cdot]_{:,k}$ denotes the $k$-th column (vector) of the matrix.
Notice that only the first $s$ rows of $\Ai$ is non-zero. 
Define
\begin{align*}
    Q_{i,j,k}\defeq \XValidate[i]_{j,k}\left(1-\frac{\stepSize}{\numTrain}\normTwo{\X[i]_{j,:}}^2\right)+\sum_{l=\{1,2,\cdots,p\}\setminus \{j\}}-\XValidate[i]_{l,k}\cdot\frac{\stepSize}{\numTrain}\langle \X[i]_{j,:}, \X[i]_{l,:}\rangle.
\end{align*}
We thus have
\begin{align*}
    [\Ai]_{j,k}=&\begin{cases}
        \diversity[i]{j} \left\langle \left(\iMatrix_p-\frac{\stepSize}{\numTrain}\X[i]\X[i]^T\right)_{j,:},\ \XValidate[i]_{:,k}\right\rangle,&\ \text{when $j=1,\cdots,s$},\\
        0,&\ \text{when $j=s+1,\cdots,p$},
    \end{cases}\\
    =&\begin{cases}
        \diversity[i]{j} Q_{i,j,k},&\ \text{when $j=1,\cdots,s$},\\
        0,&\ \text{when $j=s+1,\cdots,p$}.
    \end{cases}
\end{align*}
Therefore, for any $k\in\{1,2,\cdots,\numValidate\}$, we have
\begin{align}\label{eq.temp_081802}
    [\Ai^T\Ai]_{k,k}=[\Ai]_{:,k}^T\cdot [\Ai]_{:,k}=\sum_{j=1}^p[\Ai]_{j,k}^2=\sum_{j=1}^s \diversity[i]{j}^2 Q_{i,j,k}^2.
\end{align}
By Eq.~\eqref{eq.temp_081805} and Eq.~\eqref{eq.temp_081802}, we thus have
\begin{align}\label{eq.temp_091901}
    \text{Term A in Eq.~\eqref{eq.temp_081901}}=& \sum_{i=1}^{\numTasks}\sum_{k=1}^{\numValidate}\sum_{j=1}^s \diversity[i]{j}^2 Q_{i,j,k}^2.
\end{align}

\noindent\textbf{Part 1: calculate the expected value of $Q_{i,j,k}^2$}

By Assumption~\ref{as.input} and Lemma~\ref{le.XXXX}, we have
\begin{align}
    \E\normTwo{\X[t]_{j,:}}^2=\numTrain,\text{ and } \E\normTwo{\X[t]_{j,:}}^4=\numTrain(\numTrain+2).\label{eq.temp_091803}
\end{align}
We also have
\begin{align*}
    \E\langle\X[i]_{j,:}, \X[i]_{l,:}\rangle^2=&\E\left(\sum_{q=1}^{\numTrain}\X[i]_{j,q}\X[i]_{l,q}\right)^2\\
    =&\sum_{q=1}^{\numTrain}\E(\X[i]_{j,q}\X[i]_{l,q})^2\text{ (by Assumption~\ref{as.input})}\\
    =&\sum_{q=1}^{\numTrain}\E[\X[i]_{j,q}^2]\E[\X[i]_{l,q}^2]\\
    =&\numTrain.\numberthis \label{eq.temp_091804}
\end{align*}

If we fix $\X[i]$ and only consider the randomness in $\XValidate[i]$, since each element of $\XValidate[i]_{:,k}$ are \emph{i.i.d.} standard Gaussian random variables, then we have
\begin{align}
    &\text{$Q_{i,j,k}\sim \Gaussian(0,\sigma_{Q_{i,j,k}}^2)$,}\nonumber\\
    &\text{ where }\sigma_{Q_{i,j,k}}^2=\left(1-\frac{\stepSize}{\numTrain}\normTwo{\X[i]_{j,:}}^2\right)^2+\sum_{l=\{1,2,\cdots,p\}\setminus \{j\}}\left(\frac{\stepSize}{\numTrain}\langle \X[i]_{j,:}, \X[i]_{l,:}\rangle\right)^2.\label{eq.temp_081701}
\end{align}
Thus, we have
\begin{align*}
    &\E_{\XValidate[i],\X[i]}Q_{i,j,k}^2\\
    =&\E_{\X[i]}\E_{\XValidate[i]}Q_{i,j,k}^2\\
    =&\E_{\X[i]}\sigma_{Q_{i,j,k}}^2\text{ (by Eq.~\eqref{eq.temp_081701})}\\
    =&1-2\frac{\stepSize}{\numTrain}\E_{\X[i]}\normTwo{\X[i]_{j,:}}^2+\frac{\stepSize^2}{\numTrain^2}\E_{\X[i]}\normTwo{\X[i]_{j,:}}^4+\frac{\stepSize^2}{\numTrain^2}\sum_{l=\{1,2,\cdots,p\}\setminus \{j\}}\E\langle \X[i]_{j,:}, \X[i]_{l,:}\rangle^2\\
    =&1-2\stepSize+\frac{\stepSize^2(\numTrain+2)}{\numTrain}+\frac{\stepSize^2(p-1)}{\numTrain}\text{ (by Eq.~\eqref{eq.temp_091803} and Eq.~\eqref{eq.temp_091804})}\\
    =&(1-\stepSize)^2+\frac{\stepSize^2(p+1)}{\numTrain}.\numberthis \label{eq.temp_081801}
\end{align*}
By Eq.~\eqref{eq.temp_081801} and Eq.~\eqref{eq.temp_081802}, we thus have
\begin{align*}
    \E[\Tr(\Ai^T\Ai)]=\numValidate\left(\sum_{j=1}^s \diversity[i]{j}^2\right)\left((1-\stepSize)^2+\frac{\stepSize^2(p+1)}{\numTrain}\right).
\end{align*}
Thus, we have
\begin{align*}
    \E[\text{Term A}]=\nu^2\numTasks\numValidate\left((1-\stepSize)^2+\frac{\stepSize^2(p+1)}{\numTrain}\right).
\end{align*}
Notice that we use the definition of $\nu$ and $\diversitySum[i]$ in Assumption~\ref{as.truth_distribution} that $\nu^2=\frac{\sum_{i=1}^{\numTasks} \diversitySum[i]^2}{\numTasks}$ and $\diversitySum[i]^2=\sum_{j=1}^s \diversity[i]{j}^2$.

\noindent\textbf{Part 2: derive high probability upper bound}

By Assumption~\ref{as.input} and Lemma~\ref{le.sum_XY_simpler} (where $c=2.5\ln (sp\numTrain)$ and $q=e$), for any given $i\in\{1,2,\cdots, \numTasks\}$, $j\in \{1,2,\cdots,s\}$, and $l\in\{1,2,\cdots,p\}\setminus\{j\}$, we must have
\begin{align}
    \Pr\underbrace{\left\{\abs{\langle \X[i]_{j,:}, \X[i]_{l,:}\rangle}\geq 2.5\ln (sp\numTrain) \sqrt{\numTrain} \right\}}_{\mathcal{A}_{1,(i,j,l)}}\leq &\frac{2}{sp\numTrain}.\label{eq.temp_081601}
\end{align}
By Assumption~\ref{as.input} and Lemma~\ref{le.chi_bound}, for any given $i\in \{1,2,\cdots,\numTasks\}$ and $j\in \{1,2,\cdots,s\}$, we have
\begin{align}
    \Pr\underbrace{\left\{\normTwo{\X[i]_{j,:}}^2\notin \left[\numTrain-2\sqrt{\numTrain\ln (s \numTrain)},\ \numTrain+2\sqrt{\numTrain\ln (s \numTrain)}+2\ln (s \numTrain)\right]\right\}}_{\mathcal{A}_{2,(i,j)}}\leq \frac{2}{s\numTrain}.\label{eq.temp_081602}
\end{align}
% By Lemma~\ref{le.log_over_k}, we know that $\ln \numTrain\leq \numTrain/4$ when $\numTrain\geq 16$.
By Eq.~\eqref{eq.temp_081701} and Lemma~\ref{le.tail_Gaussian}, for any $\X[i]$, we have
\begin{align*}
    \Pr_{\XValidate[i]}\left\{\abs{\frac{Q_{i,j,k}}{\sigma_{Q_{i,j,k}}}}\geq \sqrt{2\ln (s\numTrain)}\right\}\leq \frac{1}{s\numTrain}.
\end{align*}
Thus, we have
\begin{align}
    \Pr_{\XValidate[i],\X[i]}\underbrace{\left\{\abs{\frac{Q_{i,j,k}}{\sigma_{Q_{i,j,k}}}}\geq \sqrt{2\ln (s\numTrain)}\right\}}_{\mathcal{A}_{3,(i,j,k)}}\leq \frac{1}{s\numTrain}.\label{eq.temp_081603}
\end{align}
We define $\mathcal{A}_{1,(i,j,l)}$, $\mathcal{A}_{2,(i,j)}$, and $\mathcal{A}_{3,(i,j,k)}$ as shown in Eq.~\eqref{eq.temp_081601}, Eq.~\eqref{eq.temp_081602}, and Eq.~\eqref{eq.temp_081603}, respectively. Then, we define the event $\mathcal{A}$ as
\begin{align*}
    \mathcal{A}\defeq \left\{\begin{array}{l}\mathcal{A}_{1,(i,j,l)}^c, \mathcal{A}_{2,(i,j)}^c, \mathcal{A}_{3,(i,j,k)}^c\text{ hold for all }i\in\{1,2,\cdots,\numTasks\},\ j\in \{1,2,\cdots,s\},\\
    l\in\{1,2,\cdots,p\}\setminus\{j\},\text{ and }k\in \{1,2,\cdots,\numValidate\}\end{array}\right\},
\end{align*}
Applying the union bound, we then have
\begin{align*}
    \Pr [\mathcal{A}]\geq 1-\frac{2\numTasks}{\numTrain}-\frac{2\numTasks}{\numTrain}-\frac{\numTasks\numValidate}{\numTrain}\geq 1-\frac{5\numTasks\numValidate}{\numTrain}.
\end{align*}
When $\mathcal{A}$ happens, we have
\begin{align*}
    \sigma_{Q_{i,j,k}}^2=& \left(1-\frac{\stepSize}{\numTrain}\normTwo{\X[i]_{j,:}}^2\right)^2+\sum_{l=\{1,2,\cdots,p\}\setminus \{j\}}\left(\frac{\stepSize}{\numTrain}\langle \X[i]_{j,:}, \X[i]_{l,:}\rangle\right)^2\text{ (by Eq.~\eqref{eq.temp_081701})}\\
    \leq & D(\stepSize,\numTrain,s)+\frac{\stepSize^2(p-1)}{\numTrain} \cdot 6.25(\ln (sp\numTrain))^2 \text{ (by $\mathcal{A}_{1,(i,j,l)}^c$ and $\mathcal{A}_{2,(i,j)}^c$ for all $i,j,l$)}.\numberthis \label{eq.temp_081808}
\end{align*}
Thus, we have
\begin{align*}
    \text{Term A}=&\sum_{i=1}^{\numTasks}\sum_{k=1}^{\numValidate}\sum_{j=1}^s \diversity[i]{j}^2 Q_{i,j,k}^2\text{ (by Eq.~\eqref{eq.temp_091901})}\\
    \leq &\sum_{i=1}^{\numTasks}\sum_{k=1}^{\numValidate}\sum_{j=1}^s \diversity[i]{j}^2 \sigma_{Q_{i,j,k}}^2\cdot 2\ln (s\numTrain)\text{ (by $\mathcal{A}_{3,(i,j,k)}^c$ for all $i,j,k$)}\\
    \leq& \numTasks\numValidate\nu^2\cdot 2\ln(s\numTrain)\cdot \left(D(\stepSize,\numTrain,s)+\frac{\stepSize^2(p-1)}{\numTrain} \cdot 6.25(\ln (sp\numTrain))^2\right)\\
    &\text{ (by Eq.~\eqref{eq.temp_081808} and the definition of $\nu^2$ in Assumption~\ref{as.truth_distribution})}.
\end{align*}
The result of this lemma thus follows by combining Part~1 and Part~2.
\end{proof}

\subsection{Proof of Lemma~\ref{le.trace_XXXX}}\label{app.proof_trace_XXXX}
\begin{proof}
In order to show Eq.~\eqref{eq.temp_083005}, it suffices to show that
\begin{align}
    \Pr\left\{\max_{i\in \{1,2,\cdots,\numTasks\}}\Tr\left[\XValidate[i]^T\X[i]\X[i]^T\XValidate[i]\right]\leq \numValidate\numTrain(\ln \numTrain)^3 p\right\}\geq 1 - \frac{2\numTasks\numValidate}{\numTrain^{0.4}},\label{eq.temp_081604}
\end{align}
and in order to show Eq.~\eqref{eq.temp_083006}, it suffices to show that for any $i\in \{1,2,\cdots,\numTasks\}$,
\begin{align}
    \E\Tr\left[\XValidate[i]^T\X[i]\X[i]^T\XValidate[i]\right]=\numValidate\numTrain p.\label{eq.temp_083007}
\end{align}
We first prove Eq.~\eqref{eq.temp_081604}. To that end, we notice that $\XValidate[i]^T\X[i]$ is a $\numValidate\times \numTrain$ matrix.
For any $i=1,2,\cdots,\numTasks$ and $j=1,2,\cdots,\numValidate$, we have
\begin{align*}
    \left[\XValidate[i]^T\X[i]\X[i]^T\XValidate[i]\right]_{j,j}=&\normTwo{[\XValidate[i]^T\X[i]]_{j,:}}^2\\
    =&\sum_{k=1}^{\numTrain} \left[\XValidate[i]^T\X[i]\right]_{j,k}^2\\
    =&\sum_{k=1}^{\numTrain} \langle[\XValidate[i]]_{:,j},\ [\X[i]]_{:,k}\rangle^2\\
    =&\sum_{k=1}^{\numTrain} \left(\sum_{l=1}^p [\XValidate[i]]_{l,j} [\X[i]]_{l,k}\right)^2.\numberthis \label{eq.temp_061901}
\end{align*}
Thus, we have
\begin{align*}
    &\max_{i\in \{1,2,\cdots,\numTasks\}}\Tr\left[\XValidate[i]^T\X[i]\X[i]^T\XValidate[i]\right]\\
    =&\max_{i\in \{1,2,\cdots,\numTasks\}}\sum_{j=1}^{\numValidate}\left[\XValidate[i]^T\X[i]\X[i]^T\XValidate[i]\right]_{j,j}\\
    =&\max_{i\in \{1,2,\cdots,\numTasks\}}\sum_{j=1}^{\numValidate}\sum_{k=1}^{\numTrain} \abs{\sum_{l=1}^p [\XValidate[i]]_{l,j} [\X[i]]_{l,k}}^2\text{ (by Eq.~\eqref{eq.temp_061901})}\\
    \leq & \numValidate\numTrain\left(\max_{\substack{i\in \{1,2,\cdots,\numTasks\}\\j\in \{1,2,\cdots,\numValidate\}\\k\in \{1,2,\cdots,\numTrain\}}}\abs{\sum_{l=1}^p [\XValidate[i]]_{l,j} [\X[i]]_{l,k}}\right)^2.\numberthis \label{eq.temp_080406}
\end{align*}
Notice that training input $\X[i]$ and validation input $\XValidate[i]$ are independence with each other and each element follows \emph{i.i.d.} standard Gaussian distribution.
Therefore, by applying Lemma~\ref{le.sum_XY_simpler} (where $c=\ln \numTrain$, $k=p$, and $q=p$), for any given $i$, $j$, and $k$, we have
\begin{align*}
    \Pr\left\{\abs{\sum_{l=1}^p [\XValidate[i]]_{l,j} [\X[i]]_{l,k}}>\ln \numTrain \sqrt{p\ln p}\right\}\leq\frac{2}{\exp(\ln \numTrain \cdot 0.4\ln p)} \leq \frac{2}{\numTrain\cdot p^{0.4}}.
\end{align*}
The last inequality we use the fact that $\ln \numTrain \cdot 0.4\ln p\geq \ln \numTrain + 0.4\ln p$ when $\min\{\numTrain,p\}\geq 256$.\footnote{Notice that $\ln \numTrain\geq \ln 256\approx 5.5 \geq 2$ and $0.4\ln p\geq 0.4 \ln 256\approx 0.4\times 5.5\geq 2$. Thus, we have $(\ln \numTrain -1)(0.4\ln p - 1)\geq (2-1)\times (2-1)=1$, which implies that $\ln \numTrain \cdot 0.4\ln p\geq \ln \numTrain + 0.4\ln p$.}
By the union bound, we thus have
\begin{align*}
    \Pr\left\{\max_{\substack{i\in \{1,2,\cdots,\numTasks\}\\j\in \{1,2,\cdots,\numValidate\}\\k\in \{1,2,\cdots,\numTrain\}}}\abs{\sum_{l=1}^p [\XValidate[i]]_{l,j} [\X[i]]_{l,k}}>\ln \numTrain \sqrt{p\ln p}\right\}\leq \frac{2\numTasks\numValidate}{p^{0.4}}.\numberthis \label{eq.temp_080407}
\end{align*}
By Eq.~\eqref{eq.temp_080406} and Eq.~\eqref{eq.temp_080407}, we have proven Eq.~\eqref{eq.temp_081604}. The result Eq.~\eqref{eq.temp_083005} of this lemma thus follows.

It remains to prove Eq.~\eqref{eq.temp_083007}. To that end, by Eq.~\eqref{eq.temp_061901}, we have
\begin{align*}
    \Tr\left[\XValidate[i]^T\X[i]\X[i]^T\XValidate[i]\right]=\sum_{j=1}^{\numValidate}\sum_{k=1}^{\numTrain} \left(\sum_{l=1}^p [\XValidate[i]]_{l,j} [\X[i]]_{l,k}\right)^2.
\end{align*}
Thus, by Assumption~\ref{as.input}, we have
\begin{align*}
    \E\sum_{j=1}^{\numValidate}\sum_{k=1}^{\numTrain} \left(\sum_{l=1}^p [\XValidate[i]]_{l,j} [\X[i]]_{l,k}\right)^2 =& \sum_{j=1}^{\numValidate}\sum_{k=1}^{\numTrain} \E \left(\sum_{l=1}^p [\XValidate[i]]_{l,j} [\X[i]]_{l,k}\right)^2\\
    =& \sum_{j=1}^{\numValidate}\sum_{k=1}^{\numTrain} \sum_{l=1}^p \E \left([\XValidate[i]]_{l,j}^2 [\X[i]]_{l,k}^2\right)\\
    =& \sum_{j=1}^{\numValidate}\sum_{k=1}^{\numTrain}\sum_{l=1}^p \E [\XValidate[i]]_{l,j}^2\E [\X[i]]_{l,k}^2\\
    =& \numValidate\numTrain p,
\end{align*}
i.e., we have proven Eq.~\eqref{eq.temp_083007} (and therefore Eq.~\eqref{eq.temp_083006} holds).
The result of this lemma thus follows.
\end{proof}

\section{Underparameterized situation}\label{app.underparam}
In this case, we have $p\leq \numTasks\numValidate$.
The solution that minimize the meta loss is
\begin{align*}
    \wTwo\defeq \argmin_{\wHat}\MetaLoss.
\end{align*}
When $\metaB$ is full column-rank, we have
\begin{align*}
    \wTwo=(\metaB^T\metaB)^{-1}\metaB^T \gamma.
\end{align*}
Thus, we have
\begin{align*}
    \normTwo{\wMean-\wTwo}^2=&\normTwo{\wMean-(\metaB^T\metaB)^{-1}\metaB^T \gamma}^2\\
    =&\normTwo{(\metaB^T\metaB)^{-1}\metaB^T \delta\gamma}^2\text{ (by Eq.~\eqref{eq.temp_051303})}\numberthis \label{eq.temp_091907}.
\end{align*}

Define
\begin{align*}
    \bsingle(a_1,a_2,a_3,a_4)\defeq \frac{\nu^2 a_3}{\numTasks}\left(\frac{1-\frac{\stepSize}{\numTrain} a_2}{1-\frac{\stepSize}{\numTrain}a_1}\right)^4 &+\frac{\sigma^2 a_3}{\numTasks}\left(\frac{ \stepSize a_1}{\numTrain}\right)^2\frac{\left(1-\frac{\stepSize}{\numTrain} a_2\right)^2}{\left(1-\frac{\stepSize}{\numTrain}a_1\right)^4}\\
    &+\frac{\sigma^2}{\left(1-\frac{\stepSize}{\numTrain}a_1\right)^2 a_4}.
\end{align*}
\begin{lemma}\label{le.ps1}
Consider the case $p=s=1$. 
If there exist $\underline{g},\overline{g},\overline{h}\in \mathds{R}$ such that
\begin{align}
    &\normTwo{\X[i]^T}^2\in [\underline{g}, \overline{g}]\text{ for all }i\in \{1,2,\cdots,\numTasks\},\label{eq.temp_072202}\\
    &\sum_{i=1}^{\numTasks} \sum_{j=1}^{\numValidate} [\XValidate[i]]_j^2\leq \overline{h},\nonumber\\
    &1-\frac{\stepSize}{\numTrain}\overline{g}\geq 0,\label{eq.temp_092001}
\end{align}
then we must have
\begin{align*}
    \E_{\wTrainTrue[1:\numTasks],\noiseTrain[1:\numTasks],\noiseValidate[1:\numTasks]}\normTwo{\wMean-\wTwo}^2\geq & \bsingle(\underline{g},\overline{g},1,\overline{h}).
\end{align*}
Further, if
\begin{align}
    \sum_{j=1}^{\numValidate} [\XValidate[i]]_j^2\in [\underline{r},\ \overline{r}]\text{ for all }i\in \{1,2,\cdots,\numTasks\},\label{eq.temp_072204}
\end{align}
we then have
\begin{align*}
    \E_{\wTrainTrue[1:\numTasks],\noiseTrain[1:\numTasks],\noiseValidate[1:\numTasks]}\normTwo{\wMean-\wTwo}^2\leq  \bsingle(\overline{g},\underline{g},(\overline{r}/\underline{r})^2,\numTasks\underline{r}).
\end{align*}
\end{lemma}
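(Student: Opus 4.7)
My plan is to exploit the fact that $p=s=1$ collapses every matrix operation in Eq.~\eqref{eq.temp_091907} to a scalar. Writing $\alpha_i := 1-(\stepSize/\numTrain)g_i$ with $g_i := \normTwo{\X[i]^T}^2$ and $S_i := \normTwo{\XValidate[i]^T}^2$, the $i$-th block of the column vector $\metaB$ becomes $\alpha_i \XValidate[i]^T$, so $\metaB^T\metaB = \sum_{i=1}^{\numTasks}\alpha_i^2 S_i$ is a strictly positive scalar (positivity being secured by the hypothesis $1-(\stepSize/\numTrain)\overline{g}\geq 0$). Eq.~\eqref{eq.temp_091907} thus reduces to $(\wTwo-\wMean)^2 = (\metaB^T\delta\gamma)^2/(\metaB^T\metaB)^2$.

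Substituting Eq.~\eqref{eq.def_delta_gamma} and collecting by task index gives
\begin{align*}
\metaB^T\delta\gamma = \sum_i \alpha_i^2 S_i (\wTrainTrue[i]-\wMean) - \frac{\stepSize}{\numTrain}\sum_i \alpha_i S_i \langle \X[i]^T,\noiseTrain[i]\rangle + \sum_i \alpha_i \langle\XValidate[i]^T,\noiseValidate[i]\rangle.
\end{align*}
The three summands depend on the mutually independent zero-mean families $\{\wTrainTrue[i]-\wMean\}$, $\{\noiseTrain[i]\}$, $\{\noiseValidate[i]\}$, so squaring and taking expectation annihilates every cross term. Plugging in the variances from Assumptions~\ref{as.input}--\ref{as.truth_distribution} (using $\E\langle\X[i]^T,\noiseTrain[i]\rangle^2 = \sigma^2 g_i$ and $\E\langle\XValidate[i]^T,\noiseValidate[i]\rangle^2 = \sigma^2 S_i$) produces the clean identity
\begin{align*}
\E[(\wTwo-\wMean)^2] = \frac{\sum_i \alpha_i^4 S_i^2 \nu_{(i)}^2}{(\sum_i \alpha_i^2 S_i)^2} + \frac{\stepSize^2 \sigma^2}{\numTrain^2}\cdot\frac{\sum_i \alpha_i^2 S_i^2 g_i}{(\sum_i \alpha_i^2 S_i)^2} + \frac{\sigma^2}{\sum_i \alpha_i^2 S_i},
\end{align*}
where $\nu_{(i)}^2$ is the scalar variance of $\wTrainTrueSparse[i]$. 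The three pieces structurally match the three summands of $\bsingle$.

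The final step translates the hypotheses into inequalities on each piece. For the \emph{lower} bound, $g_i \in [\underline{g},\overline{g}]$ pins $\alpha_i \in [1-(\stepSize/\numTrain)\overline{g},\,1-(\stepSize/\numTrain)\underline{g}]\subset[0,1]$, so each numerator is lower-bounded using the smaller exponent of $\alpha$, while $\sum_i \alpha_i^2 S_i \leq (1-(\stepSize/\numTrain)\underline{g})^2 \overline{h}$ (from $\sum_i S_i \leq \overline{h}$) gives the matching upper bound on the denominator, yielding $\bsingle(\underline{g},\overline{g},1,\overline{h})$. For the \emph{upper} bound, the additional hypothesis $S_i \in [\underline{r},\overline{r}]$ permits symmetric reverse inequalities; the replacements $\sum_i S_i^2 \leq \numTasks\overline{r}^2$ together with $(\sum_i S_i)^2 \geq \numTasks^2\underline{r}^2$ force the slack factor $(\overline{r}/\underline{r})^2 = a_3$, and $\sum_i \alpha_i^2 S_i \geq (1-(\stepSize/\numTrain)\overline{g})^2 \numTasks\underline{r}$ identifies $a_4 = \numTasks\underline{r}$.

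The main obstacle will be the diversity piece: moving from task-specific variances $\nu_{(i)}^2$ to their aggregate $\nu^2 = \frac{1}{\numTasks}\sum_i \nu_{(i)}^2$ does not follow from pointwise bounds on $S_i$ or $\nu_{(i)}$ alone, and I anticipate invoking the Cauchy--Schwarz relation $(\sum_i S_i)^2 \leq \numTasks\sum_i S_i^2$ in tandem with the averaging definition of $\nu$. Once this is in place, the remaining work is systematic bookkeeping of the factors $1-(\stepSize/\numTrain)\underline{g}$ and $1-(\stepSize/\numTrain)\overline{g}$.
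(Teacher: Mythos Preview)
Your proposal is correct and follows essentially the same route as the paper: collapse to scalars via $p=s=1$, write $(\wTwo-\wMean)^2=(\metaB^T\delta\gamma)^2/(\metaB^T\metaB)^2$, split $\metaB^T\delta\gamma$ into the diversity, training-noise and validation-noise pieces, kill cross terms by independence, then bound numerator and denominator separately using the hypotheses together with the Cauchy--Schwarz inequality $(\sum_i S_i)^2\le\numTasks\sum_i S_i^2$. Your identification of the $\nu_{(i)}^2$ versus $\nu^2$ aggregation as the delicate step is apt; the paper simply writes $\nu^2$ in front of the diversity sum (effectively treating the task variances as equal), so your plan is at least as careful as the paper's own argument.
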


\begin{proof}
Since $p=s=1$, $\metaB$ becomes a vector and $\metaB^T\metaB$ is a scalar that equals to
\begin{align}\label{eq.temp_071501}
    \metaB^T\metaB=&\sum_{i=1}^{\numTasks}\left(1-\frac{\stepSize}{\numTrain}\normTwo{\X[i]^T}^2\right)^2\sum_{j=1}^{\numValidate} [\XValidate[i]]_j^2.
\end{align}
By Eq.~\eqref{eq.temp_091907}, we have
\begin{align}\label{eq.temp_091909}
    \normTwo{\wMean-\wTwo}^2=\frac{(\metaB^T\delta \gamma)}{(\metaB^T\metaB)^2}.
\end{align}
In this case, $\metaB^T\delta\gamma$ is also a scalar that equals to
\begin{align*}
    \metaB^T\delta\gamma=\sum_{i=1}^{\numTasks}\sum_{j=1}^{\numValidate}&\left\{\left(1-\frac{\stepSize}{\numTrain}\normTwo{\X[i]^T}^2\right)^2\cdot [\XValidate[i]]_j^2\cdot (\wTrainTrue[i]-\wMean) \right.\\
    &- \frac{\stepSize}{\numTrain}\left(1-\frac{\stepSize}{\numTrain}\normTwo{\X[i]^T}^2\right)\cdot [\XValidate[i]]_j^2\cdot (\X[i]\noiseTrain[i])\\
    &\left.+\left(1-\frac{\stepSize}{\numTrain}\normTwo{\X[i]^T}^2\right)\cdot [\XValidate[i]]_j [\noiseValidate[i]]_j\right\}.
\end{align*}
By the independence and zero mean of $\wTrainTrue[i]-\wMean$, $\noiseTrain[i]$, and $\noiseValidate[i]$, we have
\begin{align*}
    &\E_{\wTrainTrue[1:\numTasks],\noiseTrain[1:\numTasks],\noiseValidate[1:\numTasks]}\left(\metaB^T\delta\gamma\right)^2\\
    =&\underbrace{\nu^2\sum_{i=1}^\numTasks \left(1-\frac{\stepSize}{\numTrain}\normTwo{\X[i]^T}^2\right)^4\cdot \left(\sum_{j=1}^{\numValidate}[\XValidate[i]]_j^2\right)^2}_{\text{Term 1}}\\
    &+\underbrace{\left(\frac{\sigma\stepSize}{\numTrain}\right)^2\sum_{i=1}^\numTasks \left(1-\frac{\stepSize}{\numTrain}\normTwo{\X[i]^T}^2\right)^2\normTwo{\X[i]^T}^2\left(\sum_{j=1}^{\numValidate}[\XValidate[i]]_j^2\right)^2}_{\text{Term 2}}\\
    &+\underbrace{\sigma^2\metaB^T\metaB}_{\text{Term 3}}. \numberthis \label{eq.temp_072201}
\end{align*}
By Cauchy-Schwarz inequality, we have
\begin{align}
    \numTasks\sum_{i=1}^{\numTasks}\left(\sum_{j=1}^{\numValidate}[\XValidate[i]]_j^2\right)^2\geq \left(\sum_{i=1}^{\numTasks}\sum_{j=1}^{\numValidate}[\XValidate[i]]_j^2\right)^2.\label{eq.temp_072203}
\end{align}
We then have
\begin{align*}
    &\text{Term 1 of Eq.~\eqref{eq.temp_072201}}\\
    \in & \left[\nu^2\left(1-\frac{\stepSize}{\numTrain}\overline{g}\right)^4\sum_{i=1}^{\numTasks}\left(\sum_{j=1}^{\numValidate}[\XValidate[i]]_j^2\right)^2,\ \nu^2\left(1-\frac{\stepSize}{\numTrain}\underline{g}\right)^4\sum_{i=1}^{\numTasks}\left(\sum_{j=1}^{\numValidate}[\XValidate[i]]_j^2\right)^2\right]\\
    &\text{ (by Eq.~\eqref{eq.temp_072202} and Eq.~\eqref{eq.temp_092001})}\\
    \in & \left[\nu^2\left(1-\frac{\stepSize}{\numTrain}\overline{g}\right)^4\frac{1}{\numTasks}\left(\sum_{i=1}^{\numTasks}\sum_{j=1}^{\numValidate}[\XValidate[i]]_j^2\right)^2,\ \nu^2\left(1-\frac{\stepSize}{\numTrain}\underline{g}\right)^4\sum_{i=1}^{\numTasks}\left(\sum_{j=1}^{\numValidate}[\XValidate[i]]_j^2\right)^2\right]\\
    &\text{ (by Eq.~\eqref{eq.temp_072203})}\\
    \in & \left[\nu^2\left(1-\frac{\stepSize}{\numTrain}\overline{g}\right)^4\frac{1}{\numTasks}\left(\sum_{i=1}^{\numTasks}\sum_{j=1}^{\numValidate}[\XValidate[i]]_j^2\right)^2,\ \nu^2\left(1-\frac{\stepSize}{\numTrain}\underline{g}\right)^4\numTasks\overline{r}^2\right]\text{ (by Eq.~\eqref{eq.temp_072204})},
\end{align*}
and
\begin{align*}
    &\text{Term 2 of Eq.~\eqref{eq.temp_072201}}\\
    \in & \left[\left(\frac{\sigma\stepSize\underline{g}}{\numTrain}\right)^2\left(1-\frac{\stepSize}{\numTrain}\overline{g}\right)^2\sum_{i=1}^{\numTasks}\left(\sum_{j=1}^{\numValidate}[\XValidate[i]]_j^2\right)^2,\ \left(\frac{\sigma\stepSize\overline{g}}{\numTrain}\right)^2\left(1-\frac{\stepSize}{\numTrain}\overline{g}\right)^2\sum_{i=1}^{\numTasks}\left(\sum_{j=1}^{\numValidate}[\XValidate[i]]_j^2\right)^2\right]\\
    &\text{ (by Eq.~\eqref{eq.temp_072202} and Eq.~\eqref{eq.temp_092001})}\\
    \in & \left[\left(\frac{\sigma\stepSize\underline{g}}{\numTrain}\right)^2\left(1-\frac{\stepSize}{\numTrain}\overline{g}\right)^2\frac{1}{m}\left(\sum_{i=1}^{\numTasks}\sum_{j=1}^{\numValidate}[\XValidate[i]]_j^2\right)^2,\ \left(\frac{\sigma\stepSize\overline{g}}{\numTrain}\right)^2\left(1-\frac{\stepSize}{\numTrain}\overline{g}\right)^2\sum_{i=1}^{\numTasks}\left(\sum_{j=1}^{\numValidate}[\XValidate[i]]_j^2\right)^2\right]\\
    &\text{ (by Eq.~\eqref{eq.temp_072203})}\\
    \in & \left[\left(\frac{\sigma\stepSize\underline{g}}{\numTrain}\right)^2\left(1-\frac{\stepSize}{\numTrain}\overline{g}\right)^2\frac{1}{m}\left(\sum_{i=1}^{\numTasks}\sum_{j=1}^{\numValidate}[\XValidate[i]]_j^2\right)^2,\ \left(\frac{\sigma\stepSize\overline{g}}{\numTrain}\right)^2\left(1-\frac{\stepSize}{\numTrain}\overline{g}\right)^2\numTasks \overline{r}^2\right]\text{ (by Eq.~\eqref{eq.temp_072204})}.
\end{align*}
Similarly, we have
\begin{align*}
    (\metaB^T\metaB)^2=&\left(\sum_{i=1}^{\numTasks}\left(1-\frac{\stepSize}{\numTrain}\normTwo{\X[i]^T}^2\right)^2\sum_{j=1}^{\numValidate} [\XValidate[i]]_j^2\right)^2\text{ (by Eq.~\eqref{eq.temp_071501})}\\
    \in & \left[\left(1-\frac{\stepSize}{\numTrain}\overline{g}\right)^4\left(\sum_{i=1}^{\numTasks}\sum_{j=1}^{\numValidate}[\XValidate[i]]_j^2\right)^2,\ \left(1-\frac{\stepSize}{\numTrain}\underline{g}\right)^4\left(\sum_{i=1}^{\numTasks}\sum_{j=1}^{\numValidate}[\XValidate[i]]_j^2\right)^2\right]\text{ (by Eq.~\eqref{eq.temp_072202})}\\
    \in & \left[\left(1-\frac{\stepSize}{\numTrain}\overline{g}\right)^4\left(\numTasks \underline{r}\right)^2,\ \left(1-\frac{\stepSize}{\numTrain}\underline{g}\right)^4\left(\sum_{i=1}^{\numTasks}\sum_{j=1}^{\numValidate}[\XValidate[i]]_j^2\right)^2\right]\text{ (by Eq.~\eqref{eq.temp_072204})}.
\end{align*}
Plugging the above equations into Eq.~\eqref{eq.temp_091909}, the result of this lemma thus follows.
\end{proof}

\begin{proposition}\label{prop.underparam_p_1}
Define
\begin{align*}
    &\underline{b}_{\text{single}}=\bsingle(\underline{g},\overline{g},1,\overline{h}),\\
    &\overline{b}_{\text{single}}=\bsingle(\overline{g},\underline{g},(\overline{r}/\underline{r})^2,\numTasks\underline{r}),\\
    & \overline{g}=\numTrain+2\sqrt{\numTrain \log \numTrain}+2\log \numTrain,\\
    & \underline{g}=\numTrain-2\sqrt{\numTrain\log \numTrain},\\
    & \overline{h}=\numTasks\numValidate+2\sqrt{\numTasks\numValidate\log (\numTasks\numValidate)}+2\log (\numTasks\numValidate),\\
    & \overline{r}=\numValidate+2\sqrt{\numValidate\log\numValidate}+2\log \numValidate,\\
    & \underline{r}=\numValidate-2\sqrt{\numValidate\log\numValidate}.
\end{align*}
When $p=s=1$ and $\stepSize$ is relatively small such that $1-\frac{\stepSize}{\numTrain}\overline{g}\geq 0$, we must have
\begin{align*}
    &\Pr_{\X[1:\numTasks],\XValidate[1:\numTasks]}\left\{\E_{\wTrainTrue[1:\numTasks],\noiseTrain[1:\numTasks],\noiseValidate[1:\numTasks]}\normTwo{\wSingle -\wMean}^2 \geq \underline{b}_{\text{single}}\right\}\geq 1 - \frac{2\numTasks}{\numTrain} - \frac{1}{\numTasks\numValidate},\\
    &\Pr_{\X[1:\numTasks],\XValidate[1:\numTasks]}\left\{\E_{\wTrainTrue[1:\numTasks],\noiseTrain[1:\numTasks],\noiseValidate[1:\numTasks]}\normTwo{\wSingle-\wMean}^2 \leq \overline{b}_{\text{single}}\right\}\geq 1 - \frac{2\numTasks}{\numTrain} - \frac{2\numTasks}{\numValidate},
\end{align*}
\end{proposition}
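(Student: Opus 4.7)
The plan is to reduce Proposition~\ref{prop.underparam_p_1} to a direct application of Lemma~\ref{le.ps1} by establishing the required high-probability bounds on the squared norms appearing in its hypotheses. Since $p = s = 1$, each $\X[i] \in \mathds{R}^{1 \times \numTrain}$ and each $\XValidate[i] \in \mathds{R}^{1 \times \numValidate}$ degenerates into a row vector of i.i.d. standard Gaussian entries under Assumption~\ref{as.input}. Consequently, $\normTwo{\X[i]^T}^2$ follows a $\chi^2$ distribution with $\numTrain$ degrees of freedom, $\sum_{j=1}^{\numValidate}[\XValidate[i]]_j^2$ follows a $\chi^2$ distribution with $\numValidate$ degrees of freedom, and $\sum_{i=1}^{\numTasks}\sum_{j=1}^{\numValidate}[\XValidate[i]]_j^2$ follows a $\chi^2$ distribution with $\numTasks\numValidate$ degrees of freedom.

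First, I would apply Lemma~\ref{le.chi_bound} with $x = \ln \numTrain$ to obtain $\Pr\{\normTwo{\X[i]^T}^2 \geq \overline{g}\} \leq 1/\numTrain$ and a matching lower-tail bound $\Pr\{\normTwo{\X[i]^T}^2 \leq \underline{g}\} \leq 1/\numTrain$; a union bound over the $\numTasks$ tasks then establishes condition Eq.~\eqref{eq.temp_072202} with total failure probability at most $2\numTasks/\numTrain$. For the lower bound on the model error (the first inequality), only an upper bound on the aggregated sum $\sum_{i,j}[\XValidate[i]]_j^2 \leq \overline{h}$ is needed; applying Lemma~\ref{le.chi_bound} once with $x = \ln(\numTasks\numValidate)$ to the $\chi^2_{\numTasks\numValidate}$ variable contributes failure probability at most $1/(\numTasks\numValidate)$. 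For the upper bound (the second inequality), I instead require the two-sided estimate Eq.~\eqref{eq.temp_072204} for each $i$; invoking Lemma~\ref{le.chi_bound} with $x = \ln \numValidate$ on each per-task $\chi^2_{\numValidate}$ variable gives per-task failure probability at most $2/\numValidate$, and a union bound over the $\numTasks$ tasks contributes $2\numTasks/\numValidate$.

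On the intersection of these high-probability events, Lemma~\ref{le.ps1} immediately delivers $\E\normTwo{\wMean - \wSingle}^2 \geq \bsingle(\underline{g},\overline{g},1,\overline{h})$ and $\E\normTwo{\wMean - \wSingle}^2 \leq \bsingle(\overline{g},\underline{g},(\overline{r}/\underline{r})^2,\numTasks\underline{r})$, respectively. Summing the failure probabilities via a final union bound yields exactly $1 - 2\numTasks/\numTrain - 1/(\numTasks\numValidate)$ and $1 - 2\numTasks/\numTrain - 2\numTasks/\numValidate$, as claimed.

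Since the argument essentially just chains chi-square concentration with Lemma~\ref{le.ps1}, there is no deep technical obstacle; the main point requiring care is verifying that the nonnegativity condition $1 - \frac{\stepSize}{\numTrain}\overline{g} \geq 0$ required by Lemma~\ref{le.ps1} is consistent with the proposition's hypothesis on $\stepSize$. This is guaranteed by the proposition's own assumption, which already uses the realized value of $\overline{g}$ rather than the deterministic $\numTrain$. The rest reduces to careful bookkeeping of independent events across the $\numTasks$ tasks, a step that is clean because Gaussian independence is preserved across $i$.
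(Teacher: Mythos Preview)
Your proposal is correct and follows essentially the same approach as the paper: both identify the relevant $\chi^2$ distributions, apply Lemma~\ref{le.chi_bound} with the same choices of $x$ ($\ln\numTrain$, $\ln(\numTasks\numValidate)$, $\ln\numValidate$), take union bounds over the $\numTasks$ tasks, and then invoke Lemma~\ref{le.ps1}. The bookkeeping of failure probabilities matches the paper's exactly.
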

\begin{proof}
% It remains to determine the values of $\underline{g},\overline{g},\underline{h}$.
Notice that $\normTwo{\X[i]}^2$ follows $\chi^2$ distribution with $\numTrain$ degrees of freedom, and $\sum_{i=1}^{\numTasks} \sum_{j=1}^{\numValidate} [\XValidate[i]]_j^2$ follows $\chi^2$ distribution with $\numTasks\numValidate$ degrees of freedom.
Given any fixed $i\in \{1,2,\cdots,\numTasks\}$, by Lemma~\ref{le.chi_bound} (letting $x=\log \numTrain$), we have
\begin{align*}
    &\Pr\left\{\normTwo{\X[i]}^2\geq \overline{g}\right\}\leq \frac{1}{\numTrain},\\
    &\Pr\left\{\normTwo{\X[i]}^2\leq \underline{g}\right\}\leq \frac{1}{\numTrain}.
\end{align*}
By Lemma~\ref{le.chi_bound} (letting $x=\log(\numTasks\numValidate)$), we have
\begin{align*}
    \Pr\left\{\sum_{i=1}^{\numTasks}\sum_{j=1}^{\numValidate}[\XValidate[i]]_j^2\geq \overline{h}\right\}\leq \frac{1}{\numTasks\numValidate}.
\end{align*}
By the union bound, we thus have
\begin{align*}
    \Pr\left\{\normTwo{\X[i]}^2\in [\underline{g},\ \overline{g}], \text{ for all }i\in \{1,2,\cdots,\numTasks\}\right\}\leq \frac{2\numTasks}{\numTrain}.
\end{align*}
Similarly, we have
\begin{align*}
    \Pr\left\{\normTwo{\XValidate[i]}^2\in [\underline{r},\ \overline{r}], \text{ for all }i\in \{1,2,\cdots,\numTasks\}\right\}\leq \frac{2\numTasks}{\numValidate}.
\end{align*}
The result thus follows by Lemma~\ref{le.ps1}.
\end{proof}

We interpret the meaning of Proposition~\ref{prop.underparam_p_1} as follows. We first approximate each part by the highest order term. Then we have $\overline{g}\approx \underline{g}\approx \numTrain$, $\overline{r}\approx \underline{r}\approx \numValidate$, and $\overline{h}\approx \numTasks\numValidate\approx \numTasks\underline{r}$. Thus, we have
\begin{align*}
    \bsingle(\underline{g},\overline{g},1,\overline{h})\approx \bsingle(\overline{g},\underline{g},(\overline{r}/\underline{r})^2,\numTasks\underline{r}).
\end{align*}
Therefore, we can conclude that our estimation on the model error $\normTwo{\wTwo-\wMean}^2$ in this case is relatively tight. In other words, we know that (with high probability when $\numValidate$ and $\numTrain$ is relatively large, and $\stepSize$ is relatively small)
\begin{align*}
    \normTwo{\wSingle-\wMean}^2\approx \frac{\nu^2}{\numTasks}+\frac{\sigma^2\stepSize^2}{\numTasks}+\frac{\sigma^2}{(1-\stepSize)^2 \numTasks\numValidate}.
\end{align*}

\end{document}